%% file: main.tex
\documentclass{article}

\usepackage{microtype}
\usepackage{graphicx}
\usepackage{subcaption}
\usepackage{booktabs} 

\usepackage{hyperref}

\usepackage[preprint]{icml2026}

\usepackage{amsmath}
\usepackage{amssymb}

\let\emptyset\varnothing
\usepackage{mathtools}
\usepackage{amsthm}
\usepackage{bbm}

\newcommand{\argmax}{\operatorname*{arg\,max}}
\newcommand{\argmin}{\operatorname*{arg\,min}}

\usepackage[capitalize,noabbrev]{cleveref}

\theoremstyle{plain}
\newtheorem{theorem}{Theorem}[section]
\newtheorem{proposition}[theorem]{Proposition}
\newtheorem{lemma}[theorem]{Lemma}
\newtheorem{corollary}[theorem]{Corollary}
\theoremstyle{definition}
\newtheorem{definition}[theorem]{Definition}
\newtheorem{assumption}[theorem]{Assumption}
\theoremstyle{remark}
\newtheorem{remark}[theorem]{Remark}

\makeatletter
\newcommand*{\inlineequation}[2][]{%
  \begingroup
    \refstepcounter{equation}%
    \ifx\\#1\\%
    \else
      \label{#1}%
    \fi
    \relpenalty=10000 %
    \binoppenalty=10000 %
    \ensuremath{%
      #2%
    }%
    ~\@eqnnum
  \endgroup
}
\makeatother

\usepackage[textsize=tiny]{todonotes}

\icmltitlerunning{Riemannian Neural Optimal Transport}

\begin{document}

\twocolumn[
\icmltitle{Riemannian Neural Optimal Transport}




  \icmlsetsymbol{equal}{*}

  \begin{icmlauthorlist}
    \icmlauthor{Alessandro Micheli}{icl}
    \icmlauthor{Yueqi Cao}{kth}
    \icmlauthor{ Anthea Monod}{icl}
    \icmlauthor{Samir Bhatt}{icl,ucph}

  \end{icmlauthorlist}

  \icmlaffiliation{icl}{Imperial College London, London, UK}
    \icmlaffiliation{ucph}{University Of Copenhagen, Copenhagen, Denmark}
\icmlaffiliation{kth}{KTH Royal Institute of Technology, Stockholm, Sweden}

  \icmlcorrespondingauthor{Alessandro Micheli}{a.micheli19@imperial.ac.uk}

  \icmlkeywords{Machine Learning, ICML}

  \vskip 0.3in
]



\printAffiliationsAndNotice{}  

\begin{abstract}
\input{sections/abstract}
\end{abstract}

\section{Introduction}

\input{sections/introduction}

\section{Background}

\input{sections/background}

\section{Why Discrete Optimal Transport Suffers in High Dimensions}
\input{sections/problem}

\label{sec:problem}

\section{Riemannian Neural Optimal Transport}
\label{sec-universality-neural-rcpms}
\input{sections/solution}

\section{Experiments}
\input{sections/experiments}

\section{Discussion and Limitations}
\input{sections/conclusions}




\section*{Impact Statement}

We develop a deep learning framework for optimal transport on Riemannian manifolds. This enables more principled modeling of geometrically structured data in domains such as biology, medicine, and the physical sciences.

 \section*{Acknowledgements}
 S.B. acknowledges support from the Novo Nordisk Foundation via The Novo Nordisk Young Investigator Award (NNF20OC0059309). S.B. acknowledges support from The Eric and Wendy Schmidt Fund For Strategic Innovation via the Schmidt Polymath Award (G-22-63345) which also supports A.Micheli. S.B. acknowledges support from the Novo Nordisk Foundation via the Global Pathogen Preparedness Platform (GPAP) (NNF26SA0109818). Y.C.~is supported by Digital Futures Postdoctoral Fellowship. A.Monod is supported by the EPSRC AI Hub on Mathematical Foundations of Intelligence: An ``Erlangen Programme'' for AI No.~EP/Y028872/1. 

 \section*{Contribution Statements}
 Author contributions are reported using the CRediT (Contributor Roles Taxonomy).
 \begin{itemize}
     \item \textbf{Alessandro Micheli}: Conceptualization; Methodology; Software; Formal analysis; Supervision; Investigation; Project administration; Visualization; Validation; Writing – original draft; Writing – review \& editing.
     \item \textbf{Yueqi Cao}:  
     Writing – review \& editing.
     \item \textbf{Anthea Monod}:
     Writing – original draft; Writing – review \& editing.
     \item \textbf{Samir Bhatt}:
      Software; Funding acquisition; Resources; Visualization; Validation; Writing – original draft; Writing – review \& editing.
 \end{itemize}


\bibliography{ref}
\bibliographystyle{icml2026}

\newpage
\appendix
\onecolumn
\section{Review on Riemannian Geometry}
\label{app-review-riemannian-geometry}

\input{sections/review-riemannian-geometry}

\clearpage
\newpage
\section{Review on Riemannian Optimal Transport}
\label{app-review-OT-geometry}

\input{sections/review-OT-geometry}
\clearpage
\newpage

\section{Review on H\"older Spaces}
\label{app-review-functional-analysis}

\input{sections/review-functional-analysis}
\clearpage
\newpage

\section{Related Work}
\label{app-related-work}
\input{sections/related_work}
\clearpage
\newpage

\section{Implementation Details}
\label{app-implementation-details}
\input{sections/implementation-details}
\clearpage
\newpage



\section{Experiment Set-up}
\label{app-experiment-setup}
\input{sections/experiment-setup}
\clearpage
\newpage

\section{Further Results}
\label{app-further-results}
\input{sections/further-results}
\clearpage
\newpage


\section{Proofs}
\input{proofs/proofs}

\end{document}

%% file: sections/abstract.tex
Computational optimal transport (OT) offers a principled framework for generative modeling. Neural OT methods, which use neural networks to learn an OT map (or potential) from data in an amortized way, can be evaluated out of sample after training, but existing approaches are tailored to Euclidean geometry. Extending neural OT to high-dimensional Riemannian manifolds remains an open challenge. In this paper, we prove that any method for OT on manifolds that produces discrete approximations of transport maps necessarily suffers from the curse of dimensionality: achieving a fixed accuracy requires a number of parameters that grows exponentially with the manifold dimension. Motivated by this limitation, we introduce Riemannian Neural OT (RNOT) maps, which are continuous neural-network parameterizations of OT maps on manifolds that avoid discretization and incorporate geometric structure by construction. Under mild regularity assumptions, we prove that RNOT maps approximate Riemannian OT maps with sub-exponential complexity in the dimension. Experiments on synthetic and real datasets demonstrate improved scalability and competitive performance relative to discretization-based baselines.

%% file: sections/introduction.tex
Optimal Transport (OT) casts generative modeling as a transport problem: one seeks a map (or coupling) that pushes a simple reference distribution (e.g., a Gaussian) onto the data distribution while minimizing an expected cost~\citep{Kantorovitch1958-mi,Villani2016-ps}. In Euclidean space with quadratic cost, the optimal Monge map admits a particularly tractable representation: under mild conditions it is the gradient of a convex potential~\citep{Brenier1991-yt}. Neural OT methods exploit this structure by parameterizing the map or its potential with neural networks and training them by optimizing an OT objective~\citep{pmlr-v119-makkuva20a,neural_ot,universal_neural_OT}. Once trained, sampling is amortized: samples are drawn from the reference distribution and pushed forward through the learned transport map (or the map induced by the learned potentials) to generate a data sample.

Extending computational OT beyond Euclidean spaces remains challenging. Many datasets are naturally supported on Riemannian manifolds, such as spheres and tori, especially when observations represent angles, axes, or directions~\citep{Mardia1999}, with applications in protein modelling~\citep{Hamelryck2006,Mardia2006,Boomsma2008}, geology~\citep{Peel2001}, and robotics~\citep{6641172,8594041}. There have been promising attempts to bring OT-based generative models to manifolds; for instance, Riemannian Convex Potential Maps (RCPMs) build manifold transport layers from $c$-concave potentials represented as minima over finitely many squared-distance templates anchored at a finite set of sites~\citep{rcpm-cohen}. Methods of this form fall into a broad discretization-based paradigm: the learned transport is mediated by a finite discrete representation, yielding a map with inherently discrete complexity. In this work, we show that this paradigm faces a \emph{fundamental barrier}: any method on compact Riemannian manifolds that constructs a discrete approximation of the OT map necessarily suffers from the curse of dimensionality (CoD), requiring exponentially many parameters in the manifold dimension to achieve fixed approximation accuracy.

This negative result does not rule out the possibility of a genuinely continuous neural OT framework on manifolds that escapes the CoD. On the contrary, recent approximation theory in geometric deep learning shows that neural architectures can avoid dimension-driven blow-ups when learning structured functions between manifolds under suitable conditions~\citep{cod_kratsios}. Yet, despite the existence of theoretically grounded neural OT frameworks in Euclidean settings~\citep{pmlr-v119-makkuva20a,neural_ot, universal_neural_OT}, there is, to the best of our knowledge, no comparably principled neural OT framework that learns transport potentials and maps \emph{intrinsically} on general Riemannian manifolds, let alone one with guarantees that escape the CoD.

\textbf{Our main contribution} is to bridge this gap by introducing \emph{Riemannian Neural OT} (RNOT), a theoretically grounded neural OT framework that learns amortized transport maps directly on Riemannian manifolds with \emph{provable polynomial complexity guarantees}. RNOT avoids discretizing the manifold. Instead, it represents transport potentials as continuous functions on $\mathcal{M}$ and enforces the structural OT constraint of $c$-concavity for the quadratic cost by construction via the $c$-transform. This yields a practical family of manifold-valued transport maps through the exponential formula $T(x) = \exp_{x}(-\nabla \phi(x))$, enabling out-of-sample generation by pushing forward samples from a reference distribution. We establish a universality result showing that approximating potentials within our implicit $c$-concave class induces transport maps that converge to the true Riemannian OT map. For neural instantiations, we derive explicit polynomial bounds on the neural network parameter count and depth required to approximate the optimal transport map to prescribed pointwise accuracy. Finally, experiments on synthetic and real manifold-valued datasets demonstrate improved scalability and competitive performance compared with discretization-based baselines.

%% file: sections/background.tex
Throughout, $(\mathcal{M},g)$ denotes a connected, compact, smooth $p$-dimensional Riemannian manifold without boundary, with geodesic distance $d(\cdot,\cdot)$. We let $\mathcal{P}(\mathcal{M})$ be the set of Borel probability measures on $\mathcal{M}$ and write $\mathrm{vol}_{\mathcal{M}}$ for the Riemannian volume measure. A brief review of the required notions from Riemannian geometry and OT, together with the relevant notation, is provided in Appendix~\ref{app-review-riemannian-geometry} and \ref{app-review-OT-geometry}.

\subsection{Background on Optimal Transport}
A central object in OT is the \emph{$c$-transform}, which generalizes the Legendre--Fenchel transform to Riemannian manifolds. Let $c:\mathcal{M}\times\mathcal{M}\to\mathbb{R}$ be a cost function. 
\begin{definition}[Def.~3.1 from~\citet{CorderoErausquin2001}]
The set $\Psi_{c}(\mathcal{M})$ of $c$-concave functions is the set of functions $\phi:\mathcal{M}\to\mathbb{R}\cup\{-\infty\}$ not identitically $-\infty$, for which there exists a function $\psi:\mathcal{M}\to \mathbb{R}\cup \{-\infty\}$ such that
\begin{equation}
\label{eq:def-c-concave}
    \phi(x) = \inf_{y\in\mathcal{M}}(c(x,y) -\psi(y)) \quad \forall x\in \mathcal{M}.
\end{equation}
We refer to $\phi$ as the $c$-transform of $\psi$ and abbreviate~\eqref{eq:def-c-concave} by writing $\phi =\psi^{c}$. Similarly, given $\phi \in\Psi_{c}(\mathcal{M})$, we define its $c$-transform $\phi^{c}\in \Psi_{c}(\mathcal{M})$ by
\begin{equation}
\label{eq:def-c-transform}
    \phi^{c}(y) := \inf_{x\in\mathcal{M}}(c(x,y) - \phi(x)), \quad \forall y\in\mathcal{M}.
\end{equation}
\end{definition}
For $\phi \in \Psi_{c}(\mathcal{M})$, from~\eqref{eq:def-c-transform} it is straightforward to show that $\inlineequation[eq-cc-identity]{\phi^{cc}=\phi}.$
As in~\citet{McCann2001}, compactness of $\mathcal{M}$ and local Lipschitz regularity of $c(x,y)$ imply that $\phi^{c}$ is Lipschitz, regardless of whether 
$\phi:\mathcal{M}\to\mathbb{R}\cup\{-\infty\}$ is continuous. Consequently, using \eqref{eq-cc-identity}, we may assume without loss of generality that the functions $\psi$ and $\phi$ in \eqref{eq:def-c-concave} lie in $C(\mathcal{M},\mathbb{R})$.

OT studies efficient ways to move mass from a source measure $\mu\in\mathcal{P}(\mathcal{M})$ to a target measure $\nu\in\mathcal{P}(\mathcal{M})$. In the Monge Problem (MP) formulation, the problem is to find a measurable map $T:\mathcal{M}\to\mathcal{M}$ such that $T_\#\mu=\nu$ and minimizes the transportation cost
\begin{equation}
\label{eq:monge-problem}
\tag{MP}
\inf_{T_\#\mu=\nu}\; \int_{\mathcal{M}} c\bigl(x,T(x)\bigr)\, \mathrm{d}\mu(x).
\end{equation} 
In this work, we focus on the squared-distance cost
\begin{equation*}
c(x,y) := \tfrac12 d(x,y)^2.
\end{equation*}

When $\mu$ is absolutely continuous with respect to $\mathrm{vol}_{\mathcal{M}}$, McCann~\citep[Thm.~9]{McCann2001} showed that there exists an OT map $T:\mathcal{M}\to\mathcal{M}$ that is $\mu$-a.e.\ unique, pushes $\mu$ forward to $\nu$ (i.e., $T_{\#}\mu=\nu$), and minimizes~\eqref{eq:monge-problem}. Moreover, the optimal map is induced by a $c$-concave potential: there exists a $c$-concave function $\phi$ associated with the optimal transport problem such that
\begin{equation*}
T(x) = \exp_{x}\bigl(-\nabla \phi(x)\bigr),
\end{equation*}
where $\exp_x:T_x\mathcal{M}\to\mathcal{M}$ is the Riemannian exponential map and $\nabla$ denotes the Riemannian gradient.

\subsection{Background on Geometric Deep Learning}
\label{sec:gdl-background}
A recurring strategy in geometric deep learning is to reduce learning on a manifold to learning in a Euclidean space via a suitable \emph{feature map} (or embedding). Since our goal is to establish universal approximation results for OT potentials and maps on a Riemannian manifold, we recall the notion of uniform convergence for standard Euclidean approximation theorems and overview how it adapts to compact domains.

We work with the topology of \emph{uniform convergence on compact sets} (ucc) for spaces of continuous functions.
For a topological space $X$, a sequence $(f_{k})_{k\in\mathbb{N}}\subset C(X,\mathbb{R})$ converges to $f\in C(X,\mathbb{R})$ in the ucc topology if and only if
\begin{equation}\label{eq:ucc-def}
\forall K\subset X \ \text{compact},\quad
\sup_{x\in K}|f_k(x)-f(x)|\xrightarrow[k\to\infty]{}0.
\end{equation}
When $X=\mathbb{R}^{n}$, this is the standard mode of convergence used for approximation on non-compact domains; equivalently, it is the topology induced by the usual ucc metric (see \citet[Eq.~(2), Sec.~2.2]{non_euclidean_uat}). When $X$ is compact, \eqref{eq:ucc-def} reduces to ordinary uniform convergence, since one may take $K=X$. In particular, because $\mathcal{M}$ is compact in our setting, the ucc topology on $C(\mathcal{M},\mathbb{R})$ coincides with the topology induced by the uniform norm
\begin{equation}
\|g\|_\infty:=\sup_{x\in\mathcal{M}}|g(x)|.
\end{equation}
Following \citet{non_euclidean_uat}, we introduce a continuous feature map $\varphi:\mathcal{M}\to\mathbb{R}^n$ and consider function classes on $\mathcal{M}$ induced by composition. Throughout this paper we denote a dense subset of $C(\mathbb{R}^{n},\mathbb{R})$ under ucc by $\mathcal{F}$, such as the neural network architectures studied by~\citet{Leshno1993,NIPS2017_32cbf687,Zhou2020} or the posterior means of a Gaussian process with universal kernel as in~\citet{JMLR:v7:micchelli06a}. We then define the following \emph{$\varphi$-pullback} (or \emph{feature-induced}) class:
\begin{equation*}
\varphi^{*}\mathcal{F}
:= 
\bigl\{f\circ\varphi \ : \ f\in\mathcal{F} \,\bigr\}.
\end{equation*}
The approximation power of the composed class $\varphi^{*}\mathcal{F}$ depends not only on $\mathcal{F}$, but crucially also on the geometry of the feature map $\varphi$. In particular, $\varphi$ must preserve enough information on points on $\mathcal{M}$ so that distinct points remain distinguishable after embedding.

\begin{assumption}[Feature Map Regularity]
\label{ass-feature-regularity}
The feature map $\varphi:\mathcal{M}\to\mathbb{R}^n$ is continuous and injective.
\end{assumption}

Assumption~\ref{ass-feature-regularity} is precisely the condition that allows for transferability of Euclidean approximation to the manifold. Informally, \citet{non_euclidean_uat} show:
\begin{itemize}
    \item If $\mathcal{F}$ is universal on $\mathbb{R}^n$ (dense in $C(\mathbb{R}^n,\mathbb{R})$ under ucc) and $\varphi$ is continuous and injective, then $\varphi^{*}\mathcal{F}$ is universal on $\mathcal{M}$ (dense in $C(\mathcal{M},\mathbb{R})$ under $\|\cdot\|_\infty$); see \citet[Theorem~3.3]{non_euclidean_uat}.
    \item Conversely, injectivity is effectively \emph{necessary}: if $\varphi$ is not injective, then there exist continuous functions on $\mathcal{M}$ that cannot be represented (or uniformly approximated) by compositions $f\circ\varphi$; see \citet[Theorem~3.4]{non_euclidean_uat}.
\end{itemize}

For our purposes, a particularly relevant class of feature maps is obtained by encoding points through their distances to a set of landmarks. In the compact Riemannian setting, a classical construction due to \citet{Gromov1983} shows that such maps can be chosen to be injective.

\begin{proposition}[Distance-to-Landmarks Embedding (\citealp{Gromov1983})]
\label{prop-gromov-embedding}
For sufficiently small $\delta>0$, let $\{x_i\}_{i\in I}$ be a maximal $\delta$-separated subset of $\mathcal{M}$ (which is finite by compactness), and define
\begin{equation*}
\varphi(x) := (d(x,x_i))_{i\in I} \in \mathbb{R}^{I}.
\end{equation*}
Then $\varphi$ is continuous and injective (hence satisfies Assumption~\ref{ass-feature-regularity}).
\end{proposition}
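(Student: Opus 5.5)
Continuity is immediate, and the real content is injectivity. We plan to prove it by taking $\delta$ below half the injectivity radius and using the uniqueness of short minimizing geodesics.

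\textbf{Continuity.} Each coordinate $x\mapsto d(x,x_i)$ is $1$-Lipschitz by the triangle inequality. The index set $I$ is finite: the balls $B(x_i,\delta/2)$ are disjoint, and each has volume bounded below by compactness. Hence $\varphi$ is Lipschitz into $(\mathbb{R}^I,\|\cdot\|_\infty)$.

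\textbf{Setup for injectivity.} Let $\mathrm{inj}(\mathcal{M})>0$ be the injectivity radius, which is positive by compactness. Take $\delta<\mathrm{inj}(\mathcal{M})/2$, small enough that $\delta$-balls are also strongly convex. Suppose $\varphi(x)=\varphi(y)$ with $x\neq y$. By maximality of the separated set, $\{x_i\}$ is a $\delta$-net, so some $x_j$ has $d(x,x_j)<\delta$. Then $d(y,x_j)=d(x,x_j)<\delta$, so $x,y\in B(x_j,\delta)$ and $d(x,y)<2\delta<\mathrm{inj}$.

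\textbf{Main step.} Consider the minimizing geodesic $\gamma$ from $y$ through $x$, extended beyond $x$ to a point $z=\gamma(t)$ with $d(x,z)=r$. Choose $r$ of order $2\delta$, but with $d(y,z)=d(y,x)+r$ still below the injectivity radius, so that $\gamma$ remains minimizing between $y$ and $z$. Since $\{x_i\}$ is a $\delta$-net, some landmark $x_k$ satisfies $d(z,x_k)<\delta$. We want $d(x_k,x)\ne d(x_k,y)$, which contradicts $\varphi(x)=\varphi(y)$.

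To get this, note that in normal coordinates at $z$ the function $d(z,\cdot)$ is strictly increasing along $\gamma$ going from $z$ through $x$ to $y$, so $d(z,y)-d(z,x)=d(x,y)>0$. Moving from $z$ to $x_k$ changes each of the two distances by less than $\delta$, which only bounds the difference by $2\delta$ and is too crude. The refinement I would try first is a first-variation argument. The function $h(w)=d(w,y)-d(w,x)$ is smooth near $z$ and equals $d(x,y)$ at $z$. Its gradient at $z$ is the difference of two unit vectors that both point along $\gamma$, so it vanishes. By Taylor expansion with curvature bounds, $|h(w)-d(x,y)|\le C\,d(w,z)^2\,d(x,y)/r$ for $w$ near $z$. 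With $d(w,z)<\delta$ and $r$ chosen as a fixed multiple of $\delta$, taking $\delta$ small makes the correction at most $\tfrac12 d(x,y)$. Hence $h(x_k)>0$, which is the required contradiction.

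\textbf{Expected obstacle.} The hardest step is the uniform Hessian estimate for $h$. Both $d(\cdot,x)$ and $d(\cdot,y)$ have Hessians of size about $1/r$, and their difference must be controlled proportionally to $d(x,y)$. I expect to obtain this from smooth dependence of $\mathrm{Hess}\,d(\cdot,p)$ on the base point $p$, away from $p$ and within the injectivity radius, using compactness to make all constants uniform. If that route fails, the fallback is the original argument of \citet{Gromov1983}, which reduces to the Euclidean case via normal coordinates with $C^2$-small metric distortion.
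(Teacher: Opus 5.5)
The paper gives no proof of this proposition; it only cites \citet{Gromov1983}, so your argument has to stand on its own. Most of it does. Continuity, finiteness of $I$, the reduction to $d(x,y)<2\delta$, the point $z$ on the extended minimizing geodesic, and $\nabla h(z)=0$ are all correct.

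The gap is in the quantitative heart of the main step. Your bound $|h(w)-d(x,y)|\le C\,d(w,z)^2\,d(x,y)/r$ is not scale-invariant: the left side is a length, the right side a length squared. It is also false for small $r$. Already in $\mathbb{R}^p$, with $u$ the unit direction of $\gamma$ at $z$, one has $\mathrm{Hess}\,h(z)=-(I-uu^{\top})\,d(x,y)/\bigl(r(r+d(x,y))\bigr)$, so the true second-order correction is of order $d(w,z)^2\,d(x,y)/r^2$. Bounding the Hessian over all of $B(z,\delta)$ therefore only yields $|h(w)-d(x,y)|\le C\,\bigl(\delta/(r-\delta)\bigr)^2 d(x,y)$. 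With $r$ a fixed multiple of $\delta$, this correction does not decrease as $\delta\to 0$. So the step ``taking $\delta$ small makes the correction at most $\tfrac12 d(x,y)$'' does not follow. With $r\approx 2\delta$ the estimate closes only if the uniform constant $C$ happens to be small, and nothing guarantees that.

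To repair your route, reverse the order of choices. First fix $K=r/\delta$ large compared with the constant in $\|\partial_p\,\mathrm{Hess}_\xi\, d(\cdot,p)\|\le C/d(\xi,p)^2$. This estimate does hold uniformly for $d(\xi,p)$ below a fixed scale on a compact manifold: write $d=\sqrt{2F}$ with $F=\tfrac12 d^2$ smooth near the diagonal and vanishing there to second order. Integrate it along the segment from $x$ to $y$ to get $\|\mathrm{Hess}\,h\|\le C\,d(x,y)/(r-\delta)^2$ on $B(z,\delta)$. Only then shrink $\delta$ so that $(K+3)\delta$ stays below the injectivity radius and that fixed scale.

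A cleaner finish avoids second derivatives altogether and works with $r=2\delta$. Let $\sigma$ be the unit-speed geodesic from $y$ to $x$, extended to $z$. The first variation formula in the second slot gives $h(w)=\int_0^{d(x,y)}\cos\alpha(s)\,ds$, where $\alpha(s)$ is the angle at $\sigma(s)$ between $\dot\sigma(s)$ and $\log_{\sigma(s)}w$. The vector $\log_{\sigma(s)}z=d(\sigma(s),z)\,\dot\sigma(s)$ has length at least $r$. At small scales $\log_{\sigma(s)}$ is $(1+\epsilon)$-Lipschitz on balls of fixed small radius, uniformly in the center, so $|\log_{\sigma(s)}w-\log_{\sigma(s)}z|<(1+\epsilon)\delta<r$. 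Hence $\langle\dot\sigma(s),\log_{\sigma(s)}w\rangle>0$, so $\cos\alpha(s)>0$ for every $s$. Therefore $h>0$ on all of $B(z,\delta)$, in particular at $x_k$. This uses only $C^1$ closeness of $\exp$ to an isometry on small balls, which compactness supplies directly.
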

\subsection{Background on (Un)Cursed Approximation Rates for Deep ReLU Networks}

A convenient way to formalize the CoD in approximation theory is to ask how the model size---measured here by the number of parameters \(W\)---must grow in order to guarantee a prescribed uniform accuracy \(\varepsilon\) on an \(n\)-dimensional domain. As a canonical benchmark, consider the Hölder unit ball \(\mathcal{H}_{r,n}\) of \(r\)-smooth functions on the cube \([0,1]^n\) (see~\eqref{eq-def-holder-unit-ball}). In this setting, the aim is to obtain estimates of the form
\begin{equation*}
\sup_{f\in \mathcal{H}_{r,n}} \|f-\widehat f_W\|_\infty \;\lesssim\; W^{-\beta},
\end{equation*}
where \(\widehat f_W\) denotes an approximation produced by a model with \(W\) parameters, and \(\beta>0\) is the approximation exponent, which we refer to as a \emph{rate}.

A sharp view of attainable rates is given by the phase diagram of \citet{yarotsky_cod} for uniform approximation of the Hölder ball $\mathcal{H}_{r,N}$ by deep ReLU networks. In their framework, a rate $\beta$ is \emph{achievable} if there exist network architectures with $W$ weights and weight-assignment maps such that the worst-case error over $\mathcal{H}_{r,n}$ decays as $\mathcal{O}(W^{-\beta})$ as $W\to \infty$. For Hölder classes, the classical reference rate is $r/n$: in the \emph{continuous} regime, where the weights must depend continuously on the target function, \citet[Theorems~3.1 and~3.2]{yarotsky_cod} show that $r/n$ is the sharp threshold---achievable, and unimprovable under the continuity constraint. Allowing \emph{discontinuous} weight assignment and sufficiently deep networks unlocks a faster \emph{deep-discontinuous} phase: for any $r>0$ and any exponent $\frac{r}{n}<\beta<\frac{2r}{n}$, there exist deep ReLU constructions achieving rate $\beta$~\citep[Theorem~3.3 and Fig.~3]{yarotsky_cod}.

\citet{cod_kratsios} exploit the deep--discontinuous regime to obtain approximation guarantees on manifolds that avoid exponential dependence on the dimension. They do not aim for uniform approximation over $\mathcal{M}$; instead, they approximate the target only on a fixed dataset $\mathcal{D}\subset\mathcal{M}$ rather than uniformly over $\mathcal{M}$. In particular, for $f\in C^{kp,1}(\mathcal{M})$ with $k\in \mathbb{N}$ (equivalently, Hölder smoothness $r=kp+1$), they show that $f$ can be approximated to accuracy $\varepsilon$ uniformly on $\mathcal{D}$ by a deep neural network with sub-exponential (in fact, polynomial) complexity in $\varepsilon^{-1}$~\citep[Section~3.3]{cod_kratsios}.

%% file: sections/problem.tex
In this section, we establish a general CoD barrier for OT on Riemannian manifolds when the learned transport map has \emph{finite support}, meaning it can send samples to only finitely many target locations. Our results apply broadly to any method whose discretization is introduced either explicitly (e.g., via meshes) or implicitly through the model parameterization, and they hold irrespective of the particular algorithm or optimization procedure. As a concrete instance, we show that RCPMs, which, to the best of our knowledge, are the only existing approach that directly constructs Riemannian OT maps, produce discrete-output maps at any finite model size and therefore inherit the same exponential dependence on dimension.

\paragraph{Discrete-Output Maps and Approximation Error.} We fix $\mu,\nu\in\mathcal{P}(\mathcal{M})$ and let $T_{\star}$ denote the ($\mu$-a.e.~defined) OT map pushing
$\mu$ to $\nu$ for the quadratic cost $c(x,y)=\tfrac12 d(x,y)^2$.
To compare measurable maps $T,S:\mathcal M\to\mathcal M$ under a source measure
$\mu$, we use the $\mu$-root-mean-square error
\begin{equation*}
\mathrm{RMSE}_\mu(T,S)
:= \left(\int_{\mathcal M} d\bigl(T(x),S(x)\bigr)^2\,\mathrm d\mu(x)\right)^{1/2}.
\end{equation*}

A common structural restriction in \emph{discrete} OT parameterizations is that the learned map can output only finitely many points; for instance, it may induce a partition of $\mathcal M$ into cells that are mapped to a finite set of sites.
We capture this discretization effect via the class of \emph{discrete-output maps}. For $m\in\mathbb N$, define $\mathsf D_m(\mu)$ to be the set of measurable maps $T:\mathcal M\to\mathcal M$
such that the pushforward $T_{\#}\mu$ is supported on at most $m$ points, i.e., $\#\mathrm{supp}(T_\#\mu)\le m$. Equivalently, $T\in\mathsf D_m(\mu)$ if and only if there exist $y_1,\dots,y_m\in\mathcal M$ such that
$T(x)\in\{y_1,\dots,y_m\}$ for $\mu$-a.e.\ $x$.

\paragraph{Discrete-Output Maps Induce a CoD Barrier.} The next theorem isolates the core bottleneck of discrete-output parameterizations: any approximation of $T_{\star}$ within $\mathsf{D}_{m}(\mu)$ necessarily induces an $m$-atomic approximation of $\nu$. When $\nu\ll\mathrm{vol}_{\mathcal{M}}$ (as in Theorem~\ref{thm:cod_discrete}), this quantization step yields an unavoidable lower bound with exponential dependence on the intrinsic manifold dimension $p$.

\begin{theorem}[CoD Barrier for Discrete-Output Maps]
\label{thm:cod_discrete}
Assume $\mu,\nu\ll \mathrm{vol}_{\mathcal M}$ and let $T_{\star}$ be the optimal transport map from $\mu$ to $\nu$
for $c(x,y)=\tfrac12 d(x,y)^2$.
Then there exists $C>0$ (depending on $\mathcal M$ and $\nu$) such that for all $m\in\mathbb N$,
\begin{equation}
\label{eq:cod_discrete}
\inf_{T\in\mathsf D_m(\mu)} \mathrm{RMSE}_\mu(T,T_{\star})
\ \ge\ C\,m^{-1/p}.
\end{equation}
In particular, achieving $\mathrm{RMSE}_\mu(T,T_{\star})\le \delta$ requires $m\ge (C/\delta)^p$.
\end{theorem}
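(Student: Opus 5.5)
The plan is to reduce the statement to a quantization lower bound for $\nu$. Fix $T\in\mathsf D_m(\mu)$ with output sites $y_1,\dots,y_m$. Consider the coupling $\gamma:=(T,T_\star)_\#\mu$ on $\mathcal M\times\mathcal M$. Its first marginal is $T_\#\mu$, which is supported on $\{y_1,\dots,y_m\}$. Its second marginal is $(T_\star)_\#\mu=\nu$. By definition of the Wasserstein distance,
\begin{equation*}
\mathrm{RMSE}_\mu(T,T_\star)^2=\int d(T(x),T_\star(x))^2\,\mathrm d\mu(x)\ \ge\ W_2^2(T_\#\mu,\nu)\ \ge\ \inf_{\#\mathrm{supp}\,\eta\le m} W_2^2(\eta,\nu).
\end{equation*}
The right-hand side is the optimal $m$-point quantization error of $\nu$. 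It equals $\int \min_i d(y,y_i)^2\,\mathrm d\nu(y)$, minimized over sites, because each point can be sent to its nearest site and this is the best coupling for fixed sites. So it suffices to show that
\begin{equation*}
\inf_{y_1,\dots,y_m}\int\min_i d(y,y_i)^2\,\mathrm d\nu(y)\ \ge\ C^2 m^{-2/p}.
\end{equation*}
Note that the infimum over $T$ is then bounded below uniformly, and no property of $T_\star$ other than $(T_\star)_\#\mu=\nu$ is used. Absolute continuity of $\mu$ is needed only for existence of $T_\star$.

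For the quantization bound, I use absolute continuity of $\nu$ via a small-ball argument. Since $\nu\ll\mathrm{vol}_{\mathcal M}$, the density $f=\mathrm d\nu/\mathrm d\mathrm{vol}$ is integrable and nonzero. Choose $\lambda>0$ such that the set $A=\{f\le\lambda\}$ has $\nu(A)\ge 1/2$. Such a $\lambda$ exists because $\nu(\{f>\lambda\})\to0$ as $\lambda\to\infty$ by integrability. Compactness of $\mathcal M$ gives a uniform volume bound $\mathrm{vol}(B(y,r))\le K r^p$ for all $y$ and $r>0$; small radii are handled by comparison with Euclidean volume, and large radii by $\mathrm{vol}(\mathcal M)<\infty$. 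Then for any sites and any $r>0$,
\begin{equation*}
\nu\Bigl(A\cap\bigcup_i B(y_i,r)\Bigr)\le \lambda\, m K r^p .
\end{equation*}
Choose $r=(4\lambda K m)^{-1/p}$. Then at least mass $1/4$ of $\nu$ lies at distance $\ge r$ from every site. Hence the quantization error is at least $\tfrac14 r^2=\tfrac14(4\lambda K)^{-2/p}m^{-2/p}$, which gives $C=\tfrac12(4\lambda K)^{-1/p}$, depending only on $\mathcal M$ and $\nu$. The consequence $m\ge (C/\delta)^p$ follows by rearranging.

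The main obstacle is the uniform volume bound $\mathrm{vol}(B(y,r))\le Kr^p$ for all $y$ and all $r$. I would obtain it from Bishop--Gromov comparison using a lower Ricci bound, which exists by compactness. Alternatively, I would use a finite atlas of charts with bi-Lipschitz metric comparison for $r$ below the injectivity radius, and take the constant trivially large above that radius.
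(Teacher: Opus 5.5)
Your proof is correct, and it reaches the key quantization bound by a different route from the paper's. The first half matches the paper's Steps~2--3. The coupling $(T,T_\star)_\#\mu$ is exactly Lemma~\ref{lemma-bound-wasserstein-distance-map}. Your nearest-site lower bound for couplings with an $m$-atomic marginal is Lemma~\ref{lem:V_to_W}. The paper proves that lemma via disintegration, but your direct observation suffices: any such coupling is concentrated on $\{y_1,\dots,y_m\}\times\mathcal M$.

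The difference is how you prove $V_{m,2}(\nu)\gtrsim m^{-2/p}$. The paper invokes Iacobelli's Zador-type asymptotic quantization theorem, after checking its integrability hypothesis on compact manifolds (Lemma~\ref{lem:iacobelli-integrability-compact}). This gives $\lim_m m^{2/p}V_{m,2}(\nu)=Q\bigl(\int h^{p/(p+2)}\bigr)^{(p+2)/p}>0$. The paper then handles the finitely many $m<m_0$ separately, using $V_{m,2}(\nu)>0$ for each fixed $m$.

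Your small-ball argument replaces all of this with an elementary estimate. It needs only the uniform volume growth $\mathrm{vol}(B(y,r))\le Kr^p$, which Bishop--Gromov or a chart argument supplies, and a density level $\lambda$ with $\nu(\{f\le\lambda\})\ge\tfrac12$. It holds for every $m$ at once and uniformly over site configurations, so no separate small-$m$ step is needed. It also yields the explicit constant $C=\tfrac12(4\lambda K)^{-1/p}$.

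What the paper's heavier route buys is sharpness. If $q$ is a nearest-site map onto optimal sites, then $q\circ T_\star\in\mathsf D_m(\mu)$ and it attains $\mathrm{RMSE}_\mu^2=V_{m,2}(\nu)$. Hence the infimum in \eqref{eq:cod_discrete} equals $\sqrt{V_{m,2}(\nu)}$, and Iacobelli's theorem identifies its exact asymptotic constant. Your constant is not optimal, but the statement only asks for a lower bound, so this does not matter here.
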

The proof of Theorem~\ref{thm:cod_discrete} is deferred to Appendix~\ref{proof-thm:cod_discrete}.  Theorem~\ref{thm:cod_discrete} shows that this CoD barrier is inherent to discretization-based Riemannian OT methods, independent of architecture and optimization.

We now focus our discussion on RCPMs. For $m\in\mathbb{N}$, RCPMs consider the class $\mathcal{C}_m(\mathcal{M})$ of \emph{discrete} $c$-concave potentials of the form
\begin{equation*}
\phi_m(x) \;=\; \min_{i\in[m]} \bigl(c(x,y_i) + \alpha_i\bigr),
\quad y_i\in\mathcal{M},\ \alpha_i\in\mathbb{R}.
\end{equation*}
A central theorem in \citet{rcpm-cohen} shows that on any compact, smooth, boundaryless manifold, the family $\{\mathcal{C}_m(\mathcal{M})\}_{m\in \mathbb{N}}$ is dense (in an appropriate sense) on the set of $c$-concave potentials. In particular, the associated transport maps can approximate OT maps arbitrarily well as $m\to \infty$.

Despite this asymptotic expressivity, RCPMs are nevertheless subject to a CoD at any \emph{finite} $m$.
Indeed, for $\phi\in\mathcal C_m(\mathcal M)$ the associated map
\[
T_{\phi}(x):=\exp_x\!\bigl(-\nabla \phi(x)\bigr),
\]
takes values in the finite site set $\{y_1,\dots,y_m\}$.
Consequently, $(T_\phi)_\#\mu$ is an $m$-atomic measure, and Theorem~\ref{thm:cod_discrete} yields a dimension-dependent lower bound on the approximation error.

\begin{corollary}[CoD Barrier for RCPMs]
\label{cor:cod_rcpm}
Under the assumptions of Theorem~\ref{thm:cod_discrete}, for every $m\in\mathbb N$ and every
$\phi\in\mathcal C_m(\mathcal M)$, the associated map $T_\phi$ satisfies $T_\phi\in \mathsf D_m(\mu)$. Consequently, there exists $C>0$ such that for all $m\in\mathbb N$,
\begin{equation}
\label{eq:cod_rcpm}
\inf_{\phi\in\mathcal C_m(\mathcal M)} \mathrm{RMSE}_\mu(T_\phi,T_{\star})
\ \ge\ C\,m^{-1/p},
\end{equation}
and achieving $\mathrm{RMSE}_\mu(T_\phi,T_{\star})\le \delta$ requires $m\ge (C/\delta)^p$.
\end{corollary}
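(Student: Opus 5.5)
The corollary follows by showing $T_\phi\in\mathsf D_m(\mu)$ for every $\phi\in\mathcal C_m(\mathcal M)$. Since the infimum over $\phi\in\mathcal C_m(\mathcal M)$ is then an infimum over a subset of $\mathsf D_m(\mu)$, it is bounded below by the infimum in Theorem~\ref{thm:cod_discrete}. That theorem gives the bound \eqref{eq:cod_rcpm} with the same constant $C$, which depends only on $\mathcal M$ and $\nu$. The sample-complexity statement $m\ge (C/\delta)^p$ then follows from $C m^{-1/p}\le \delta$.

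\textbf{Step 1: $\phi$ is differentiable $\mu$-a.e.} Fix $\phi(x)=\min_{i\in[m]}(c(x,y_i)+\alpha_i)$.
\begin{itemize}
\item $\phi$ is a minimum of finitely many Lipschitz functions, so it is Lipschitz. By Rademacher's theorem on $\mathcal M$ it is differentiable $\mathrm{vol}_{\mathcal M}$-a.e., hence $\mu$-a.e. because $\mu\ll\mathrm{vol}_{\mathcal M}$.
\item Each $c(\cdot,y_i)$ is smooth off the cut locus $\mathrm{cut}(y_i)$. This set is closed with zero volume, so it is $\mu$-null.
\item The ties $\{x: c(x,y_i)+\alpha_i=c(x,y_j)+\alpha_j\}$ for $i\neq j$ need not be null, e.g.\ if $y_i=y_j$ and $\alpha_i=\alpha_j$. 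On such a set both functions coincide, so they still yield the same transport value. I would therefore first merge duplicate sites and then argue that the remaining tie sets are null.
\item Let $N$ be the union of the cut loci and the tie sets. Off $N$, in a neighbourhood of $x$, the minimum is attained by a unique index $i(x)$ and $c(\cdot,y_{i(x)})$ is smooth there. Hence $\phi$ coincides locally with $c(\cdot,y_{i(x)})+\alpha_{i(x)}$ and
\[
\nabla\phi(x)=\nabla_x c(x,y_{i(x)})=-\exp_x^{-1}(y_{i(x)}).
\]
\end{itemize}

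\textbf{Step 2: the outputs lie in the site set.} The standard identity $\nabla_x \tfrac12 d(x,y)^2=-\exp_x^{-1}(y)$, valid for $x\notin\mathrm{cut}(y)$, gives
\[
T_\phi(x)=\exp_x\bigl(\exp_x^{-1}(y_{i(x)})\bigr)=y_{i(x)}
\]
for $\mu$-a.e.\ $x$. Hence $T_\phi(x)\in\{y_1,\dots,y_m\}$ $\mu$-a.e. Measurability holds because $T_\phi$ is constant on the measurable sets $\{i(x)=i\}$. It follows that $\#\mathrm{supp}((T_\phi)_\#\mu)\le m$, i.e.\ $T_\phi\in\mathsf D_m(\mu)$.

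\textbf{Main obstacle: the tie sets.} The delicate point is showing the ties are $\mu$-null. I would try this first: for distinct sites, on the complement of the cut loci the function $f(x)=c(x,y_i)-c(x,y_j)$ is smooth, with gradient $\exp_x^{-1}(y_j)-\exp_x^{-1}(y_i)$. This gradient is nonzero wherever $y_i\ne y_j$, because $\exp_x$ is injective on the injectivity domain. So each level set $\{f=\alpha_j-\alpha_i\}$ is a smooth hypersurface and has zero volume.

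If this fails, there is a fallback that avoids the issue. At any point of differentiability of $\phi$, the superdifferential of the minimum must equal $\{\nabla_x c(x,y_i)\}$ for every active index $i$. Since the map $y\mapsto\exp_x^{-1}(y)$ is injective off the cut locus, all active sites must then coincide. So $T_\phi(x)$ is still one of the sites, and the tie analysis becomes unnecessary.
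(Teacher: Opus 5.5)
Your proposal is correct and follows essentially the paper's route. Both arguments use Lipschitz continuity and Rademacher, then the strict cells on which $\phi$ locally equals $\tfrac12 d(\cdot,y_i)^2+\alpha_i$, then null cut loci, then null tie sets via $\nabla(f_i-f_j)=\log_x(y_j)-\log_x(y_i)\neq 0$, giving $T_\phi(x)=y_i$ for $\mu$-a.e.\ $x$.

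Your caveat about first merging coincident sites is a genuine improvement. The paper's Step~4 infers $y_i\neq y_j$ from $i\neq j$, which fails when $y_i=y_j$ and $\alpha_i=\alpha_j$.

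Your fallback also works and bypasses the tie-set analysis entirely. At any differentiability point of $\phi$ off $\bigcup_i\mathrm{Cut}(y_i)$, every active $f_i$ touches $\phi$ from above, so $\nabla\phi(x)=-\log_x(y_i)$ and hence $T_\phi(x)=y_i$.
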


The proof of Corollary~\ref{cor:cod_rcpm} is deferred to Appendix~\ref{proof-cor:cod_rcpm}.

%% file: sections/solution.tex
We have shown that discretization is a fundamental obstacle to scalable OT on Riemannian manifolds. We therefore pursue a complementary approach based on continuous parameterizations of OT maps that avoid discrete representations altogether. Concretely, we introduce \emph{Riemannian Neural Optimal Transport (RNOT) maps}, which parameterize a transport potential with a neural network and recover the associated transport map via the Riemannian exponential map. This yields a manifold-valued generator whose pushforward is not restricted to finitely many locations, and which preserves the geometric structure of OT by construction.

\paragraph{Implicit Continuous Representations.}

To define RNOT, we start with a general framework for constructing continuous parameterizations of OT maps on Riemannian manifolds. The key idea is to approximate transport potentials within a class that is expressive, while enforcing the structural requirement of $c$-concavity by construction through the $c$-transform.

Recall from Section~\ref{sec:gdl-background} that $\mathcal{F}$ is dense in $C(\mathbb{R}^n,\mathbb{R})$ under the ucc topology. Given a feature map $\varphi:\mathcal{M}\to\mathbb{R}^n$, this induces the pullback class $$\varphi^{*}\mathcal{F}=\{f\circ\varphi:f\in\mathcal{F}\}\subset C(\mathcal{M},\mathbb{R}).$$
Since OT potentials are $c$-concave, we impose this property by passing to the $c$-transform. For any $\mathcal{G}\subset C(\mathcal{M},\mathbb{R})$, define the associated implicit $c$-concave class
\begin{equation*}
\mathfrak{C}(\mathcal{G}) \;:=\; \{\psi^{c}: \psi \in \mathcal{G}\}.
\end{equation*}
In particular, we approximate OT potentials in $\mathfrak{C}(\varphi^{*}\mathcal{F})$, so that $c$-concavity is built in rather than enforced via an external constraint. The following theorem shows that approximating the potential within this implicit $c$-concave class induces transport maps that converge to the true Riemannian OT map.

\begin{theorem}[Universality of Implicit $c$-Concave Potentials.]
\label{thm-universality-neural-rcpms}
Assume $\mu,\nu\in\mathcal{P}(\mathcal{M})$ with $\mu\ll\mathrm{vol}_{\mathcal{M}}$ and let $T_{\star}$
be the OT map from $\mu$ to $\nu$ for $c(x,y)=\tfrac12 d(x,y)^2$. Assume that $\mathcal{F}$ is a dense subset of $C(\mathbb{R}^{n},\mathbb{R})$ under the ucc topology and that the feature map $\varphi:\mathcal{M}\to \mathbb{R}^{n}$ satisfies Assumption~\ref{ass-feature-regularity}. Then there exists a
sequence of potentials $\{\phi_k\}_{k\ge 1}\subset \mathfrak{C}(\varphi^{*}\mathcal{F})$ with
associated maps
\begin{equation*}
T_k(x) \;:=\; \exp_{x}\!\bigl(-\nabla \phi_k(x)\bigr)
\end{equation*}
such that
\begin{equation*}
T_k(x)\longrightarrow T_{\star}(x)
\qquad \text{for }\mu\text{-almost every } x\in\mathcal{M}.
\end{equation*}
In particular, $T_k \to T_{\star}$ in probability under $\mu$.
\end{theorem}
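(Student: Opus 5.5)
Let $\phi_\star$ be the $c$-concave potential from McCann's theorem, so that $T_\star(x)=\exp_x(-\nabla\phi_\star(x))$ for $\mu$-a.e.\ $x$. By \eqref{eq-cc-identity}, $\phi_\star=\psi_\star^{c}$ with $\psi_\star:=\phi_\star^{c}$. As recalled in the background, $\psi_\star$ may be taken in $C(\mathcal{M},\mathbb{R})$.

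\textbf{Step 1: approximate $\psi_\star$ in the pullback class.} By Assumption~\ref{ass-feature-regularity} and \citet[Theorem~3.3]{non_euclidean_uat}, $\varphi^{*}\mathcal{F}$ is dense in $C(\mathcal{M},\mathbb{R})$ under $\|\cdot\|_\infty$. Choose $\psi_k\in\varphi^{*}\mathcal{F}$ with $\|\psi_k-\psi_\star\|_\infty\le 1/k$ and set $\phi_k:=\psi_k^{c}\in\mathfrak{C}(\varphi^{*}\mathcal{F})$.

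\textbf{Step 2: the $c$-transform is $1$-Lipschitz in sup norm.} For every $x$ and $y$,
\[
c(x,y)-\psi_k(y)\le c(x,y)-\psi_\star(y)+1/k .
\]
Taking the infimum over $y$ gives $\phi_k\le\phi_\star+1/k$, and the reverse bound follows symmetrically. Hence $\|\phi_k-\phi_\star\|_\infty\le 1/k$. Each $\phi_k$ is $c$-concave, so it is semiconcave with a uniform constant (the squared distance is uniformly semiconcave on compact $\mathcal{M}$) and uniformly Lipschitz. By Rademacher's theorem, each $\phi_k$ is differentiable vol-a.e. Since $\mu\ll\mathrm{vol}_{\mathcal{M}}$ and there are countably many $k$, the full-measure set $E$ on which $\phi_\star$ and all the $\phi_k$ are differentiable satisfies $\mu(E)=1$.

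\textbf{Step 3 (main obstacle): uniform convergence of semiconcave functions forces pointwise convergence of gradients at differentiability points of the limit.} At a point $x\in E$, I would work in normal coordinates around $x$. In these coordinates the functions $\phi_k$ are semiconcave with a common constant $\lambda$, so
\[
\phi_k(\exp_x v)\le\phi_k(x)+\langle\nabla\phi_k(x),v\rangle+\lambda|v|^2 \qquad\text{for } |v|\le r ,
\]
with $r$ and $\lambda$ uniform in $k$. The gradients $\nabla\phi_k(x)$ are bounded by the uniform Lipschitz constant (at most $\mathrm{diam}\,\mathcal{M}$), so it suffices to show that every subsequential limit $w$ equals $\nabla\phi_\star(x)$. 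Passing to the limit with $\|\phi_k-\phi_\star\|_\infty\to0$ gives
\[
\phi_\star(\exp_x v)\le\phi_\star(x)+\langle w,v\rangle+\lambda|v|^2 ,
\]
so $w$ lies in the superdifferential of $\phi_\star$ at $x$. At a differentiability point this superdifferential is the singleton $\{\nabla\phi_\star(x)\}$. Hence $\nabla\phi_k(x)\to\nabla\phi_\star(x)$. The delicate points are the uniformity of the semiconcavity constant (including near the cut locus) and the uniform radius $r$; both are handled by compactness of $\mathcal{M}$, following \citet{CorderoErausquin2001,McCann2001}.

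\textbf{Step 4: conclude.} The map $(x,v)\mapsto\exp_x(v)$ is continuous on $T\mathcal{M}$. Therefore $T_k(x)=\exp_x(-\nabla\phi_k(x))\to\exp_x(-\nabla\phi_\star(x))=T_\star(x)$ for every $x\in E$, that is, for $\mu$-a.e.\ $x$. Convergence in probability follows because $\mu$ is a finite measure, so a.e.\ convergence implies convergence in measure; concretely, dominated convergence applies to $\mathbbm{1}\{d(T_k,T_\star)>\epsilon\}$.
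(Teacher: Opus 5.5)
Your proof is correct. It shares the paper's outer skeleton: uniform approximation of $\psi_\star=\phi_\star^c$ in $\varphi^*\mathcal F$, the bound $\|\psi_k^c-\psi_\star^c\|_\infty\le\|\psi_k-\psi_\star\|_\infty$, a countable intersection of full-measure differentiability sets, and continuity of $\exp_x$. The key gradient-convergence step, however, goes by a genuinely different mechanism.

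\textbf{The paper's route.} Lemma~\ref{thm-convergence-gradients} works through the $c$-subdifferential. At a common differentiability point it uses McCann's Lemma~7 to get unique minimizers $y_k(x),y^\star(x)$ with $\nabla\phi_k(x)=-\log_x(y_k(x))$. It then obtains $y_k(x)\to y^\star(x)$ from argmin stability (Lemma~\ref{lemma-stability-minimizer}). Finally it uses continuity of $\log_x$ near $y^\star(x)$, which requires $y^\star(x)\notin\mathrm{Cut}(x)$.

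\textbf{Your route.} Your argument is the Riemannian analogue of the convex-analysis fact that uniform limits of convex functions have convergent gradients at differentiability points of the limit. A uniform semiconcavity constant lets the supergradient inequality pass to the limit. Uniqueness of supergradients at a differentiability point then pins down every cluster point of $\nabla\phi_k(x)$.

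\textbf{What your route buys.} You are independent of the cut locus and of uniqueness of the inner minimizers. In particular, you sidestep the paper's inference ``$\log_x$ is defined at $y^\star(x)$, hence $y^\star(x)\notin\mathrm{Cut}(x)$''. That inference is not justified in general: differentiability only gives a unique minimizing geodesic, and its endpoint may still be a first conjugate point. Repairing it needs either the Hessian analysis of \citet{CorderoErausquin2001} or a compactness argument on minimizing velocities.

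\textbf{What it costs.} You need the estimate $\tfrac12 d(\exp_x v,y)^2\le\tfrac12 d(x,y)^2-\langle w,v\rangle+\lambda\|v\|^2$, where $w$ is the initial velocity of any minimizing geodesic from $x$ to $y$ and $\lambda$ is uniform in $x,y$. This is a standard comparison estimate on compact manifolds and holds across the cut locus, so the points you flag as delicate are covered by it.

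\textbf{What the paper's route buys.} The argmin route is the one that upgrades to rates. Since $T_k(x)=y_k(x)$ at differentiability points, minimizer stability alone already gives $T_k(x)\to T_\star(x)$. Adding quadratic growth of $y\mapsto c(x,y)-\psi_\star(y)$ at $y^\star(x)$ yields the $O(\sqrt{\varepsilon})$ bound of Theorem~\ref{thm-convergence-transport-maps}. Your semiconcavity argument, by contrast, is purely qualitative.
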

The proof of Theorem~\ref{thm-universality-neural-rcpms} is deferred to Appendix~\ref{proof-universality-neural-rcpms}.

\paragraph{RNOT Potentials and Maps.} We now instantiate the framework with neural networks, yielding a trainable class of Riemannian OT potentials and maps. Fix an activation $\sigma:\mathbb{R}\to\mathbb{R}$ and let $\mathcal{NN}_{n}^{\sigma}\subset C(\mathbb{R}^{n},\mathbb{R})$ denote a feed-forward neural network realization class that is dense in $C(\mathbb{R}^{n},\mathbb{R})$ under the ucc topology. For standard choices of $\sigma$ (e.g., non-polynomial activations), ucc-density follows from universal approximation theorems; see \citet{Leshno1993,NIPS2017_32cbf687,Zhou2020}. This leads to the following definition.
\begin{definition}[Riemannian Neural OT Potentials and Maps]
Let $\varphi:\mathcal{M}\to \mathbb{R}^{n}$ be a feature map satisfying Assumption~\ref{ass-feature-regularity} and let $\mathcal{NN}^{\sigma}_{n}$ be as above. Define the class of \emph{RNOT potentials} by
\begin{equation*}
\mathfrak{C}\bigl(\varphi^{*}\mathcal{NN}^{\sigma}_{n}\bigr)
\;=\;
\bigl\{\, (f\circ\varphi)^{c} : f\in\mathcal{NN}^{\sigma}_{n} \,\bigr\}.
\end{equation*}
The associated \emph{RNOT maps} are obtained by the exponential-formula
\[
T(x)=\exp_{x}\!\bigl(-\nabla \phi(x)\bigr),
\qquad \phi\in \mathfrak{C}\bigl(\varphi^{*}\mathcal{NN}^{\sigma}_{n}\bigr).
\]
\end{definition}

This specialization yields a practical neural hypothesis class of RNOT potentials and maps which can be implemented via implicit layers and trained end-to-end from samples using the Kantorovich semi-dual objective, thereby avoiding the expensive Jacobian-determinant evaluations that arise in likelihood/KL-based training of normalizing-flow models (e.g., \citet{rcpm-cohen,normalizing_tori_sphere,NEURIPS2020_1aa3d9c6}). Training and implementation details are deferred to Appendix~\ref{app-implementation-details}.

\section{Riemannian Neural Optimal Transport Breaks the CoD}
\begin{figure*}[!t]
\centering
\begin{subfigure}[b]{0.24\textwidth}
    \includegraphics[width=\textwidth]{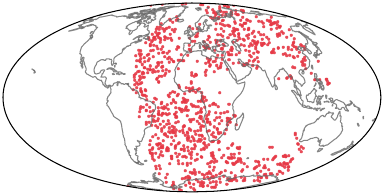}
    \caption{Source $\mu$ (150 Ma)}
\end{subfigure}
\hfill
\begin{subfigure}[b]{0.24\textwidth}
    \includegraphics[width=\textwidth]{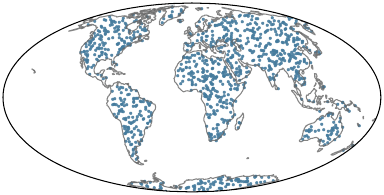}
    \caption{Target $\nu$ (Present)}
\end{subfigure}
\hfill
\begin{subfigure}[b]{0.24\textwidth}
    \includegraphics[width=\textwidth]{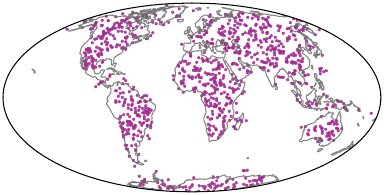}
    \caption{Transported $T_\#\mu$}
\end{subfigure}
\hfill
\begin{subfigure}[b]{0.24\textwidth}
    \includegraphics[width=\textwidth]{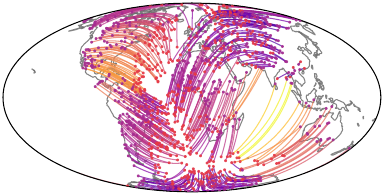}
    \caption{Transport Map $T$}
\end{subfigure}
\caption{Continental drift optimal transport on $\mathbb{S}^{2}$. Left to right: source mass distribution $\mu$ ($\sim 150$ million years ago), target distribution $\nu$ (present day), transported distribution $T_{\#}\mu$, and geodesic trajectories induced by the learned transport map $T$.} 
\label{fig:continental_drift}
\end{figure*}
Having introduced RNOT in Section~\ref{sec-universality-neural-rcpms}, we now prove that \emph{RNOT breaks the CoD}: for sufficiently regular problems, both the Kantorovich potential and the resulting transport map can be approximated to accuracy \(\varepsilon\) by RNOT models whose parameter count and depth scale polynomially in \(\varepsilon^{-1}\) (and do not grow exponentially with the manifold dimension). Throughout, we work with non-affine piecewise-linear activations and the architecture class \(\mathcal{NN}_{n,W,L}^{\sigma}\), with input dimension \(n\), number of trainable parameters \(W\), and \(L\) layers (Definitions~\ref{def:NN-yarotsky-arch}--\ref{def:NN-yarotsky-size}). We split the argument into three steps.
\paragraph{Step 1: Function Approximation on \(\mathcal M\) with Polynomial \(\varepsilon^{-1}\)-Complexity.}

Our first result establishes uniform approximation of sufficiently smooth functions on \(\mathcal M\) with explicit bounds on the parameter count \(W_\varepsilon\) and depth \(L_\varepsilon\).

\begin{theorem}[Polynomial \(\varepsilon\)-Complexity for Functions on Manifolds]
\label{thm:neural-rcpm-approx-psi}
Let $\sigma:\mathbb R\to\mathbb R$ be any non-affine piecewise-linear activation function. Fix \(k\in\mathbb N\) and let \(\psi\in C^{kp,1}(\mathcal M,\mathbb R)\).
Then, for every \(\varepsilon\in(0,1)\), there exists a feature map \(\varphi_{\star}:\mathcal M\to\mathbb R^{2p}\) satisfying Assumption~\ref{ass-feature-regularity}, integers $W_{\varepsilon},L_{\varepsilon}\in \mathbb{N}$, and a neural network \(\hat{g}_{\varepsilon}\in \mathcal{NN}_{2p, W_{\varepsilon},L_{\varepsilon}}^{\sigma}\),
such that the pullback network
\[
\hat{\psi}_{\varepsilon} \;:=\; \hat{g}_{\varepsilon}\circ \varphi_{\star}
\;\in\; \varphi_\star^{*}\mathcal{NN}_{2p,W_{\varepsilon},L_{\varepsilon}}^{\sigma}
\]
approximates \(\psi\) uniformly: 
\[
\|\psi-\hat{\psi}_{\varepsilon}\|_{\infty} \;<\; \varepsilon.
\]
Moreover, \(\hat{g}_{\varepsilon}\) satisfies the following complexity estimates:
\begin{itemize}
    \item $W_{\varepsilon} = \mathcal{O}\left(\varepsilon^{-\frac{4p}{3(kp+1)}}\right)$,
    \item $L_{\varepsilon} = \mathcal{O}\left(\varepsilon^{-\frac{2p}{3(kp+1)}}\right)$.
\end{itemize}
\end{theorem}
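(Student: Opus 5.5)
The plan is to pass from $\mathcal M$ to a Euclidean cube through a Whitney embedding, extend $\psi$ smoothly there, and then apply the deep--discontinuous ReLU rates of \citet{yarotsky_cod}. The exponents in the statement pin down the regime. Take input dimension $n=2p$ and Hölder smoothness $r=kp+1$. The parameter exponent $\frac{4p}{3(kp+1)}$ then equals $\frac{n}{r}\cdot\frac{2}{3}$, which corresponds to choosing the rate $\beta=\frac{3r}{2n}=\frac{3(kp+1)}{4p}$. This lies strictly inside the deep--discontinuous window $\frac{r}{n}<\beta<\frac{2r}{n}$. The depth $L\sim W^{1/2}$ is exactly what that construction uses at this $\beta$, since its depth scales like $W^{\beta n/r-1}$. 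So the argument is: embed, extend, rescale, invoke Yarotsky, and pull back.

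\textbf{Step 1 (feature map).} By the strong Whitney embedding theorem there is a smooth embedding $\iota:\mathcal M\to\mathbb R^{2p}$. Compose it with an affine map so the image lies in the interior of $[0,1]^{2p}$, and call the result $\varphi_\star$. It is smooth and injective, hence satisfies Assumption~\ref{ass-feature-regularity}. Note that $\varphi_\star$ can be chosen independently of $\varepsilon$.

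\textbf{Step 2 (extension to the cube).} Define $g_0:=\psi\circ\varphi_\star^{-1}$ on the compact submanifold $\varphi_\star(\mathcal M)$. Since $\varphi_\star$ is a smooth diffeomorphism onto its image, $g_0$ inherits $C^{kp,1}$ regularity. To extend it, take a tubular neighbourhood $U$ with smooth nearest-point retraction $\pi:U\to\varphi_\star(\mathcal M)$ and a smooth cutoff $\chi$ equal to $1$ near the image and supported in $U$. Set $g:=\chi\cdot(g_0\circ\pi)$, extended by zero. Then $g\in C^{kp,1}([0,1]^{2p})$, with Hölder norm bounded by a constant $A$ depending only on $\psi$ and the geometry. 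Hence $g/A\in\mathcal H_{r,2p}$ with $r=kp+1$, using the convention that $C^{m,1}$ corresponds to Hölder smoothness $m+1$, as the paper states.

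\textbf{Step 3 (network approximation and pullback).} Apply \citet[Theorem~3.3]{yarotsky_cod} with $n=2p$ and $\beta=\frac{3r}{2n}$. This gives ReLU networks with $W$ weights and depth $L=\mathcal O(W^{1/2})$ satisfying $\|g/A-\hat g_W\|_{\infty,[0,1]^{2p}}\le C W^{-\beta}$. Multiply the output weights by $A$ and choose $W_\varepsilon\asymp (\varepsilon/(AC))^{-1/\beta}=\mathcal O(\varepsilon^{-4p/(3(kp+1))})$. This yields an error below $\varepsilon$ and $L_\varepsilon=\mathcal O(\varepsilon^{-2p/(3(kp+1))})$. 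Finally, $\|\psi-\hat g_\varepsilon\circ\varphi_\star\|_\infty\le\sup_{[0,1]^{2p}}|g-\hat g_\varepsilon|<\varepsilon$, because $g\circ\varphi_\star=\psi$.

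For a general non-affine piecewise-linear $\sigma$, I would note that each ReLU can be emulated exactly by a fixed number of $\sigma$-units on bounded inputs (for example, as a difference of shifted copies of $\sigma$ around a breakpoint). This changes $W$ and $L$ only by constant factors.

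\textbf{Main obstacle.} The step I expect to be hardest is Step 3's invocation of Yarotsky. The depth--width scaling and the constants are stated asymptotically in $W$, and I must check that the precise depth of the deep--discontinuous construction at $\beta=\frac{3r}{2n}$ is indeed $\mathcal O(W^{1/2})$. A secondary check is that the extension in Step 2 preserves the full $C^{kp,1}$ norm. This needs $\pi$ to be $C^{kp+1}$, which holds because the embedding is smooth.
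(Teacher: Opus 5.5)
Your proposal is correct and follows essentially the same route as the paper: Whitney embedding into $(0,1)^{2p}$, extension of $\psi\circ\varphi_\star^{-1}$ via the tubular retraction, rescaling into $\mathcal H_{r,2p}$ with $r=kp+1$, Yarotsky's Theorem~3.3 at $\beta=3r/(2n)$ (so $1/\beta=\frac{4p}{3(kp+1)}$ and $L\lesssim W^{\beta n/r-1}=W^{1/2}$), pullback, and a constant-factor ReLU-to-$\sigma$ conversion. The only difference is how you finish the extension: you use a smooth cutoff, $\chi\cdot(g_0\circ\pi)$ extended by zero, which is more elementary than the paper's construction of a smooth tubular domain $\Omega$ followed by the Gilbarg--Trudinger extension lemma, and it gives the same $C^{kp,1}$ norm control.
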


The proof of Theorem~\ref{thm:neural-rcpm-approx-psi} is postponed to Appendix~\ref{proof-neural-rcpm-escape-cod}.

Theorem~\ref{thm:neural-rcpm-approx-psi} \emph{strengthens} the approximation guarantee of \citet{cod_kratsios}: While they control the error uniformly on a prescribed finite dataset $\mathcal{D}\subset\mathcal{M}$, we obtain a uniform approximation bound over the entire manifold $\mathcal{M}$ (under the stated smoothness assumptions).

\paragraph{Step 2: Uniform Approximation of Kantorovich Potentials via the \(c\)-Transform.}

We now lift the function-approximation guarantee of Theorem~\ref{thm:neural-rcpm-approx-psi} to OT potentials. RNOT restricts the search for dual potentials to the hypothesis class \(\mathfrak{C}\!\bigl(\varphi_\star^{*}\mathcal{NN}^{\sigma}_{2p,W,L}\bigr)\), that is, \(c\)-transforms of pullback networks.
Accordingly, to approximate an OT potential \(\phi_\star\) it suffices to approximate its prepotential
\(\psi_\star:=\phi_\star^{c}\) as an ordinary scalar function on \(\mathcal M\). We show that if $\hat{\psi}_{\varepsilon}$ uniformly approximates \(\psi_\star\), then its $c$-transform \(\hat\phi_\varepsilon=\hat{\psi}_{\varepsilon}^{c}\) belongs to \( \mathfrak{C}(\varphi_\star^{*}\mathcal{NN}^{\sigma}_{2p,W_\varepsilon,L_\varepsilon})\)
and uniformly approximates \(\phi_\star\) with the same \(\varepsilon\)-complexity.
Consequently, OT potentials in the RNOT class admit polynomial \(\varepsilon^{-1}\)-scaling in parameter count and depth, avoiding the discretization-induced CoD identified in Section~\ref{sec:problem}.

\begin{corollary}[Polynomial \(\varepsilon^{-1}\)-Complexity for RNOT Potentials]
\label{prop-uncursed-potential}
Assume \(\mu,\nu\in\mathcal P(\mathcal M)\) with \(\mu\ll\mathrm{vol}_{\mathcal M}\), and let
\(\phi_\star\) be an OT potential for \(c(x,y)=\tfrac12 d(x,y)^2\).
Fix \(k\in\mathbb N\) and assume that its prepotential \(\psi_\star:=\phi_\star^{c}\) belongs to \(C^{kp,1}(\mathcal M,\mathbb{R})\).
Let \(\sigma\) be any non-affine piecewise-linear activation.

For any \(\varepsilon\in(0,1)\), let \(\hat\psi_\varepsilon\) be the pullback network provided by
Theorem~\ref{thm:neural-rcpm-approx-psi} applied to \(\psi_\star\), so that
\(\|\psi_\star-\hat\psi_\varepsilon\|_\infty<\varepsilon\).
Define the induced \(c\)-concave potential \(\hat\phi_\varepsilon:=\hat\psi_\varepsilon^{c}\).
Then
\[
\|\phi_\star-\hat\phi_\varepsilon\|_\infty \le \varepsilon,
\]
and the network implementing \(\hat\psi_\varepsilon\) has parameter count \(W_\varepsilon\) and depth \(L_\varepsilon\)
satisfying the same bounds as in Theorem~\ref{thm:neural-rcpm-approx-psi}.
\end{corollary}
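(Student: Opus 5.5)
The plan is to combine three ingredients: the $c$-concave identity \eqref{eq-cc-identity}, the fact that the $c$-transform is $1$-Lipschitz (nonexpansive) in the uniform norm, and Theorem~\ref{thm:neural-rcpm-approx-psi}.

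\textbf{Step 1: reduce to the prepotential.} Since $\phi_\star$ is $c$-concave, \eqref{eq-cc-identity} gives $\phi_\star=\phi_\star^{cc}=(\phi_\star^{c})^{c}=\psi_\star^{c}$. So the target potential is itself the $c$-transform of the prepotential $\psi_\star$. We may assume $\phi_\star$, and hence $\psi_\star$, is real-valued and continuous, as discussed after \eqref{eq-cc-identity}. The assumption $\psi_\star\in C^{kp,1}(\mathcal M,\mathbb R)$ guarantees this in any case.

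\textbf{Step 2: nonexpansiveness of the $c$-transform.} For any bounded $\psi_1,\psi_2:\mathcal M\to\mathbb R$ and fixed $x$, each $y$ satisfies
\[
c(x,y)-\psi_1(y)\le c(x,y)-\psi_2(y)+\|\psi_1-\psi_2\|_\infty .
\]
Taking the infimum over $y$ gives $\psi_1^{c}(x)\le \psi_2^{c}(x)+\|\psi_1-\psi_2\|_\infty$. Exchanging the roles of $\psi_1$ and $\psi_2$ and taking the supremum over $x$ yields
\[
\|\psi_1^{c}-\psi_2^{c}\|_\infty\le\|\psi_1-\psi_2\|_\infty .
\]
All infima are finite because $\mathcal M$ is compact, $c$ is continuous, and the functions are bounded. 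In particular $\hat\psi_\varepsilon=\hat g_\varepsilon\circ\varphi_\star$ is continuous, since $\hat g_\varepsilon$ is a piecewise-linear network and $\varphi_\star$ is continuous.

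\textbf{Step 3: conclude.} Apply Theorem~\ref{thm:neural-rcpm-approx-psi} to $\psi_\star$ to obtain $\hat\psi_\varepsilon$ with $\|\psi_\star-\hat\psi_\varepsilon\|_\infty<\varepsilon$ and the stated bounds on $W_\varepsilon$ and $L_\varepsilon$. Step 2 then gives
\[
\|\phi_\star-\hat\phi_\varepsilon\|_\infty=\|\psi_\star^{c}-\hat\psi_\varepsilon^{c}\|_\infty\le\|\psi_\star-\hat\psi_\varepsilon\|_\infty<\varepsilon .
\]
By definition, $\hat\phi_\varepsilon\in\mathfrak C(\varphi_\star^{*}\mathcal{NN}^{\sigma}_{2p,W_\varepsilon,L_\varepsilon})$. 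No new network is built: the only trainable object is $\hat g_\varepsilon$, so the complexity bounds carry over unchanged.

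\textbf{Main obstacle.} The only delicate point is Step 1, namely making sure the $c$-transform of $\psi_\star$ returns exactly $\phi_\star$. This requires $\phi_\star$ to be $c$-concave, which holds by McCann's theorem since it is an OT potential, together with the identity \eqref{eq-cc-identity}. Everything else is a one-line inequality.
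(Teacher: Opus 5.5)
Your proposal is correct and follows essentially the same route as the paper. You identify $\phi_\star=\psi_\star^{c}$ via the $cc$-identity, show the $c$-transform is nonexpansive in $\|\cdot\|_\infty$ by sandwiching and taking infima over $y$, and note that the complexity bounds carry over because $\hat g_\varepsilon$ is the only network involved; the only cosmetic difference is that you state nonexpansiveness for arbitrary bounded $\psi_1,\psi_2$, whereas the paper proves it directly for the pair $(\psi_\star,\hat\psi_\varepsilon)$.
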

The proof of Corollary~\ref{prop-uncursed-potential} is postponed to Appendix~\ref{app-proof-prop-uncursed-potential}.

\paragraph{Step 3: Pointwise Approximation of OT Maps via Stability of the Minimizer.}
The transport map is defined implicitly through the inner problem
\(T(x)\in\arg\min_{y\in\mathcal M}\{\tfrac12 d(x,y)^2-\psi(y)\}\),
so a priori small perturbations of the potential could lead to large changes in the selected minimizer.
We show that this does not happen under mild regularity assumptions: away from singular geometric configurations, a uniform \(\varepsilon\)-approximation of the potential yields pointwise control of the induced map on a full-\(\mu\)-measure subset of \(\mathcal M\), with error bounded by \(O(\sqrt{\varepsilon})\).

\begin{theorem}[Pointwise Stability of RNOT Maps]
\label{thm-convergence-transport-maps}
Let \(\mu,\nu\in\mathcal P(\mathcal M)\) satisfy \(\mu,\nu\ll \mathrm{vol}_{\mathcal M}\), and consider the quadratic cost
\(
c(x,y)=\tfrac12 d(x,y)^2.
\)
Let \(\phi_\star\) be an optimal transport potential from \(\mu\) to \(\nu\), and set its prepotential
\(
\psi_\star := \phi_\star^{c}.
\)
Fix \(k\in\mathbb N\) and assume \(\psi_\star\in C^{kp,1}(\mathcal M)\).

Then there exists a measurable set \(\Omega\subset \mathcal M\) with \(\mu(\Omega)=1\) such that for every \(x\in\Omega\) there are constants
\(C(x)>0\) and \(\varepsilon_x>0\) with the following property.

For any \(\varepsilon\in(0,\min\{1,\varepsilon_x\})\) and any continuous function \(\psi_\varepsilon\in C(\mathcal M)\) satisfying
\[
\|\psi_\varepsilon-\psi_\star\|_\infty \le \varepsilon,
\]
define a (possibly set-valued) map \(T_\varepsilon\) by selecting any minimizer
\[
T_\varepsilon(x)\in \arg\min_{y\in\mathcal M}\Bigl\{\tfrac12 d(x,y)^2-\psi_\varepsilon(y)\Bigr\}.
\]
Let \(T_\star\) be the optimal transport map induced by \(\phi_\star\). Then every such selection satisfies
\[
d\!\left(T_\varepsilon(x),T_\star(x)\right)\ \le\ 2\sqrt{\frac{\varepsilon}{C(x)}}.
\]
\end{theorem}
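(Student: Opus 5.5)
The plan is the standard "quadratic growth plus uniform perturbation" argument for argmin stability. For each \(x\), set
\[
F_\star(y):=\tfrac12 d(x,y)^2-\psi_\star(y), \qquad F_\varepsilon(y):=\tfrac12 d(x,y)^2-\psi_\varepsilon(y).
\]
Then \(\phi_\star(x)=\min_y F_\star(y)\), using \(\phi_\star=\psi_\star^{c}\), which follows from \(\phi_\star^{cc}=\phi_\star\) in \eqref{eq-cc-identity}. Moreover \(\|F_\varepsilon-F_\star\|_\infty\le\varepsilon\). The first task is to identify \(T_\star(x)\) as the unique minimizer of \(F_\star\) for \(\mu\)-a.e.\ \(x\). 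McCann's theorem gives \(T_\star(x)=\exp_x(-\nabla\phi_\star(x))\) at every point where \(\phi_\star\) is differentiable. Since \(\phi_\star\) is \(c\)-concave, it is Lipschitz and in fact semiconcave, so it is differentiable \(\mathrm{vol}_{\mathcal M}\)-a.e.; because \(\mu\ll\mathrm{vol}_{\mathcal M}\), this holds \(\mu\)-a.e. At such a point the minimizer \(y^\ast\) of \(F_\star\) is unique and equals \(\exp_x(-\nabla\phi_\star(x))\). To see this, note that at any minimizer \(y^\ast\) the function \(x'\mapsto \tfrac12 d(x',y^\ast)^2-\psi_\star(y^\ast)-\phi_\star(x')\ge0\) attains its minimum \(0\) at \(x\). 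Hence \(\tfrac12 d(\cdot,y^\ast)^2\) is superdifferentiable at \(x\) with supergradient \(\nabla\phi_\star(x)\), and this forces \(y^\ast=\exp_x(-\nabla\phi_\star(x))\) (McCann, Lemma 7).

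The key step is a \emph{quadratic growth} estimate: for \(x\) in a full-measure set \(\Omega\) there is \(C(x)>0\) with
\[
F_\star(y)-F_\star(T_\star(x))\ \ge\ C(x)\,d(y,T_\star(x))^2 \quad\text{for all } y\in\mathcal M.
\]
I would first prove this locally, for \(y\) near \(y^\ast=T_\star(x)\). Exclude from \(\Omega\) the points \(x\) with \(y^\ast\) in the cut locus of \(x\), and the points where \(\nabla^2\phi_\star\) fails to exist in the Alexandrov sense. Both exclusions are \(\mu\)-null: the first by Cordero-Erausquin--McCann--Schmuckenschl\"ager together with \(\nu\ll\mathrm{vol}_{\mathcal M}\), the second by Alexandrov's theorem for semiconcave functions. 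Away from the cut locus, \(y\mapsto\tfrac12 d(x,y)^2\) is smooth near \(y^\ast\). The smoothness \(\psi_\star\in C^{kp,1}\) with \(kp\ge1\) makes \(F_\star\) \(C^{1,1}\) there, so a second-order expansion applies. The nondegeneracy needed is that the Hessian of \(F_\star\) at \(y^\ast\) be positive definite. I would derive this from \(\nu\ll\mathrm{vol}\) via the Jacobian (Monge--Amp\`ere) equation: a degenerate Hessian would collapse volume and contradict absolute continuity of \(\nu\) on a positive-\(\mu\)-measure set. If that argument proves too delicate, the fallback is to shrink \(\Omega\) by a null set using this Jacobian relation. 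For the global estimate, compactness plus uniqueness of the minimizer give \(\inf_{d(y,y^\ast)\ge r}(F_\star-F_\star(y^\ast))=:\eta>0\). This converts to \(\eta\ge(\eta/\mathrm{diam}(\mathcal M)^2)\,d^2\), and \(C(x)\) is the minimum of the local and global constants.

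The conclusion then follows directly. Let \(\hat y\) be any minimizer of \(F_\varepsilon\). Then
\[
F_\star(\hat y)\le F_\varepsilon(\hat y)+\varepsilon\le F_\varepsilon(y^\ast)+\varepsilon\le F_\star(y^\ast)+2\varepsilon,
\]
so \(C(x)\,d(\hat y,y^\ast)^2\le 2\varepsilon\). This gives \(d\le\sqrt{2\varepsilon/C(x)}\le2\sqrt{\varepsilon/C(x)}\). The threshold \(\varepsilon_x\) is only needed if one keeps the local estimate alone: choose \(\varepsilon_x\) so that \(2\varepsilon<\eta\), which forces \(\hat y\) into the ball where the local quadratic bound holds.

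The main obstacle is the nondegeneracy of the Hessian of \(F_\star\) at \(T_\star(x)\) for \(\mu\)-a.e.\ \(x\). This is where \(\nu\ll\mathrm{vol}_{\mathcal M}\) and the regularity of \(\psi_\star\) must be used. Everything else is standard compactness and semiconcavity.
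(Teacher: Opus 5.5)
Your proposal is correct and follows essentially the same route as the paper. You identify $T_\star(x)$ as the unique minimizer of $F_\star$ at differentiability points, and use positive definiteness of the Hessian of $F_\star$ at $T_\star(x)$ for local quadratic growth. You then use compactness and uniqueness to control points away from $T_\star(x)$ and conclude from the $2\varepsilon$ objective gap. Your globalization of the growth constant via $\eta/\mathrm{diam}(\mathcal M)^2$ is a harmless variant of the paper's localization step with $\varepsilon_x=\delta_x/4$, and it even makes $\varepsilon_x$ unnecessary.

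The nondegeneracy you flag as the main obstacle is exactly what the paper imports rather than derives. It takes $\Omega$ to be the Cordero-Erausquin--McCann--Schmuckenschl\"ager set of ``nice points'', which has full $\mu$-measure by their Claim~4.4; this is where absolute continuity enters, through essentially the change-of-variables reasoning you sketch. The definition of that set already includes existence and positivity of the Aleksandrov--Bangert Hessian of $y\mapsto\tfrac12 d(x,y)^2-\psi_\star(y)$ at $T_\star(x)$. Citing it also covers the case $kp=1$, where $C^{1,1}$ regularity of $\psi_\star$ alone would not give you a second-order expansion at that specific point.
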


The proof of Theorem~\ref{thm-convergence-transport-maps} is deferred to Appendix~\ref{app-proof-thm-convergence-transport-maps}.

Combining Theorem~\ref{thm-convergence-transport-maps} with Corollary~\ref{prop-uncursed-potential} yields polynomial \(\varepsilon^{-1}\)-complexity for pointwise approximation of the OT map \(T_\star\) on a full-\(\mu\)-measure set, with pointwise error \(O(\sqrt{\varepsilon})\).

%% file: sections/experiments.tex
In this section, we benchmark the empirical performance of RNOT maps against manifold normalizing-flow baselines. We first evaluate modeling fidelity on a real-world geological dataset arising from continental drift on $\mathbb{S}^{2}$. We then study scalability with respect to manifold dimension using synthetic experiments on high-dimensional spheres $\mathbb{S}^{n}$ and tori $\mathbb{T}^{n}$. For the RNOT implementation, we use the Gromov embedding from Proposition~\ref{prop-gromov-embedding} and select its landmark set using either farthest-point sampling (FPS) or uniform random sampling (RND), allowing us to assess the impact of landmark geometry on performance; see Appendix~\ref{app-implementation-details} for details. Experimental details are provided in Appendix~\ref{app-experiment-setup}. Code is available upon request.

\begin{figure*}[t]
\centering
\setlength{\tabcolsep}{20pt}
\begin{tabular}{cccc}
\includegraphics[height=0.1\textheight]{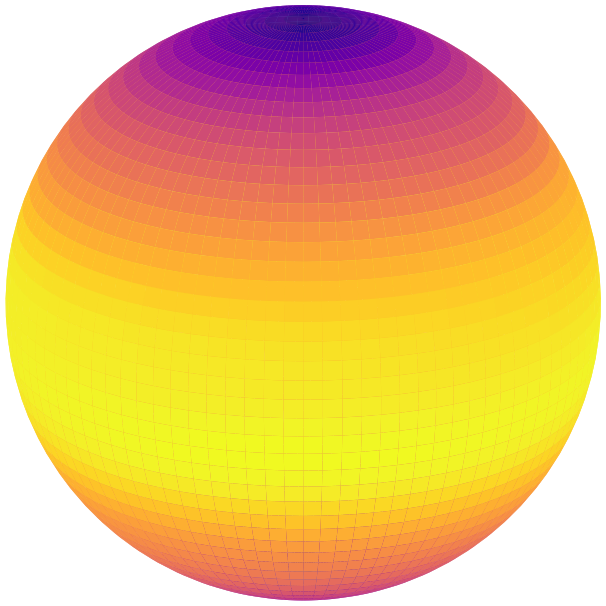} &
\includegraphics[height=0.1\textheight]{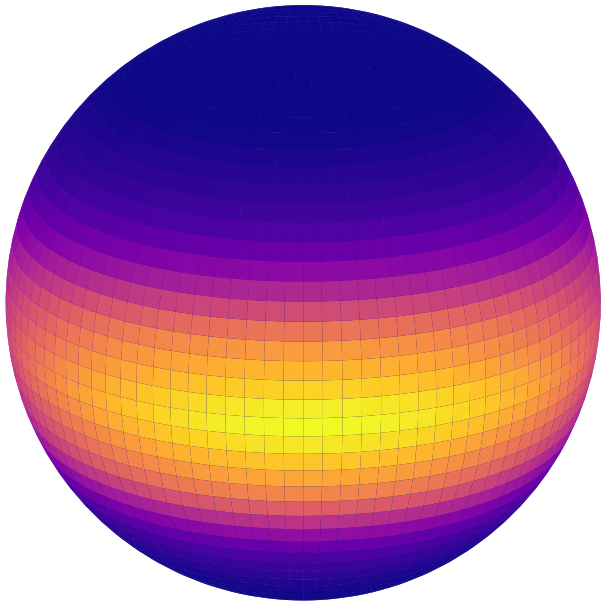} &
\includegraphics[height=0.1\textheight]{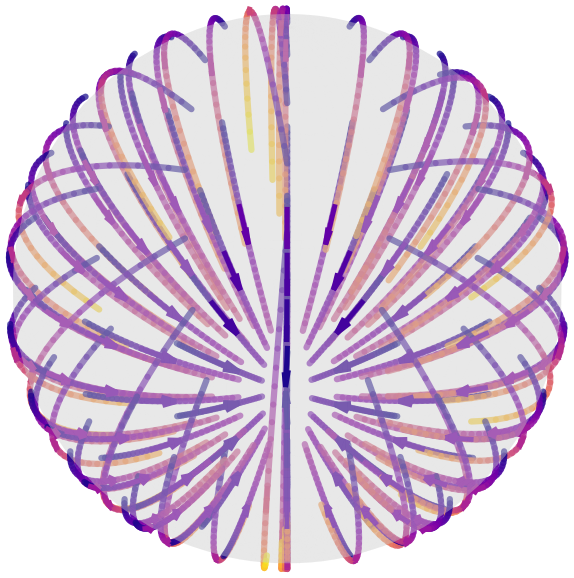} &
\includegraphics[height=0.1\textheight]{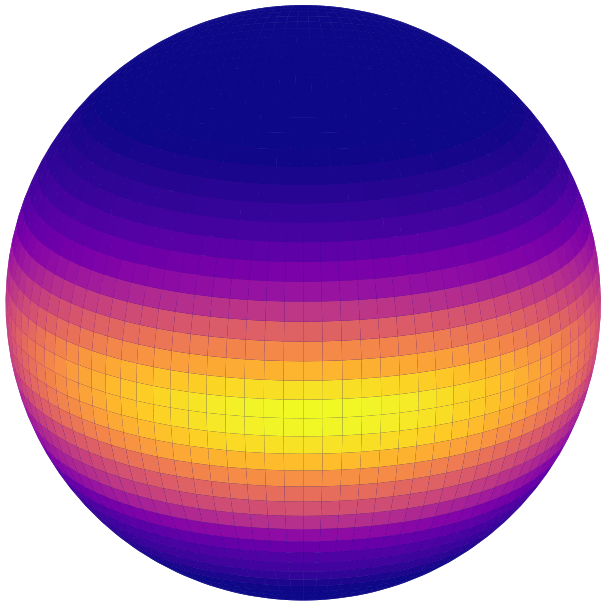} \\ [1ex]
\includegraphics[height=0.08\textheight]{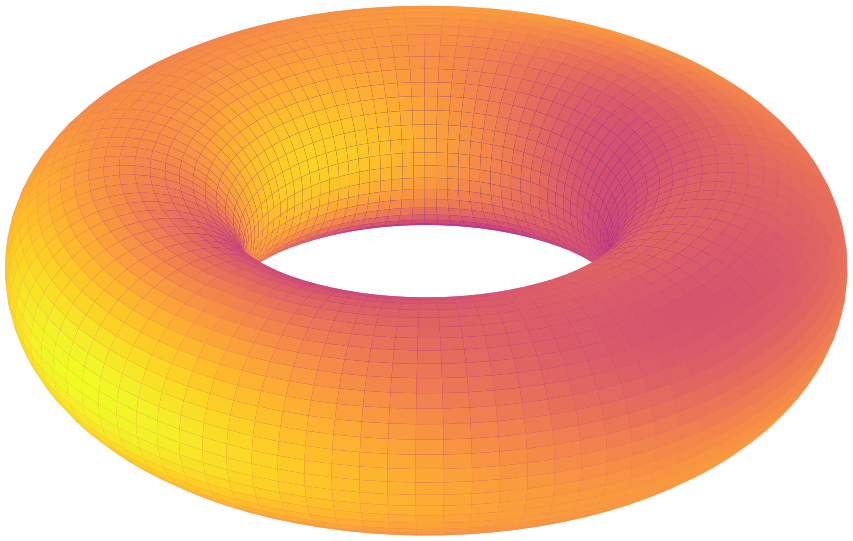} &
\includegraphics[height=0.08\textheight]{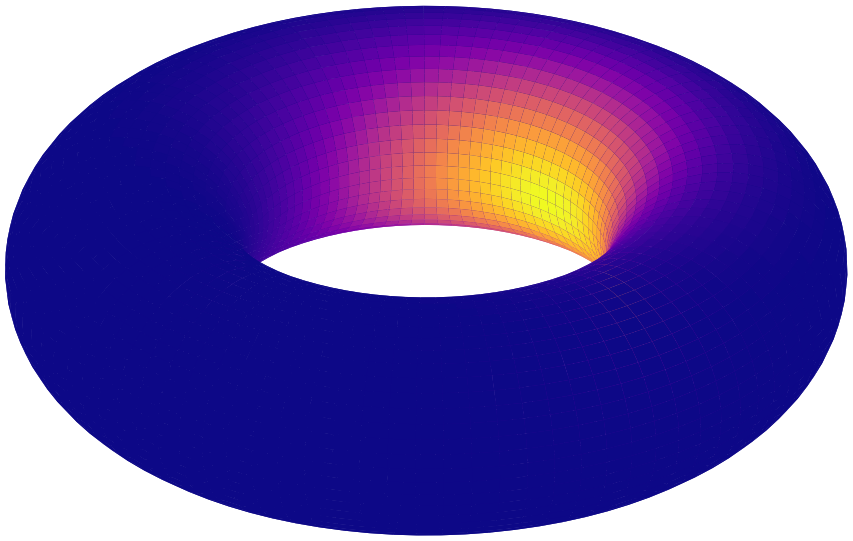} &
\includegraphics[height=0.08\textheight]{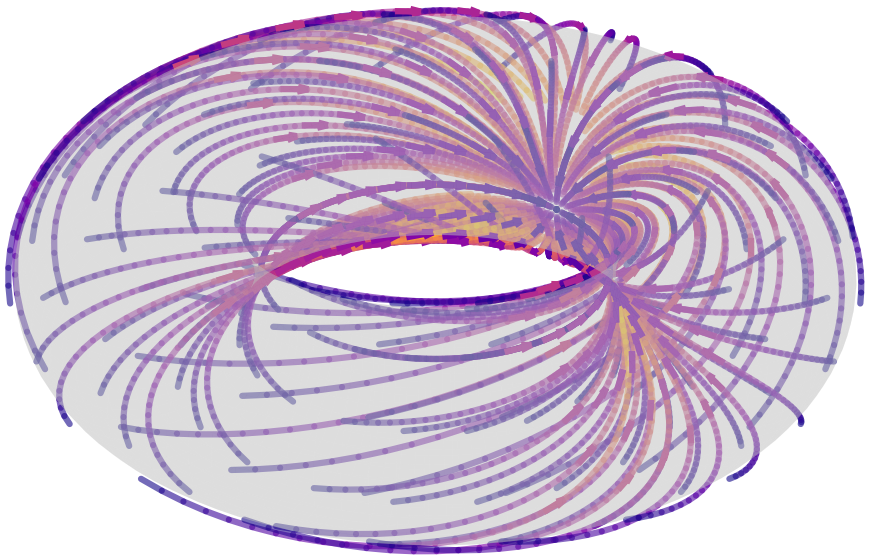} &
\includegraphics[height=0.08\textheight]{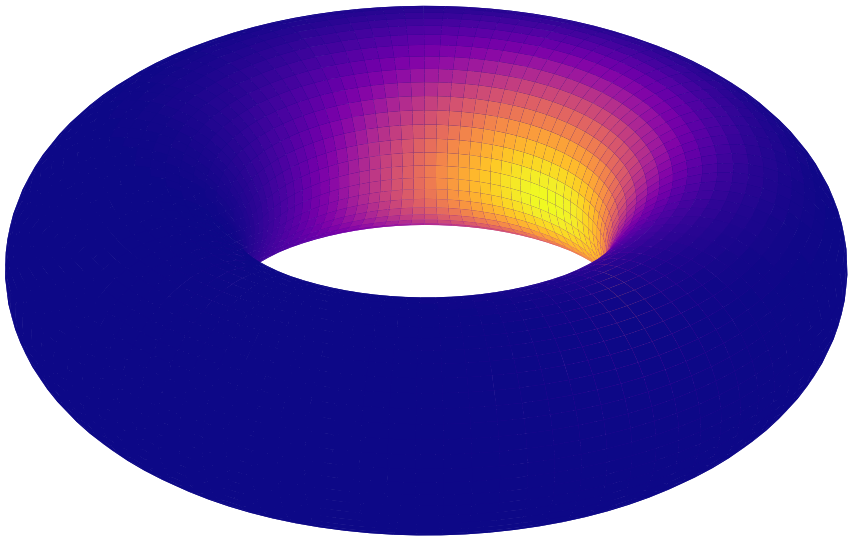} \\
\small \emph{(a)} Source $\mu$ & \small \emph{(b)} Target $\nu$ & \small \emph{(c)} Transport $T$ & \small \emph{(d)} Pushforward $T_{\#}\mu$ \\[0.5ex]
\end{tabular}
\caption{Optimal transport on $\mathbb{S}^2$ (top) and $\mathbb{T}^2$ (bottom) from a representative run corresponding to Tables~\ref{tab:S2} and~\ref{tab:T2}, respectively. From left to right: uniform source $\mu$, wrapped normal target $\nu$, geodesic trajectories induced by the learned transport $T$, and pushforward $T_{\#}\mu$.}
\label{fig:manifold-ot}
\end{figure*}

\subsection{Real-World Case Study: Continental Drift}
Following \citet{rcpm-cohen}, we study a geophysical application of manifold generative modeling: continental drift on the sphere \(\mathbb{S}^2\), using paleogeographic reconstructions from \citet{Muller2018-ij}. We consider two distributions of terrestrial mass: one corresponding to the Earth \(\sim 150\) million years ago (Fig.~\ref{fig:continental_drift}(a)) and one corresponding to the present-day Earth (Fig.~\ref{fig:continental_drift}(b)). Our goal is to learn an amortized map \(T\) that transports the source distribution to the target.

We train an RNOT map on \(\mathbb{S}^2\) using the FPS landmark embedding and the sample-based Kantorovich semi-dual objective (Appendices~\ref{app-implementation-details} and~\ref{app-experiment-setup}). As shown in Fig.~\ref{fig:continental_drift}(c), the learned pushforward \(T_{\#}\mu\) closely matches the present-day mass distribution. Beyond density matching, the learned transport \(T\) also induces an explicit correspondence between locations on the “old” and “current” Earth.

To visualize the induced transport dynamics, we consider the one-parameter family of maps
\[
T_t(x) \;=\; \exp_x\!\bigl(-t\,\nabla\phi(x)\bigr), \qquad t\in[0,1],
\]
which interpolates between the identity (\(t=0\)) and the learned transport (\(t=1\)). Fig.~\ref{fig:continental_drift}(d) visualizes the induced trajectories for starting points on $\mathbb{S}^{2}$ sampled from $\mu$. These curves provide an interpretable picture of mass displacement consistent with plate-tectonic motion: points near the Eurasia--North America junction split into opposite directions, reflecting the separation of the two plates over geological time (cf.\ \citealp{wilson1963}).

\subsection{Synthetic Experiments on Spheres and Tori}
We complement the real-world case study with controlled synthetic experiments on spheres and tori. We first compare methods in low dimensions on \(\mathbb{S}^2\) and \(\mathbb{T}^2\), and then repeat the same experiment while sweeping the dimension on \(\mathbb{S}^n\) and \(\mathbb{T}^n\) to assess scalability.

In both families, the source distribution is uniform on the manifold, and the target is a wrapped normal with scale \(\sigma=0.3\), centered at the south pole \((-1,0,\ldots,0)\) on \(\mathbb{S}^n\) and at the analogous point \((\pi,\ldots,\pi)\) on \(\mathbb{T}^n\).
We compare against RCPMs~\citep{rcpm-cohen}, RCNFs~\citep{NEURIPS2020_1aa3d9c6}, and Moser Flows~\citep{Rozen2021-py} (see Appendix~\ref{app-experiment-setup} for the rationale and implementation details). As in \citet{rcpm-cohen}, we introduce a regularization parameter $\gamma$ and sweep
\(
\gamma \in \{1.0, 0.1, 0.05, 0.01, 0.005, 0.001\},
\)
where the limit \(\gamma\to 0\) corresponds to the hard (discrete) minimum.
Performance is measured by the forward KL divergence \(D_{\mathrm{KL}}(T_{\#}\mu\,\|\,\nu)\) and effective sample size (ESS), reported as mean $\pm$ confidence interval over 5 independent runs (1024 samples each). We also report the average wall-clock time per run (in seconds).

\begin{figure*}[!t]
   \centering
    \includegraphics[width=\textwidth]{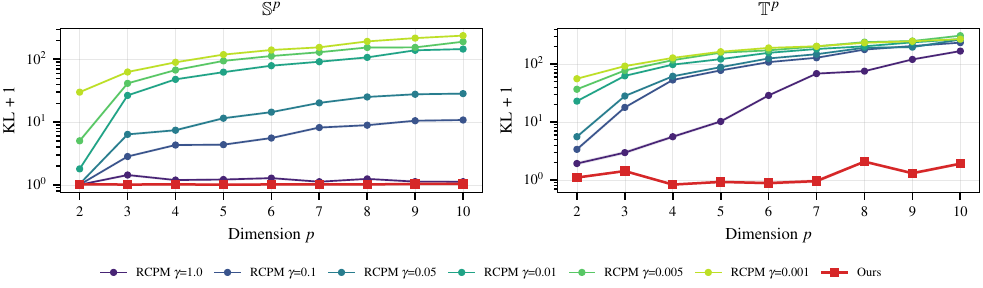}
    \caption{KL divergence (log scale) versus dimension $p\in \{2,\ldots,10\}$ for transport from a uniform source to a wrapped normal target on $\mathbb{S}^{p}$ (left panel) and $\mathbb{T}^{p}$ (right panel). Curves show RNOT (ours) and RCPM under a sweep of regularization values $\gamma$.  RCPM performance degrades as $p$ increases --- most notably for small $\gamma$, where the map is sharp --- whereas RNOT remains stable across dimensions.}
    \label{fig:density-sweep}
\end{figure*}

For all RCPM results reported here (on $\mathbb{S}^{2}$ and $\mathbb{T}^{2}$), we use $\gamma=1$, which yields their best performance.
On \(\mathbb{S}^2\), RCPM achieves the lowest KL divergence (0.0037) and highest ESS (0.996), with RNOT maps with FPS landmarks performing comparably.
\begin{table}[!h]
\caption{KL divergence and ESS on $\mathbb{S}^{2}$ across methods. Bold denotes the best result (up to statistical significance), and underlining denotes the second best. Reported values are means with confidence intervals over 5 independent runs.}
\centering
\setlength{\tabcolsep}{2pt}
\renewcommand{\arraystretch}{1.2}
\begin{tabular}{lccc}
\hline
\textbf{Model} & KL $\downarrow$ & ESS $\uparrow$ & Time (s) $\downarrow$ \\
\hline
Ours (FPS) & \underline{0.03 $\pm$ 0.00} & \underline{0.97 $\pm$ 0.00} & 986 $\pm$ 2 \\
Ours (RND) & 0.04 $\pm$ 0.00 & 0.95 $\pm$ 0.00 & 1100 $\pm$ 1 \\
RCPM & \textbf{0.0037 $\pm$ 0.0008} & \textbf{0.996 $\pm$ 0.000} & \textbf{37.3 $\pm$ 0.6} \\
RCNF & 2.38 $\pm$ 0.10 & 0.48 $\pm$ 0.02 & \underline{352 $\pm$ 6} \\
Moser Flow & 1.16 $\pm$ 0.03 & 0.82 $\pm$ 0.00 & 758 $\pm$ 16 \\
\hline
\end{tabular}
\label{tab:S2}
\end{table}

On \(\mathbb{T}^2\), RNOT maps with FPS landmarks substantially outperform RCPM, attaining KL divergence 0.13 versus 0.93 and ESS 0.93 versus 0.55. RCNFs and Moser Flows perform considerably worse than both RCPM and RNOT on both \(\mathbb{S}^2\) and \(\mathbb{T}^2\). Figure~\ref{fig:manifold-ot} visualizes the learned RNOT transport maps on  $\mathbb{S}^{2}$ (top row) and $\mathbb{T}^{2}$ (bottom row).
\begin{table}[!h]
\caption{KL divergence and ESS on $\mathbb{T}^{2}$ across methods. Bold denotes the best result (up to statistical significance), and underlining denotes the second best. Reported values are means with confidence intervals over 5 independent runs.}
\centering
\setlength{\tabcolsep}{6pt}
\renewcommand{\arraystretch}{1.2}
\begin{tabular}{lccc}
\hline
\textbf{Model} & KL $\downarrow$ & ESS $\uparrow$ & Time (s) $\downarrow$ \\
\hline
Ours (FPS) & \textbf{0.13 $\pm$ 0.01} & \textbf{0.93 $\pm$ 0.01} & 1022 $\pm$ 10 \\
Ours (RND) & \underline{0.22 $\pm$ 0.04} & \underline{0.85 $\pm$ 0.02} & 1057 $\pm$ 1 \\
RCPM & 0.93 $\pm$ 0.02 & 0.55 $\pm$ 0.03 & \textbf{62.3 $\pm$ 0.5} \\
RCNF & 6.31 $\pm$ 0.56 & 0.36 $\pm$ 0.04 & \underline{265 $\pm$ 6} \\
Moser Flow & 6.84 $\pm$ 0.21 & 0.56 $\pm$ 0.01 & 769 $\pm$ 5 \\
\hline
\end{tabular}
\label{tab:T2}
\end{table}

This competitive performance comes with longer training time, since RNOT evaluates the implicit $c$-transform via an iterative inner minimization.

We next sweep the dimension. In Fig.~\ref{fig:density-sweep}, RCPM breaks down as $n$ grows --- especially for small $\gamma$, where \(\mathrm{LogSumExp}\) is closest to the hard minimum, as predicted by Corollary~\ref{cor:cod_rcpm}. Larger $\gamma$ smooths the objective and helps, but not enough to avoid the CoD in practice. RNOT, by contrast, stays essentially flat across all tested dimensions.

Due to computational intractability, Figure~\ref{fig:density-sweep} is restricted to a maximum dimension of 10, for which the KL divergence already saturates for small \(\gamma\). Extended results up to dimension 40 are shown in Appendix Figure~\ref{fig:high_D_results}. Finally, ablation studies (Appendix~\ref{app-ablations}) show that FPS for landmark selection (vs.\ RND) and $\mathrm{LogSumExp}$-based initialization have the largest impact on performance, highlighting the importance of landmark geometry and initialization, suggesting clear directions for future improvements.

%% file: sections/conclusions.tex
This paper develops the \emph{first} theoretically grounded neural OT framework on Riemannian manifolds. Our starting point is a negative result: any manifold OT method that outputs a \emph{discrete} approximation of the transport map necessarily suffers from the CoD, requiring exponentially many parameters to reach a fixed accuracy. Riemannian Neural Optimal Transport (RNOT) circumvents this barrier by avoiding discretization altogether: we parameterize a continuous prepotential on \(\mathcal M\) and enforce \(c\)-concavity by construction via the \(c\)-transform, yielding intrinsic manifold-valued maps.

For non-affine piecewise-linear activations, we are the \emph{first} to prove explicit polynomial bounds in \(\varepsilon^{-1}\) on the neural network parameter count and depth needed to approximate OT potentials and, via a stability argument, the induced OT map pointwise on a full-\(\mu\)-measure subset. 
These guarantees are reflected empirically in the dimension sweep: in Fig.~\ref{fig:density-sweep}, RCPM performance degrades sharply as $n$ increases, while RNOT remains essentially flat across all tested dimensions. Empirically, RNOT is competitive in low dimensions and remains stable under the dimension sweep: in Fig.~\ref{fig:density-sweep}, RCPM performance degrades sharply as 
$n$ increases, while RNOT stays essentially flat across all tested dimensions; the continental drift case study further highlights the interpretability of the learned transport trajectories.

Limitations include our focus on compact manifolds, the regularity assumptions behind the rates, and the training cost, which is dominated by the iterative inner minimization needed to evaluate the $c$-transform. Future work includes extending the theory to broader costs and settings, and accelerating this inner solve (e.g., via better optimization or amortization). Overall, RNOT provides a principled route to scalable, amortized OT on manifolds, and to our knowledge is the first framework in this setting with dimension-friendly guarantees.

%% file: sections/review-riemannian-geometry.tex
This appendix collects the differential-geometric notation and facts used throughout the paper. Standard references include \citet{Lee2018,petersen2016riemannian,Sakai1996}.

\subsection{Smooth Manifolds}
\label{app:smooth-manifolds}

Fix $p\in\mathbb{N}$.
A \emph{$p$-dimensional topological manifold} is a topological space $\mathcal{M}$ that is
\emph{Hausdorff} and \emph{second countable}, and such that every point $x\in\mathcal{M}$ has a neighborhood
homeomorphic to an open subset of $\mathbb{R}^{p}$.
Equivalently, $\mathcal{M}$ admits an atlas: a family of pairs
$\{(U_\alpha,\phi_\alpha)\}_{\alpha\in A}$ such that
\[
U_\alpha\subset \mathcal{M}\ \text{are open},\qquad \bigcup_{\alpha\in A}U_\alpha=\mathcal{M},
\qquad \phi_\alpha:U_\alpha\to \phi_\alpha(U_\alpha)\subset\mathbb{R}^{p}
\ \text{are homeomorphisms}.
\]
A \emph{smooth (}$C^\infty$\emph{) manifold} is a topological manifold endowed with an atlas whose transition maps
\[
\phi_\beta\circ \phi_\alpha^{-1}:\phi_\alpha(U_\alpha\cap U_\beta)\to \phi_\beta(U_\alpha\cap U_\beta)
\]
are $C^\infty$ diffeomorphisms; two such atlases are equivalent if their union is again smooth.

A \emph{manifold with boundary} is defined similarly, replacing $\mathbb{R}^p$ by the half-space
$\mathbb{H}^p=\{(x_1,\dots,x_p)\in\mathbb{R}^p:x_p\ge 0\}$ in the chart ranges.
Unless stated otherwise, ``manifold'' means \emph{without boundary}.

\paragraph{Tangent Bundles.}
For $x\in\mathcal{M}$, the tangent space $T_x\mathcal{M}$ is a $p$-dimensional real vector space consisting of tangent vectors at $x$.
Concretely, if $(U,x_1, \ldots,x_p)$ is a coordinate chart around $x$, then the tangent space can be identified as the real vector space spanned by the partial derivatives $\partial/\partial x_1, \ldots,\partial/\partial x_p$ at $x$.  The \emph{tangent bundle} is defined as the disjoint union $T\mathcal{M}:=\bigsqcup_{x\in\mathcal{M}}T_x\mathcal{M}$. There is a canonical smooth manifold structure on $T\mathcal{M}$ which makes the natural projection $\mathrm{pr}:T\mathcal{M}\to\mathcal{M}$ a smooth map. Specifically, under the coordinate chart $(U,x_1,\ldots,x_p)$ of $\mathcal{M}$, each tangent vector can be uniquely written as 
\begin{equation*}
    v = \sum_{i=1}^p v_i\frac{\partial}{\partial x_i},
\end{equation*}
which gives rise to induced coordinates $(\mathrm{pr}^{-1}(U),x_1,\ldots,x_p,v_1,\ldots,v_p)$ of $T\mathcal{M}$.
\subsection{Riemannian Metrics}
\label{app:riemannian-metrics}

A \emph{Riemannian metric} on $\mathcal{M}$ is a smoothly varying inner product
\[
g_x:T_x\mathcal{M}\times T_x\mathcal{M}\to\mathbb{R},\qquad x\in\mathcal{M}.
\]
It induces a norm $\|v\|_x:=\sqrt{g_x(v,v)}$ on each $T_x\mathcal{M}$. 
The pair $(\mathcal{M},g)$ is a \emph{Riemannian manifold}. 

Throughout all the appendices, we suppress the subscript and denote the norm of $v\in T_x\mathcal{M}$ simply by $\|v\|$ whenever the meaning is clear.

Given a piecewise $C^1$ curve $\gamma:[0,1]\to\mathcal{M}$, its length is defined as
\[
L(\gamma):=\int_0^1 \|\dot\gamma(t)\|\,dt.
\]
The \emph{Riemannian distance} is the induced length metric
\[
d(x,y):=\inf\{L(\gamma):\gamma(0)=x,\ \gamma(1)=y\}.
\]
This turns $\mathcal{M}$ into a metric space $(\mathcal{M},d)$.
If $\mathcal{M}$ is compact, then $\mathrm{diam}(\mathcal{M})<\infty$ and $(\mathcal{M},d)$ is complete.

\paragraph{Levi--Civita Connection.}
\label{app:connection-operators}

A Riemannian metric determines a unique affine connection $\nabla$ on $T\mathcal{M}$ that is
torsion-free and metric-compatible, known as the \emph{Levi--Civita connection}.
For a smooth function $f\in C^\infty(\mathcal{M})$,
the \emph{(Riemannian) gradient} $\nabla f$ is the vector field defined by $g(\nabla f, X)=df(X)$ for all vector fields $X$.

\paragraph{Submanifolds and Normal Bundles.} Let $(\mathcal{M},g)$ be a Riemannian manifold and let $\mathcal{V}\subseteq \mathcal{M}$ be a smooth submanifold. At each $x\in\mathcal{V}$, the tangent space $T_x\mathcal{V}$ is a linear subspace of $T_x\mathcal{M}$. Since $g_x$ is an inner product on $T_x\mathcal{M}$, there exists an orthogonal decomposition
\begin{equation*}
    T_x\mathcal{M} = T_x\mathcal{V}\bigoplus (T_x\mathcal{V})^\bot,
\end{equation*}
where $(T_x\mathcal{V})^\bot$ is the orthogonal complement of $T_x\mathcal{V}$ in $T_x\mathcal{M}$. The space $(T_x\mathcal{V})^\bot$ is called the normal space of $\mathcal{V}$ at $x$, denoted by $N_x\mathcal{V}$. The normal bundle is defined as the disjoint union $N\mathcal{V}:=\bigsqcup_{x\in\mathcal{V}}N_x\mathcal{V}$. There is a canonical smooth manifold structure on $N\mathcal{V}$ which makes the natural projection $\mathrm{pr}:N\mathcal{V}\to \mathcal{V}$ a smooth map.

\subsection{Geodesics and the Exponential Map}
\label{app:geodesics-exp}

A smooth curve $\gamma:[0,1]\to\mathcal{M}$ is a \emph{(affinely parametrized) geodesic} if it satisfies
the geodesic equation
\[
\nabla_{\dot\gamma}\dot\gamma=0.
\]
Given $x\in\mathcal{M}$ and $v\in T_x\mathcal{M}$, there exists a unique geodesic
$\gamma_{x,v}:(-\varepsilon,\varepsilon)\to\mathcal{M}$ with $\gamma_{x,v}(0)=x$ and $\dot\gamma_{x,v}(0)=v$.

\paragraph{Exponential map.}
When $\gamma_{x,v}$ is defined at time $t=1$, the \emph{Riemannian exponential map} at $x$ is
\[
\exp_x(v):=\gamma_{x,v}(1).
\]
The exponential map is always locally well-defined due to the existence of normal neighborhoods.
\begin{proposition}[Normal neighborhoods]
\label{prop:exp-local-diffeo}
For any $x\in\mathcal{M}$ there exist neighborhoods $V\subset T_x\mathcal{M}$ of $0$
and $U\subset\mathcal{M}$ of $x$ such that $\exp_x:V\to U$ is a $C^\infty$-diffeomorphism.
\end{proposition}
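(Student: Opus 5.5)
This is the standard normal-neighborhood statement, and I would prove it with the inverse function theorem applied to $\exp_x$ at the origin of $T_x\mathcal{M}$.

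\textbf{Step 1: smoothness near $0$.} Smooth dependence of ODE solutions on initial data, applied to the geodesic equation written in a chart around $x$, gives an open set $\mathcal{E}\subset T\mathcal{M}$ on which $(y,v)\mapsto\gamma_{y,v}(1)$ is defined and $C^\infty$. The key point is that $\mathcal{E}$ contains the zero section. This follows from the rescaling identity $\gamma_{y,tv}(s)=\gamma_{y,v}(ts)$: if $\gamma_{y,v}$ exists on $(-\varepsilon,\varepsilon)$, then $\gamma_{y,tv}$ exists up to time $1$ whenever $|t|<\varepsilon$. Hence $\exp_x$ is well defined and smooth on some open ball around $0\in T_x\mathcal{M}$.

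\textbf{Step 2: the differential at $0$.} Identify $T_0(T_x\mathcal{M})$ with $T_x\mathcal{M}$. For $v\in T_x\mathcal{M}$, the rescaling identity gives
\[
\frac{d}{dt}\Big|_{t=0}\exp_x(tv)=\frac{d}{dt}\Big|_{t=0}\gamma_{x,v}(t)=v.
\]
So $d(\exp_x)_0=\mathrm{id}$, which is invertible.

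\textbf{Step 3: conclude.} The inverse function theorem, applied in a chart, now gives open neighborhoods $V\ni 0$ and $U\ni x$ such that $\exp_x:V\to U$ is a $C^\infty$-diffeomorphism. This is exactly the claim.

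The main obstacle is Step 1: showing that the domain of $\exp_x$ contains a full neighborhood of $0$ and that the map is smooth there. This rests on smooth dependence of ODE solutions on parameters together with the rescaling trick. Both are standard and can be cited, e.g.\ \citet{Lee2018}. Steps 2 and 3 are then routine.
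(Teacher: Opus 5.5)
Your proof is correct, and it is the standard textbook argument: smoothness of $\exp$ near the zero section via ODE theory and the rescaling identity, then $d(\exp_x)_0=\mathrm{id}$, then the inverse function theorem. The paper gives no proof of its own and simply recalls the statement from \citet{Lee2018,petersen2016riemannian,Sakai1996}, where it is proved in exactly this way. One small point of precision: the zero section lies in $\mathcal{E}$ trivially, since constant curves are geodesics, and openness of $\mathcal{E}$ comes from the flow domain of the geodesic field being open. The rescaling identity is what yields star-shapedness, or a uniform ball around $0$ if you only use local-in-time ODE existence. On the paper's compact $\mathcal{M}$, Hopf--Rinow already makes $\exp_x$ defined on all of $T_x\mathcal{M}$.
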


The inverse map on $U$ is the \emph{logarithm map} $\log_x:U\to T_x\mathcal{M}$, defined by
$\log_x(y)=(\exp_x)^{-1}(y)$.

\paragraph{Injectivity Radius.}
The \emph{injectivity radius} at $x$ is
\[
\mathrm{inj}(x):=\sup\bigl\{r>0:\exp_x \ \text{is a diffeomorphism on}\ B_{T_x\mathcal{M}}(0,r)\bigr\},
\]
where $B_{T_x\mathcal{M}}(0,r)=\{v\in T_x\mathcal{M}:\|v\|_x<r\}$.
If $\mathcal{M}$ is compact, then $\mathrm{inj}(\mathcal{M}):=\inf_{x\in\mathcal{M}}\mathrm{inj}(x)>0$.

\paragraph{Completeness.}
We always assume $\mathcal{M}$ is connected and complete.
The following theorem is fundamental.

\begin{theorem}[Hopf--Rinow]
\label{thm:hopf-rinow}
Let $(\mathcal{M},g)$ be a connected Riemannian manifold. The following are equivalent:
\begin{enumerate}
    \item $(\mathcal{M},d)$ is a complete metric space;
    \item $(\mathcal{M},g)$ is geodesically complete, i.e.\ $\exp_x$ is defined on all of $T_x\mathcal{M}$ for every $x\in\mathcal{M}$;
    \item closed and bounded subsets of $(\mathcal{M},d)$ are compact.
\end{enumerate}
In particular, any compact Riemannian manifold is complete and any two points can be joined by a minimizing geodesic.
\end{theorem}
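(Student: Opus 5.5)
The plan is the standard cycle of implications, with one key lemma doing most of the work. Fix a base point $x\in\mathcal M$.

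\emph{Key lemma.} If $\exp_x$ is defined on all of $T_x\mathcal M$, then every $y\in\mathcal M$ is joined to $x$ by a minimizing geodesic. Set $r=d(x,y)$ and choose $\delta>0$ small enough that the geodesic sphere $S_\delta=\exp_x(\{\|v\|=\delta\})$ lies inside a normal neighborhood (Proposition~\ref{prop:exp-local-diffeo}). Since $S_\delta$ is compact, some point $x_0=\exp_x(\delta v)$ with $\|v\|=1$ minimizes $d(\cdot,y)$ over $S_\delta$. Every curve from $x$ to $y$ crosses $S_\delta$, which gives $d(x_0,y)=r-\delta$. Let $\gamma(t)=\exp_x(tv)$, defined for all $t$ by hypothesis. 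I would show that the set $A=\{t\in[0,r]: d(\gamma(t),y)=r-t\}$ is all of $[0,r]$. It is closed by continuity and contains $\delta$. If $t_0=\sup A<r$, repeat the sphere argument at $\gamma(t_0)$ with a small radius $\delta'$. This produces a point $z$ on that sphere with $d(\gamma(t_0),z)=\delta'$ and $d(z,y)=r-t_0-\delta'$. Hence the broken curve $\gamma|_{[0,t_0]}$ followed by the radial segment to $z$ has length $t_0+\delta'=d(x,z)$, so it is minimizing. A minimizing curve is a smooth geodesic, so there is no corner, and therefore $z=\gamma(t_0+\delta')$. This contradicts maximality of $t_0$. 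Taking $t=r$ gives $\gamma(r)=y$ with length $r$. This step is the main obstacle: the sup/continuity argument and the ``minimizing broken curves are unbroken'' fact need care.

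\emph{The implications.} (2)$\Rightarrow$(3): a closed bounded set $K$ lies in some closed metric ball $\bar B(x,R)$. By the lemma, $\bar B(x,R)=\exp_x(\{\|v\|\le R\})$, which is the continuous image of a compact set and hence compact, so $K$ is compact. (3)$\Rightarrow$(1): a Cauchy sequence is bounded, so its closure is compact by (3). It therefore has a convergent subsequence, and a Cauchy sequence with a convergent subsequence converges. (1)$\Rightarrow$(2): suppose a unit-speed geodesic $\gamma$ is maximally defined on $[0,T)$ with $T<\infty$, and take $t_n\uparrow T$. Then $d(\gamma(t_n),\gamma(t_m))\le|t_n-t_m|$, so $\gamma(t_n)$ is Cauchy and converges to some $z$. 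Choose a neighborhood $W$ of $z$ and $\eta>0$ such that every geodesic starting in $W$ with unit speed exists for time $\eta$ (uniform normal neighborhoods). Restarting $\gamma$ from $\gamma(t_n)\in W$ with $T-t_n<\eta$ extends it past $T$, by uniqueness of geodesics, which is a contradiction.

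\emph{Final clause.} If $\mathcal M$ is compact, then $(\mathcal M,d)$ is complete, so (2) holds by the equivalence. The key lemma then gives a minimizing geodesic between any two points.
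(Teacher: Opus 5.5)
The paper does not prove this theorem. It quotes Hopf--Rinow as a standard fact in the appendix reviewing Riemannian geometry, with pointers to Lee, Petersen and Sakai, and uses it without proof. In practice it needs only the compact case: that $\exp_x$ is defined on all of $T_x\mathcal M$, and that minimizing geodesics exist.

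Your argument is the classical textbook proof, and it is correct. It consists of the minimizing-geodesic lemma at a fixed base point, followed by the cycle (2)$\Rightarrow$(3)$\Rightarrow$(1)$\Rightarrow$(2). Because your key lemma uses only that $\exp_x$ is globally defined at the single point $x$, you actually get a slightly stronger statement: geodesic completeness at one point already implies (1)--(3).

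Two small points should be tightened. First, in the key lemma take $\delta<r=d(x,y)$; the cases $y=x$ and $y$ inside the normal ball are immediate. Second, the crossing argument relies on the fact that, inside a normal ball, the geodesic sphere $S_\delta$ coincides with the metric sphere $\{d(x,\cdot)=\delta\}$. This follows from the Gauss lemma, and it is what gives $d(x,w)=\delta$ at the crossing point $w$, and hence $d(x_0,y)=r-\delta$.

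For the paper's purposes, the chain you use for the final clause is exactly the minimal one needed. That chain is: compactness gives metric completeness, the geodesic-extension argument then gives (2), and the key lemma gives minimizing geodesics.
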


\subsection{Cut Locus}
\label{app:cut-locus}

For $x\in\mathcal{M}$ define the unit sphere in $T_x\mathcal{M}$ by
\[
U_x\mathcal{M}:=\{u\in T_x\mathcal{M}:\|u\|_x=1\},
\qquad
U\mathcal{M}:=\bigsqcup_{x\in\mathcal{M}} U_x\mathcal{M}.
\]
The set $U\mathcal{M}$ is a smooth submanifold of $T\mathcal{M}$ of codimension $1$
(cf.\ \citet[Chapter~III]{Sakai1996}).

\begin{definition}[Cut time and cut locus {\citep[Definition.\ 4.3]{Sakai1996}}]
\label{def:cut-locus}
Assume $(\mathcal{M},g)$ is complete and fix $x\in\mathcal{M}$.
For $u\in U_x\mathcal{M}$, let $\gamma_u(t):=\exp_x(tu)$ be the unit-speed geodesic with $\gamma_u(0)=x$.
Define the \emph{cut time} $t(u)\in(0,\infty]$ by
\[
t(u):=\sup\Bigl\{t>0:\ d\bigl(x,\gamma_u(s)\bigr)=s\ \text{for all } s\in[0,t]\Bigr\}.
\]
If $t(u)<\infty$, the vector $t(u)u\in T_x\mathcal{M}$ is the \emph{tangent cut point} along $\gamma_u$ and
$\exp_x(t(u)u)$ is the corresponding \emph{cut point}.
The \emph{tangent cut locus} and \emph{cut locus} of $x$ are
\[
\widetilde{\mathrm{Cut}}(x):=\{t(u)u:\ u\in U_x\mathcal{M},\ t(u)<\infty\},
\qquad
\mathrm{Cut}(x):=\exp_x\bigl(\widetilde{\mathrm{Cut}}(x)\bigr).
\]
Define also the \emph{interior domain} in $T_x\mathcal{M}$ and its image:
\[
\widetilde{\mathcal{I}}_x:=\{tu:\ u\in U_x\mathcal{M},\ 0<t<t(u)\},
\qquad
\mathcal{I}_x:=\exp_x(\widetilde{\mathcal{I}}_x).
\]
\end{definition}
We have the following characterizations about tangent cut points.
\begin{lemma}{\citep[Lemma 5.7.9]{petersen2016riemannian}}\label{lemma:two-case}
    If $u\in T_x\mathcal{M}$ is a tangent cut point of $x$, then either
    \begin{enumerate}
        \item there exists $w\neq u\in T_x\mathcal{M}$ such that $\exp_x(u)=\exp_x(w)$, or
        \item the differential $d(\exp_x)$ is singular at $u$.
    \end{enumerate}
\end{lemma}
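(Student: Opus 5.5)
The plan is to argue by a limiting procedure along the geodesic just past its cut time, and then to split into the two alternatives depending on whether the limiting vector equals $u$.

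\textbf{Setting up the approximating sequence.} Write $u=t(u_0)u_0$ with $u_0\in U_x\mathcal M$, $t_0:=t(u_0)<\infty$, and $y:=\exp_x(u)$. Choose a sequence $t_k\downarrow t_0$ and set $y_k:=\exp_x(t_ku_0)$. By Hopf--Rinow (Theorem~\ref{thm:hopf-rinow}) there is a minimizing geodesic from $x$ to each $y_k$. Hence there are vectors $w_k\in T_x\mathcal M$ with $\exp_x(w_k)=y_k$ and $\|w_k\|=d(x,y_k)$.

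\textbf{Key inequality $\|w_k\|<t_k$.} Since $\gamma_{u_0}$ has unit speed, $d(x,\gamma_{u_0}(s))\le s$ for every $s$. Suppose we had $d(x,y_k)=t_k$. Then for every $s\le t_k$ the triangle inequality gives
\[
t_k\le d(x,\gamma_{u_0}(s))+(t_k-s),
\]
so $d(x,\gamma_{u_0}(s))=s$ on all of $[0,t_k]$. This contradicts the definition of $t_0$ as a supremum, because $t_k>t_0$. Therefore $\|w_k\|<t_k=\|t_ku_0\|$, and in particular $w_k\neq t_ku_0$.

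\textbf{Passing to the limit.} The norms $\|w_k\|$ are bounded by $\mathrm{diam}(\mathcal M)$, so a subsequence converges, $w_k\to w$. By continuity of $\exp_x$ and of $d$, we get $\exp_x(w)=y$ and $\|w\|=d(x,y)$. Moreover $d(x,y)=t_0$: the equality $d(x,\gamma_{u_0}(s))=s$ holds for all $s<t_0$, and it extends to $s=t_0$ by continuity. Hence $\|w\|=\|u\|$.

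\textbf{Case split.} If $w\neq u$, alternative 1 holds. If $w=u$, then the sequences $t_ku_0\to u$ and $w_k\to u$ consist of distinct points, $t_ku_0\neq w_k$, with $\exp_x(t_ku_0)=\exp_x(w_k)$. So $\exp_x$ fails to be injective on every neighbourhood of $u$. If $d(\exp_x)_u$ were nonsingular, the inverse function theorem would make $\exp_x$ a local diffeomorphism near $u$, hence injective there, which is a contradiction. Thus $d(\exp_x)$ is singular at $u$, which is alternative 2.

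The main delicate step is the strict inequality $\|w_k\|<t_k$. It ensures that the two preimage sequences are genuinely distinct, and it rests entirely on the supremum definition of the cut time. The rest is compactness and the inverse function theorem.
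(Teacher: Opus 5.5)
Your proof is correct and is the standard argument behind this lemma. The paper gives no proof of its own and simply cites Petersen, where the same route is taken: minimizing segments to points just beyond the cut time are strictly shorter, a limit is extracted by compactness, and one either gets a second preimage or contradicts local injectivity near $u$ via the inverse function theorem. You in fact obtain the stronger alternative $\|w\|=\|u\|=d(x,\exp_x(u))$, i.e.\ two distinct minimizing geodesics from $x$ to $\exp_x(u)$, which is the usual form of the lemma.
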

In the second case of Lemma~\ref{lemma:two-case}, the point $\exp_x(u)\in \mathrm{Cut}(x)$ is also called the \emph{first conjugate point}. 

The following summarizes key properties of $\mathrm{Cut}(x)$ (see \citet[Lem.\ 4.4]{Sakai1996}).

\begin{lemma}[Basic properties of $\mathrm{Cut}(x)$ {\citep[Lem.\ 4.4]{Sakai1996}}]
\label{lem:cut-locus-properties}
Let $(\mathcal{M},g)$ be complete and $x\in\mathcal{M}$. Then:
\begin{enumerate}
    \item $\mathcal{I}_x\cap \mathrm{Cut}(x)=\emptyset$, $\mathcal{M}=\mathcal{I}_x\cup \mathrm{Cut}(x)$, and $\overline{\mathcal{I}}_x=\mathcal{M}$;
    \item $\widetilde{\mathcal{I}}_x$ is a maximal domain containing $0\in T_x\mathcal{M}$ on which $\exp_x$ is a diffeomorphism;
    \item $\mathrm{Cut}(x)$ is $\mathrm{vol}_{\mathcal{M}}$-negligible and $\dim \mathrm{Cut}(x)\le p-1$, hence
    \begin{equation*}
        \mathrm{vol}_{\mathcal{M}}\bigl(\mathrm{Cut}(x)\bigr)=0,\qquad \forall x\in\mathcal{M}.
    \end{equation*}
\end{enumerate}
\end{lemma}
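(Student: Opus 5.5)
The plan is to work entirely in the tangent space $T_x\mathcal{M}$ with polar coordinates $v=tu$, $u\in U_x\mathcal{M}$, and to reduce all three items to properties of the cut-time function $t:U_x\mathcal{M}\to(0,\infty]$. By compactness $t(u)\le \mathrm{diam}(\mathcal{M})<\infty$ for every $u$.

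\emph{Step 1 (covering and disjointness).} I would begin with item~1.
\begin{enumerate}
\item \emph{Covering.} Given $y\in\mathcal{M}$, Theorem~\ref{thm:hopf-rinow} gives a unit-speed minimizing geodesic $\gamma_u$ from $x$ to $y$ with $y=\gamma_u(d(x,y))$. By definition of the cut time, $d(x,y)\le t(u)$. If $d(x,y)<t(u)$ then $y\in\mathcal{I}_x$; if $d(x,y)=t(u)$ then $y\in\mathrm{Cut}(x)$. Hence $\mathcal{M}=\mathcal{I}_x\cup\mathrm{Cut}(x)$.
\item \emph{Disjointness.} Suppose $y=\gamma_u(s)$ with $s<t(u)$. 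I will show $\gamma_u|_{[0,s]}$ is the \emph{unique} minimizing geodesic from $x$ to $y$. If $\gamma_w$ were a second one, follow $\gamma_w$ to $y$ and then $\gamma_u$ on $[s,s+\eta]$. This broken curve has length $s+\eta$ and a genuine corner at $y$, so it can be shortened. Therefore $d(x,\gamma_u(s+\eta))<s+\eta$, contradicting $s+\eta<t(u)$. Consequently $y$ cannot equal $\gamma_w(t(w))$ for any other $w$, since that would be another minimizer of the same length. The same corner argument rules out $w=u$ with $t(u)\ne s$.
\item \emph{Density.} Each cut point $\gamma_u(t(u))$ is the limit of $\gamma_u(s)\in\mathcal{I}_x$ as $s\uparrow t(u)$, so $\overline{\mathcal{I}}_x=\mathcal{M}$.
\end{enumerate}

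\emph{Step 2 (diffeomorphism domain).} For item~2, I first prove that $t$ is continuous on $U_x\mathcal{M}$.
\begin{itemize}
\item \emph{Continuity of $t$.} Take $u_k\to u$ with $t(u_k)\to \tau$. Lower semicontinuity of $\tau$ is immediate from continuity of $d$ and of $\exp_x$. For the reverse inequality, apply Lemma~\ref{lemma:two-case} at each $t(u_k)u_k$ and pass to the limit using compactness. Either conjugacy persists in the limit, or pairs of distinct minimizers converge, either to two distinct minimizers or to a conjugate point. In every case $\gamma_u$ stops minimizing after $\tau$.
\item \emph{Diffeomorphism.} Continuity of $t$ makes $\widetilde{\mathcal{I}}_x$ open and star-shaped. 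Injectivity of $\exp_x$ on it follows from the uniqueness shown in Step~1. Nonsingularity of $d(\exp_x)$ on it uses the classical fact (index form, i.e.\ the Jacobi-field argument) that a geodesic does not minimize past its first conjugate point.
\item \emph{Maximality.} Any strictly larger domain containing $0$ on which $\exp_x$ is a diffeomorphism must contain some tangent cut point $t(u)u$. By Lemma~\ref{lemma:two-case}, $\exp_x$ is either non-injective or singular there.
\end{itemize}

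\emph{Step 3 (negligibility and dimension).} For item~3, write $\widetilde{\mathrm{Cut}}(x)$ as the graph $\{t(u)u: u\in U_x\mathcal{M}\}$ of the continuous function $t$ in polar coordinates.
\begin{itemize}
\item \emph{Lebesgue-null.} By Fubini in polar coordinates, each ray meets this graph in exactly one point, so $\widetilde{\mathrm{Cut}}(x)$ is Lebesgue-null in $T_x\mathcal{M}$.
\item \emph{Pushing forward.} $\exp_x$ is smooth, hence locally Lipschitz, so it maps null sets to null sets. This gives $\mathrm{vol}_{\mathcal{M}}(\mathrm{Cut}(x))=0$.
\end{itemize}

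The main obstacle is the dimension bound $\dim\mathrm{Cut}(x)\le p-1$. Continuity of $t$ alone only bounds the topological dimension via the graph structure. For Hausdorff dimension I would invoke local Lipschitz continuity of $t$ (Itoh--Tanaka). With that, the graph is a Lipschitz image of the $(p-1)$-sphere, and so is its image under the smooth map $\exp_x$.
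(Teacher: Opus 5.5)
The paper does not prove this lemma; it quotes it from Sakai. So the comparison is with the textbook argument, and your outline follows it: the cut-time function, continuity via Lemma~\ref{lemma:two-case}, Fubini in polar coordinates, and smooth maps preserving null sets. Item~1, the diffeomorphism half of item~2, and the measure-zero half of item~3 are sound.

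\textbf{The gap is in maximality.} After you find a tangent cut point $t(u)u\in D$, Lemma~\ref{lemma:two-case} gives a contradiction only in the conjugate case. In the other case the second preimage is $t(u)u'$, where $u'\neq u$ is the direction of a second minimizing segment to $q:=\exp_x(t(u)u)$. Nothing forces $t(u)u'\in D$: non-injectivity of $\exp_x$ on $T_x\mathcal{M}$ is not non-injectivity on $D$.

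\emph{Local repair.} Approach from inside. For $s<t(u)$ we have $su'\in\widetilde{\mathcal{I}}_x\subseteq D$ and $\exp_x(su')\to q$. Continuity of $(\exp_x|_D)^{-1}$ at $q$ then forces $su'\to t(u)u$, whereas in fact $su'\to t(u)u'\neq t(u)u$.

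\emph{Cleaner route, avoiding the case split.} Let $D\supsetneq\widetilde{\mathcal{I}}_x$ be open. Any $v=ru\in D\setminus\widetilde{\mathcal{I}}_x$ has $r\ge t(u)$, and openness gives $(r+\eta)u\in D$ for small $\eta>0$. This point lies outside $\overline{\widetilde{\mathcal{I}}_x}=\{su:0\le s\le t(u)\}$, so $D\setminus\overline{\widetilde{\mathcal{I}}_x}$ is a nonempty open set. Its image under $\exp_x$ is open, and by injectivity on $D$ it is disjoint from $\mathcal{I}_x=\exp_x(\widetilde{\mathcal{I}}_x)$. This contradicts $\overline{\mathcal{I}}_x=\mathcal{M}$ from item~1. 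It also removes the need for your unproved claim that $D$ ``must contain some tangent cut point'', which as stated requires $D$ connected or star-shaped.

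\textbf{Smaller points.}
\begin{enumerate}
\item The lemma assumes only completeness, so $t(u)=+\infty$ is possible and $t\le\operatorname{diam}(\mathcal{M})$ is unavailable. Your arguments survive with $t:U_x\mathcal{M}\to(0,\infty]$ continuous and the graph taken over $\{t<\infty\}$; in the paper $\mathcal{M}$ is compact anyway.
\item The ``immediate'' half of continuity is $t(u)\ge\limsup_k t(u_k)$, i.e.\ \emph{upper} semicontinuity of $t$. Lemma~\ref{lemma:two-case} is what supplies lower semicontinuity, so your label is reversed.
\item Continuity of $t$ makes $\widetilde{\mathrm{Cut}}(x)$ homeomorphic to (an open subset of) the unit sphere. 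But continuous images can raise topological dimension, so this does not bound $\dim\mathrm{Cut}(x)$. Use instead that $\mathrm{Cut}(x)$ is closed with empty interior, being the complement of the open dense set $\mathcal{I}_x$, hence has covering dimension at most $p-1$. For Hausdorff dimension, invoking Itoh--Tanaka is the right move, since graphs of merely continuous functions can have Hausdorff dimension above $p-1$.
\item You tacitly read $\widetilde{\mathcal{I}}_x$ as $\{tu:0\le t<t(u)\}$. The paper's definition with $0<t$ omits $0$, and then $x\notin\mathcal{I}_x\cup\mathrm{Cut}(x)$. Your reading is the one under which items~1 and~2 actually hold.
\end{enumerate}
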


\paragraph{Squared Distance Function.}
Fix $y\in\mathcal{M}$. The function $x\mapsto \tfrac12 d(x,y)^2$ is $C^\infty$ on $\mathcal{M}\setminus \mathrm{Cut}(y)$.
Moreover, for $x\notin \mathrm{Cut}(y)$ the logarithm $\log_x(y)$ is well-defined and
\begin{equation}
\label{eq:grad-half-sqdist}
\nabla_x\Bigl(\tfrac12 d(x,y)^2\Bigr) = -\log_x(y).
\end{equation}
Symmetrically, for $y\notin \mathrm{Cut}(x)$ one has $\nabla_y(\tfrac12 d(x,y)^2) = -\log_y(x)$.
These identities are frequently used when differentiating OT objectives involving the quadratic cost.

%% file: sections/review-OT-geometry.tex
\subsection{Formulations on Riemannian Manifolds}
\label{app:review-riemannian-ot}

We briefly recall the optimal transport (OT) formalism on a compact Riemannian manifold  (cf. \citet[Chapter~7.3]{Ambrosio2024}). Let $(\mathcal{M},g)$ be a smooth, connected, $n$-dimensional compact Riemannian manifold without boundary,
and let $d$ denote its Riemannian distance.
We write $\mathcal{B}(\mathcal{M})$ for the Borel $\sigma$-algebra of $\mathcal{M}$ and
$\mathrm{vol}_{\mathcal{M}}\in \mathcal{M}_{+}(\mathcal{M})$ for the Riemannian volume measure, where
$\mathcal{M}_{+}(\mathcal{M})$ denotes the set of finite, nonnegative Borel measures on $\mathcal{M}$.
Let
\[
\mathcal{P}(\mathcal{M})
:=\{\mu\in \mathcal{M}_{+}(\mathcal{M}) : \mu(\mathcal{M})=1\}
\]
be the set of Borel probability measures on $\mathcal{M}$.
Given a Borel map $T:\mathcal{M}\to\mathcal{M}$ and $\mu\in\mathcal{P}(\mathcal{M})$,
the pushforward (image) measure $T_{\#}\mu\in\mathcal{P}(\mathcal{M})$ is defined by
\[
T_{\#}\mu(A) := \mu(T^{-1}(A)),\qquad \forall A\in\mathcal{B}(\mathcal{M}).
\]

\paragraph{The 2-Wasserstein Distance.}
For $\mu,\nu\in\mathcal{P}(\mathcal{M})$, we denote by $\Pi(\mu,\nu)$ the set of \emph{couplings} (or transport plans),
i.e.\ probability measures $\pi\in\mathcal{P}(\mathcal{M}\times\mathcal{M})$ whose marginals are $\mu$ and $\nu$:
\[
(\mathrm{pr}_{1})_{\#}\pi=\mu,\qquad (\mathrm{pr}_{2})_{\#}\pi=\nu,
\]
where $\mathrm{pr}_{i}$ are the coordinate projections on $\mathcal{M}\times\mathcal{M}$. The 2-Wasserstein distance $W_{2}$ on $\mathcal{P}(\mathcal{M})$ is defined by
\begin{equation*}
\label{eq:w2-def}
W_{2}(\mu,\nu)
:= \inf_{\pi\in \Pi(\mu,\nu)}\bigg(
\int_{\mathcal{M}\times\mathcal{M}} d^{2}(x,y)\,\mathrm{d}\pi(x,y)\bigg)^{\frac{1}{2}}.
\end{equation*}
Since $\mathcal{M}$ is compact, the infimum is finite and is attained by at least one optimal plan
$\pi^{\star}\in\Pi(\mu,\nu)$.

\paragraph{Monge and Kantorovich Formulations.}
\label{app:monge-kantorovich}

Throughout the paper, we consider the quadratic cost
\[
c(x,y):=\tfrac12 d^{2}(x,y).
\]
The corresponding \emph{Kantorovich problem} is
\begin{equation}
\label{eq:kantorovich-problem}
\mathrm{KP}(\mu,\nu)
:= \inf_{\pi\in\Pi(\mu,\nu)} \int_{\mathcal{M}\times\mathcal{M}} c(x,y)\,\mathrm{d}\pi(x,y)
= \tfrac12\,W_{2}^{2}(\mu,\nu).
\end{equation}
The \emph{Monge problem} is the restriction of~\eqref{eq:kantorovich-problem} to couplings induced by maps:
\begin{equation}
\mathrm{MP}(\mu,\nu)
:= \inf_{T_{\#}\mu=\nu}\int_{\mathcal{M}} c\bigl(x,T(x)\bigr)\,\mathrm{d}\mu(x).
\end{equation}
In general $\mathrm{MP}$ may fail to admit a minimizer (or even to be well-posed), while $\mathrm{KP}$
always admits at least one optimal plan on compact metric spaces.

\subsection{Duality, $c$-transform, and $c$-concavity}
\label{app:dual-c-transform}

The dual formulation of Kantorovich problem~\eqref{eq:kantorovich-problem} is given by
\begin{equation}
\label{eq:kantorovich-dual}
\mathrm{KP}(\mu,\nu)
=\sup_{\substack{\varphi,\psi\in C(\mathcal{M},\mathbb{R})\\ \varphi(x)+\psi(y)\le c(x,y)}}
\left\{\int_{\mathcal{M}} \varphi(x)\,\mathrm{d}\mu(x)
+\int_{\mathcal{M}} \psi(y)\,\mathrm{d}\nu(y)\right\}.
\end{equation}
Given $\psi:\mathcal{M}\to\mathbb{R}\cup\{-\infty\}$, its $c$-transform is defined as
\begin{equation}
\label{eq:c-transform-minus}
\psi^{c}(x) := \inf_{y\in\mathcal{M}} \bigl(c(x,y)-\psi(y)\bigr).
\end{equation}
With this convention, formulation~\eqref{eq:kantorovich-dual} can be written in the following \emph{semi-dual} form
\begin{equation}
\label{eq:semi-dual-minus}
\mathrm{KP}(\mu,\nu)
=\sup_{\psi\in C(\mathcal{M})}
\left\{\int_{\mathcal{M}} \psi^{c}(x)\,\mathrm{d}\mu(x)
+\int_{\mathcal{M}} \psi(y)\,\mathrm{d}\nu(y)\right\}.
\end{equation}

\begin{remark}[Sign convention]
Many OT-on-manifolds constructions (and several learning objectives) prefer the equivalent reparametrization
$\tilde\psi:=-\psi$, in which case
\begin{equation}
\label{eq:c-transform-plus}
\tilde\psi^{c}(x)=\inf_{y\in\mathcal{M}}\bigl(c(x,y)+\tilde\psi(y)\bigr),
\qquad
\mathrm{KP}(\mu,\nu)
=\sup_{\tilde\psi\in C(\mathcal{M},\mathbb{R})}
\left\{\int \tilde\psi^{c}\,\mathrm{d}\mu-\int \tilde\psi\,\mathrm{d}\nu\right\}.
\end{equation}
Both conventions are identical up to the substitution $\tilde\psi=-\psi$.
\end{remark}

A function $\varphi:\mathcal{M}\to\mathbb{R}\cup\{-\infty\}$ is called \emph{$c$-concave} if
$\varphi=\psi^{c}$ for some $\psi:\mathcal{M}\to\mathbb{R}\cup\{-\infty\}$. It is known that $\varphi$ is $c$-concave if and only  $\varphi=\varphi^{cc}$, where $\varphi^{cc}:=(\varphi^{c})^{c}$. For the quadratic cost $c(x,y)=\frac{1}{2}d^2(x,y)$ on $\mathcal{M}$, the supremum in the semi-dual formulation~\eqref{eq:semi-dual-minus} is always attained at some $c$-concave function.

\subsection{Optimality conditions and McCann's Theorem}
\label{app:c-subdiff}

\begin{definition}[$c$-subdifferential]
\label{def:c-subdiff}
Let $\varphi:\mathcal{M}\to\mathbb{R}\cup\{-\infty\}$ and let $x\in\{\varphi>-\infty\}$.
The $c$-subdifferential of $\varphi$ at $x$ is
\begin{equation*}
\label{eq:c-subdiff}
\partial^{c}\varphi(x)
:=\Bigl\{y\in\mathcal{M}:\ \varphi(x)+\varphi^{c}(y)=c(x,y)\Bigr\}.
\end{equation*}
Equivalently, if $\varphi=\psi^{c}$ as in~\eqref{eq:c-transform-minus}, then
\[
y\in \partial^{c}\varphi(x)
\quad\Longleftrightarrow\quad
y\in \mathop{\arg\min}_{z\in\mathcal{M}} \bigl(c(x,z)-\psi(z)\bigr).
\]
Under the alternative sign convention~\eqref{eq:c-transform-plus}, the same set is described as
$\mathop{\arg\min}_{z}(c(x,z)+\tilde\psi(z))$.
\end{definition}

If $(\varphi,\psi)$ is an optimal pair in~\eqref{eq:kantorovich-dual} (with $\varphi=\psi^{c}$),
and $\pi^{\star}$ is an optimal plan in~\eqref{eq:kantorovich-problem}, then
\[
\varphi(x)+\psi(y)=c(x,y)
\qquad\text{$\pi^{\star}$-a.e. on $\mathcal{M}\times\mathcal{M}$}.
\]
In particular, $\mathrm{spt}(\pi^{\star})\subset\{(x,y):y\in \partial^{c}\varphi(x)\}$.


A central fact for the quadratic cost on a Riemannian manifold is that, under mild conditions,
optimal couplings are induced by a transport map, which is known as McCann's theorem.

\begin{theorem}[Existence and structure of optimal maps for $c=\tfrac12 d^{2}$ (\cite{McCann2001})]
\label{thm:mccann}
Let $\mu,\nu\in\mathcal{P}(\mathcal{M})$ and assume that $\mu$ is absolutely continuous with respect to
$\mathrm{vol}_{\mathcal{M}}$.
Then there exists a $c$-concave potential $\varphi$ such that the optimal plan for~\eqref{eq:kantorovich-problem}
is induced by a (essentially unique) map $T:\mathcal{M}\to\mathcal{M}$ satisfying
\[
T(x)\in \partial^{c}\varphi(x)\qquad\text{for $\mu$-a.e.\ $x\in\mathcal{M}$}.
\]
Moreover, $\varphi$ is differentiable $\mu$-a.e.\ and, at points of differentiability,
\begin{equation}
\label{eq:exp-gradient-map}
T(x)=\exp_{x}\bigl(-\nabla \varphi(x)\bigr),
\end{equation}
where $\nabla$ denotes the Riemannian gradient and $\exp_{x}:T_{x}\mathcal{M}\to\mathcal{M}$ is the exponential map.
\end{theorem}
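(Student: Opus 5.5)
The statement ``immediately above'' is Theorem~\ref{thm:mccann}, McCann's theorem, which the paper quotes as a known result. The plan is the standard Kantorovich-duality argument, specialised to the compact manifold $\mathcal M$.

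\emph{Step 1 (existence of potentials).} Since $\mathcal M$ is compact and $c=\tfrac12 d^2$ is continuous, an optimal plan $\pi^\star$ for \eqref{eq:kantorovich-problem} exists. By the duality \eqref{eq:kantorovich-dual}–\eqref{eq:semi-dual-minus}, the semi-dual supremum is attained by some $\psi$. Replacing $\psi$ by $\psi^{cc}$ and setting $\varphi:=\psi^{c}$, we may assume $\varphi$ is $c$-concave.

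The cost $c$ is Lipschitz on $\mathcal M\times\mathcal M$, with constant $\mathrm{diam}(\mathcal M)$ in each variable. Hence $\varphi$, being an infimum of uniformly Lipschitz functions, is Lipschitz. Optimality then gives $\varphi(x)+\psi(y)=c(x,y)$ on $\mathrm{spt}\,\pi^\star$, so $\mathrm{spt}\,\pi^\star\subset\partial^c\varphi$.

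\emph{Step 2 (a.e.\ differentiability).} Rademacher's theorem in charts shows that $\varphi$ is differentiable $\mathrm{vol}_{\mathcal M}$-a.e. Since $\mu\ll\mathrm{vol}_{\mathcal M}$, it is therefore differentiable $\mu$-a.e.

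\emph{Step 3 (the key step).} Fix a point $x$ of differentiability and some $y\in\partial^c\varphi(x)$. The function
\[
z\mapsto c(z,y)-\varphi(z)
\]
attains its minimum, namely $\psi(y)$, at $z=x$. The obstacle is that $c(\cdot,y)$ need not be differentiable at $x$ when $x\in\mathrm{Cut}(y)$. I would handle this using semiconcavity: $\tfrac12 d(\cdot,y)^2$ is uniformly semiconcave on a compact manifold. Consequently it admits a supergradient at $x$, namely $-v$ for any $v$ with $\exp_x v=y$ and $|v|=d(x,y)$.

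\emph{Step 4 (identifying $T$).} By minimality at $x$ and differentiability of $\varphi$, every supergradient $w$ of $c(\cdot,y)$ at $x$ satisfies $w=\nabla\varphi(x)$. This forces the superdifferential of $c(\cdot,y)$ at $x$ to be the singleton $\{\nabla\varphi(x)\}$. Hence $c(\cdot,y)$ is differentiable at $x$, the minimizing geodesic from $x$ to $y$ is unique with initial velocity $-\nabla\varphi(x)$, and
\[
y=\exp_x\bigl(-\nabla\varphi(x)\bigr).
\]
Thus $\partial^c\varphi(x)$ is a single point for $\mu$-a.e.\ $x$. Define $T(x):=\exp_x(-\nabla\varphi(x))$; it is measurable as a composition of measurable maps.

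\emph{Step 5 (plan induced by $T$ and uniqueness).} Because $\pi^\star$ is concentrated on the graph of $\partial^c\varphi$, which is single-valued $\mu$-a.e., we get $\pi^\star=(\mathrm{id},T)_\#\mu$. Consequently $T_\#\nu=\nu$'s counterpart holds, i.e.\ $T_\#\mu=\nu$, and $T$ solves \eqref{eq:monge-problem}.

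For uniqueness, suppose $T'$ is another optimal map. The plan
\[
\tfrac12(\mathrm{id},T)_\#\mu+\tfrac12(\mathrm{id},T')_\#\mu
\]
is optimal, so by Step 4 it too is concentrated on a graph. This forces $T=T'$ $\mu$-a.e.

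The main difficulty is Step 3, establishing semiconcavity and the superdifferential structure of $d^2$ at cut points.
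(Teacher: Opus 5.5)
Your proposal is correct and follows the standard argument behind this result, which the paper does not prove but simply cites from McCann (2001): a Lipschitz $c$-concave dual optimizer whose $c$-subdifferential carries every optimal plan, then Rademacher, then your Steps 3--4, whose content is McCann's Lemma~7 (which the paper itself invokes later) that $\partial^c\varphi(x)=\{\exp_x(-\nabla\varphi(x))\}$ at differentiability points, after which concentration on a $\mu$-a.e.\ graph gives existence and uniqueness. One small tightening: you do not need uniform semiconcavity in Step 3. The vector $-v$ is a supergradient of $\tfrac12 d(\cdot,y)^2$ at $x$ because this function is touched from above at $x$ by $z\mapsto\tfrac12\bigl(d(z,\gamma(s))+d(\gamma(s),y)\bigr)^2$, where $\gamma(s)$ is a point on the minimizing geodesic close enough to $x$ that $d(\cdot,\gamma(s))$ is smooth near $x$.
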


\begin{remark}[Cut locus and smoothness]
The function $(x,y)\mapsto \tfrac12 d^{2}(x,y)$ is smooth away from the cut locus, but generally fails to be smooth on it.
As a result, $\varphi$ and the map formula~\eqref{eq:exp-gradient-map} should be interpreted in an a.e.\ sense,
and many quantitative statements are naturally restricted to regions avoiding neighborhoods of the cut locus.
\end{remark}


%% file: sections/review-functional-analysis.tex
\paragraph{Multi-index Notation.}
Let $n\in\mathbb{N}$. For a multi-index $J=(j_1,\dots,j_n)\in\mathbb{N}_0^n$ we set
\[
|J|:=j_1+\cdots+j_n,
\qquad
D^J := \frac{\partial^{|J|}}{\partial x_1^{j_1}\cdots \partial x_n^{j_n}}.
\]
We write $\|x\|$ for the Euclidean norm of $x\in\mathbb{R}^n$.

\subsection{H\"older Spaces on Euclidean Domains}

Let $k\in\mathbb{N}_0$, $\alpha\in(0,1]$, and $K\subset\mathbb{R}^n$ be a  compact set. Suppose $f:K\to\mathbb{R}$ is a function such that all derivatives
$D^J f$ with $|J|\le k$ exist and extend continuously to the boundary of $K$.
Define the seminorm
\[
[f]_{C^{k,\alpha}(K)}
:=
\sum_{|J|=k}\ \sup_{\substack{x,y\in K^\circ\\x\neq y}}
\frac{|D^J f(x)-D^J f(y)|}{\|x-y\|^\alpha},
\]
and the H\"older norm
\[
\|f\|_{C^{k,\alpha}(K)}
:=
\sum_{j=1}^k \sup_{|J|=j}\|D^J f\|_\infty
\;+\;
[f]_{C^{k,\alpha}(K)}.
\]
The H\"older space $C^{k,\alpha}(K)$ is defined as the set of all such functions with finite H\"older norm.

We will often take $K=[0,1]^n$ in our proofs.
For $r=k+\alpha$ we denote 
\begin{equation}
\label{eq-def-holder-unit-ball}
\mathcal{H}_{r,n}
:=
\Big\{f\in C^{k,\alpha}([0,1]^n): \|f\|_{C^{k,\alpha}([0,1]^n)}\le 1\Big\}.
\end{equation}
the unit ball in the H\"older space $C^{k,\alpha}([0,1]^n)$.

\subsection{H\"older Spaces on Smooth manifolds}\label{def:holder-manifold}

Let $\mathcal{M}$ be a compact $p$-dimensional smooth manifold and fix $k\in\mathbb{N}_0$, $\alpha\in(0,1]$. Choose a finite smooth atlas $\{(U_i,\varphi_i)\}_{i=1}^N$ such that each
$\varphi_i:U_i\to V_i\subset\mathbb{R}^p$ is a $C^\infty$-diffeomorphism onto an open set $V_i$.
Let $\{\chi_i\}_{i=1}^N$ be a smooth partition of unity subordinate to $\{U_i\}_{i=1}^N$:
\[
\chi_i\in C^\infty(\mathcal M),\quad 0\le \chi_i\le 1,\quad \mathrm{supp}(\chi_i)\subset U_i,\quad
\sum_{i=1}^N \chi_i \equiv 1 \ \text{on }\mathcal M.
\]
For $f:\mathcal M\to\mathbb R$ define
\[
f_i := (\chi_i f)\circ \varphi_i^{-1}\quad\text{on }V_i,
\qquad
K_i:=\varphi_i(\mathrm{supp}\,\chi_i)\Subset V_i.
\]
We say that $f\in C^{k,\alpha}(\mathcal M)$ if $f_i\in C^{k,\alpha}(K_i)$ for all $i$ and
\[
\|f\|_{C^{k,\alpha}(\mathcal M)}
:= \max_{1\le i\le N}\ \|f_i\|_{C^{k,\alpha}(K_i)} < \infty.
\]
Different choices of finite atlases and subordinate partitions of unity yield equivalent
norms. In particular, the H\"older space $C^{k,\alpha}(\mathcal M)$ is
well-defined up to equivalence of norms. A commonly used alternative characterization is that $f\in C^{k,\alpha}(\mathcal M)$
if and only if for every chart $(U,\varphi)$ and every compact $K\Subset \varphi(U)$, the coordinate representative
$f\circ\varphi^{-1}$ belongs to $C^{k,\alpha}(K)$.

\subsection{$C^{k,\alpha}$ Domains and Extension}
We recall the notion of boundary regularity used in elliptic PDE theory and a basic extension lemma~\citep[Section~6.2]{Gilbarg2001}.

\begin{definition}[$C^{k,\alpha}$ domains]
\label{def:Cka-domain}
Let $\Omega\subset\mathbb{R}^n$ be a bounded open set and let $k\ge 1$, $\alpha\in[0,1]$.
We say that $\Omega$ is a \emph{$C^{k,\alpha}$ domain} if for every $x_0\in\partial\Omega$ there exist
a radius $r>0$ and a $C^{k,\alpha}$ diffeomorphism
\[
\psi: B(x_0,r)\to D\subset\mathbb{R}^n,
\qquad
\psi,\ \psi^{-1}\in C^{k,\alpha},
\]
such that
\[
\psi\big(B(x_0,r)\cap\Omega\big)\subset \mathbb{R}^n_+,
\qquad
\psi\big(B(x_0,r)\cap\partial\Omega\big)\subset \partial\mathbb{R}^n_+,
\]
where $\mathbb{R}^n_+:=\{x\in\mathbb{R}^n:\ x_n>0\}$ and $\partial\mathbb{R}^n_+:=\{x_n=0\}$.
Equivalently, $\psi$ \emph{straightens the boundary} near $x_0$.

If $T\subset\partial\Omega$, we say that $T$ is a \emph{$C^{k,\alpha}$ boundary portion} if the above property holds at
every $x_0\in T$, with the additional requirement that $B(x_0,r)\cap\partial\Omega\subset T$.
\end{definition}

\begin{lemma}[Extension from a $C^{k,\alpha}$ domain; Lemma~6.7 of \cite{Gilbarg2001}]
\label{lemma-gilbarg}
Let $\Omega\subset\mathbb{R}^n$ be a bounded $C^{k,\alpha}$ domain with $k\ge 1$ and let $\Omega'$ be an open set with
$\overline{\Omega}\subset \Omega'$.
If $u\in C^{k,\alpha}(\overline{\Omega})$, then there exists $w\in C^{k,\alpha}(\Omega')$ such that $w=u$ on $\Omega$ and
\[
\|w\|_{C^{k,\alpha}(\Omega')} \le C\,\|u\|_{C^{k,\alpha}(\overline{\Omega})},
\]
where $C$ depends only on $k$, $\alpha$, $\Omega$, and $\Omega'$.
\end{lemma}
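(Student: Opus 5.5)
The plan is the classical flatten--reflect--patch argument of \citet[Section~6.2]{Gilbarg2001}, with a higher-order reflection so that all derivatives up to order $k$ and their H\"older quotients extend continuously across the boundary. First, use compactness of $\partial\Omega$ and Definition~\ref{def:Cka-domain} to cover $\partial\Omega$ by finitely many balls $B_j=B(x_j,r_j)$, $j=1,\dots,N$, each carrying a boundary-straightening $C^{k,\alpha}$ diffeomorphism $\psi_j$. Shrink the radii so that $\bigcup_j B_j\subset\Omega'$, and add one open set $B_0\Subset\Omega$ so that $\{B_j\}_{j=0}^N$ covers $\overline\Omega$. Fix a smooth partition of unity $\{\eta_j\}$ subordinate to this cover with $\sum_j\eta_j=1$ on a neighbourhood of $\overline\Omega$ and every $\operatorname{supp}\eta_j\subset\Omega'$. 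Write $u=\sum_j\eta_j u$. The interior piece $\eta_0u$ extends by zero. It therefore suffices to extend each boundary piece $u_j:=\eta_ju$ and then set $w:=\eta_0u+\sum_{j\ge1}\tilde u_j$.

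For a boundary piece, transfer to the half-space by setting $v:=u_j\circ\psi_j^{-1}$ on $\overline{\mathbb R^n_+}\cap D_j$. Since $C^{k,\alpha}$ is stable under composition with $C^{k,\alpha}$ diffeomorphisms, $v\in C^{k,\alpha}$ with $\|v\|\le C\|u\|_{C^{k,\alpha}(\overline\Omega)}$; this uses the chain rule and the estimate $[f\circ g]_\alpha\le [f]_\alpha\|Dg\|_\infty^\alpha$. Moreover $v$ has compact support in $D_j$ away from the curved part of $\partial D_j$. Extend $v$ to $x_n<0$ by the Hestenes-type reflection
\[
\bar v(x',x_n):=\sum_{i=1}^{k+1} c_i\, v\!\left(x',-\tfrac{x_n}{i}\right),\qquad x_n<0,
\]
where the constants $c_i$ solve the Vandermonde system $\sum_{i=1}^{k+1}c_i(-1/i)^m=1$ for $m=0,\dots,k$. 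This system is exactly the condition that $D^J\bar v$ matches $D^Jv$ on $\{x_n=0\}$ for all $|J|\le k$.

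The step I expect to be the main obstacle is verifying that $\bar v\in C^{k,\alpha}$ across $\{x_n=0\}$ with a norm bound. Continuity of derivatives up to order $k$ follows from the matching conditions. For the H\"older seminorm, take $x$ with $x_n>0$ and $y$ with $y_n<0$. Let $z$ be the point where the segment $[x,y]$ crosses $\{x_n=0\}$, and write
\[
|D^J\bar v(x)-D^J\bar v(y)|\le |D^J\bar v(x)-D^J\bar v(z)|+|D^J\bar v(z)-D^J\bar v(y)|.
\]
Each term is controlled by the one-sided seminorms, using $|x-z|,|z-y|\le|x-y|$ and, on the reflected side, the fact that the dilation $x_n\mapsto -x_n/i$ is Lipschitz with constant at most $1$. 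This gives $\|\bar v\|_{C^{k,\alpha}}\le C(k,\alpha)\|v\|_{C^{k,\alpha}}$.

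Finally, pull back by setting $\tilde u_j:=\bar v\circ\psi_j$ on $B_j$. Multiply by a cutoff $\zeta_j\in C_c^\infty(B_j)$ equal to $1$ on $\operatorname{supp}\eta_j$, so that $\zeta_j\tilde u_j$ has compact support in $B_j\subset\Omega'$ and can be extended by zero. Then $w:=\eta_0u+\sum_{j\ge1}\zeta_j\tilde u_j$ equals $u$ on $\Omega$. Summing the finitely many estimates, each of which uses only the product rule, composition with $\psi_j^{\pm1}$ and the reflection bound, gives $\|w\|_{C^{k,\alpha}(\Omega')}\le C\|u\|_{C^{k,\alpha}(\overline\Omega)}$. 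The constant depends only on $k$, $\alpha$, the charts (hence on $\Omega$), and the cover and cutoffs (hence on $\Omega'$).
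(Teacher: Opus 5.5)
Your proposal is correct and is essentially the standard proof of this extension lemma in Gilbarg--Trudinger: flatten the boundary with the $C^{k,\alpha}$ charts, extend across $\{x_n=0\}$ by the higher-order reflection $\sum_{i=1}^{k+1}c_i\,v(x',-x_n/i)$ with $\sum_i c_i(-1/i)^m=1$ for $m=0,\dots,k$, and patch with a partition of unity and cutoffs. The paper gives no proof of its own and simply imports this result by citation, so your argument matches the intended justification.
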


\begin{remark}[On norms on $\overline{\Omega}$]
When we write $C^{k,\alpha}(\overline{\Omega})$, we mean that $u$ has derivatives up to order $k$ which extend continuously
to $\overline{\Omega}$ and satisfy the H\"older condition of order $\alpha$ on $\overline{\Omega}$, with the norm defined
as in the compact-set definition above (taking $K=\overline{\Omega}$).
\end{remark}

%% file: sections/related_work.tex
\paragraph{Neural optimal transport in Euclidean space.}
A large body of research studies learning optimal transport (OT) between probability distributions in Euclidean spaces using neural parameterizations of either transport maps or Kantorovich potentials.
A representative approach for the quadratic cost $c(x,y)=\tfrac12\|x-y\|^2$ is to learn a convex Kantorovich potential and recover the Monge map via its gradient; \citet{pmlr-v119-makkuva20a} pursue this strategy using input-convex neural networks (ICNNs) and a minimax formulation.
Beyond Brenier/ICNN parameterizations, a standard route to scalability is the \emph{semi-dual} of
(entropically regularized) OT, where one dual variable is optimized while the other is recovered in
closed form; this yields sample-based stochastic optimization schemes, see, e.g., \citet{genevay2016stochastic,cuturi_peyre_semidual_review,pmlr-v202-vacher23a}.
More recent ``continuous'' neural OT solvers directly learn dual variables and/or transport plans, and provide out-of-sample evaluation of couplings and maps; see, e.g., \citet{neural_ot}.
Neural approaches also learn Monge maps directly from samples using one-potential objectives closely related to semi-dual training; see, e.g., \citet{rout2022generative,fan2023neural}.

\paragraph{Optimal transport and generative modeling on manifolds.}
From the theoretical perspective, OT on Riemannian manifolds with quadratic cost is well understood: under mild assumptions (e.g.\ $\mu\ll \mathrm{vol}_{\mathcal{M}}$), optimal plans concentrate on a (a.e.\ unique) map that can be expressed through the exponential map and a $c$-concave potential (see, e.g., \citet{McCann2001,Villani2016-ps,Ambrosio2024}).
In machine learning, manifold-valued generative modeling has been approached via Riemannian normalizing flows and continuous-time dynamics on manifolds (e.g.\ \citet{NEURIPS2020_1aa3d9c6}), as well as via constructions tailored to specific compact manifolds such as spheres and tori (e.g.\ \citet{normalizing_tori_sphere}).

This is closely related in spirit to classical \emph{distance-to-landmarks}
embeddings of metric spaces, which represent points by their distances to a collection of reference
points/sets; see Proposition~\ref{prop-gromov-embedding} (after \citet{Gromov1983}) for the
distance-to-landmarks embedding we use in this paper.
In our setting, such distance-based feature maps play a critical analytic role: they provide injective
and regular Euclidean representations of $\mathcal M$ that enable importing quantitative approximation
rates.

Stochastic approaches such as Schrödinger bridges provide an entropically regularized relaxation of optimal transport, connecting OT with diffusion processes \cite{De-Bortoli2022-bb}. These methods yield transport plans (or stochastic flows) rather than deterministic Monge maps, and thus do not directly guarantee map-based transport except in the small-noise limit. A complementary line of work introduces the notion of the Monge gap\cite{Uscidda2023-re}, using it as a regularization term to encourage learned transport maps to satisfy Monge-type optimality properties.

Directly related to this paper are OT-inspired manifold flows based on $c$-concave potentials.
\citet{rcpm-cohen} introduced RCPMs, a family of expressive diffeomorphisms on compact Riemannian manifolds built from \emph{discrete} $c$-concave potentials; the resulting maps are obtained by composing the exponential map with (sub)gradients of the learned potentials.
Building on this viewpoint, \citet{implicit_rcpms} proposed \emph{Implicit Riemannian Concave Potential Maps} (IRCPMs), which extend RCPMs by allowing more general $c$-concave potentials and by recovering the map implicitly through the solution of an inner minimization (and implicit differentiation), enabling efficient likelihood-based training and symmetry handling.
In particular, \citet{implicit_rcpms} are (to our knowledge) among the first to make this OT-inspired inner minimization an explicit \emph{implicit layer} in a manifold flow model and to differentiate through the argmin via implicit differentiation.

\paragraph{Universal approximation on manifolds and curse-of-dimensionality considerations.}
Our approximation arguments combine two themes: (i) reducing approximation on a manifold to approximation in Euclidean space via an injective/sufficiently regular feature map, and (ii) importing quantitative approximation rates from Euclidean approximation theory.
On the universality side, \citet{non_euclidean_uat} give general conditions under which composing a Euclidean universal approximator with suitable feature/readout maps yields universality for non-Euclidean learning problems, including manifold-valued settings.
A quantitative and differentiable-geometric perspective is developed in \citet{cod_kratsios}, which studies approximation on manifolds under geometric constraints and identifies regimes where data-dependent conditions can soften or avoid classical curse-of-dimensionality behavior.
On the Euclidean approximation side, \citet{yarotsky_cod} establish fast (``uncursed'') approximation regimes for certain architectures/activations, including nearly exponential rates for suitable periodic/structured activations.
In comparison, approximation results for generic ReLU networks on manifolds typically yield (polynomial-in-$\varepsilon^{-1}$) rates governed by the intrinsic dimension, up to logarithmic factors; see, e.g., \citet{schmidthieber2019deeprelunetworkapproximation,NEURIPS2019_fd95ec8d}.
Our functional approximation theorem is complementary: it targets the particular geometric function classes arising from squared-distance OT (notably $c$-concave potentials) and combines a globally regular feature-map reduction with Euclidean \emph{uncursed} approximation regimes, leading to different complexity scalings when such Euclidean rates are available.

\paragraph{Positioning of our contribution.}
The above works motivate our focus: we study OT potentials and Monge maps \emph{on compact Riemannian manifolds} with quadratic cost, and we develop approximation guarantees tailored to the geometric OT structure.
In contrast to Euclidean neural OT, our analysis must explicitly account for geometric singularities (e.g.\ cut loci) where the squared distance and the associated $c$-transforms lose smoothness.
In contrast to manifold flow constructions that emphasize likelihood training and empirical expressivity, we provide a quantitative approximation theory for $c$-concave OT potentials (and the induced transport maps) using RNOT parameterizations, with rates that mirror uncursed Euclidean approximation once an appropriate feature map/embedding mechanism is in place.

%% file: sections/implementation-details.tex
\subsection{Gromov Embedding Implementation}
\label{app-gromov-embedding-implementation}
\emph{Landmark selection.}
Our embedding layer represents each point $x\in\mathcal M$ by its distances to a set of landmarks $L=\{\ell_j\}_{j=1}^{M}\subset\mathcal M$,
\begin{equation*}
\varphi(x) := (d(x,\ell_j))_{j=1}^{M}\in\mathbb R^{M}.
\end{equation*}
We consider two practical strategies to choose $L$:
\begin{itemize}
    \item[(i)]\emph{Random landmarks (RND):} we sample $\ell_j$ i.i.d.\ from a reference measure on $\mathcal M$ (in our experiments, the uniform measure on $\mathcal{M}$).
    \item[(ii)]\emph{Farthest-point sampling (FPS):} to obtain a more space-filling landmark set, we run a greedy $k$-center procedure on a candidate pool $C=\{c_n\}_{n=1}^{N}\subset\mathcal M$ (either sampled from the same reference measure or taken from training data).
\end{itemize}

FPS initializes with a random seed $\ell_1\in C$ and then iteratively selects
\begin{equation*}
\ell_{t+1} := \argmax_{c\in C}\min_{1\le j\le t} d(c,\ell_j), \qquad t=1,\dots,M-1,
\end{equation*}
i.e., each new landmark is the candidate point farthest from the current landmark set in geodesic distance.
To quantify the space-filling property, we define the \emph{(empirical) covering radius} of a landmark set $L$ with respect to $C$ as
\begin{equation*}
R(L;C) := \max_{c\in C}\min_{\ell\in L} d(c,\ell),
\end{equation*}
i.e., the smallest radius $r$ such that $C \subset \bigcup_{\ell\in L} B(\ell,r)$.
The discrete $k$-center objective is to minimize $R(L;C)$ over all $|L|=M$ subsets of $C$.
FPS is a standard greedy approximation to this objective (often called \emph{farthest-first traversal}) and tends to produce well-spread landmarks with small covering radius (see e.g.~\citet{Williamson2012-yo,jocg_eppstein}).
Moreover, it yields \emph{nested} landmark sets: the first $M$ landmarks are a prefix of those obtained for any larger $M'\!>\!M$.

\emph{Embedding diagnostics and choice of $M$.}
Theoretical injectivity conditions for distance-coordinate embeddings depend on geometric constants that are typically unavailable in closed form. Consequently, we choose $M$ using empirical diagnostics on a held-out set $X_{\mathrm{val}}\subset\mathcal M$.
For each candidate $M$, we compute embeddings $z_n=\varphi(x_n)$ for $x_n\in X_{\mathrm{val}}$ and monitor \emph{non-collapse} statistics in the embedded space.
Specifically, we estimate the minimum pairwise separation
\begin{equation*}
s_M := \min_{a\neq b}\, \|z_a-z_b\|_2,
\end{equation*}
and the fraction of near-collisions
\begin{equation*}
\rho_M(\varepsilon) := \frac{1}{|P|}\sum_{(a,b)\in P}\mathbbm{1}_{\left\{\|z_a-z_b\|_2 < \varepsilon\right\}},
\end{equation*}
where $P$ is a set of sampled distinct pairs from $X_{\mathrm{val}}$ (we subsample pairs for efficiency).
Equivalently, $\rho_M(\varepsilon)$ is an empirical estimate of
$\mathrm{Pr}(\|\varphi_M(x)-\varphi_M(y)\|_2<\varepsilon)$ under the sampling scheme used to draw pairs $(x,y)$ from $X_{\mathrm{val}}$.

\paragraph{Embedding diagnostics and choice of $M$.}
Theoretical injectivity conditions for distance-coordinate embeddings depend on geometric constants that are typically unavailable in closed form. Consequently, we choose $M$ using empirical diagnostics on a held-out validation set $X_{\mathrm{val}} \subset \mathcal{M}$. For each candidate $M$, we compute embeddings $z_n = \varphi(x_n) \in \mathbb{R}^M$ for $x_n \in X_{\mathrm{val}}$, where $\varphi(x) = (d(x, \ell_1), \ldots, d(x, \ell_M))$ is the landmark distance embedding. We monitor non-collapse statistics in the embedded space: the minimum pairwise separation $s_{M}$ and the near-collision fraction $\rho_{M}(\varepsilon)$. We additionally report the empirical coverage radius
\begin{equation*}
    R_M^{\mathrm{val}} := \max_{x \in X_{\mathrm{val}}} \min_{1 \leq j \leq M} d(x, \ell_j),
\end{equation*}
which measures how well the landmark set covers the region of interest. 

In practice, we repeat these estimates over multiple random subsets of $X_{\mathrm{val}}$ to reduce variance, and select the smallest $M$ such that $s_M$ is stably above a numerical tolerance and $\rho_M(\varepsilon)\approx 0$, which provides a practical proxy for injectivity at the resolution of the data.
Figure~\ref{fig:embedding-diagnostics} compares RAND landmark sampling against FPS, demonstrating that FPS achieves lower coverage radius and comparable separation with fewer landmarks (see Figure \ref{fig:embedding-diagnostics}).

\begin{figure}
    \centering
    \includegraphics[width=1\linewidth]{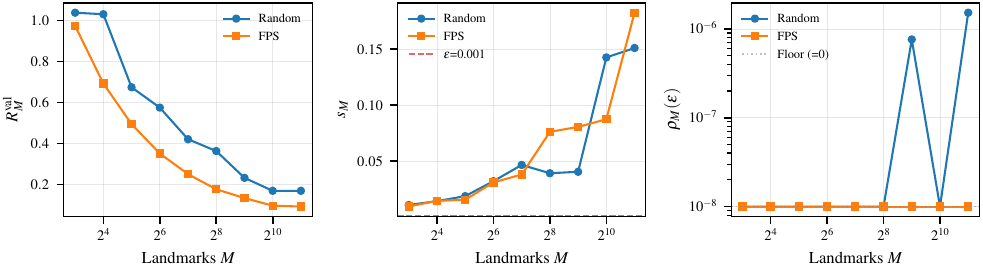}
    \caption{Embedding diagnostics on $\mathbb{S}^2$: coverage radius $R_M^{\mathrm{val}}$ (left), minimum pairwise separation $s_M$ (center), and near-collision fraction $\rho_M(\varepsilon)$ (right) for random vs.\ farthest-point sampling (FPS). FPS achieves lower coverage radius while maintaining comparable separation, with $\rho_M \approx 0$ for all $M$ tested.}
    \label{fig:embedding-diagnostics}
\end{figure}

\subsection{Training RNOT Maps via the Kantorovich Semi-Dual}
\label{subsec:semidual_training}

We now describe how RNOT maps are trained in practice.  This section provides details on the feature representations used to parameterize prepotentials on the manifold, the optimization objective, and the numerical treatment of the $c$-transform.  Our proposed procedure allows for end-to-end training of RNOT maps using only samples from the source and target distributions, without discretizing the manifold nor evaluating Jacobian determinants.  The implementation detailed here aligns with the prior theory and is designed to be stable and scalable in high-dimensional settings.

We begin by describing the feature map used to represent functions on $\mathcal{M}$.

\textbf{Distance-to-Landmarks Feature Representation.}  We use a distance-coordinate feature map inspired by Proposition~\ref{prop-gromov-embedding}. In principle, that result guarantees the existence of a geometric threshold $\delta_0>0$ such that, for any $\delta\in(0,\delta_0)$ and any maximal $\delta$-separated set $S=\{x_i\}_{i=1}^{I}\subset\mathcal M$, the map
\begin{equation*}
\varphi_{S}(x) \;=\; \bigl(d(x,x_i)\bigr)_{i=1}^{I}
\end{equation*}
is injective, hence a topological embedding on the compact manifold $\mathcal M$. Since $\delta_0$ depends on the (typically unknown) geometry of $\mathcal M$ and is not available in closed form, we do not attempt to explicitly construct such an $S$.

Instead, we form a (larger) landmark set $L=\{\ell_j\}_{j=1}^{M}$ using either random sampling or farthest-point sampling (Appendix~\ref{app-gromov-embedding-implementation}) and define
\begin{equation}
\label{eq:landmark-embedding}
    \varphi(x) \;:=\; \bigl(d(x,\ell_j)\bigr)_{j=1}^{M}\in\mathbb{R}^{M}.
\end{equation}
This feature map is continuous, and increasing $M$ cannot destroy injectivity: if $L$ contains an injective subset (as in Proposition~\ref{prop-gromov-embedding}), then $\varphi$ is injective as well. In practice, we treat $M$ as a tunable resolution parameter and increase it until the representation is empirically non-collapsing on held-out samples.  Importantly, we emphasize that these landmarks are used solely to construct a continuous feature representation of functions on \(\mathcal M\); they do not discretize the transport problem itself, and the resulting RNOT maps remain fully continuous with unrestricted output support.  Concretely, we monitor the embedding diagnostics described in Appendix~\ref{app-gromov-embedding-implementation} on a validation set, and choose $M$ so that these quantities remain above a fixed numerical tolerance.

Given this representation, we now give the objective used to train RNOT maps.

\textbf{Training Objective: Kantorovich Semi-Dual.}  Let $\mu$ denote the source distribution on $\mathcal M$ and $\nu$ the target distribution. RNOT maps induce a transport map $T_\theta:\mathcal M\to\mathcal M$ and thus a model distribution $\nu_\theta := T_{\theta\#}\mu$. A standard alternative is to train by maximum likelihood (equivalently minimizing $\mathrm{KL}(\nu_\theta\|\nu)$ up to a constant), but this requires evaluating $\log\nu_\theta$ via a change-of-variables formula and computing Jacobian log-determinants of $T_\theta$, which becomes costly and often unstable in high-dimensional settings.  Unlike discretization-based OT methods, the use of landmarks here does not restrict the support of the learned transport map.

Instead, we train using the Kantorovich semi-dual objective, which depends only on samples from $\mu$ and $\nu$ 
\begin{equation*}
\label{eq:kantorovich_semi_dual}
\begin{aligned}
    \mathcal{J}(\psi)
    &:= \mathbb{E}_{x\sim\mu}\big[\psi^{c}(x)\big]\;+\;\mathbb{E}_{y\sim\nu}\big[\psi(y)\big],\\
    W_{2}(\mu,\nu)
    &:= \sup_{\psi}\mathcal{J}(\psi).
\end{aligned}
\end{equation*}
We parameterize $\psi$ as $\psi_\theta = f_\theta\circ\varphi$, where $f_\theta$ is an MLP acting on the landmark features~\eqref{eq:landmark-embedding}, and maximize $\mathcal{J}(\psi_\theta)$. Equivalently, we minimize the negative semi-dual loss
\begin{equation*}
\label{eq:loss}
    \mathcal{L}(\theta)
    \;:=\;
    -\mathbb{E}_{x\sim\mu}[\psi^{c}_{\theta}(x)] - \mathbb{E}_{y\sim \nu}[\psi_{\theta}(y)]
\end{equation*}
Given mini-batches $\{x_i\}_{i=1}^{B}\sim \mu$ and $\{y_j\}_{j=1}^{B}\sim \nu$, we use the Monte Carlo estimator
\begin{equation*}
\label{eq:loss_hat}
    \widehat{\mathcal{L}}(\theta)
    \;=\;\;-\;
    \frac{1}{B}\sum_{i=1}^{B}\psi_{\theta}^{c}(x_i)
    -\frac{1}{B}\sum_{j=1}^{B}\psi_{\theta}(y_j)
    .
\end{equation*}

Evaluating the semi-dual objective requires solving an inner optimization problem, which we address next.

\textbf{Initializing the Inner Minimization.}  Evaluating the $c$-transform requires solving an inner optimization problem on $\mathcal M$.
To improve stability and convergence, we start this inner loop using a softmin (LogSumExp) approximation computed over a finite set of target samples $\{y_k\}_{k=1}^K\sim\nu$.
Specifically, we define
$$
y_0(x)
=
\Pi_{\mathcal {M}}\!\Big(
\sum_{k=1}^K
\text{softmax}_k\!\Big(\tfrac{\psi_\theta(y_k)-c(x,y_k)}{\gamma}\Big)\,y_k
\Big).
$$

where $\gamma>0$ is a temperature parameter and $\Pi_{\mathcal M}$ denotes projection onto $\mathcal M$. Gradients are stopped through this initialization. Starting from $y_0(x)$, the inner problem can be solved by Riemannian gradient descent (with or without momentum).

\textbf{Differentiating through the $c$-Transform.}  The main computational challenge is evaluating and differentiating $\psi_\theta^{c}(x)$, which involves an inner minimization. Define
\begin{equation*}
\label{eq:inner_problem}
\begin{aligned}
    F_{\theta}(x,y) &:= c(x,y)-\psi_{\theta}(y),\\
    \psi_{\theta}^{c}(x) &:= \min_{y\in\mathcal M}F_{\theta}(x,y),\\
    y_{\theta}^{\star}(x) &\in \arg\min_{y\in\mathcal M}F_{\theta}(x,y).
\end{aligned}
\end{equation*}
Assume that for the current $(x,\theta)$ the minimizer $y_{\theta}^{\star}(x)$ is (locally) unique and that $F_\theta(x,\cdot)$ is differentiable at $y_{\theta}^{\star}(x)$, satisfying the first-order optimality condition
\begin{equation}
\label{eq-first-order-optimality}
\nabla_{y}F_{\theta}\bigl(x,y_{\theta}^{\star}(x)\bigr)=0,
\end{equation}
where $\nabla_y$ denotes the Riemannian gradient in the second argument. Then, differentiating via the chain rule and invoking the first-order optimality condition~\eqref{eq-first-order-optimality} to eliminate the derivative through the minimizer, we obtain
\begin{equation}
\label{eq:envelope}
\nabla_{\theta}\psi_{\theta}^{c}(x)
=
\nabla_{\theta}F_{\theta}\bigl(x,y_{\theta}^{\star}(x)\bigr)
=
\nabla_{\theta}\psi_{\theta}\bigl(y_{\theta}^{\star}(x)\bigr),
\end{equation}
since $c$ does not depend on $\theta$. In practice, this corresponds to differentiating $\psi_{\theta}(y)$ while \emph{stopping gradients} through the argmin $y_{\theta}^{\star}(x)$.

\medskip

Our inner solver returns an approximate minimizer $\tilde y_{\theta}(x)$, so~\eqref{eq:envelope} holds only approximately. The induced bias is controlled by the stationarity residual
\begin{equation*}
\label{eq:stationarity_residual}
    r_{\theta}(x,\tilde y)
    \;:=\;
    \big\|\nabla_{y}F_{\theta}(x,\tilde y)\big\|_{2}.
\end{equation*}
For the squared-distance cost $c(x,y)=\tfrac12 d(x,y)^2$, we have
\begin{equation*}
\nabla_{y}\Big(\tfrac12 d(x,y)^2\Big) = -\log_{y}(x),
\end{equation*}
so the stationarity condition becomes $-\log_{y}(x)-\nabla\psi_{\theta}(y)=0$. Accordingly, we monitor the diagnostic
\begin{equation*}
\label{eq:diagnostic_g}
    g_{\theta}(x,\tilde y)
    \;:=\;
    -\log_{\tilde y}(x) - \nabla \psi_{\theta}(\tilde y),
    \qquad
    \|g_{\theta}(x,\tilde y)\|_{2},
\end{equation*}
during training to ensure that the inner minimization is sufficiently accurate for stable outer-loop optimization.

\subsection{Algorithm}

Algorithm \ref{alg:ircpm-semidual-embed} below can be implemented with either heavy ball (momentum) or Riemannian Adam by changing line 10. For simplicity of exposition we present this algorithm using standard gradient descent. Additionally we exclude from our presentation any initialisation approaches (e.g. based on LogSumExp).

\begin{algorithm}[ht]
\caption{Semi-dual RNOT maps with landmark distance embeddings}
\label{alg:ircpm-semidual-embed}
\begin{algorithmic}[1]
\REQUIRE Manifold $(\mathcal M,g)$; source $\mu$, target $\nu$ on $\mathcal M$; landmarks $\{\ell_m\}_{m=1}^L\subset\mathcal M$;
cost $c(x,y)=\tfrac12\,d(x,y)^2$; batch size $B$; steps $T$; inner steps $K$; step sizes $\eta,\alpha$.
\STATE Define embedding $\varphi(x)\in\mathbb{R}^L$ by $\varphi_m(x)=d(x,\ell_m)$ for $m=1,\dots,L$.
\STATE Parameterize potential $\psi_\theta(x)=\mathrm{MLP}_\theta(\varphi(x))$.
\STATE Initialize $\theta$.
\FOR{$t=1,\dots,T$}
  \STATE Sample $\{x_i\}_{i=1}^B\sim\mu$ and $\{y_j\}_{j=1}^B\sim\nu$.
  \FOR{$i=1,\dots,B$} 
    \STATE $y \gets x_i$.
    \FOR{$k=1,\dots,K$}
      \STATE $g \gets \nabla_y\!\big(c(x_i,y)-\psi_\theta(y)\big)
                 = -\text{log}_{y}(x_i) - \nabla\psi_\theta(y)$.
      \STATE $y \gets \text{Exp}_{y}(-\alpha g)$.
    \ENDFOR
    \STATE $\psi^{c}_\theta(x_i) \gets c(x_i,y)-\psi_\theta(y)$ \;\; ($y$ as constant for $\nabla_\theta$ [envelope theorem]).
  \ENDFOR
  \STATE $\mathcal{L}(\theta) \gets -\frac{1}{B}\sum_{j=1}^B \psi_\theta(y_j)\;-\;\frac{1}{B}\sum_{i=1}^B \psi^{c}_\theta(x_i)$.
  \STATE $\theta \gets \mathrm{Update}(\theta,\nabla_\theta \mathcal{L}(\theta))$.
\ENDFOR
\STATE \textbf{Return:} $\psi_\theta$.
\end{algorithmic}
\end{algorithm}

%% file: sections/experiment-setup.tex
\subsection{Real World Experiments}

\textbf{Dataset and Geometric Construction.}  Following \citet{rcpm-cohen}, we apply our framework to the study of continental drift~\citep{Muller2018-ij}. We construct source and target empirical distributions on $\mathbb{S}^2$, representing land masses at 150\,Ma (Jurassic) and present day (0\,Ma). Paleogeographic coastline reconstructions are obtained from the GPlates Web Service~\cite{Muller2018-ij}  using the \citet{Zahirovic2022-hx} plate motion model. We rejection-sample 50{,}000 points uniformly on $\mathbb{S}^2$ conditioned on lying within reconstructed continental boundaries.  During training, we sample uniformly from this empirical point cloud; for density evaluation, using a Gaussian KDE with bandwidth $h=0.1$. 

\textbf{Training Setup.}  We train our model for 500 iterations, with 1024 batch size and landmarks, and up to 1000 internal solves per iteration. The resulting optimal transport map $T: \mathbb{S}^2 \to \mathbb{S}^2$ captures the aggregate motion of continental drift over 150 million years \emph{with no knowledge of plate tectonics}.

\textbf{Qualitative Results.} Figure \ref{fig:continental_drift} illustrates the transported map alongside the base and target densities.  The transported configurations recover characteristic large-scale patterns of continental drift, including coherent plate motion and long-range mass transport, demonstrating that the model learns a meaningful geometric correspondence between paleogeographic configurations.


\textbf{Qualitative Validation.} We validate that the learned map preserves tectonic structure by measuring \emph{plate purity}, defined as the fraction of points from each source plate that are transported to a single destination plate. Transported points are assigned plate labels via nearest-neighbor lookup in the present-day point cloud. The learned map achieves a mean plate purity of 90\% mean (as a fraction of points from each source plate transported to a single destination) and a Monge gap
$$
\mathbb{E}[c(x,T(x))] + \mathcal{L}(\theta) <0.1\%,
$$
indicating near-optimal transport \citep{Uscidda2023-re}.

\subsection{Synthetic Experiments}

\textbf{Data and Geometric Setup.}   We evaluate transport quality on two families of Riemannian manifolds: the $n$-dimensional unit sphere $\mathbb{S}^n = \{x \in \mathbb{R}^{n+1} : |x| = 1\}$ and the $n$-dimensional flat torus $\mathbb{T}^n = (\mathbb{S}^1)^n$ (product manifold), for dimensions $n \in {2, \ldots, 10}$. In both settings, the source distribution is uniform over the manifold, and the target is a wrapped normal with scale $\sigma = 0.3$, centered at the south pole $(-1, 0, \ldots, 0)$ on the sphere and at the analogous point $(\pi, \ldots, \pi)$ on the torus.

\textbf{Baselines and Metrics.} We compare our method against RCPMs~\citep{rcpm-cohen}, RCNFs \cite{NEURIPS2020_1aa3d9c6} and Moser Flows \cite{Rozen2021-py}. In the case of RCPMs, a sweep was conducted across regularization parameters
$$\gamma \in \{1.0, 0.1, 0.05, 0.01, 0.005, 0.001\},$$
where smaller $\gamma$ sharpens the cost function and approaches an exact transport map in the limit $\gamma \rightarrow 0$. 

Performance is reported using the forward Kullback--Leibler (KL) divergence $D_{\mathrm{KL}}(T_\# \mu | \nu)$ and effective sample size (ESS); metrics are averaged over 5 batches of 1024 samples. 

\textbf{Training Protocol.} We train RNOT maps (our method) using a 2-layer neural network with 128 nodes per layer; 128 landmarks; a batch size of 256; and 1,000 training steps, with a maximum of 2,500 inner solve iterations.  RCPMs are trained using the hyperparameters of \cite{rcpm-cohen}, with 68 landmarks, a batch size of 256, and 5,000 iterations. RCNFs and Moser flow  were trained with batch size 512 for 5,000 iterations (RCNF) and 10,000 iterations (Moser Flow) using the default architecture and hyperparameters. All experiments were conducted on AMD MI300X GPUs with 192 GB of memory. We observe that RCPMs become numerically unstable as dimension increases, particularly for small $\gamma$, limiting our comparison to $n \leq 10$ with 5,000 training iterations.




\textbf{Ablations and Sensitivity.}  Ablation studies (Appendix~\ref{app-ablations}) reveal that FPS landmark selection and LogSumExp-based initialization are the most impactful design choices, indicating that landmark geometry and initialization strategies are important factors driving performance and promising directions for future work.

\subsection{Metrics}
\label{sec-metrics}
To quantitatively assess how well the learned transport map matches the target distribution, we report an estimate of the Kullback–Leibler (KL) divergence between the pushforward of the base measure and the target, together with an effective sample size (ESS) diagnostic based on importance weights.

Let $\mu$ denote the source distribution on the manifold $\mathcal{M}$, $\nu$ the target distribution, and let $T_{\theta}:\mathcal{M}\to\mathcal{M}$ be the learned transport map. The model distribution is the pushforward
\begin{equation*}
    \nu_{\theta} = (T_{\theta})_{\#}\mu.
\end{equation*}

\paragraph{KL Divergence.} We track the KL divergence
\begin{equation*}
    \mathrm{KL}(\nu_{\theta}\|\nu) := \int_{\mathcal{M}} \log\left(\frac{\mathrm{d}\nu_{\theta}}{\mathrm{d}\nu}(y)\right)\mathrm{d}\nu_{\theta}(y) = \mathbb{E}_{y\sim \nu_{\theta}}[\log \nu_{\theta}(y) - \log \nu(y)]
\end{equation*}
In practice we estimate this quantity by Monte Carlo using samples $x_{i}\sim\mu$ and their transported points $y_{i}=T_{\theta}(x_{i})$.

Suppose $\mu$ and $\nu$ admit densities (still denoted $\mu(\cdot)$, $\nu(\cdot)$) with respect to $\mathrm{vol}_{\mathcal{M}}$. Let $T_\theta:\mathcal M\to\mathcal M$ be a $C^1$ diffeomorphism, and define the pushforward
$\nu_\theta := (T_\theta)_{\#}\mu$. Then $\nu_\theta$ also admits a density with respect to $\mathrm{vol}_{\mathcal M}$, and the manifold change-of-variables formula gives, for $y=T_\theta(x)$,
\begin{equation*}
\label{eq:manifold-cov}
    \nu_{\theta}(T_{\theta}(x))\bigl | \det (\mathrm{d}T_{\theta}(x))\bigr|= \mu(x) .
\end{equation*}
where $\bigl|\det(\mathrm{d}T_\theta(x))\bigr|$ denotes the absolute determinant of the linear map
$\mathrm{d}T_\theta(x):T_x\mathcal M\to T_{T_\theta(x)}\mathcal M$ expressed in orthonormal bases
\citep[Lemma~D.1]{pmlr-v89-falorsi19a}; see also \citet[Section~2]{Stern2013}.
Equivalently, 
\begin{equation*}
    \log \nu_{\theta}(y) = \log \mu(x) - \log \left|\det (\mathrm{d}T_{\theta}(x))\right| ,\qquad y =T_{\theta}(x).
\end{equation*}
For a smooth map $T_{\theta}:\mathcal{M}\to \mathcal{M}$, its differential at $x\in\mathcal{M}$ is the linear map
\begin{equation*}
    \mathrm{d}T_{\theta}(x):T_{x}\mathcal{M}\to T_{T_{\theta}(x)}\mathcal{M}
\end{equation*}
defined by how $T_{\theta}$ pushed forward tangent vectors at $x$. Concretely, if $\gamma(t)$ is a smooth curve with $\gamma(0)=x$ and $\dot{\gamma}(0) = v\in T_{x}\mathcal{M}$ then
\begin{equation*}
    \mathrm{d}T_{\theta}(x)[v] = \frac{\mathrm{d}}{\mathrm{d}t}\Bigg\vert_{t=0} T_{\theta}(\gamma(t)) \in T_{T_{\theta}(x)}\mathcal{M}.
\end{equation*}
Because $\mathrm{d}T_{\theta}(x)$ maps between different vector spaces $T_{x}\mathcal{M}$ and $T_{y}\mathcal{M}$  (with $y=T_{\theta}(x)$), its Jacobian determinant is defined using the Riemannian metric. Assume $\mathcal M$ is embedded in an ambient $\mathbb{R}^{D}$ and $T_\theta$ is represented by a smooth ambient map
$\tilde T_\theta:\mathbb R^D\to\mathbb R^D$ with $\tilde T_\theta|_{\mathcal M}=T_\theta$. Let $E_{x}\in\mathbb{R}^{D\times p}$ and $E_{y}\in \mathbb{R}^{D\times p}$ have columns forming orthonormal bases of
$T_{x}\mathcal{M}$ and $T_{y}\mathcal{M}$, respectively. The matrix of $\mathrm{d}T_{\theta}(x)$ in these bases is
\begin{equation*}
    J(x) = E_{y}^{\top} (\mathrm{d}\tilde{T}_{\theta}(x)) E_{x} \in\mathbb R^{p\times p}.
\end{equation*}
The intrinsic Jacobian determinant is then
\begin{equation*}
    \bigl|\det(\mathrm{d}T_{\theta}(x))\bigr|
    \equiv \bigl|\det(J(x))\bigr|,
    \qquad
    \log \bigl|\det(\mathrm{d}T_{\theta}(x))\bigr|
    = \log \bigl|\det(J(x))\bigr|.
\end{equation*}
This quantity does not depend on the particular choice of orthonormal bases: changing bases multiplies $J$ on the
left/right by orthogonal matrices, which changes $\det(J)$ only by a sign, and hence leaves $|\det(J)|$ invariant.
 
Plugging this into the KL definition gives the estimator
\begin{equation*}
    \widehat{\mathrm{KL}}(\nu_{\theta}\|\nu) = \frac{1}{N}\sum_{i=1}^{N}\left(\log\mu(x_{i}) - \log \left|\det (\mathrm{d}T_{\theta}(x_{i}))\right| -\log \nu(y_{i})\right).
\end{equation*}
\paragraph{Effective Sample Size.} Alongside KL, we compute an importance-sampling diagnostic based on weights that compare the target density to the model density on transported samples. Define the (unnormalized) importance weights
\begin{equation*}
    w_{i} = \frac{\nu(y_{i})}{\nu_{\theta}(y_{i})} = \exp(\log\nu(y_{i}) - \log \nu_{\theta}(y_{i})).
\end{equation*}
The empirical normalizing constant estimate is
\begin{equation*}
    \widehat{Z} = \frac{1}{N}\sum_{i=1}^{N}w_{i},
\end{equation*}
which should be close to $1$ when $\nu_{\theta}$ and $\nu$ have similar support and the density ratio is well-behaved. Using these weights, we report an effective sample size (ESS),

\begin{equation*}
    \mathrm{ESS} = \frac{\left(\sum_{i=1}^{N} w_{i}\right)^{2}}{\sum_{i=1}^{N} w_{i}^{2}} = \frac{1}{\sum_{i=1}^{N} \tilde{w}_{i}^{2}},
\end{equation*}
where $\tilde{w}_{i} = w_{i}/\sum_{j}w_{j}$ are the normalized weights. When $T_{\theta}$ is the optimal transport map and $\nu_{\theta} = \nu$, all weights are equal and $\mathrm{ESS} = N$. We report the normalized ratio $\mathrm{ESS}/N \in [0, 1]$.

The ESS quantifies weight degeneracy:  $\widehat{\mathrm{ESS}}\approx N$ indicates near-uniform weights (good overlap), whereas small ESS indicates that only a few samples carry most of the mass (poor overlap).

\paragraph{Computing the Jacobian via the Implicit Function Theorem.}
While we provide code to backpropagate through the iterative inner solver, unrolling many iterations can be
numerically brittle (and memory-intensive) when performed at scale. Instead, we compute the transport Jacobian
$\mathrm{d}T_{\theta}(x)$ by \emph{implicit differentiation} using the Implicit Function Theorem (IFT)
(e.g.\citet[Chapterye~4]{Lee2012-sc}; see also implicit-layer discussions in
\citet{implicit_blondel}).

The transport map $T_{\theta}(x)=y^{\star}(x)$ is defined implicitly as the solution of the first-order
optimality condition
\begin{equation*}
\label{eq:ift_F}
    F(x,y) \;:=\; -\log_{y}(x) - \nabla\psi_{\theta}(y) \;=\; 0,
\end{equation*}
where $\log_{y}(x)\in T_{y}\mathcal{M}$ is the Riemannian logarithm (at pairs $(y,x)$ away from the cut locus) and
$\nabla\psi_{\theta}(y)\in T_y\mathcal M$ is the Riemannian gradient of the dual potential. In particular,
$F(x,y)\in T_y\mathcal M$ is a tangent vector at the output point $y$.

Assuming the partial differential in the second argument,
\[
D_yF(x,y^\star(x)):\; T_{y^\star(x)}\mathcal M \to T_{y^\star(x)}\mathcal M,
\]
is invertible, the IFT implies that $y^\star(x)$ is locally differentiable and its differential satisfies
\begin{equation*}
\label{eq:ift_formula}
    \mathrm{d}T_{\theta}(x)
    \;=\;
    -\bigl(D_yF(x,y^\star(x))\bigr)^{-1}\circ D_xF(x,y^\star(x)),
\end{equation*}
where $D_xF(x,y^\star(x)):\,T_x\mathcal M\to T_{y^\star(x)}\mathcal M$.

To obtain an intrinsic $p\times p$ Jacobian matrix, we work in orthonormal tangent bases. Let
$E_x\in\mathbb R^{D\times p}$ and $E_y\in\mathbb R^{D\times p}$ have columns forming orthonormal bases of
$T_x\mathcal M$ and $T_{y^\star(x)}\mathcal M$, respectively (in an ambient representation $\mathcal M\subset\mathbb R^D$).
Using automatic differentiation on the ambient implementation of $F$, we form the ambient Jacobian operators
\[
\bigl(D_yF\bigr)_{\mathrm{amb}} \in \mathbb{R}^{D\times D},
\qquad
\bigl(D_xF\bigr)_{\mathrm{amb}} \in \mathbb{R}^{D\times D},
\]
and project them to tangent coordinates:
\begin{equation}
\label{eq:ift_project}
    [D_yF]
    \;=\;
    E_y^{\top}\bigl(D_yF\bigr)_{\mathrm{amb}}E_y \in \mathbb{R}^{p\times p},
    \qquad
    [D_xF]
    \;=\;
    E_y^{\top}\bigl(D_xF\bigr)_{\mathrm{amb}}E_x \in \mathbb{R}^{p\times p}.
\end{equation}
The intrinsic Jacobian matrix $J(x)$ is then obtained by solving the $p\times p$ linear system
\begin{equation}
\label{eq:ift_solve}
    J(x) \;=\; -[D_yF]^{-1}[D_xF],
\end{equation}
and we compute the change-of-variables term via $\log|\det(J(x))|$.

The per-sample cost is dominated by $O(p^3)$ linear algebra (solve and determinant), plus the cost of obtaining the
projected Jacobians in \eqref{eq:ift_project} (which can be implemented efficiently via JVP/VJP products with the
$p$ basis vectors, rather than forming full $D\times D$ matrices).

Estimating KL divergence on manifolds can be numerically delicate because it combines log-densities and Jacobian
determinants, so small errors may compound. In addition, the Monte Carlo estimator of KL can be negative for finite
$N$ due to sampling variability, and numerical error can exacerbate this effect; we observed occasional negative
\emph{estimates} in both \citet{rcpm-cohen} and our own implementation.

The IFT-based Jacobian is also sensitive to inner-solver accuracy: if the transported point $y^\star(x)$ does not
satisfy the stationarity condition $F(x,y^\star(x))\approx 0$ to sufficient precision, or if $[D_yF]$ is ill-conditioned,
then \eqref{eq:ift_solve} becomes unreliable. Empirically, we find that residual norms below $10^{-2}$ yield stable KL
estimates, though this threshold is problem-dependent.

Finally, we emphasize that the semi-dual objective optimizes transport quality directly, not KL divergence.
Consequently, a well-trained model may yield excellent transport maps (e.g.\ in cost or visual diagnostics) while the
KL \emph{estimates} remain noisy. We therefore interpret KL primarily as a diagnostic summary rather than a training target.

\paragraph{Rationale for Alternative Methods Presented.}

We compare RNOT to representative normalizing-flow baselines on manifolds: RCNFs~\citep{NEURIPS2020_1aa3d9c6}, Moser Flow~\citep{Rozen2021-py}, and Riemannian Convex Potential Maps (RCPMs)~\citep{rcpm-cohen}. These methods represent the major areas of learning learning transport maps on Riemannian manifolds: ODE-based flows (RCNF), divergence-based density matching (Moser), and discrete optimal transport via convex potentials (RCPM).

As stated previously, our apporach optimised the semi-dual loss, but for comparison, we use the KL divergence and effective sample size (ESS), which present an alternative measure of transport quality. \emph{Importantly}, many recent Riemannian diffusion models are designed primarily as generative models and therefore rely on different evaluation protocols; in particular, score-based diffusion models~\citep{Huang2022-fb,De-Bortoli2022-bb} typically report negative log-likelihood on held-out data rather than the KL divergence of an explicit learned transport map. Our selected baselines, RCNF, Moser Flow, and RCPM, all admit tractable density evaluation via the change-of-variables formula, enabling principled comparison under the same metrics. Methods where KL divergence was not implemented by the authors, and not trivial to calculate (e.g. for RCNFs or Moser FLows) would have required ad hoc additions which could unfairly represent their approaches, or introduce unforeseen errors. 

%% file: sections/further-results.tex
\subsection{Ablation and Sensitivity Analysis}
\label{app-ablations}
We conduct an ablation study to understand the contribution of key design choices in our method. We consider four manifold settings, $\mathbb{S}^2$, $\mathbb{T}^2$, $\mathbb{S}^{10}$, and $\mathbb{T}^{10}$, and evaluate performance using KL divergence (lower is better).

\paragraph{Summary of Findings.}
Table~\ref{tab:ablation-summary} summarizes the relative performance of each ablation compared to the baseline. We highlight three key findings:

\begin{enumerate}
    \item \textbf{FPS landmark selection consistently improves performance.} Using farthest point sampling (FPS) instead of random sampling for landmark selection yields the best or near-best KL divergence across all manifolds, with improvements of 27--52\% on spheres and 34\% on $\mathbb{T}^2$.

    \item \textbf{LogSumExp initialization is important for torus manifolds.} Disabling LogSumExp initialization can cause failure on tori ($2.2\times$ worse on $\mathbb{T}^2$, $3.5\times$ worse on $\mathbb{T}^{10}$) while having minimal effect on spheres. This suggests the initialization is essential for handling the periodic structure of tori.

    \item \textbf{Inner learning rate significantly impacts convergence.} A lower inner learning rate ($10^{-3}$ instead of $5\times 10^{-2}$) severely degrades performance on tori, achieving $6.2\times$ worse KL on $\mathbb{T}^2$ and $12.3\times$ worse on $\mathbb{T}^{10}$.
\end{enumerate}

\paragraph{Detailed Analysis.}

\textbf{Solver design.} The argmin solver's configuration has the largest impact on performance. The LogSumExp-based initialization provides a warm start that is important for torus manifolds, where the periodic boundary conditions make optimization more challenging. The Adam optimizer provides consistent benefits over vanilla SGD, particularly on higher-dimensional tori where the loss landscape is more complex.

\textbf{Network architecture.} FPS landmark selection provides the most consistent improvement, suggesting that well-distributed landmarks better capture the geometry of the manifold. Network width and depth have modest effects: smaller networks ([32, 32]) can match or exceed baseline performance on $\mathbb{S}^{10}$, indicating the baseline may be slightly overparameterized for simpler manifolds.

\textbf{Training hyperparameters.} A lower outer learning rate ($10^{-4}$ instead of $10^{-3}$) consistently hurts performance, likely due to insufficient training within the fixed step budget or due to a lack of ability to escape from poor solutions. Larger batch sizes (512) improve results on spheres but have less effect on tori. These results suggest larger batch sizes are not essential with increased dimensions.

\paragraph{Recommendations.} Based on these ablations, we recommend:
\begin{itemize}
    \item Use FPS landmark selection for improved geometry coverage
    \item Keep LogSumExp initialization enabled, especially for tori
    \item Use Adam optimizer for the inner loop
    \item Use batch size 256--512 for spheres
\end{itemize}

\begin{table}[!ht]
\centering
\caption{Ablation study results showing KL divergence ($\downarrow$) across manifolds. Bold indicates best result; underline indicates $>50\%$ degradation from baseline. All results averaged over 5 evaluation batches of 1024 samples.}
\begin{tabular}{lcccc}
\toprule
\textbf{Configuration} & $\mathbb{S}^2$ & $\mathbb{T}^2$ & $\mathbb{S}^{10}$ & $\mathbb{T}^{10}$ \\
\midrule
Baseline & 0.040 & 0.125 & 0.048 & 1.07 \\
\midrule
\multicolumn{5}{l}{\textit{Solver Ablations}} \\
\quad No LogSumExp init & 0.043 & \underline{0.270} & 0.048 & \underline{3.79} \\
\quad Inner steps: 500 & 0.041 & 0.129 & 0.048 & 1.11 \\
\quad Inner steps: 4000 & 0.040 & 0.122 & 0.048 & 1.06 \\
\quad Inner LR: $10^{-3}$ & 0.055 & \underline{0.775} & 0.048 & \underline{13.13} \\
\quad $\gamma_{\text{LSE}}$: 1.0 & 0.040 & 0.124 & 0.048 & 1.24 \\
\quad $\gamma_{\text{LSE}}$: 0.01 & 0.040 & 0.127 & 0.048 & 1.07 \\
\quad $\gamma_{\text{LSE}}$: 0.001 & 0.040 & 0.126 & 0.048 & 1.06 \\
\quad No Adam (SGD) & 0.046 & \underline{0.219} & 0.048 & 1.44 \\
\midrule
\multicolumn{5}{l}{\textit{Architecture Ablations}} \\
\quad Landmarks: FPS & \textbf{0.019} & \textbf{0.083} & \textbf{0.035} & \textbf{0.03} \\
\quad Landmarks: 32 & 0.036 & 0.099 & 0.054 & \underline{3.71} \\
\quad Landmarks: 64 & 0.041 & 0.203 & 0.048 & 1.47 \\
\quad Landmarks: 256 & 0.036 & 0.036 & 0.053 & --- \\
\quad Hidden: [32, 32] & 0.046 & 0.202 & 0.040 & 1.16 \\
\quad Hidden: [64, 64] & 0.034 & 0.170 & 0.044 & 1.23 \\
\quad Hidden: [256, 256] & 0.037 & 0.104 & 0.045 & 1.04 \\
\quad Hidden: [128, 128, 128] & 0.040 & 0.128 & 0.047 & 1.19 \\
\midrule
\multicolumn{5}{l}{\textit{Training Ablations}} \\
\quad Outer LR: $10^{-4}$ & \underline{0.067} & 0.184 & \underline{0.060} & 1.59 \\
\quad Batch size: 512 & 0.036 & 0.118 & \textbf{0.034} & 1.15 \\
\quad Batch size: 128 & \underline{0.072} & 0.183 & 0.040 & 1.16 \\
\bottomrule
\end{tabular}
\label{tab:ablation-summary}
\end{table}

\subsection{Further Results for High-Dimensional Manifolds}
\begin{figure}[htbp]
    \centering
    \begin{subfigure}[b]{0.48\linewidth}
        \centering
        \includegraphics[width=\linewidth]{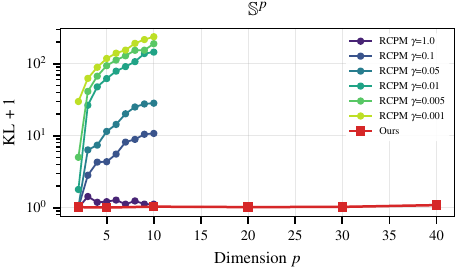}
        \label{fig:high_D_sphere}
    \end{subfigure}
    \hfill
    \begin{subfigure}[b]{0.48\linewidth}
        \centering
        \includegraphics[width=\linewidth]{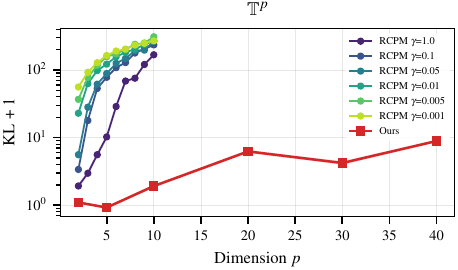}
        \label{fig:high_D_torus}
    \end{subfigure}
    \caption{KL divergence scaling with dimension on high-dimensional manifolds. We compare our approach against RCPMs with varying regularization parameters $\gamma \in \{0.001, 0.005, 0.01, 0.05, 0.1, 1.0\}$. RCPM results are shown for $p \in \{2, \ldots, 10\}$ only due to computational intractability at higher dimensions, while our method extends to $p = 40$. Our method achieves consistently lower KL divergence across all tested dimensions and scales favorably to high-dimensional settings, empirically supporting our theoretical results.}
    \label{fig:high_D_results}
\end{figure}

%% file: proofs/proofs.tex
\subsection{Proof of Theorem~\ref{thm:cod_discrete}}
\label{proof-thm:cod_discrete}
\input{proofs/proof-discrete-ot-cod}

\subsection{Proof of Corollary~\ref{cor:cod_rcpm}}
\label{proof-cor:cod_rcpm}
\input{proofs/proof-cod-rcpm}

\subsection{Proof of Theorem~\ref{thm-universality-neural-rcpms}}
\label{proof-universality-neural-rcpms}
\input{proofs/proof-universality-neural-rcpms}

\subsection{Proof of Theorem~\ref{thm:neural-rcpm-approx-psi}}
\label{proof-neural-rcpm-escape-cod}

\input{proofs/proof-neural-rcpm-escape-cod}

\subsection{Proof of Corollary~\ref{prop-uncursed-potential}}
\label{app-proof-prop-uncursed-potential}
\input{proofs/proof-prop-uncursed-potential}

\subsection{Proof of Theorem~\ref{thm-convergence-transport-maps}}
\label{app-proof-thm-convergence-transport-maps}

\input{proofs/proof-thm-convergence-transport-maps}

\subsection{Proof of Lemma~\ref{lem:iacobelli-integrability-compact}}
\label{app-proof-lem:iacobelli-integrability-compact}

\input{proofs/proof-iacobelli-integrability}

\subsection{Proof of Lemma~\ref{lem:V_to_W}}
\label{app-proof-lem:V_to_W}
\input{proofs/proof-lem-V_to_W}

\subsection{Proof of Lemma~\ref{lemma-bound-wasserstein-distance-map}}
\label{app-proof-lemma-bound-wasserstein-distance-map}

\input{proofs/proof-lemma-bound-wasserstein-distance-map}

\subsection{Proof of Lemma~\ref{thm-convergence-gradients}}
\label{app-proof-convergence-gradients}
\input{proofs/proof_convergence_gradients}

\subsection{Proof of Lemma~\ref{lemma-stability-minimizer}}
\label{app-proof-lemma-stability-minimizer}
\input{proofs/proof-lemma-stability-minimizer}

\subsection{Proof of Theorem~\ref{thm-yarotsky-instantiated}}
\label{app-proof-thm-yarotsky-instantiated}

\input{proofs/proof-thm-yarotsky-instantiated}

%% file: proofs/proof-discrete-ot-cod.tex
It is convenient to recall the following definition of the quantization error from~\citet{Iacobelli2016}.
\begin{definition}[Quantization error; Def.~1.1 in \citet{Iacobelli2016}]
\label{def:VN-FN}
Let $(\mathcal M,d)$ be a complete Riemannian manifold, $\rho\in\mathcal P(\mathcal M)$, $r\ge 1$, and $N\in\mathbb N$.
The $N$-th quantization error of order $r$ is
\begin{equation*}
\label{eq:VN_def}
V_{N,r}(\rho)
:=\inf_{\alpha\subset\mathcal M:\ |\alpha|\le N}\int_{\mathcal M}\min_{a\in\alpha} d(a,y)^r\,\mathrm d\rho(y).
\end{equation*}
\end{definition}
A key result we will use in proving Theorem~\ref{thm:cod_discrete} is Theorem~1.4 of~\citet{Iacobelli2016} which we now state here for convenience. To state in detail our main result we need to introduce some notation from~\citet{Iacobelli2016}: given a point $x_{0}\in \mathcal{M}$, we can consider polar coordinates $(\rho, \theta)$ on $T_{x_{0}}\mathcal{M}\simeq \mathbb{R}^{p}$ induced by the constant metric $g_{x_{0}}$, where $\theta$ denotes a vector on the unit sphere $\mathbb{S}^{p-1}$. Then, we can define the following quantity that measures the
size of the differential of the exponential map when restricted to a sphere $\mathbb{S}^{p-1}_{\rho}\subset T_{x_{0}}\mathcal{M}$ of radius $\rho$:
\begin{equation*}
    A_{x_{0}}(\rho) := \sup_{v\in \mathbb{S}^{p-1}_{\rho}, w\in T_{v}\mathbb{S}_{\rho}^{p-1}, |w|_{x_{0}} = \rho } \|d_{v}\exp_{x_{0}}[w] \|_{\exp_{x_{0}}(v)} .
\end{equation*}
We are now ready to present Theorem~1.4 of~\citet{Iacobelli2016}.
\begin{theorem}[Theorem~1.4 in~\cite{Iacobelli2016}]
\label{thm-iacobelli-quantization}
Let $(\mathcal{M},g)$ be a $p$-dimensional complete Riemannian manifold without boundary, and let $\mu = h\mathrm{d}\mathrm{vol} + \mu^{s}$ be a probability measure on $\mathcal{M}$. Assume there exist a point $x_{0}\in\mathcal{M}$ and $\delta>0$ such that 
\begin{equation}
\label{eq-integrability-condition-iacobelli}
    \int_{\mathcal{M}} d(x,x_{0})^{r+\delta} \mathrm{d}\mu(x) + \int_{\mathcal{M}} A_{x_{0}}(d(x,x_{0}))^{r} \mathrm{d}\mu(x)< \infty .
\end{equation}
Then 
\begin{equation*}
\label{eq:iacobelli-asympt}
    \lim_{N\to \infty} N^{r/p} V_{N,r}(\mu) = Q \left(\int_{\mathcal{M}}h^{p/(p+r)}\mathrm{d}x\right)^{(p+r)/p}.
\end{equation*}
for a positive constant $Q$.
\end{theorem}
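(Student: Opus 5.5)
The proof transfers Zador's theorem from $\mathbb{R}^p$ to $\mathcal{M}$ in three steps: localize it in normal charts, glue the local pieces by a mass-allocation argument, and control the remaining tail and singular contributions with the integrability assumption~\eqref{eq-integrability-condition-iacobelli}.

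\textbf{Euclidean input and localization.} The input is the Euclidean theorem. For a compactly supported $\rho=h\,\mathrm{d}x+\rho^s$ on $\mathbb{R}^p$ one has $\lim_N N^{r/p}V_{N,r}(\rho)=Q_r\bigl(\int h^{p/(p+r)}\bigr)^{(p+r)/p}$, where $Q_r$ is the quantization coefficient of the uniform measure on $[0,1]^p$. Singular parts contribute only $o(N^{-r/p})$ (Graf--Luschgy).

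Fix $\eta>0$ and $R>0$. Cover the compact ball $\overline{B}(x_0,R)\subset\mathcal{M}$ by finitely many Borel cells $C_i$. Each $C_i$ lies in a normal neighbourhood $\exp_{y_i}(V_i)$ with $V_i$ small enough that $\exp_{y_i}$ is $(1+\eta)$-bi-Lipschitz from the Euclidean metric $g_{y_i}$ to $d$. This is possible by Proposition~\ref{prop:exp-local-diffeo}, since $d\exp_{y_i}$ is close to an isometry near $0$.

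On each cell, quantization errors for $d$ and for the pulled-back Euclidean distance differ by a factor $(1+\eta)^{\pm r}$. The densities also transform correctly: the Jacobian of $\exp_{y_i}$ is $1+O(\eta)$, so $\int_{C_i}h^{p/(p+r)}\,\mathrm{dvol}$ matches its chart counterpart up to $(1\pm\eta)^{O(1)}$. Hence the asymptotic constant of each restricted measure $\mu|_{C_i}$ is pinned down up to factors tending to $1$ as $\eta\to0$.

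\textbf{Gluing the cells.} For the upper bound, allocate $N_i\approx N\,t_i$ points to cell $i$, with $t_i\propto(\int_{C_i}h^{p/(p+r)})$. Additivity of the integral over cells and Hölder's inequality then reproduce $Q_r\bigl(\int_{\overline{B}(x_0,R)} h^{p/(p+r)}\bigr)^{(p+r)/p}$.

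For the lower bound, use the standard ``firewall'' argument. Enlarge each cell slightly and show that codebook points outside a thin collar of $C_i$ cannot reduce the error on $C_i$ by more than a negligible amount. This yields $\liminf N^{r/p}V_{N,r}(\mu)\ge\sum_i\liminf N^{r/p}V_{N_i,r}(\mu|_{C_i})$ for the optimal $N_i$. Optimizing over the $N_i$ gives the same Hölder expression from below.

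The singular part $\mu^s$ is absorbed using the Euclidean $o(N^{-r/p})$ estimate in each chart. Letting $\eta\to0$ gives the claim for $\mu|_{\overline{B}(x_0,R)}$.

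\textbf{Tails.} The lower bound for $\mu$ follows from monotonicity under restriction, then letting $R\to\infty$.

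For the upper bound, spend $\lfloor\varepsilon N\rfloor$ extra points quantizing the tail $\mu|_{\mathcal{M}\setminus B(x_0,R)}$. Do this in polar coordinates on $T_{x_0}\mathcal{M}$: choose radial shells at geometrically growing radii and an angular grid on each sphere $\mathbb{S}^{p-1}_\rho$, then push the grid forward by $\exp_{x_0}$. Two distortions must be controlled:
\begin{itemize}
\item along each geodesic ray, radial distances are exact;
\item tangential distortion on a sphere of radius $\rho$ is bounded by $A_{x_0}(\rho)$.
\end{itemize}
A Pierce-type estimate then bounds the tail error by
\[
C\,(\varepsilon N)^{-r/p}\Bigl(\int_{d(x,x_0)>R}d(x,x_0)^{r+\delta}\,\mathrm{d}\mu+\int_{d(x,x_0)>R}A_{x_0}(d(x,x_0))^{r}\,\mathrm{d}\mu\Bigr)^{\theta}
\]
for some $\theta>0$. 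By~\eqref{eq-integrability-condition-iacobelli} the bracket tends to $0$ as $R\to\infty$. Taking $R\to\infty$ and then $\varepsilon\to0$ closes the upper bound.

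\textbf{Main obstacle.} The hardest step is this tail (Pierce-type) estimate on a general noncompact manifold. Near cut and conjugate points the exponential map may stretch spheres arbitrarily, and $A_{x_0}$ is designed precisely to absorb this stretching. I would first try a dyadic radial decomposition, with a grid on each shell whose mesh is scaled by $A_{x_0}$ at that radius.

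In our compact setting the whole tail step is vacuous: take $R>\mathrm{diam}(\mathcal{M})$. The integrability condition then holds trivially, because $A_{x_0}$ is bounded on the compact set $\{\rho\le\mathrm{diam}(\mathcal{M})\}$. Only the localization and gluing steps are actually needed.
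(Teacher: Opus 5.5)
The paper does not prove this statement. It imports it from Iacobelli and only checks, in Lemma~\ref{lem:iacobelli-integrability-compact}, that the integrability hypothesis is automatic on a compact $\mathcal M$: the diameter is finite, and $A_{x_0}$ is bounded on $[0,\mathrm{diam}\,\mathcal M]$ by continuity of $(v,w)\mapsto\|d_v\exp_{x_0}[w]\|$ on a compact set. That is exactly your closing remark. Your localization and gluing (Zador in almost-isometric normal charts, H\"older allocation for the upper bound, a firewall for the lower bound, singular part $o(N^{-r/p})$) is the standard architecture. It is adequate for the compact case the paper actually uses.

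\textbf{The gap: the tail bound for noncompact $\mathcal M$.} You assert it, and the dyadic plan you sketch does not obviously close under exactly the stated hypothesis. There are two problems.
\begin{itemize}
\item Suppose the grid on a shell $S_k=\{2^k\le d(x,x_0)<2^{k+1}\}$ is calibrated by $A_{x_0}$ at one radius. Then you need $\sum_k\mu(S_k)\sup_{S_k}A_{x_0}^r<\infty$. This does not follow from $\int A_{x_0}(d(x,x_0))^r\,\mathrm{d}\mu<\infty$. First, $A_{x_0}$ need not be monotone ($|\sin\rho|$ on the round sphere). Second, even when it is, in hyperbolic space $\sup_{S_k}A_{x_0}\approx 2A_{x_0}(2^k)^2$.
\item Suppose instead you use pointwise $A_{x_0}$ but allocate $n_k$ points shell by shell, with error $\approx n_k^{-r/p}\bigl(2^{kr}\mu(S_k)+c_k\bigr)$, where $c_k:=\int_{S_k}A_{x_0}(d(x,x_0))^r\,\mathrm{d}\mu$. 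The optimal allocation then gives $N^{-r/p}\bigl(\sum_k(2^{kr}\mu(S_k)+c_k)^{p/(p+r)}\bigr)^{(p+r)/p}$. But $\sum_kc_k<\infty$ does not make $\sum_kc_k^{p/(p+r)}$ finite; take $c_k=k^{-2}$ with $p\le r$.
\end{itemize}

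\textbf{The fix: share one angular net across all radii.} Take $x=\exp_{x_0}(tu)$ with $t=d(x,x_0)$ and $\|u\|_{x_0}=1$, and any $t'\ge 0$ and unit $u'$ at angle $\vartheta$ from $u$. Follow $\exp_{x_0}$ of the great-circle arc at the point's own radius $t$, then the ray through $u'$. This gives
\[
d\bigl(x,\exp_{x_0}(t'u')\bigr)\le A_{x_0}(t)\,\vartheta+|t-t'| .
\]
Only this inequality is available along rays, since they stop minimizing past the cut locus; radial distances are not ``exact''. Now combine two ingredients:
\begin{itemize}
\item a one-dimensional Pierce quantizer with $\lfloor n^{1/p}\rfloor$ levels for the law of $d(\cdot,x_0)$, which is where the $(r+\delta)$-moment enters;
\item a net of $\lfloor n^{(p-1)/p}\rfloor$ directions with mesh $\lesssim n^{-1/p}$.
\end{itemize}
The product set has at most $n$ points. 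It gives, for every finite Borel measure $\lambda$ on $\mathcal M$,
\[
V_{n,r}(\lambda)\le C\,n^{-r/p}\Bigl(\lambda(\mathcal M)+\int_{\mathcal M} d(x,x_0)^{r+\delta}\,\mathrm{d}\lambda(x)+\int_{\mathcal M} A_{x_0}(d(x,x_0))^{r}\,\mathrm{d}\lambda(x)\Bigr).
\]
Apply this to $\lambda=\mu$ restricted to $\mathcal M\setminus B(x_0,R)$ with $n=\lfloor\varepsilon N\rfloor$. This is your tail estimate with $\theta=1$, and it uses exactly the two integrals in the hypothesis. The bracket tends to $0$ as $R\to\infty$ by dominated convergence. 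Applying the same bound to $\lambda=\mu$ also shows that the limit in the theorem is finite.
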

Our next result shows that Theorem~\ref{thm-iacobelli-quantization} is immediately applicable our setting (i.e. when $(\mathcal{M},g)$ is a compact Riemannian manifold) as the sufficient conditions are always verified.
\begin{lemma}[Iacobelli's integrability condition is automatic on compact manifolds]
\label{lem:iacobelli-integrability-compact}
Let $(\mathcal M,g)$ be a compact Riemannian manifold (hence complete and without boundary),
let $\mu\in\mathcal P(\mathcal M)$, and fix $x_0\in\mathcal M$, $r\ge 1$, and $\delta>0$.
Then the integrability condition \eqref{eq-integrability-condition-iacobelli} holds, i.e.
\[
\int_{\mathcal M} d(x,x_0)^{r+\delta}\,\mathrm d\mu(x)
\;+\;
\int_{\mathcal M} A_{x_0}(d(x,x_0))^{r}\,\mathrm d\mu(x)
<\infty.
\]
\end{lemma}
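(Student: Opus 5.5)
The plan is to bound each of the two integrands by a finite constant and then use that $\mu$ is a probability measure. For the first term, compactness gives $\mathrm{diam}(\mathcal M)=D<\infty$ (see Appendix~\ref{app:riemannian-metrics}). Hence $d(x,x_0)^{r+\delta}\le D^{r+\delta}$ for every $x$, and therefore
\[
\int_{\mathcal M} d(x,x_0)^{r+\delta}\,\mathrm d\mu(x)\le D^{r+\delta}<\infty .
\]

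The second term carries the real content. I will show that $\rho\mapsto A_{x_0}(\rho)$ is bounded on $[0,D]$; since $d(x,x_0)\in[0,D]$ for all $x$, this gives $\int A_{x_0}(d(x,x_0))^r\,\mathrm d\mu\le \sup_{[0,D]}A_{x_0}^r<\infty$.

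By Hopf--Rinow (Theorem~\ref{thm:hopf-rinow}), $\exp_{x_0}$ is defined and smooth on all of $T_{x_0}\mathcal M$. The map $(v,w)\mapsto \|d_v\exp_{x_0}[w]\|_{\exp_{x_0}(v)}$ is therefore continuous on $T(T_{x_0}\mathcal M)\cong T_{x_0}\mathcal M\times T_{x_0}\mathcal M$. The supremum defining $A_{x_0}(\rho)$ runs over pairs with $|v|=\rho$ and $w\perp v$ with $|w|=\rho$. For $\rho\in[0,D]$, all such pairs lie in the compact set $\{|v|\le D,\ |w|\le D\}$. A continuous function on a compact set is bounded, so $\sup_{\rho\in[0,D]}A_{x_0}(\rho)\le \max_{|v|,|w|\le D}\|d_v\exp_{x_0}[w]\|<\infty$. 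Linearity in $w$ also gives the cleaner form $A_{x_0}(\rho)\le \rho\,\sup_{|v|\le D}\|d_v\exp_{x_0}\|_{\mathrm{op}}$.

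The only point needing care is measurability of $x\mapsto A_{x_0}(d(x,x_0))$, which is required for the integral to make sense. I would handle it by noting that $A_{x_0}$ is lower semicontinuous, being a supremum of continuous functions after rescaling $v=\rho\theta$ and $w=\rho\eta$ with $(\theta,\eta)$ in the compact unit tangent-sphere bundle. In fact it is continuous, by compactness of the parameter set and joint continuity in $(\rho,\theta,\eta)$. Composing with the continuous function $d(\cdot,x_0)$ then gives a Borel function.

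Adding the two finite bounds yields the integrability condition \eqref{eq-integrability-condition-iacobelli}. I expect no serious obstacle. The only subtlety is recognizing that conjugate points and the cut locus do not matter here: $A_{x_0}$ is bounded above by the norm of the differential, which is continuous everywhere, and only an upper bound is needed.
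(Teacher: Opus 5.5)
Your proposal is correct and follows essentially the same route as the paper: bound the first term by $D^{r+\delta}$ with $D=\mathrm{diam}(\mathcal M)$, and bound $A_{x_0}$ on $[0,D]$ by the maximum of the continuous map $(v,w)\mapsto\|d_v\exp_{x_0}[w]\|_{\exp_{x_0}(v)}$ over a compact set. The paper additionally imposes $\langle v,w\rangle_{x_0}=0$ in its compact set, which is unnecessary, and it does not address measurability of $x\mapsto A_{x_0}(d(x,x_0))$, so your continuity remark is a small, valid addition.
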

The proof of Lemma~\ref{lem:iacobelli-integrability-compact} is postponed to Appendix~\ref{app-proof-lem:iacobelli-integrability-compact}.

\begin{lemma}[From $V_{m,2}$ to $W_2$]
\label{lem:V_to_W}
Let $\rho\in\mathcal P_2(\mathcal M)$ and let $\eta$ be supported on at most $m$ points. Then
\[
V_{m,2}(\rho) \ \le\ W_2(\rho,\eta)^2
\]
\end{lemma}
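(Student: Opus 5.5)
The plan is to compare an arbitrary coupling of $\rho$ and $\eta$ with the quantizer given by the support of $\eta$. Write $\eta=\sum_{j=1}^{m'} w_j\delta_{a_j}$ with $m'\le m$, and set $\alpha:=\{a_1,\dots,a_{m'}\}$. Then $\alpha$ is an admissible competitor in Definition~\ref{def:VN-FN} with $N=m$, $r=2$, so
\[
V_{m,2}(\rho)\ \le\ \int_{\mathcal M}\min_{a\in\alpha} d(a,y)^2\,\mathrm d\rho(y).
\]
It therefore suffices to show that the right-hand side is at most $\int d(x,y)^2\,\mathrm d\pi(x,y)$ for every coupling $\pi\in\Pi(\rho,\eta)$. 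Taking the infimum over $\pi$ then gives $W_2(\rho,\eta)^2$.

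Fix $\pi\in\Pi(\rho,\eta)$. Its second marginal is $\eta$, which is concentrated on $\alpha$, so $\pi$ is concentrated on $\mathcal M\times\alpha$. For $\pi$-a.e.\ pair $(y,a)$ we have $a\in\alpha$, and hence
\[
\min_{b\in\alpha} d(b,y)^2\ \le\ d(y,a)^2 .
\]
Integrate this inequality against $\pi$. The left side depends only on the first coordinate, so by the marginal condition $(\mathrm{pr}_1)_\#\pi=\rho$ its integral equals $\int\min_{b\in\alpha}d(b,y)^2\,\mathrm d\rho(y)$. The right side integrates to the transport cost of $\pi$. This proves the claimed bound.

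There is no real obstacle; the only points needing care are measure-theoretic. First, $y\mapsto\min_{b\in\alpha}d(b,y)^2$ is continuous, being a finite minimum of continuous functions, so all the integrals above are well defined. Second, the statement "$\pi$ is concentrated on $\mathcal M\times\alpha$" follows from $\pi(\mathcal M\times\alpha^c)=\eta(\alpha^c)=0$. On compact $\mathcal M$ every measure lies in $\mathcal P_2$, and the optimal plan exists, though its existence is not even needed since we take an infimum over couplings.
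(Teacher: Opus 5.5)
Your argument is correct and rests on the same observation as the paper's proof: any coupling $\pi\in\Pi(\rho,\eta)$ puts full mass on $\mathcal M\times\alpha$ with $\alpha=\mathrm{supp}(\eta)$. Hence its cost dominates the nearest-site error $\int\min_{b\in\alpha}d(b,y)^2\,\mathrm d\rho(y)$, which in turn dominates $V_{m,2}(\rho)$. The only difference is technical. The paper disintegrates the coupling along $\mathrm{pr}_1$, shows $\gamma_y(\mathcal M\setminus\alpha)=0$ for $\rho$-a.e.\ $y$, and bounds each conditional cost. You instead integrate the pointwise inequality $\min_{b\in\alpha}d(b,y)^2\le d(y,a)^2$ directly over the $\pi$-full (closed) set $\mathcal M\times\alpha$, using only $(\mathrm{pr}_1)_\#\pi=\rho$. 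This avoids the disintegration theorem entirely and is the cleaner of the two.
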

The proof of Lemma~\ref{lem:V_to_W} is postponed to Appendix~\ref{app-proof-lem:V_to_W}.

\begin{lemma}[Wasserstein stability under a common source measure]
\label{lemma-bound-wasserstein-distance-map}
Let $\mu\in\mathcal P(\mathcal M)$ and let $t,s:\mathcal M\to\mathcal M$ be measurable.
Set $\nu_t:=t_\#\mu$ and $\nu_s:=s_\#\mu$. Then $\nu_t,\nu_s\in\mathcal P_2(\mathcal M)$ and
\begin{equation}
\label{eq-W2-bound-maps}
W_2(\nu_t,\nu_s)\ \le\ \mathrm{RMSE}_\mu(t,s).
\end{equation}
\end{lemma}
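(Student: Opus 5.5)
The plan is to couple the two pushforwards through the common source measure $\mu$. I will use the coupling $\pi := (t,s)_\#\mu \in \mathcal P(\mathcal M\times\mathcal M)$, i.e.\ the law of $(t(x),s(x))$ for $x\sim\mu$.

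The first step is the membership claim. Since $\mathcal M$ is compact, $\mathrm{diam}(\mathcal M)<\infty$, so every Borel probability measure on $\mathcal M$ has finite second moment. In particular $\nu_t,\nu_s\in\mathcal P_2(\mathcal M)$, and $\mathrm{RMSE}_\mu(t,s)$ is finite. Measurability of $t,s$ guarantees that $x\mapsto(t(x),s(x))$ is Borel measurable into $\mathcal M\times\mathcal M$ with the product Borel $\sigma$-algebra. This coincides with the Borel $\sigma$-algebra of the product, since $\mathcal M$ is second countable, so $\pi$ is well defined.

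The second step is to check that $\pi\in\Pi(\nu_t,\nu_s)$. For Borel $A\subset\mathcal M$,
\[
(\mathrm{pr}_1)_\#\pi(A)=\mu\bigl(\{x: t(x)\in A\}\bigr)=\nu_t(A),
\]
and the second marginal equals $\nu_s$ in the same way.

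The third step bounds $W_2$ by the cost of this admissible plan. Using the change-of-variables formula for pushforwards with the nonnegative measurable integrand $d^2$, we get
\[
W_2(\nu_t,\nu_s)^2\le\int_{\mathcal M\times\mathcal M} d(y,z)^2\,\mathrm d\pi(y,z)=\int_{\mathcal M} d\bigl(t(x),s(x)\bigr)^2\,\mathrm d\mu(x)=\mathrm{RMSE}_\mu(t,s)^2.
\]
Taking square roots gives \eqref{eq-W2-bound-maps}.

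There is no real obstacle here. The only point requiring care is the measurability of the product map and the identification of product and Borel $\sigma$-algebras. That identification holds because $\mathcal M$ is a separable metric space.
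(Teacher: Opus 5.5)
Your proposal is correct and follows the paper's proof essentially verbatim: the paper also takes the coupling $\pi=(t,s)_\#\mu$, verifies its marginals via the composition rule for pushforwards, and bounds $W_2^2$ by the cost of $\pi$ using the change-of-variables formula. Two of your details differ slightly but are equally valid: you justify measurability of $x\mapsto(t(x),s(x))$ through the product/Borel $\sigma$-algebra identification on the second-countable $\mathcal M$, which the paper leaves implicit, and you invoke nonnegativity of $d^2$ where the paper uses its boundedness.
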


The proof of Lemma~\ref{lemma-bound-wasserstein-distance-map} is postponed to Appendix~\ref{app-proof-lemma-bound-wasserstein-distance-map}.

\begin{proof}[Proof of Theorem~\ref{thm:cod_discrete}]
Let $p:=\dim(\mathcal M)$ and define $\nu:=(T_{\star})_\#\mu$.

\textbf{Step 1: Lower bound on $V_{m,2}(\nu)$.}
Since $\nu\ll \mathrm{vol}_{\mathcal M}$, write $\nu=h\,\mathrm d\mathrm{vol}_{\mathcal M}$ with
$h\ge 0$ and $\int_{\mathcal M}h\,\mathrm d\mathrm{vol}_{\mathcal M}=1$.
Apply Theorem~\ref{thm-iacobelli-quantization} with $r=2$ and $\rho=\nu$.
Because $\mathcal M$ is compact, Lemma~\ref{lem:iacobelli-integrability-compact} implies that the integrability
condition \eqref{eq-integrability-condition-iacobelli} holds, so the theorem yields the asymptotic formula
\[
\lim_{m\to\infty} m^{2/p} V_{m,2}(\nu)
=
Q\left(\int_{\mathcal M} h^{p/(p+2)}\,\mathrm d\mathrm{vol}_{\mathcal M}\right)^{(p+2)/p}
=:C_{\mathrm{quant}}.
\]
Moreover $C_{\mathrm{quant}}>0$: indeed, since $\nu$ is a probability measure, $h>0$ on a set of positive
Riemannian volume, hence $\int_{\mathcal M} h^{p/(p+2)}\,\mathrm d\mathrm{vol}_{\mathcal M}>0$.

By the definition of the limit, there exists $m_0\in\mathbb N$ such that for all $m\ge m_0$,
\[
m^{2/p} V_{m,2}(\nu)\ \ge\ \frac{C_{\mathrm{quant}}}{2},
\qquad\text{equivalently}\qquad
V_{m,2}(\nu)\ \ge\ \frac{C_{\mathrm{quant}}}{2}\,m^{-2/p}.
\]
For the finitely many indices $1\le m<m_0$, define
\[
c_\star:=\min_{1\le m<m_0} m^{2/p} V_{m,2}(\nu).
\]
Each term in this minimum is strictly positive: since $\nu\ll\mathrm{vol}_{\mathcal M}$, it cannot be supported
on finitely many points, hence $V_{m,2}(\nu)>0$ for every fixed $m$.
Therefore $c_\star>0$. Finally set
\[
C_0:=\min\Bigl\{\frac{C_{\mathrm{quant}}}{2},\,c_\star\Bigr\}>0.
\]
Then for every $m\in\mathbb N$,
\begin{equation}
\label{eq:quant-lower}
V_{m,2}(\nu)\ \ge\ C_0\,m^{-2/p},
\end{equation}
which is the desired uniform lower bound.

\textbf{Step 2: Discrete-output maps imply a Wasserstein lower bound.}
Fix $m\in\mathbb N$ and let $T\in\mathsf D_m(\mu)$.
Set $\nu_T:=T_\#\mu$, which is supported on at most $m$ points by definition of $\mathsf D_m(\mu)$.
Applying Lemma~\ref{lem:V_to_W} gives
\[
V_{m,2}(\nu)\ \le\ W_2(\nu,\nu_T)^2.
\]
Together with \eqref{eq:quant-lower} this yields
\begin{equation}
\label{eq:W2_lower}
W_2(\nu,\nu_T)\ \ge\ \sqrt{C_0}\,m^{-1/p}.
\end{equation}

\textbf{Step 3: From Wasserstein to map RMSE.}
Applying Lemma~\ref{lemma-bound-wasserstein-distance-map} with $t:=T$ and $s:=T_{\star}$, we obtain
\[
W_2(\nu_T,\nu)
\le \mathrm{RMSE}_\mu(T,T_{\star}).
\]
Combining with \eqref{eq:W2_lower} gives, for every $T\in\mathsf D_m(\mu)$,
\[
\mathrm{RMSE}_\mu(T,T_{\star})\ \ge\ \sqrt{C_0}\,m^{-1/p}.
\]
Taking the infimum over $T\in\mathsf D_m(\mu)$ proves \eqref{eq:cod_discrete} with $C:=\sqrt{C_0}$.
The condition $\mathrm{RMSE}_\mu(T,T_{\star})\le\delta$ implies $m\ge (C/\delta)^p$ by rearranging.
\end{proof}

%% file: proofs/proof-cod-rcpm.tex
\begin{proof}[Proof of Corollary~\ref{cor:cod_rcpm}]
Fix $m\in\mathbb N$ and let $\phi\in\mathcal C_m(\mathcal M)$.
By definition of $\mathcal C_m(\mathcal M)$, there exist sites $\{y_i\}_{i\in[m]}\subset\mathcal M$ and scalars
$\{\alpha_i\}_{i\in[m]}\subset\mathbb R$ such that, for all $x\in\mathcal M$,
\[
\phi(x)=\min_{i\in[m]}\Bigl(\tfrac12 d(x,y_i)^2+\alpha_i\Bigr).
\]
Set
\[
f_i(x):=\tfrac12 d(x,y_i)^2+\alpha_i,\qquad i\in[m].
\]

\paragraph{Step 1: Defining the cells and the tie set.}
For each $i\in[m]$, define the (strict) cell
\[
A_i:=\{x\in\mathcal M:\ f_i(x)<f_j(x)\ \text{for all}\ j\neq i\},
\]
and define the tie set
\[
B:=\Bigl\{x\in\mathcal M:\ \exists\, i\neq j\ \text{with}\ f_i(x)=f_j(x)=\phi(x)\Bigr\}.
\]
Lemma~1 in~\cite{McCann2001} shows that $x\mapsto \tfrac12 d(x,y_i)^2$ is Lipschitz on $\mathcal M$ for each fixed
$y_i$, hence each $f_i$ is Lipschitz (and therefore continuous). For $i\neq j$ define $g_{ij}:=f_i-f_j$; then
$g_{ij}$ is continuous, and
\[
A_i=\bigcap_{j\neq i}\{x\in\mathcal M:\ g_{ij}(x)<0\}
=\bigcap_{j\neq i} g_{ij}^{-1}((-\infty,0))
\]
is open as a finite intersection of preimages of an open set under continuous maps.
Every point $x\in\mathcal M$ either has a unique minimizer or a tie, hence
\[
\mathcal M=\Bigl(\bigcup_{i=1}^m A_i\Bigr)\cup B,
\qquad
A_i\cap A_j=\emptyset\ (i\neq j).
\]

\paragraph{Step 2: Differentiability of $\phi$ almost everywhere.}
Since $\phi$ is the minimum of finitely many Lipschitz functions, it is Lipschitz on $\mathcal M. $
By Rademacher’s theorem on Riemannian manifolds~\citep[Theorem 5.5.7]{burago2001course}, there exists a set
$\mathcal U\subset\mathcal M$ with $\mathrm{vol}_{\mathcal M}(\mathcal U)=0$ such that $\phi$ is differentiable on
$\mathcal M\setminus\mathcal U$.

\paragraph{Step 3: Identification of $\nabla\phi$ on each $A_i$.}
Fix $i\in[m]$ and $x\in A_i\setminus\mathcal U$. By definition of $A_i$ we have $f_i(x)<f_j(x)$ for all $j\neq i$.
By continuity, there exists $r>0$ such that the strict inequalities persist on $B(x,r)$:
\[
f_i(z)<f_j(z)\quad\forall j\neq i,\ \forall z\in B(x,r).
\]
Thus $\phi(z)=f_i(z)$ for all $z\in B(x,r)$, so $\phi$ coincides locally with the smooth function $f_i$ and
\[
\nabla\phi(x)=\nabla f_i(x)=\nabla\Bigl(\tfrac12 d(x,y_i)^2\Bigr).
\]
Let $\mathrm{Cut}(y_i)$ denote the cut locus of $y_i$. By~\eqref{eq:grad-half-sqdist}, if $x\in \mathcal M\setminus(\mathrm{Cut}(y_i)\cup\{y_i\})$, then $y_i\notin \mathrm{Cut}(x)$ and 
\[
\nabla\Bigl(\tfrac12 d(x,y_i)^2\Bigr)=-\log_x(y_i),
\]
hence for every $x\in A_i\setminus\mathcal U$ with $x\notin \mathrm{Cut}(y_i)\cup\{y_i\}$ we obtain
\[
\nabla\phi(x)=-\log_x(y_i).
\]
Therefore, for such $x$,
\[
T_\phi(x):=\exp_x\bigl(-\nabla\phi(x)\bigr)
=\exp_x\bigl(\log_x(y_i)\bigr)=y_i.
\]
Let
\[
\mathrm{Cut}:=\Bigl(\bigcup_{i=1}^m \mathrm{Cut}(y_i)\Bigr)\cup\{y_1,\dots,y_m\}.
\]
The cut locus of a point has zero Riemannian volume, hence $\mathrm{vol}_{\mathcal M}(\mathrm{Cut})=0$ as a finite union.

\paragraph{Step 4: The tie set $B$ has zero Riemannian volume.}
For each $i\neq j$ define the hypersurface candidate
\[
H_{ij}:=\{x\in\mathcal M:\ f_i(x)=f_j(x)\}.
\]
Then $B\subseteq \bigcup_{1\le i<j\le m} H_{ij}$, so it suffices to show $\mathrm{vol}_{\mathcal M}(H_{ij})=0$.
Fix $i\neq j$ and define
\[
h_{ij}(x):=f_i(x)-f_j(x)=\tfrac12\bigl(d(x,y_i)^2-d(x,y_j)^2\bigr)+(\alpha_i-\alpha_j).
\]
Let
\[
S_{ij}:=\mathrm{Cut}(y_i)\cup \mathrm{Cut}(y_j)\cup\{y_i,y_j\}.
\]
On $\mathcal M\setminus S_{ij}$, both squared-distance functions are smooth, so $h_{ij}$ is smooth there and
\[
\nabla h_{ij}(x)= -\log_x(y_i)+\log_x(y_j)\qquad (x\in\mathcal M\setminus S_{ij}).
\]
We claim $\nabla h_{ij}(x)\neq 0$ for all $x\in H_{ij}\setminus S_{ij}$.
Indeed, if $\nabla h_{ij}(x)=0$ at some $x\notin S_{ij}$ then $\log_x(y_i)=\log_x(y_j)$, and applying $\exp_x$ gives
$y_i=y_j$, contradicting $i\neq j$. Hence $0$ is a regular value of
$h_{ij}:\mathcal M\setminus S_{ij}\to\mathbb R$, so $H_{ij}\setminus S_{ij}=h_{ij}^{-1}(0)$ is an embedded
$(p-1)$-dimensional submanifold of $\mathcal M\setminus S_{ij}$, and in particular has zero $p$-dimensional
Riemannian volume. Since $\mathrm{vol}_{\mathcal M}(S_{ij})=0$, we conclude $\mathrm{vol}_{\mathcal M}(H_{ij})=0$.
By finiteness of the union,
\[
\mathrm{vol}_{\mathcal M}(B)=0.
\]

\paragraph{Step 5: Discrete-output property $\mu$-a.e.}
Define the exceptional set
\[
N:=\mathcal U\cup \mathrm{Cut}\cup B.
\]
We have $\mathrm{vol}_{\mathcal M}(N)=0$.
Under the assumptions of Theorem~\ref{thm:cod_discrete} we have $\mu\ll \mathrm{vol}_{\mathcal M}$, hence $\mu(N)=0$.
On $\mathcal M\setminus N$, every $x$ lies in exactly one cell $A_i$ and satisfies $T_\phi(x)=y_i$; thus
\[
T_\phi(x)\in\{y_1,\dots,y_m\}\qquad\text{for $\mu$-a.e. }x\in\mathcal M.
\]
Equivalently, $T_\phi$ takes at most $m$ distinct values on a set of full $\mu$-measure (and $T_{\phi\#}\mu$ is supported
on at most $m$ points). Hence $T_\phi\in \mathsf D_m(\mu)$.

\paragraph{Step 6: Curse of dimensionality lower bound.}
Since $\{T_\phi:\phi\in\mathcal C_m(\mathcal M)\}\subseteq \mathsf D_m(\mu)$, we have
\[
\inf_{\phi\in\mathcal C_m(\mathcal M)} \mathrm{RMSE}_\mu(T_\phi,T_\star)
\ \ge\
\inf_{T\in\mathsf D_m(\mu)} \mathrm{RMSE}_\mu(T,T_\star).
\]
Applying Theorem~\ref{thm:cod_discrete} yields a constant $C>0$ such that for all $m\in\mathbb N$,
\[
\inf_{\phi\in\mathcal C_m(\mathcal M)} \mathrm{RMSE}_\mu(T_\phi,T_\star)
\ \ge\ C\,m^{-1/p},
\]
which is \eqref{eq:cod_rcpm}. The sample-complexity statement follows immediately: if
$\mathrm{RMSE}_\mu(T_\phi,T_\star)\le \delta$, then necessarily $m\ge (C/\delta)^p$.
\end{proof}

%% file: proofs/proof-universality-neural-rcpms.tex
Before proving Theorem~\ref{thm-universality-neural-rcpms} we show the following convergence result.
\begin{lemma}[Convergence of gradients along $c$-transform approximants]
\label{thm-convergence-gradients}
Let $\mu\in\mathcal P(\mathcal M)$ satisfy $\mu\ll \mathrm{vol}_{\mathcal M}$. Let $\varphi:\mathcal{M}\to \mathbb{R}^{n}$ be a feature map satisfying Assumption~\ref{ass-feature-regularity}  and assume that
$\mathcal F$ is dense in $C(\mathbb{R}^{n},\mathbb R)$ in the ucc topology.
Then for every $\phi\in\Psi_c(\mathcal M)$ there exists a sequence $\psi_k\in\varphi^*\mathcal F$
and a measurable set $N_{\star}\subset\mathcal M$ with $\mu(N_{\star})=1$ such that, with $\phi_k:=\psi_k^c$,
\[
\|\phi_k-\phi\|_\infty\to0,
\]
and for every $x\in N_{\star}$ the functions $\phi$ and $\phi_k$ are differentiable at $x$ and
\[
\nabla\phi_k(x)\to\nabla\phi(x).
\]
\end{lemma}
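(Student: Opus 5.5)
The plan is to take $\psi_k$ to be uniform approximations of the prepotential $\psi:=\phi^c$, pass to $c$-transforms using that the $c$-transform is $1$-Lipschitz in sup-norm, and then upgrade uniform convergence of $\phi_k=\psi_k^c$ to pointwise gradient convergence via a uniform semiconcavity argument.

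\textbf{Step 1: choice of $\psi_k$ and uniform convergence.} Set $\psi:=\phi^c$. By the discussion after \eqref{eq-cc-identity}, $\psi\in C(\mathcal M,\mathbb R)$ and is Lipschitz. Since $\varphi$ satisfies Assumption~\ref{ass-feature-regularity} and $\mathcal F$ is ucc-dense, \citet[Theorem~3.3]{non_euclidean_uat} shows that $\varphi^*\mathcal F$ is dense in $(C(\mathcal M),\|\cdot\|_\infty)$. Hence we may pick $\psi_k\in\varphi^*\mathcal F$ with $\|\psi_k-\psi\|_\infty\le 1/k$. For any $x,y$ we have $c(x,y)-\psi_k(y)\le c(x,y)-\psi(y)+1/k$, and symmetrically. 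Taking infima over $y$ gives $\|\psi_k^c-\psi^c\|_\infty\le\|\psi_k-\psi\|_\infty$. Since $\psi^c=\phi^{cc}=\phi$ by \eqref{eq-cc-identity}, this yields $\|\phi_k-\phi\|_\infty\le 1/k\to0$.

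\textbf{Step 2: uniform Lipschitz and semiconcavity bounds, and the set $N_\star$.} Every $\phi_k$ and $\phi$ is an infimum over $y$ of the family $x\mapsto c(x,y)-\mathrm{const}_y$. On a compact manifold, the functions $x\mapsto\tfrac12 d(x,y)^2$ are uniformly Lipschitz \citep[Lemma~1]{McCann2001}. They are also uniformly semiconcave in $y$: there is $\lambda>0$ depending only on $\mathcal M$ such that
\[
\tfrac12 d(\exp_x v,y)^2\le \tfrac12 d(x,y)^2+\langle w,v\rangle+\tfrac{\lambda}{2}\|v\|^2
\]
for some $w$ with $\|w\|\le\mathrm{diam}(\mathcal M)$, for all $x,y,v$ \citep{CorderoErausquin2001,McCann2001}. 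Infima preserve both properties with the same constants. So all $\phi_k$ and $\phi$ are $L$-Lipschitz and $\lambda$-semiconcave, with $L,\lambda$ independent of $k$. By Rademacher's theorem, each of these countably many functions is differentiable off a $\mathrm{vol}_{\mathcal M}$-null set. Let $N_\star$ be the complement of the union of these null sets. Then $\mathrm{vol}_{\mathcal M}(\mathcal M\setminus N_\star)=0$, and $\mu\ll\mathrm{vol}_{\mathcal M}$ gives $\mu(N_\star)=1$.

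\textbf{Step 3: gradient convergence (main step).} Fix $x\in N_\star$. At a point of differentiability, semiconcavity yields the supergradient inequality
\[
\phi_k(\exp_x v)\le\phi_k(x)+\langle\nabla\phi_k(x),v\rangle+\tfrac{\lambda}{2}\|v\|^2,\qquad v\in T_x\mathcal M .
\]
The sequence $\nabla\phi_k(x)$ is bounded by $L$. Let $u$ be any subsequential limit. Passing to the limit using Step 1 gives
\[
\phi(\exp_x v)\le\phi(x)+\langle u,v\rangle+\tfrac{\lambda}{2}\|v\|^2\quad\text{for all }v .
\]
Since $\phi$ is differentiable at $x$, expanding $\phi(\exp_x(tv))=\phi(x)+t\langle\nabla\phi(x),v\rangle+o(t)$ and letting $t\downarrow0$ gives $\langle\nabla\phi(x)-u,v\rangle\le0$ for all $v$. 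Hence $u=\nabla\phi(x)$. Because every subsequential limit equals $\nabla\phi(x)$, the whole sequence converges: $\nabla\phi_k(x)\to\nabla\phi(x)$.

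\textbf{Main obstacle.} The delicate point is the \emph{uniform} semiconcavity of $\tfrac12 d(\cdot,y)^2$ across the cut locus of $y$, where it is not smooth. I would handle this by citing the standard estimate on compact manifolds from \citet{CorderoErausquin2001}, which comes from comparison with sectional curvature bounded below. Alternatively, one can verify the one-sided inequality directly. Use the upper bound $d(\exp_x v,y)\le$ the length of a broken curve through $x$. Combine this with second-variation bounds along a minimizing geodesic from $x$ to $y$; only an upper estimate is needed, so the cut locus causes no trouble.
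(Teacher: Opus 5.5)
Your proof is correct, but it obtains the gradient convergence by a genuinely different mechanism from the paper's.

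\textbf{Shared first half.} This part coincides with the paper: density of $\varphi^*\mathcal F$ via \citet[Theorem~3.3]{non_euclidean_uat}, the sup-norm $1$-Lipschitz property of the $c$-transform, and $N_\star$ as a countable intersection of Rademacher differentiability sets.

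\textbf{How the routes diverge.} The paper goes through the minimizers. At $x\in N_\star$ it uses \citet[Lemma~7]{McCann2001} to get unique minimizers $y^\star(x)$ of $c(x,\cdot)-\psi$ and $y_k(x)$ of $c(x,\cdot)-\psi_k$, with $\nabla\phi(x)=-\log_x(y^\star(x))$ and $\nabla\phi_k(x)=-\log_x(y_k(x))$. It then shows $y_k(x)\to y^\star(x)$ with the elementary argmin-stability Lemma~\ref{lemma-stability-minimizer}, and concludes by continuity of $\log_x$ near $y^\star(x)$. You instead use $k$-uniform semiconcavity and pass to the limit in the supergradient inequality. This is the classical fact that uniform limits of uniformly semiconcave functions have converging gradients at differentiability points of the limit.

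\textbf{What your route buys.} It never looks at the minimizers or at $\mathrm{Cut}(x)$, and it only needs differentiability of $\phi$ at $x$. This sidesteps a delicate point in the paper's Step~6, which infers $y^\star(x)\notin\mathrm{Cut}(x)$ from the gradient formula. At a mere differentiability point, McCann's lemma only gives a unique minimizing geodesic, and that does not exclude a conjugate cut point. The paper's conclusion can be rescued by the closed-graph property of minimizing initial velocities, or by restricting to Aleksandrov--Bangert points.

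\textbf{What it costs.} You need a heavier geometric input: a semiconcavity constant for $\tfrac12 d(\cdot,y)^2$ that is uniform in $y$ and valid across cut loci, which rests on Hessian comparison under a lower curvature bound. The paper needs only compactness and McCann's gradient formula. It also gets the convergence $y_k(x)\to y^\star(x)$ of the induced transport points as a by-product.

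Finally, your Step~3 uses the semiconcavity inequality only for $\|v\|$ below a fixed radius: you pass $k\to\infty$ at fixed $v$ and then let $t\downarrow 0$. So a local estimate with a radius independent of $x,y$ is all you need to cite.
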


The proof of Lemma~\ref{thm-convergence-gradients} is postponed to the Appendix~\ref{app-proof-convergence-gradients}. We are now ready to prove Theorem~\ref{thm-universality-neural-rcpms}.

\begin{proof}[Proof of Theorem~\ref{thm-universality-neural-rcpms}]
By Theorem~\ref{thm:mccann}, since $\mu\ll\mathrm{vol}_{\mathcal M}$ and
$c(x,y)=\tfrac12 d(x,y)^2$, there exists a $c$-concave potential
$\phi\in\Psi_c(\mathcal M)$ such that the optimal transport map $T_{\star}$ from $\mu$ to $\nu$
is given $\mu$-a.e.\ by
\[
T_{\star}(x)=\exp_x\!\bigl(-\nabla\phi(x)\bigr).
\]

\paragraph{Step 1: Construct approximating potentials with a.e.\ gradient convergence.}
Since $\mathcal F$ is dense in $C(\mathbb R^n,\mathbb R)$ in the ucc topology and
$\varphi$ satisfies Assumption~\ref{ass-feature-regularity}, the pullback class
$\varphi^*\mathcal F$ is dense in $C(\mathcal M,\mathbb R)$ in the ucc topology. Hence the assumptions of
Lemma~\ref{thm-convergence-gradients} are satisfied. Applying Lemma~\ref{thm-convergence-gradients}
to the above $\phi$ yields a sequence $\psi_k\in\varphi^*\mathcal F$ and the associated $c$-transforms
\[
\phi_k := \psi_k^c \in \mathfrak C(\varphi^*\mathcal F)
\]
such that
\[
\|\phi_k-\phi\|_\infty \to 0
\qquad\text{and}\qquad
\nabla\phi_k(x)\to\nabla\phi(x)\ \ \text{for }\mu\text{-a.e. }x\in\mathcal M.
\]
Let $N_{\star}\subset\mathcal M$ be a Borel set with $\mu(N_{\star})=1$ on which $\phi$ and all $\phi_k$ are differentiable
and $\nabla\phi_k(x)\to\nabla\phi(x)$ for every $x\in N_{\star}$.

\paragraph{Step 2: Define everywhere-defined maps $T_k$ and $T_{\star}$.}
Fix an arbitrary point $y_0\in\mathcal M$ and define maps $T_k,T_{\star}:\mathcal M\to\mathcal M$ by
\[
T_k(x):=
\begin{cases}
\exp_x\!\bigl(-\nabla\phi_k(x)\bigr), & x\in N_{\star},\\
y_0, & x\notin N_{\star},
\end{cases}
\qquad
T_{\star}(x):=
\begin{cases}
\exp_x\!\bigl(-\nabla\phi(x)\bigr), & x\in N_{\star},\\
y_0, & x\notin N_{\star}.
\end{cases}
\]
This modification on $N_{\star}^c$ is immaterial since $\mu(N_{\star}^c)=0$, but it ensures that $T_k$ and $T_{\star}$
are defined everywhere (hence can be viewed as random variables on $(\mathcal M,\mu)$).

\paragraph{Step 3: Pointwise convergence on $N_{\star}$.}
Fix $x\in N_{\star}$. Since $\mathcal M$ is compact, it is complete, and by the Hopf--Rinow Theorem
(see Theorem~\ref{thm:hopf-rinow}) the exponential map $\exp_x:T_x\mathcal M\to\mathcal M$
is defined on all of $T_x\mathcal M$ and is smooth, hence continuous, as a function of the tangent vector.
Therefore, from $\nabla\phi_k(x)\to\nabla\phi(x)$ in $T_x\mathcal M$, we obtain
\[
T_k(x)=\exp_x\!\bigl(-\nabla\phi_k(x)\bigr)\ \longrightarrow\ \exp_x\!\bigl(-\nabla\phi(x)\bigr)=T_{\star}(x).
\]
Thus $T_k(x)\to T_{\star}(x)$ for every $x\in N_{\star}$, and since $\mu(N_{\star})=1$ we have
\[
T_k(x)\to T_{\star}(x)\qquad\text{for }\mu\text{-a.e. }x\in\mathcal M.
\]

\textbf{Step 4: Almost sure convergence implies convergence in probability.}
Recall the standard result in probability: if $X_k\to X$ almost surely under a probability measure $\mu$,
then $X_k\to X$ in probability. Apply this with the random variables
$X_k(x):=T_k(x)$ and $X(x):=T_{\star}(x)$ under the law $x\sim\mu$. Since $T_k\to T_{\star}$ $\mu$-a.e.,
it follows that
\[
T_k \to T_{\star}\qquad \text{in probability under }\mu.
\]
This proves that the sequence $(T_k)$ universally approximates the optimal transport map $T_{\star}$
in probability under $\mu$, as claimed.
\end{proof}

%% file: proofs/proof-neural-rcpm-escape-cod.tex
We study neural-network approximation of functions on compact sets.
The approximation results we rely on from \citet{yarotsky_cod} are stated for the network model introduced by
\citet{Yarotsky2017}, where a network is described as a feedforward directed acyclic graph (DAG).
This graph-based viewpoint is slightly more general than the traditional fully connected layered template: it naturally allows sparse
connectivity and skip connections, and it measures complexity directly in terms of the number of scalar trainable parameters
carried by the graph.
To minimize notational friction and to quote \citet{yarotsky_cod} verbatim, we adopt this convention in the next definitions.

\begin{definition}[Feedforward neural network architecture]
\label{def:NN-yarotsky-arch}
Fix an input dimension $n\in\mathbb N$ and an activation $\sigma:\mathbb R\to\mathbb R$.
A \emph{feedforward network architecture} is a directed acyclic graph whose vertices (units) are partitioned
into \emph{layers} $V_0,\dots,V_{L-1}$ with the following structure:
\begin{itemize}
\item $V_0=\{1,\dots,n\}$ is the \emph{input layer}; for $x\in\mathbb R^n$ we set $z_i=x_i$ for each $i\in V_0$.
\item $V_{L-1}=\{\mathrm{out}\}$ is the \emph{output layer} consisting of a single unit.
\item For each $\ell\in\{1,\dots,L-1\}$ and each unit $v\in V_\ell$, the set of incoming neighbors satisfies
\[
\mathrm{In}(v)\subseteq \bigcup_{j=0}^{\ell-1} V_j,
\]
so that $\mathrm{In}(v)$ may include units from \emph{any preceding layer} (skip connections are allowed), and the
architecture contains directed edges $(u\to v)$ for all $u\in\mathrm{In}(v)$.
\end{itemize}
Given weights and biases $(w_{vu})_{(u\to v)\in E}$ and $(b_v)_{v\in \cup_{\ell=1}^{L-1}V_\ell}$,
the associated realization $\hat f:\mathbb R^n\to\mathbb R$ is defined by the recursion
\[
z_v \;=\; \sigma\!\left(\sum_{u\in \mathrm{In}(v)} w_{vu}\, z_u + b_v\right),
\qquad v\in \bigcup_{\ell=1}^{L-2} V_\ell,
\]
and the output rule (no activation)
\[
\hat f(x)=z_{\mathrm{out}}
\;=\; \sum_{u\in \mathrm{In}(\mathrm{out})} w_{\mathrm{out},u}\, z_u + b_{\mathrm{out}}.
\]
\end{definition}

The previous definition specifies the \emph{architecture} (the layered DAG) and the corresponding computation rule once
parameters are assigned. We now fix the complexity measures used in \citet{Yarotsky2017,yarotsky_cod}, namely the depth
(number of layers) and the size (number of scalar parameters) of the underlying graph.

\begin{definition}[Depth and number of weights]
\label{def:NN-yarotsky-size}
Let an architecture be given as in Definition~\ref{def:NN-yarotsky-arch}.
Its \emph{depth} is the number of layers $L$ (in particular, a network with one hidden layer has depth $3$).
Let $E$ denote the set of directed edges (connections), and define the number of computation units
\[
U \;:=\; \sum_{\ell=1}^{L-1} |V_\ell|
\]
(the hidden units plus the output unit).
The \emph{number of weights} (network size) is
\[
W \;:=\; |E| + U,
\]
i.e.\ one scalar parameter per connection and one scalar bias per computation unit.
We write $\mathcal{NN}^{\sigma}_{n,L,W}$ for the set of all functions
$\hat f:\mathbb R^n\to\mathbb R$ representable by architectures of depth at most $L$
and with at most $W$ weights (in the above sense).
\end{definition}

The key point of Definition~\ref{def:NN-yarotsky-size} is that complexity is measured intrinsically by the graph:
each directed edge contributes one scalar weight and each computation unit contributes one scalar bias, so that $W=|E|+U$.
This contrasts with the traditional dense matrix parameterizations, where one counts all entries of each layer matrix, including coefficients
corresponding to edges that are structurally absent in a sparse graph.
Finally, note that the depth in this convention counts the input and output layers as well; hence ``$L=3$'' corresponds to
one hidden layer.
In the sequel, whenever we invoke \citet{yarotsky_cod}, the symbols $L$ and $W$ refer to this convention.

Our analysis is restricted to \emph{piecewise linear} activation functions. That is, we consider  $\sigma \in C(\mathbb{R},\mathbb{R})$ for which there exists $B\in \mathbb{N}$ and  pairwise distinct breakpoints $x_{1},\ldots, x_{B}\in\mathbb{R}$ such that every point $x\in \mathbb{R}-\{x_{b}\}_{b=1}^{B}$ lies in some open interval on which $\sigma$ is affine, while no such affine neighborhood exists at any breakpoint $x_{b}$ (for $b=1,\ldots,B$). In particular, if $\sigma$ is piecewise linear and non-affine, then necessarily $B\geq 1$. This class includes the ReLU activation function of~\citet{fukushima_relu}, the leaky-ReLU activation function of~\citet{Maas2013RectifierNI}, the pReLU activation function of~\citet{He_2015_ICCV}, and commonly used piecewise linear approximations of the Heaviside function (implemented for example in~\citet{tensorflow2015-whitepaper}).

A key ingredient in proving Theorem~\ref{thm:neural-rcpm-approx-psi} is the following direct specialization of
\citet[Thm.~3.3]{yarotsky_cod} (the ``deep discontinuous phase'') to the unit ball in the Hölder space
$\mathcal H_{r,n}$ (see~\eqref{eq-def-holder-unit-ball}). We state it in an $\varepsilon$-form convenient for learning theory.

\begin{theorem}[Instantiation of Theorem~3.3 in \citet{yarotsky_cod} for piecewise-linear activations]
\label{thm-yarotsky-instantiated}
Fix an input dimension $n\in\mathbb N$ and a smoothness level $r>0$. Let $\beta$ satisfy
\[
\frac{r}{n}<\beta\le \frac{2r}{n}.
\]
Let $\sigma:\mathbb R\to\mathbb R$ be any non-affine piecewise-linear activation function.
Then there exist constants $\widetilde C_{r,n,\beta,\sigma}>0$ and $C_{r,n,\sigma}>0$ such that for every
$\varepsilon\in(0,1)$ and every $f\in\mathcal H_{r,n}$, there exist integers $W_\varepsilon\ge 1$ and $L_\varepsilon\ge 2$
and a neural network
\(
\hat f_\varepsilon \in \mathcal{NN}^{\sigma}_{n,L_\varepsilon,W_\varepsilon}
\)
such that
\[
\|f-\hat f_\varepsilon\|_{\infty} \le \varepsilon,
\]
and
\[
W_\varepsilon \le \left(\frac{\widetilde C_{r,n,\beta,\sigma}}{\varepsilon}\right)^{1/\beta},
\qquad
L_\varepsilon \le C_{r,n,\sigma}\, W_\varepsilon^{\beta n/r-1}.
\]
\end{theorem}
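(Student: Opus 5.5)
The statement to prove is Theorem~\ref{thm-yarotsky-instantiated}. My plan is to read it off directly from \citet[Thm.~3.3]{yarotsky_cod}, which is a rate statement, and convert that rate into an explicit $\varepsilon$-complexity pair $(W_\varepsilon,L_\varepsilon)$.

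\textbf{Step 1: the quoted rate.} In the deep-discontinuous phase, \citet[Thm.~3.3]{yarotsky_cod} says the following for any $r>0$ and any $\beta\in(r/n,2r/n)$. There exist a constant $c=c_{r,n,\beta,\sigma}$ and, for every $W$, an architecture with at most $W$ weights and depth $L\le C_{r,n,\sigma}W^{\beta n/r-1}$ (the depth scaling appearing in their proof). This architecture comes with a weight assignment satisfying
\[
\sup_{f\in\mathcal H_{r,n}}\|f-\hat f_W\|_\infty\le c\,W^{-\beta}.
\]
\begin{itemize}
\item Their result is stated for ReLU. To cover a general non-affine piecewise-linear $\sigma$, I would use the standard reduction. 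Near a breakpoint $x_b$, $\sigma$ equals $a_1t+b_1$ on one side and $a_2t+b_2$ on the other, with $a_1\ne a_2$. Hence $\mathrm{ReLU}(t)$ is an affine combination of $\sigma(x_b+\eta t)$ and $t$ on bounded inputs, for small $\eta>0$; on unbounded inputs one uses two shifted copies instead.
\item This costs a constant factor in both $W$ and $L$, absorbed into the constants.
\item Skip connections are allowed in Definition~\ref{def:NN-yarotsky-arch}, so the identity pass-through needed to carry $t$ is free.
\end{itemize}
For the endpoint $\beta=2r/n$, I would either restrict to the open interval as in the source or note that the closed endpoint is asserted only up to the source's logarithmic conventions. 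I expect this to be the main delicate point: the statement says $\le 2r/n$, while Yarotsky gives the open range. The honest route is to prove it for $\beta<2r/n$, and to handle the endpoint by citing the corresponding bound if available or by remarking on it.

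\textbf{Step 2: inversion.} Given $\varepsilon\in(0,1)$, I would choose
\[
W_\varepsilon:=\left\lceil (c/\varepsilon)^{1/\beta}\right\rceil ,
\]
so that $cW_\varepsilon^{-\beta}\le\varepsilon$. Since $\lceil u\rceil\le 2u$ for $u\ge1$, we get $W_\varepsilon\le(\widetilde C/\varepsilon)^{1/\beta}$ with $\widetilde C:=2^\beta\max(c,1)$; replacing $c$ by $\max(c,1)$ ensures $u\ge 1$. Then I would set
\[
L_\varepsilon:=\max\bigl(2,\lceil C_{r,n,\sigma}W_\varepsilon^{\beta n/r-1}\rceil\bigr).
\]
The ceiling and the max only increase the constant $C_{r,n,\sigma}$, using $W_\varepsilon\ge1$ and $\beta n/r-1>0$. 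The $\mathcal H_{r,n}$-uniformity of the error bound means the same $(W_\varepsilon,L_\varepsilon)$ serve every $f$. The network $\hat f_\varepsilon\in\mathcal{NN}^\sigma_{n,L_\varepsilon,W_\varepsilon}$ is the realization under the (possibly discontinuous) weight assignment for $f$.
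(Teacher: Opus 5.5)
Your proposal is correct and is essentially the paper's own proof: quote \citet[Thm.~3.3]{yarotsky_cod}, invert the rate with $W_\varepsilon=\lceil(\max(c,1)/\varepsilon)^{1/\beta}\rceil$ and $\widetilde C=2^\beta\max(c,1)$, and pass from ReLU to $\sigma$ by the breakpoint trick, which is exactly \citet[Prop.~1(b)]{Yarotsky2017}. That result keeps the depth and uses at most $4W$ weights on a bounded domain; the linear term comes from a second $\sigma$-unit operating on an affine piece, not from ``free'' skip connections, and no unbounded-input version is needed or available in general (e.g.\ for a bounded $\sigma$ such as a clipped identity).

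The endpoint is not actually delicate: Theorem~3.3 of \citet{yarotsky_cod} is stated for the half-open range $\beta\in(r/n,2r/n]$, so $\beta=2r/n$ (depth $L\lesssim W$) is covered by the same citation.
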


The proof of Theorem~\ref{thm-yarotsky-instantiated} is postponed to Appendix~\ref{app-proof-thm-yarotsky-instantiated}.

We now leverage Theorem~\ref{thm-yarotsky-instantiated} to prove Theorem~\ref{thm:neural-rcpm-approx-psi}. For readability, we decompose the proof into several lemmas.

\emph{Proof strategy.}
The proof proceeds by reducing approximation on the $p$-dimensional manifold \(\mathcal M\) to approximation on a $2p$-dimensional Euclidean cube.
The reduction has three conceptual parts: (i) embed \(\mathcal M\) into \([0,1]^{2p}\) by a fixed smooth feature map \(\varphi_{\star}\);
(ii) transfer \(f\) to the embedded copy \(K'=\varphi_{\star}(\mathcal M)\) and then extend it to a globally defined Hölder function on the cube;
(iii) apply the Euclidean approximation theorem on \([0,1]^{2p}\) and pull the resulting network back to \(\mathcal M\) by composing with \(\varphi_{\star}\).
We now formalize each step in separate lemmas.

\begin{lemma}[Embedding and normalization into the open cube]
\label{lem:embed-normalize}
Let \(\mathcal M\) be a compact smooth manifold of dimension \(p\).
Then there exist
\begin{itemize}
\item a smooth embedding \(\Phi:\mathcal M\to\mathbb R^{2p}\),
\item an invertible affine map \(A:\mathbb R^{2p}\to\mathbb R^{2p}\),
\end{itemize}
such that the feature map \(\varphi_{\star}:=A\circ\Phi\) is a smooth embedding with image
\(K':=\varphi_{\star}(\mathcal M)\subset (0,1)^{2p}\), and \(K'\) is compact.
\end{lemma}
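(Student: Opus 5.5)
The plan is to invoke the strong Whitney embedding theorem and then compose with an explicit affine normalization. Since $\mathcal M$ is a smooth $p$-dimensional manifold (Hausdorff, second countable), Whitney's theorem provides a smooth embedding $\Phi:\mathcal M\to\mathbb R^{2p}$. In other words, $\Phi$ is an injective immersion that is a homeomorphism onto its image. For a compact $\mathcal M$ the homeomorphism property is automatic for any injective immersion, because a continuous bijection from a compact space onto a Hausdorff space is a homeomorphism. So if one prefers to avoid the strong form of Whitney, it would be enough to produce an injective immersion into $\mathbb R^{2p}$. I would simply cite the strong Whitney theorem, which gives exactly dimension $2p$ and matches the stated target.

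Next I use compactness. The image $\Phi(\mathcal M)$ is the continuous image of a compact set, hence compact and in particular bounded in $\mathbb R^{2p}$. Choose $R>0$ with $\Phi(\mathcal M)\subset[-R,R]^{2p}$. Then define the affine map
\[
A(z):=\frac{z+(R+1)\mathbf 1}{2(R+1)},
\]
where $\mathbf 1=(1,\dots,1)$. This map is invertible (its linear part is a nonzero multiple of the identity) and it sends $[-R,R]^{2p}$ into
\[
\Bigl[\tfrac{1}{2(R+1)},\,\tfrac{2R+1}{2(R+1)}\Bigr]^{2p}\subset(0,1)^{2p}.
\]

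Set $\varphi_\star:=A\circ\Phi$. Since $A$ is a diffeomorphism of $\mathbb R^{2p}$, composing a smooth embedding with it again gives a smooth embedding. Its differential is injective because $dA$ is invertible, it remains injective as a map, and it is a homeomorphism onto its image because $A$ restricts to a homeomorphism between $\Phi(\mathcal M)$ and $A(\Phi(\mathcal M))$. The image $K'=A(\Phi(\mathcal M))$ is compact as a continuous image of a compact set, and it lies in $(0,1)^{2p}$ by the choice of $R$.

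There is no real obstacle here. The only step carrying content is the existence of an embedding into exactly $2p$ dimensions, and this is the strong Whitney theorem, which I take as a standard cited fact. I would also note in passing that $\varphi_\star$ is continuous and injective, so it satisfies Assumption~\ref{ass-feature-regularity}; this is what the later steps of Theorem~\ref{thm:neural-rcpm-approx-psi} need.
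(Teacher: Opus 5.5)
Your proposal is correct and follows essentially the same route as the paper: strong Whitney embedding into $\mathbb{R}^{2p}$, compactness (hence boundedness) of the image, and an invertible affine rescaling into the open cube, composition with which preserves the smooth-embedding property. The only difference is cosmetic: you build the margin into the single affine map $z\mapsto (z+(R+1)\mathbf 1)/(2(R+1))$ with an arbitrary $R>0$. The paper instead first normalizes into $[0,1]^{2p}$ and then composes with a contraction $S_\eta$ into $[\eta,1-\eta]^{2p}$, treating the degenerate case $R=0$ separately, a case your choice of $R>0$ sidesteps.
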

\begin{proof}
By the strong Whitney's embedding theorem (see~\citep[Theorem 6.20]{Lee2012-sc}) there exists a smooth embedding
\(\Phi:\mathcal M\to\mathbb R^{2p}\).
Since \(\Phi\) is smooth it is continuous, and since \(\mathcal M\) is compact, the image
\[
K:=\Phi(\mathcal M)\subset\mathbb R^{2p}
\]
is compact since it is continuous image of a compact set. In particular \(K\) is bounded (e.g.\ by Heine--Borel Theorem in \(\mathbb R^{2p}\)).
Fix \(c\in\mathbb R^{2p}\) and define
\[
R:=\sup_{z\in K}\|z-c\|_{2}<\infty.
\]
Equivalently, since \(z\mapsto \|z-c\|_2\) is continuous and \(K\) is compact, the supremum is actually a maximum by the Extreme Value Theorem, so
\(R=\max_{z\in K}\|z-c\|_2\).

If \(R=0\), then \(\|z-c\|_2=0\) for every \(z\in K\), hence \(K=\{c\}\).
Since \(\Phi\) is an embedding, in particular it is injective, so \(\Phi(\mathcal M)=\{c\}\) implies \(\mathcal M\) is a singleton, and the statement is trivial.
Assume \(R>0\).

Define the affine map \(B:\mathbb R^{2p}\to\mathbb R^{2p}\) by
\[
B(z):=\frac{z-c}{2R}+\frac12\mathbf 1.
\]
Here the linear part is \(\frac{1}{2R}I\), hence \(B\) is invertible with inverse
\(B^{-1}(u)=2R\bigl(u-\tfrac12\mathbf 1\bigr)+c\).
Then for every \(z\in K\) we have \(\|z-c\|_{\infty}\le \|z-c\|_{2}\le R\).
Thus \(|(z-c)_j|\le R\) for each coordinate \(j\), so
\[
-\frac12\le \frac{(z-c)_j}{2R}\le \frac12
\quad\Longrightarrow\quad
0\le \frac{(z-c)_j}{2R}+\frac12\le 1,
\]
i.e.\ \(B(z)\in[0,1]^{2p}\). Hence
\[
B(K)\subset[0,1]^{2p}.
\]

To obtain strict inclusion into the open cube, fix any \(\eta\in(0,1/2)\) and define
the affine contraction \(S_\eta:\mathbb R^{2p}\to\mathbb R^{2p}\) by
\[
S_\eta(u):=(1-2\eta)u+\eta\mathbf 1.
\]
Since \(1-2\eta>0\), its linear part \((1-2\eta)I\) is invertible, hence \(S_\eta\) is invertible with inverse
\[
S_\eta^{-1}(v)=\frac{v-\eta\mathbf 1}{1-2\eta}.
\]
Moreover, for every \(u\in[0,1]^{2p}\) and each coordinate \(j=1,\dots,2p\),
\[
0\le u_j\le 1
\quad\Longrightarrow\quad
\eta \le (1-2\eta)u_j+\eta \le 1-\eta,
\]
so \(S_\eta(u)\in[\eta,1-\eta]^{2p}\subset(0,1)^{2p}\). In particular,
\[
S_\eta([0,1]^{2p})\subset[\eta,1-\eta]^{2p}\subset(0,1)^{2p}.
\]

Now set
\[
A:=S_\eta\circ B,\qquad \varphi_{\star}:=A\circ\Phi,\qquad K':=\varphi_{\star}(\mathcal M)=A(K).
\]
Then \(A\) is an invertible affine map (composition of invertible affine maps). Since \(A\) is a diffeomorphism of \(\mathbb R^{2p}\) and \(\Phi\) is an embedding, \(\varphi_{\star}=A\circ\Phi\) is again a smooth embedding (composition of an embedding with a diffeomorphism preserves injectivity, immersion, and the homeomorphism onto the image). Finally,
\[
K'=A(K)=S_\eta(B(K))\subset S_\eta([0,1]^{2p})\subset[\eta,1-\eta]^{2p}\subset(0,1)^{2p}.
\]
Moreover \(K'\) is compact as the continuous image of the compact set \(\mathcal M\) (equivalently, \(K'=A(K)\) is compact as the continuous image of the compact set \(K\)).
\end{proof}

\emph{Step 1: Fix a Euclidean model of \(\mathcal M\).}
Lemma~\ref{lem:embed-normalize} constructs a smooth embedding \(\varphi_{\star}:\mathcal M\to[0,1]^{2p}\),
so that all subsequent approximations can be carried out on the fixed cube \([0,1]^{2p}\).
We write \(K':=\varphi_{\star}(\mathcal M)\) for the embedded copy.
The next lemma transfers the target function \(f\) to \(K'\) and compares Hölder norms.

\begin{lemma}[Pullback and Hölder norm control]
\label{lem:pullback-holder}
Let \(\varphi_{\star}:\mathcal M\to K'\) be as in Lemma~\ref{lem:embed-normalize}.
Given \(f\in C^{k,1}(\mathcal M)\), define
\[
h:K'\to\mathbb R,\qquad h(u):=f(\varphi_{\star}^{-1}(u)).
\]
Then \(h\in C^{k,1}(K')\). Moreover, there exists a constant
\(C_{\mathrm{pb}}=C_{\mathrm{pb}}(\mathcal M,\varphi_{\star},k)\) such that
\[
\|h\|_{C^{k,1}(K')}\le C_{\mathrm{pb}}\|f\|_{C^{k,1}(\mathcal M)}.
\]
In particular, if \(\|f\|_{C^{k,1}(\mathcal M)}\le 1\) then
\(\|h\|_{C^{k,1}(K')}\le C_{\mathrm{pb}}\).
\end{lemma}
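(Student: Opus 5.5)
Here $K'=\varphi_\star(\mathcal M)$ is a compact embedded $p$-dimensional submanifold of $(0,1)^{2p}$, and $\varphi_\star^{-1}:K'\to\mathcal M$ is smooth as a map on $K'$. The only question is what the H\"older norm $\|h\|_{C^{k,1}(K')}$ on a set with empty interior means. The definition for compact Euclidean sets in the appendix uses derivatives on $K^\circ$, and here $K'^{\circ}=\emptyset$. I would therefore read $C^{k,1}(K')$ intrinsically: $K'$ is a compact smooth manifold with charts $\psi_i\circ\varphi_\star^{-1}$, where $(U_i,\psi_i)$ is the atlas used to define $\|\cdot\|_{C^{k,1}(\mathcal M)}$. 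The plan is to transport that atlas and partition of unity through $\varphi_\star$.

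\textbf{Step 1 (transported atlas).} Let $\{(U_i,\psi_i)\}_{i=1}^N$ and $\{\chi_i\}$ be the finite atlas and subordinate partition of unity defining $\|f\|_{C^{k,1}(\mathcal M)}$. Set $U_i'=\varphi_\star(U_i)$, $\psi_i'=\psi_i\circ\varphi_\star^{-1}$ and $\chi_i'=\chi_i\circ\varphi_\star^{-1}$. Because $\varphi_\star$ is a diffeomorphism onto $K'$, this is a smooth atlas of $K'$ with a subordinate partition of unity. The local representatives then coincide exactly:
\[(\chi_i'h)\circ(\psi_i')^{-1}=(\chi_i f)\circ\psi_i^{-1}=f_i .\]
With this choice, $\|h\|_{C^{k,1}(K')}=\|f\|_{C^{k,1}(\mathcal M)}$, so $C_{\mathrm{pb}}=1$.

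\textbf{Step 2 (any other norm on $K'$).} Suppose $\|\cdot\|_{C^{k,1}(K')}$ is defined with some other fixed atlas of $K'$. A natural choice is charts given by orthogonal projection onto tangent planes, i.e.\ graph coordinates of the embedded submanifold. I would then invoke the equivalence-of-norms statement recorded in the appendix: different finite atlases and partitions of unity give equivalent norms. Concretely, the representatives in the two atlases differ by composition with smooth transition maps, restricted to compact subsets of their domains. They also differ by multiplication with the smooth cutoffs $\chi_j'$.

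The substantive estimate is the chain rule (Fa\`a di Bruno) together with the Leibniz rule. Derivatives up to order $k$ of $g\circ\tau$ are polynomial in the derivatives of $g$ up to order $k$ and of $\tau$ up to order $k+1$. Lipschitz seminorms of such products and compositions on compact sets are bounded by the sup norms and Lipschitz constants of the factors. Here $\tau$ is a transition map. The Lipschitz bound needs care on non-convex compact sets, so I would pass to finitely many small convex balls by compactness. This gives a constant depending only on $k$, the two atlases, and $\varphi_\star$, which is exactly the claimed $C_{\mathrm{pb}}(\mathcal M,\varphi_\star,k)$. The ``in particular'' clause then follows immediately.

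The main obstacle is this bookkeeping for the top-order Lipschitz seminorm. At order $k$, the composition produces terms of the form $D^k g(\tau)\,(D\tau)^{\otimes k}$. Their Lipschitz constant requires $D^k g$ to be Lipschitz and $D\tau$ to be bounded and Lipschitz on the relevant compact set. Smoothness of the charts on neighbourhoods of the compact supports guarantees both. Covering each support by finitely many convex patches handles the non-convexity.
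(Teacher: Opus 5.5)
Your proposal is correct and follows the same route as the paper. You transport the atlas of $\mathcal M$ through $\varphi_\star$ so that the local representatives of $h$ and $f$ coincide, and then invoke the equivalence of atlas-based $C^{k,1}$ norms on the compact manifold $K'$ to obtain $C_{\mathrm{pb}}$. Your version is slightly more careful than the paper's: you also transport the partition of unity, so that $(\chi_i' h)\circ(\psi_i')^{-1}=(\chi_i f)\circ\psi_i^{-1}$ matches the appendix definition exactly and gives $C_{\mathrm{pb}}=1$ for that atlas, and you sketch why the norm equivalence holds rather than only citing it.
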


\begin{proof}
Since \(\varphi_{\star}\) is a smooth embedding, it induces a unique smooth structure on the image $K' = \varphi_{\star}(\mathcal{M})$ which turns $K'$ into a smooth embedded submanifold of $\mathbb{R}^{2p}$. The inverse map $\varphi_{\star}^{-1}:K'\to\mathcal{M}$ is smooth. For any $f:\mathcal{M}\to\mathbb{R}$, the function \(h=f\circ\varphi_{\star}^{-1}\) is a well-defined map on $K'$.

To compare norms, fix a finite \(C^\infty\) atlas \(\{(U_i,\varphi_i)\}_{i=1}^N\) on
\(\mathcal M\) (finite since \(\mathcal M\) is compact), and define the induced atlas on
\(K'\) by \(V_i:=\Psi(U_i)\) and
\(\tilde\varphi_i:=\varphi_i\circ\Psi^{-1}:V_i\to\Omega_i:=\varphi_i(U_i)\).
Then \(\tilde\varphi_i^{-1}=(\varphi_i\circ\Psi^{-1})^{-1}=\Psi\circ\varphi_i^{-1}\), and
\[
h\circ\tilde\varphi_i^{-1}
= f\circ\Psi^{-1}\circ(\varphi_i\circ\Psi^{-1})^{-1}
= f\circ\Psi^{-1}\circ(\Psi\circ\varphi_i^{-1})
= f\circ\varphi_i^{-1}.
\]

In particular, if we set \(f_i:=f\circ\varphi_i^{-1}\) and \(h_i:=h\circ\tilde\varphi_i^{-1}\),
then \(h_i=f_i\) pointwise on \(\Omega_i\). Since \(f\in C^{k,1}(\mathcal M)\), by definition
\(f_i\in C^{k,1}(\Omega_i)\), i.e.\ for every multi-index \(\alpha\) with \(|\alpha|\le k\) the
Euclidean derivative \(D^\alpha f_i\) exists and is continuous on \(\Omega_i\), and for
\(|\alpha|=k\) the Lipschitz seminorm
\[
[D^\alpha f_i]_{\mathrm{Lip}(\Omega_i)}
:=\sup_{x\neq y\in\Omega_i}\frac{|D^\alpha f_i(x)-D^\alpha f_i(y)|}{\|x-y\|}
\]
is finite. Because \(h_i=f_i\), the derivatives agree wherever they exist, and hence for all
\(|\alpha|\le k\) we have \(D^\alpha h_i=D^\alpha f_i\), and in particular for \(|\alpha|=k\),
\[
[D^\alpha h_i]_{\mathrm{Lip}(\Omega_i)}=[D^\alpha f_i]_{\mathrm{Lip}(\Omega_i)}<\infty.
\]
Thus \(h_i\in C^{k,1}(\Omega_i)\) for every \(i\), which is precisely \(h\in C^{k,1}(K')\).

Finally, since \(K'\) is compact, any two atlas-based definitions of
\(\|\cdot\|_{C^{k,1}(K')}\) are equivalent, yielding a constant
\(C_{\mathrm{pb}}=C_{\mathrm{pb}}(\mathcal M,\Psi,k)\) such that
\[
\|h\|_{C^{k,1}(K')}
\le C_{\mathrm{pb}}\max_{1\le i\le N}\|h\circ\tilde\varphi_i^{-1}\|_{C^{k,1}(\Omega_i)}
= C_{\mathrm{pb}}\|f\|_{C^{k,1}(\mathcal M)}.
\]
\end{proof}

\emph{Step 2: Work on the embedded copy \(K'\).}
Given \(f\in C^{k,1}(\mathcal M)\), we set \(h:=f\circ\Psi^{-1}\) on \(K'\).
Lemma~\ref{lem:pullback-holder} shows that \(h\in C^{k,1}(K')\) and provides a uniform bound on
\(\|h\|_{C^{k,1}(K')}\) in terms of \(\|f\|_{C^{k,1}(\mathcal M)}\).
To apply a Euclidean approximation theorem, however, we must further extend \(h\) to a function defined on an open
neighborhood of \(K'\), and ultimately on the full cube.
This is achieved using a tubular neighborhood around \(K'\).

\begin{lemma}[A bounded smooth domain inside the cube]
\label{lem:tubular-domain}
Let \(K'\Subset(0,1)^{2p}\) be a compact embedded \(C^\infty\) submanifold of
\(\mathbb R^{2p}\). Then there exists a radius \(\rho>0\) and an open set
\(\Omega\subset\mathbb R^{2p}\) such that
\[
K'\Subset \Omega \Subset (0,1)^{2p},
\]
\(\Omega\) is bounded, and \(\partial\Omega\) is a \(C^\infty\) hypersurface
(in particular, \(\Omega\) is a \(C^{k,1}\) domain for every \(k\ge 1\)).
\end{lemma}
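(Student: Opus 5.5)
The plan is to take $\Omega$ to be a sublevel set of a smooth function that is comparable to the distance to $K'$. Since $K'\subset[\eta,1-\eta]^{2p}$ by Lemma~\ref{lem:embed-normalize}, we first invoke the tubular neighbourhood theorem for the compact embedded submanifold $K'\subset\mathbb R^{2p}$. This gives an $\varepsilon_0>0$ such that the normal exponential map
\[
E:\{(u,v)\in NK':\ \|v\|<\varepsilon_0\}\to U_{\varepsilon_0}:=\{z:\ \mathrm{dist}(z,K')<\varepsilon_0\}, \qquad E(u,v)=u+v,
\]
is a diffeomorphism. On $U_{\varepsilon_0}$ the nearest-point projection $\pi$ is smooth, and so is the squared distance $\delta(z):=\mathrm{dist}(z,K')^2=\|z-\pi(z)\|^2$. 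We shrink $\varepsilon_0$ so that $\varepsilon_0<\eta$. This guarantees $\overline{U_{\varepsilon_0}}\subset(0,1)^{2p}$.

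Next we fix $\rho\in(0,\varepsilon_0)$ and set $\Omega:=\{z\in U_{\varepsilon_0}:\ \delta(z)<\rho^2\}$, which is the open $\rho$-neighbourhood of $K'$. Clearly $K'\subset\Omega$, and $K'$ is compact, so $K'\Subset\Omega$. The closure satisfies $\overline\Omega\subset\{\mathrm{dist}\le\rho\}\subset U_{\varepsilon_0}$, which is compact and contained in $(0,1)^{2p}$. Hence $\Omega$ is bounded and $\Omega\Subset(0,1)^{2p}$.

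The main point is smoothness of the boundary. We will show $\partial\Omega=\delta^{-1}(\rho^2)$ and that $\rho^2$ is a regular value of $\delta$ on $U_{\varepsilon_0}\setminus K'$. For the gradient, we use $\nabla\delta(z)=2(z-\pi(z))$ on the tubular neighbourhood. This follows by differentiating $\|z-\pi(z)\|^2$ and using that $z-\pi(z)\perp T_{\pi(z)}K'$ while $d\pi$ takes values in $T_{\pi(z)}K'$. Consequently $\|\nabla\delta\|=2\rho\neq0$ on the level set. The implicit function theorem then makes $\delta^{-1}(\rho^2)$ a $C^\infty$ hypersurface. Moreover, near each boundary point $\Omega$ lies on the single side $\{\delta<\rho^2\}$. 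This is exactly the local boundary-straightening required in Definition~\ref{def:Cka-domain}: take $\psi$ to be a local chart with last coordinate $\rho^2-\delta$. It follows that $\Omega$ is a $C^{k,1}$ domain for every $k$. Continuity of $\delta$ together with the nonvanishing gradient also gives $\partial\Omega=\{\delta=\rho^2\}$: every level point is a limit of points with $\delta<\rho^2$ along the direction $-\nabla\delta$.

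If one wants to avoid the gradient identity, an alternative is to use the normal-bundle coordinates directly. In the coordinates $(u,v)$, $\Omega$ corresponds to $\{\|v\|<\rho\}$, and its boundary is the unit-radius-$\rho$ sphere bundle, which is a smooth hypersurface of $NK'$. Pushing it forward by the diffeomorphism $E$ gives the result. I expect verifying the tubular neighbourhood facts to be the only nontrivial input, and I will cite them rather than reprove them.
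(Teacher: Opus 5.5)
Your argument is correct. It produces the same $\Omega$ as the paper (the open $\rho$-tube around $K'$), but it proves the boundary regularity by a different route.

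The paper proceeds as follows:
\begin{itemize}
\item It takes an abstract tubular diffeomorphism $F$ defined near the zero section of $NK'$ and extracts a uniform radius $\rho_0$ by compactness.
\item It sets $\Omega=F(D_\rho)$ with $\rho=\min\{\rho_0,\mathrm{dist}(K',\partial[0,1]^{2p})\}/2$.
\item It shows $\partial\Omega=F(S_\rho)$ is a smooth hypersurface, because the sphere bundle is a regular level set of $\|v\|^2$ in bundle charts.
\item It builds the boundary-straightening chart by writing $\partial\Omega$ locally as a graph via the implicit function theorem, then argues by connectedness of $\Omega\cap B(0,r)$ that $\Omega$ lies on one side.
\end{itemize}

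Your main route differs in two ingredients. First, you use the metric form of the tubular neighbourhood: $E(u,v)=u+v$ maps onto the $\varepsilon_0$-neighbourhood, with smooth nearest-point projection. Second, you use the global defining function $\rho^2-\mathrm{dist}(\cdot,K')^2$, whose gradient has norm $2\rho$ on the level set.

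This gains you two things:
\begin{itemize}
\item The inclusion $\overline\Omega\subset(0,1)^{2p}$ is immediate from the metric description. The paper's Step~4 tacitly needs $F(x,v)=x+v$ as well, but only asserts it.
\item The straightening chart comes directly from completing $\rho^2-\delta$ to local coordinates. The sign of the last coordinate tells you which side $\Omega$ is on. You therefore need neither the local connectedness and side-selection step nor connectedness of $K'$.
\end{itemize}

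The price is reliance on the slightly stronger but standard fact that, for compact $K'$ and small $\varepsilon_0$, the normal map is injective on the uniform disk bundle and its base point is the unique nearest point. Your sphere-bundle alternative is essentially the paper's proof.

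One small fix: the lemma is stated for an arbitrary compact $K'\Subset(0,1)^{2p}$. Instead of quoting the $\eta$ of Lemma~\ref{lem:embed-normalize}, take $\varepsilon_0<\mathrm{dist}(K',\partial[0,1]^{2p})$, which is positive by compactness; nothing else changes.
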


\begin{proof}

For each $x\in K'$, define the normal space to $K'$ at $x$ to be the $p$-dimensional subspace $N_{x}K'\subseteq T_{x}\mathbb{R}^{2p}$ consisting of all vectors that are orthogonal to $T_{x}K'$ with respect to the Euclidean dot product. Furthermore, we define the normal bundle of $K'$ defined as 
\begin{equation*}
    NK' = \{(x,v)\in\mathbb{R}^{2p}\times \mathbb{R}^{2p}:x\in K', v \in N_{x}K'\}.
\end{equation*}
By the tubular neighborhood theorem~\citep[Theorem 6.24]{Lee2012-sc},
there exists an open neighborhood \(\mathcal U\) of the zero section $K_{0} =\{ (x,0): x \in K'\}\subseteq NK'$, an open neighborhood \(U\) of \(K'\) in $\mathbb{R}^{2p}$ and a smooth diffeomorphism
\[
F:\mathcal U\to U\subset\mathbb R^{2p}
\]
such that \(F(x,0)=x\) for all $x\in K'$.

\textbf{Step 1: Find a uniform radius $\rho_{0}>0$ so that the $\rho_{0}$-disk bundle lies in $\mathcal{U}$.}
Since \(\mathcal U\) is open and contains the zero section, for every \(x\in K'\) the point $(x,0)\in NK'$ lies in $\mathcal{U}$. Because $\mathcal{U}$ is open, there exist an open neighborhood $W_{x} \subset NK'$ of $(x,0)$ such that $W_{x}\subset \mathcal{U}$. 
By local triviality of the normal bundle, we can choose $W_{x}$ of the form
\begin{equation*}
    W_{x}\supset \{(y,v): y \in O_{x}, \|v\| < \rho_{x}\}
\end{equation*}
for some open neighborhood $O_{x}\subset K'$ of $x$ and some $\rho_{x}>0$.  Hence
\[
\{(y,v): y\in O_x,\ \|v\|<\rho_x\}\subset\mathcal U.
\]
The family $\{O_{x}\}_{x\in K'}$ is an open cover of $K'$. Since \(K'\) is compact, there exist finitely many points $x_{1},\ldots,x_{m}\in K'$ such that 
\begin{equation*}
    K' \subset \bigcup_{j=1}^{m} O_{x_{j}}.
\end{equation*}
Define
\(\rho_0:=\min_{1\le j\le m}\rho_{x_j}\). Because the minimum of finitely many positive numbers is positive, we have $\rho_{0}>0$. Now take any $(x,v)\in NK'$ with $\|v\|<\rho_{0}$. Since $\{O_{x_{j}}\}$ coves $K'$, then there exists some $j$ with $x \in O_{x_{j}}$. Then $\|v\| < \rho \leq \rho_{x_{j}}$, so 
\begin{equation*}
    (x,v) \in \{(y,w): y\in O_{x_{j}}, \|w\| < \rho_{x_{j}} \} \subset \mathcal{U}.
\end{equation*}
Therefore the $\rho_{0}$-disk bundle
\[
D_{\rho_0}:=\{(x,v)\in NK': \|v\|<\rho_0\}
\]
satisfies 
\begin{equation*}
    D_{\rho_{0}}\subset \mathcal U.
\end{equation*}

\textbf{Step 2: Use the assumption $K'\Subset (0,1)^{2p}$ to keep the tube inside the cube.}
The notation $K' \Subset (0,1)^{2p}$ means that $K'$ is compact and contained in the open set $(0,1)^{2p}$. Equivalently $K'\cap \partial[0,1]^{d}=\emptyset$. Consider the distance from $K'$ to the closet set $\partial[0,1]^{2p}$:
\begin{equation*}
    \delta:=\mathrm{dist}(K',\partial[0,1]^{2p}):= \inf\{\|x-y\|:x\in K', y\in \partial[0,1]^{2p}\}
\end{equation*}
We claim that $\delta>0$. Indeed, the function
$x\mapsto \mathrm{dist}(x,\partial[0,1]^{2p})$
is continuous on $\mathbb{R}^{2p}$, hence its restriction to the compact set $K'$ attains a minimum by the extreme value theorem. Since $K'\subset (0,1)^{2p}$ and $\partial[0,1]^{2p}$ is the boundary of the cube, every $x\in K'$ satisfies $ \mathrm{dist}(x,\partial[0,1]^{2p})>0$. Therefore the minimum is strictly positive, and thus $\delta>0$.

Now  choose \(\rho:=\min\{\rho_0/2,\delta/2\}\) and define
\[
D_\rho:=\{(x,v)\in NK': \|v\|<\rho\},\qquad \Omega:=F(D_\rho)\subset\mathbb R^d.
\]

\textbf{Step 3: $\Omega$ is open and contains $K'$.}
Since $D_\rho$ is open in $NK'$ (it is defined by the strict inequality $\|v\|<\rho$), and $F$ is a diffeomorphism, it follows that
\begin{equation*}
    \Omega = F(D_{\rho})
\end{equation*}
is open in $\mathbb{R}^{2p}$. Moreover, $K'\subset \Omega$: if $x\in K'$, then $(x,0)\in D_{\rho}$, and therefore
\begin{equation*}
    x = F(x,0) \in F(D_{\rho}) = \Omega.
\end{equation*}
Thus $K'\subset \Omega$.
Then \(\Omega\) is open (image of an open set under a diffeomorphism),
and \(K'\subset\Omega\) since \((x,0)\in D_\rho\) and \(F(x,0)=x\).

\textbf{Step 4: $\Omega \Subset (0,1)^{2p}$, hence $\Omega$ is bounded.}
We show that $\Omega\subset (0,1)^{2p}$. Let $z\in \Omega$. Then $z= F(x,v)$ for some $(x,v)\in D_{\rho}$, so $\|v\|< \rho \leq \delta/2$.  For tubular neighborhoods in $\mathbb{R}^{2p}$, one has the geometric interpretation that $F(x,v)$ is obtained by moving from $x\in K'$ in a normal direction by length $\|v\|$; in particular the Euclidean distance from $z$ to $x$ is $\|v\|$, hence
\begin{equation*}
    ||z-x||_{2}\leq \|v\| < \rho \leq \delta/2.
\end{equation*}
Now suppose for contradiction that $z\not\in (0,1)^{2p}$. Since $(0,1)^{2p}$ is the interior of $[0,1]^{2p}$ this implies that $z\in \mathbb{R}^{2p}\setminus (0,1)^{2p}$. and therefore the segment from $x\in (0,1)^{2p}$ to $z\not\in (0,1)^{2p}$ must cross the boundary $\partial[0,1]^{2p}$. In particular,
\begin{equation*}
    \mathrm{dist}(x,\partial[0,1]^{2p})\leq \|z-x\|_{2}.
\end{equation*}
But $\mathrm{dist}(x,\partial[0,1]^{2p})\geq \delta$ for every $x\in K'$ by definition of $\delta$, so we would get
\begin{equation*}
    \delta \leq \|z-x \|_{2} < \delta/2,
\end{equation*}
a contradiction. Hence $z\in (0,1)^{2p}$. This shows that $\Omega \subset (0,1)^{2p}$. To see that $\Omega \Subset (0,1)^{2p}$, it remains to show that $\overline{\Omega}\subset (0,1)^{2p}$. Since $\|v\|\leq \rho$ implies  $(x,v)\in \overline{D_{\rho}}$, the closure satisfies
\begin{equation*}
    \overline{\Omega} = \overline{F(D_{\rho})} = F(\overline{D_{\rho}}),
\end{equation*}
because $F$ is a homeomorphism, hence it maps closures to closures. Now the same distance argument $\|v\|\leq \rho \leq \delta/2$ shows that every point of $F(\overline{D_{\rho}})$ still lies at distance at most $\rho$ from some $x\in K'$, and therefore cannot reach $\partial[0,1]^{2p}$. Thus $\overline{\Omega}\subset (0,1)^{2p}$, i.e. $\Omega\Subset (0,1)^{2p}$.
Since $(0,1)^{2p}$ is bounded and $\Omega \subset (0,1)^{2p}$, it follows immediately that $\Omega$ is bounded.

\textbf{Step 5: $\partial\Omega$ is a $C^{k,1}$ domain.} Finally, consider the sphere bundle
\[
S_\rho:=\{(x,v)\in NK': \|v\|=\rho\}.
\]
We claim that \(S_\rho\) is a \(C^\infty\) hypersurface in \(NK'\).
To see this, recall first that \(NK'\) is a smooth vector bundle (as \(K'\subset\mathbb R^d\) is an embedded
\(C^\infty\) submanifold), so each local trivialization
\[
\tau_O: NK'|_{O}\to O\times\mathbb R^{p}
\]
is a \(C^\infty\) diffeomorphism.  Here \(O\times\mathbb R^p\) is equipped with its standard product smooth
structure, i.e.\ the one generated by the product charts \((\varphi\times \mathrm{id})(U\times\mathbb R^p)\)
whenever \((U,\varphi)\) is a chart on \(O\).
Thus any smoothness statement about subsets of \(NK'|_O\) may be checked in the product coordinates
\(O\times\mathbb R^p\) \citep[Chapter~10]{Lee2012-sc}.


Fix such a trivialization over an open set \(O\subset K'\). In these coordinates define
\[
G:O\times \mathbb R^p\to\mathbb R,\qquad G(y,w)=\|w\|^2.
\]
The map \(G\) is \(C^\infty\) (indeed polynomial in the fiber coordinates), and moreover
\[
S_\rho\cap (O\times\mathbb R^p)=\{(y,w):\|w\|=\rho\}
=\{(y,w):\|w\|^2=\rho^2\}=G^{-1}(\rho^2).
\]
Now take any \((y,w)\in G^{-1}(\rho^2)\). Then \(|w|=\rho>0\), hence \(w\neq 0\). The differential of \(G\) at \((y,w)\) is the linear map
\[
dG_{(y,w)}:T_{(y,w)}(O\times\mathbb R^p)\to\mathbb R.
\]
Using the canonical identification \(T_{(y,w)}(O\times\mathbb R^p)\simeq T_yO\times T_w\mathbb R^p\) and
\(T_w\mathbb R^p\simeq \mathbb R^p\), we may regard a tangent vector at \((y,w)\) as a pair
\((\dot y,\dot w)\in T_yO\times\mathbb R^p\). Since \(G(y,w)=|w|^2\) does not depend on \(y\), its differential has no
\(\dot y\)-contribution, and differentiating in the fiber direction gives
\[
dG_{(y,w)}(\dot y,\dot w)
=\left.\frac{d}{dt}\right|_{t=0}G(y,w+t\dot w)
=\left.\frac{d}{dt}\right|_{t=0}|w+t\dot w|^2
=2\langle w,\dot w\rangle.
\]
Choosing the admissible tangent direction \((\dot y,\dot w)=(0,w)\in T_yO\times\mathbb R^p\) yields
\[
dG_{(y,w)}(0,w)=2\langle w,w\rangle=2\|w\|^2=2\rho^2>0,
\]
so \(dG_{(y,w)}\) is not the zero map. Since the target is \(\mathbb R\), this is equivalent to surjectivity of
\(dG_{(y,w)}\). Therefore \(\rho^2\) is a regular value of \(G\), and by the regular level set theorem (preimage theorem)
\(G^{-1}(\rho^2)\) is a \(C^\infty\) submanifold of codimension \(1\) in \(O\times\mathbb R^p\), i.e.\ a \(C^\infty\)
hypersurface \citep[Corollary~5.14]{Lee2012-sc}.

Since the above description holds in every bundle chart, we now show that this implies that \(S_\rho\) is a global
\(C^\infty\) hypersurface in \(NK'\). Let \(\{(O_i,\tau_i)\}_{i\in I}\) be a smooth bundle atlas for \(NK'\), where
\[
\tau_i: NK'|_{O_i}\longrightarrow O_i\times \mathbb R^p
\]
is a \(C^\infty\) diffeomorphism onto its image and the transition maps
\[
\tau_{ij}:=\tau_j\circ\tau_i^{-1}:(O_i\cap O_j)\times \mathbb R^p \longrightarrow (O_i\cap O_j)\times \mathbb R^p
\]
are \(C^\infty\) and fiberwise linear. In the chart \(\tau_i\), the sphere bundle is represented as
\[
\tau_i(S_\rho\cap NK'|_{O_i})=\{(y,w)\in O_i\times \mathbb R^p:\ \|w\|=\rho\}.
\]
By the regular level set theorem applied to \(G(y,w)=\|w\|^2\), this set is a \(C^\infty\) hypersurface in
\(O_i\times \mathbb R^p\). Now consider an overlap \(O_i\cap O_j\neq\emptyset\). Since \(\tau_{ij}\) is a diffeomorphism,
it sends \(C^\infty\) hypersurfaces to \(C^\infty\) hypersurfaces. Moreover, because \(\tau_{ij}=\tau_j\circ\tau_i^{-1}\),
we have the identity of sets on the overlap:
\[
\tau_{ij}\Bigl(\tau_i(S_\rho\cap NK'|_{O_i\cap O_j})\Bigr)
=\tau_j(S_\rho\cap NK'|_{O_i\cap O_j}).
\]
Thus the local hypersurface charts obtained in \(\tau_i\) and \(\tau_j\) agree on overlaps via the smooth transition
map \(\tau_{ij}\). Consequently, the family \(\{S_\rho\cap NK'|_{O_i}\}_{i\in I}\) defines a globally well-defined
embedded \(C^\infty\) submanifold of codimension \(1\) in \(NK'\), i.e.\ \(S_\rho\) is a global \(C^\infty\) hypersurface
in \(NK'\). \citep[see e.g.\ smooth vector bundles, bundle atlases, and smooth transition maps]{Lee2012-sc}

We claim that \(\partial D_\rho=S_\rho\). Indeed, the fiberwise norm \((x,v)\mapsto \|v\|\) is continuous, so
\[
\overline{D_\rho}=\{(x,v):\|v\|\le \rho\}.
\]
If \(\|v\|<\rho\) then \((x,v)\in D_\rho\) (hence not on the boundary), while if \(\|v\|>\rho\) then \((x,v)\notin \overline{D_\rho}\)
(hence not on the boundary). Thus any boundary point must satisfy \(\|v\|=\rho\), so \(\partial D_\rho\subset S_\rho\).
Conversely, if \(\|v\|=\rho\), then every neighborhood of \((x,v)\) in \(NK'\) contains points with \(\|v\|<\rho\) and points
with \(\|v\|>\rho\), so \((x,v)\in\partial D_\rho\). Hence \(\partial D_\rho=S_\rho\).

Since \(F:\mathcal U\to U\) is a diffeomorphism and we chose \(\rho<\rho_0\) so that
\(\overline{D_\rho}\subset D_{\rho_0}\subset \mathcal U\), it follows that \(F\) restricts to a homeomorphism on
\(\mathcal U\) (in particular on a neighborhood of \(\overline{D_\rho}\)). A homeomorphism maps boundaries to boundaries:
for any \(A\subset \mathcal U\),
\[
F(\partial A)=\partial(F(A)),
\]
since it preserves closures and interiors. Applying this with \(A=D_\rho\) and \(\Omega:=F(D_\rho)\) yields
\[
\partial\Omega=\partial(F(D_\rho))=F(\partial D_\rho)=F(S_\rho).
\]
Finally, since \(S_\rho\) is a \(C^\infty\) hypersurface in \(NK'\) and \(F\) is a diffeomorphism on \(\mathcal U\),
the restriction \(F|_{S_\rho}:S_\rho\to F(S_\rho)=\partial\Omega\) is a \(C^\infty\) diffeomorphism onto its image.
Hence \(\partial\Omega\) is a \(C^\infty\) hypersurface in \(\mathbb R^{2p}\).


To connect this with Definition~\ref{def:Cka-domain}, fix \(z_{\star}\in\partial\Omega\).
Because \(\partial\Omega\) is a \(C^\infty\) hypersurface in \(\mathbb R^{2p}\), it is in particular a
\((2p-1)\)-dimensional embedded \(C^\infty\) submanifold. Hence the tangent space
\(T_{z_{\star}}(\partial\Omega)\subset\mathbb R^{2p}\) is a \((2p-1)\)-dimensional linear subspace.
Composing with a rigid motion of \(\mathbb R^{2p}\) (translation by \(-z_{\star}\) followed by an orthogonal rotation),
which is a \(C^\infty\) diffeomorphism with \(C^\infty\) inverse, we may assume without loss of generality that $z_{\star}=0$ and the the tangent hyperplane at $0\in \partial\Omega$ is the standard “horizontal” hyperplane where the last coordinate is zero, that is
\[
T_{0}(\partial\Omega)=\{x_{2p}=0\}.
\]

Since \(\partial\Omega\) is an embedded \(C^\infty\) hypersurface, it is locally a regular level set: by
\citep[Prop.~5.16]{Lee2012-sc} there exist a neighborhood \(U\) of \(0\) and a \(C^\infty\) submersion
\(\Phi:U\to\mathbb R\) such that
\[
\partial\Omega\cap U=\Phi^{-1}(0)=\{x\in U:\ \Phi(x)=0\}.
\]
In particular, since \(\Phi\) is a submersion, \(d\Phi_0\neq 0\), which in Euclidean coordinates is equivalent to
\(\nabla\Phi(0)\neq 0\).

Moreover, since \(\partial\Omega\cap U=\{\Phi=0\}\) with \(\nabla\Phi(0)\neq 0\), the tangent space of the level set
at \(0\) satisfies
\[
T_{0}(\partial\Omega)=\ker(d\Phi_{0})
=\bigl\{v\in\mathbb R^{2p}:\ d\Phi_{0}(v)=0\bigr\}
=\bigl\{v\in\mathbb R^{2p}:\ \nabla\Phi(0)\cdot v=0\bigr\},
\]
i.e.\ \(T_{0}(\partial\Omega)\) is the hyperplane orthogonal to \(\nabla\Phi(0)\).
The assumption \(T_{0}(\partial\Omega)=\{v\in\mathbb R^{2p}: v_{2p}=0\}\) therefore implies that
\[
\nabla\Phi(0)\in \bigl(T_{0}(\partial\Omega)\bigr)^\perp
=\mathrm{span}\{e_{2p}\},
\]
where \(e_{2p}=(0,\dots,0,1)\) is the \(x_{2p}\)-axis direction. Hence there exists \(\lambda\neq 0\) such that
\[
\nabla\Phi(0)=\lambda e_{2p},
\]
and in particular the last component of the gradient is nonzero:
\[
\partial_{x_{2p}}\Phi(0)=(\nabla\Phi(0))_{2p}=\lambda\neq 0.
\]

By continuity of \(\partial_{x_{2p}}\Phi\), we may (after possibly shrinking \(U\)) assume that
\(\partial_{x_{2p}}\Phi(x)\neq 0\) for all \(x\in U\), which is the condition needed to apply the implicit function
theorem and solve \(\Phi(x',x_{2p})=0\) for \(x_{2p}\) as a function of \(x'\).

By the implicit function theorem applied to the equation \(\Phi(x',x_{2p})=0\), there exist \(r>0\) and a
\(C^\infty\) function \(g:B^{2p-1}(0,r)\to\mathbb R\) such that
\[
\partial\Omega\cap B(0,r)=\{(x',x_{2p})\in B(0,r):\ x_{2p}=g(x')\}.
\]

Next, set \(H(x',x_{2p}):=x_{2p}-g(x')\). Since \(g\) is continuous, \(H\) is continuous, and therefore the sets
\begin{equation*}
\begin{aligned}
U_+ &:= \{(x',x_{2p})\in B(0,r):\ H(x',x_{2p})>0\}
      = \{x_{2p}>g(x')\}\cap B(0,r), \\ 
U_- &:= \{(x',x_{2p})\in B(0,r):\ H(x',x_{2p})<0\}
      = \{x_{2p}<g(x')\}\cap B(0,r)
\end{aligned}
\end{equation*}
are open in \(B(0,r)\). Moreover, \(U_+\cap U_-=\varnothing\), and every point of \(B(0,r)\setminus\partial\Omega\)
satisfies \(x_{2p}\neq g(x')\), hence belongs to exactly one of \(U_+\) or \(U_-\). Thus
\[
B(0,r)\setminus \partial\Omega = U_+ \,\dot\cup\, U_- .
\]

Since \(\Omega\) is a domain (open and connected), the intersection \(\Omega\cap B(0,r)\) is open, and by shrinking
\(r\) if necessary we may assume \(\Omega\cap B(0,r)\) is connected.\footnote{For instance, because open connected sets
are locally path-connected, one can choose \(r>0\) so small that any two points of \(\Omega\cap B(0,r)\) can be joined
by a path contained in \(\Omega\cap B(0,r)\).} As \(\partial\Omega\cap B(0,r)\) separates the ball into exactly the two
components \(U_+\) and \(U_-\), the connected set \(\Omega\cap B(0,r)\) must be contained in exactly one of them.
Replacing \(g\) by \(-g\) (equivalently swapping the labels of the two sides) if necessary, we may assume that
\[
\Omega\cap B(0,r)\subset U_+,
\qquad\text{i.e.}\qquad
\Omega\cap B(0,r)\subset \{(x',x_{2p})\in B(0,r): x_{2p}>g(x')\}.
\]

Define the boundary-straightening map
\[
\psi:B(0,r)\to \psi(B(0,r)),\qquad \psi(x',x_{2p}):=(x',x_{2p}-g(x')).
\]
Since \(g\in C^\infty\), the map \(\psi\) is \(C^\infty\). Its inverse is given explicitly by
\[
\psi^{-1}(y',y_{2p})=(y',y_{2p}+g(y')),
\]
which is also \(C^\infty\). Equivalently, the Jacobian matrix of \(\psi\) is
\[
D\psi(x)=
\begin{pmatrix}
I_{2p-1} & 0\\
-\nabla g(x')^\top & 1
\end{pmatrix},
\qquad\text{so}\qquad
\det D\psi(x)=1,
\]
hence \(\psi\) is a \(C^\infty\) diffeomorphism from \(B(0,r)\) onto its image \(\psi(B(0,r))\),
with inverse \(\psi^{-1}(y',y_{2p})=(y',y_{2p}+g(y'))\). In particular,
\(\psi,\psi^{-1}\in C^\infty\subset C^{k,1}\) for every \(k\ge 1\).

By the one-sided inclusion above, if \((x',x_{2p})\in \Omega\cap B(0,r)\) then \(x_{2p}>g(x')\), so the last
coordinate of \(\psi(x',x_{2p})\) is \(x_{2p}-g(x')>0\). Thus
\[
\psi\bigl(B(0,r)\cap\Omega\bigr)\subset\mathbb R^{2p}_+.
\]
On the other hand, if \((x',x_{2p})\in \partial\Omega\cap B(0,r)\), then \(x_{2p}=g(x')\), so the last coordinate of
\(\psi(x',x_{2p})\) is \(0\), and therefore
\[
\psi\bigl(B(0,r)\cap\partial\Omega\bigr)\subset\partial\mathbb R^{2p}_+.
\]
This verifies the boundary-straightening property in Definition~\ref{def:Cka-domain}. Hence \(\Omega\) is a
\(C^{k,1}\) domain for every \(k\ge 1\).

\end{proof}

\emph{Step 3: Produce a regular domain around \(K'\).}
Lemma~\ref{lem:tubular-domain} constructs a bounded \(C^{k,1}\) domain \(\Omega\) such that
\(K'\Subset \Omega \Subset (0,1)^d\).
This provides a setting where classical extension results for Hölder functions on domains with regular boundary apply.
In the next lemma we combine the tubular retraction \(\pi\) (to define a function on \(\overline\Omega\) agreeing with \(h\) on \(K'\))
with the extension Lemma~\ref{lemma-gilbarg} to obtain a function on the entire cube.

\begin{lemma}[Extension from \(K'\) to the cube with controlled \(C^{k,1}\) norm]
\label{lem:extend-to-cube}
Let \(K'\Subset(0,1)^{2p}\) be a compact embedded \(C^\infty\) submanifold and let \(h\in C^{k,1}(K')\).
Then there exists \(H\in C^{k,1}([0,1]^{2p})\) such that \(H=h\) on \(K'\) and
\[
\|H\|_{C^{k,1}([0,1]^{2p})}\le C_{\mathrm{ext}}\|h\|_{C^{k,1}(K')},
\]
for some constant \(C_{\mathrm{ext}}=C_{\mathrm{ext}}(K',k)\).
\end{lemma}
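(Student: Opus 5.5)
The plan has two stages. First we extend $h$ from $K'$ to the closure of the tubular domain $\Omega$ from Lemma~\ref{lem:tubular-domain}, using the tubular retraction. Then we extend from $\overline{\Omega}$ to the whole cube with Lemma~\ref{lemma-gilbarg}.

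\textbf{Stage 1: retraction.} From the proof of Lemma~\ref{lem:tubular-domain} we have the diffeomorphism $F:\mathcal U\to U$ and the radius $\rho<\rho_0$, with $\overline{D_\rho}\subset D_{\rho_0}\subset\mathcal U$. Define
\[
\pi:=\mathrm{pr}\circ F^{-1}:U\to K',
\]
where $\mathrm{pr}:NK'\to K'$ is the bundle projection. Then $\pi$ is $C^\infty$ and satisfies $\pi|_{K'}=\mathrm{id}$. Set $u:=h\circ\pi$ on a neighborhood of $\overline\Omega$, say on $F(D_{\rho_0})$. Clearly $u=h$ on $K'$.

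\textbf{Stage 1: norm bound.} To check that $u\in C^{k,1}(\overline\Omega)$, cover the compact set $F(\overline{D_\rho})$ by finitely many sets of the form $F(NK'|_{O_i}\cap \overline{D_{\rho}})$, with $O_i$ chart domains of the induced atlas on $K'$ from Lemma~\ref{lem:pullback-holder}. In each piece, write
\[
u=(h\circ\tilde\varphi_i^{-1})\circ(\tilde\varphi_i\circ\pi).
\]
Here the inner map is $C^\infty$ and its derivatives up to order $k+1$ are bounded on a compact set. The outer map lies in $C^{k,1}$. The Faà di Bruno formula plus the product rule for Lipschitz seminorms then give
\[
\|u\|_{C^{k,1}(\overline\Omega)}\le C_1\|h\|_{C^{k,1}(K')},
\]
with $C_1$ depending only on $\pi$, the atlas, and $k$. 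Passing from the local Lipschitz bounds on the finitely many pieces to a global bound on $\overline\Omega$ needs care, since $\overline\Omega$ need not be convex. For this I would use a Lebesgue-number argument: pairs of points closer than the Lebesgue number lie in one piece (after enlarging the pieces slightly so that segments stay inside $F(D_{\rho_0})$). For farther pairs, bound the difference quotient by $2\sup|D^\alpha u|$ divided by the Lebesgue number.

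\textbf{Stage 2: extension to the cube.} Lemma~\ref{lem:tubular-domain} says $\Omega$ is a bounded $C^{k,1}$ domain. Apply Lemma~\ref{lemma-gilbarg} with $\alpha=1$ and with $\Omega'$ an open set satisfying $[0,1]^{2p}\subset\Omega'$, e.g.\ $(-1,2)^{2p}$; note $\overline\Omega\subset\Omega'$. This yields $w\in C^{k,1}(\Omega')$ with $w=u$ on $\Omega$ and
\[
\|w\|_{C^{k,1}(\Omega')}\le C_2\|u\|_{C^{k,1}(\overline\Omega)}.
\]
Set $H:=w|_{[0,1]^{2p}}$. Restricting to a subset does not increase the sup or Lipschitz seminorms, and $K'\subset\Omega$, so $H=h$ on $K'$. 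We conclude with $C_{\mathrm{ext}}=C_1C_2$, which depends on $K'$ (through $\Omega$, $\pi$ and the atlas) and on $k$.

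\textbf{Main obstacle.} The hard part is Stage 1: proving the quantitative $C^{k,1}$ bound for $h\circ\pi$ up to the boundary of $\overline\Omega$, in particular the global Lipschitz seminorm of the $k$-th derivatives on the nonconvex set $\overline\Omega$. The bookkeeping needed there is the Lebesgue-number localization described above, combined with the chain rule in local trivializations.
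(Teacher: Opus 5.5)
Your proposal is correct and follows essentially the same route as the paper: extend $h$ to $\overline\Omega$ by $u=h\circ\pi$ with the tubular retraction $\pi=\mathrm{pr}\circ F^{-1}$, bound $\|u\|_{C^{k,1}(\overline\Omega)}$ by the chain rule in local trivializations, then apply the Gilbarg--Trudinger extension lemma and restrict to the cube. The paper phrases the middle step as $u\circ F$ being independent of the fiber variable, which is the same computation as your $(h\circ\tilde\varphi_i^{-1})\circ(\tilde\varphi_i\circ\pi)$; two details you handle more carefully than the paper are your Lebesgue-number treatment of the global Lipschitz seminorm on the nonconvex set $\overline\Omega$ (which the paper omits) and your reuse of the $\Omega$ from the tubular-domain lemma, which keeps $\overline\Omega$ inside the cube (the paper instead re-chooses $\rho=\rho_0/2$, which does not by itself guarantee this).
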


\begin{proof}
\textbf{Step 1: A tubular neighborhood and a smooth retraction onto \(K'\).}
Let \(F:\mathcal U\to U\subset\mathbb R^{2p}\) be a tubular neighborhood diffeomorphism given by the tubular
neighborhood theorem, defined on an open neighborhood \(\mathcal U\) of the zero section in the normal bundle \(NK'\),
with \(F(x,0)=x\) for all \(x\in K'\) (see e.g.\ \citep[Thm.~6.24]{Lee2012-sc}).
As in Lemma~\ref{lem:tubular-domain}, choose \(\rho_0>0\) such that
\[
D_{\rho_0}:=\{(x,v)\in NK': \|v\|<\rho_0\}\subset \mathcal U,
\]
and then choose \(0<\rho<\rho_0\) so that \(\overline{D_\rho}\subset D_{\rho_0}\subset \mathcal U\)
(e.g.\ \(\rho:=\rho_0/2\)). Define the tubular domain
\[
\Omega:=F(D_\rho)\Subset(0,1)^{2p}.
\]
By construction \(F\) and \(F^{-1}\) are \(C^\infty\) on open neighborhoods of \(\overline{D_\rho}\) and \(\overline{\Omega}\),
respectively.

Let \(\mathrm{pr}:NK'\to K'\) denote the bundle projection \((x,v)\mapsto x\), and define
\[
\pi := \mathrm{pr}\circ F^{-1}
\quad\text{on an open neighborhood of }\overline{\Omega}.
\]
Then \(\pi\) is \(C^\infty\), maps \(\Omega\) into \(K'\), and satisfies \(\pi|_{K'}=\mathrm{id}_{K'}\), because for \(x\in K'\)
we have \(F^{-1}(x)=(x,0)\) and thus \(\pi(x)=\mathrm{pr}(x,0)=x\).

\medskip
\textbf{Step 2: Extend \(h\) from \(K'\) to \(\overline{\Omega}\) by retraction.}
Define
\[
u:\overline\Omega\to\mathbb R,\qquad u(z):=h(\pi(z)).
\]
Then \(u=h\) on \(K'\), since \(\pi|_{K'}=\mathrm{id}_{K'}\).

We claim that \(u\in C^{k,1}(\overline\Omega)\) and that
\[
\|u\|_{C^{k,1}(\overline\Omega)}\le C_0\,\|h\|_{C^{k,1}(K')}
\]
for some constant \(C_0=C_0(K',F,k)\).
To see this, pull back \(u\) to \(D_\rho\) via \(F\):
for \((x,v)\in D_\rho\),
\[
(u\circ F)(x,v)=h\bigl(\pi(F(x,v))\bigr)
= h\bigl(\mathrm{pr}(x,v)\bigr)=h(x).
\]
Thus \(u\circ F\) depends only on the base variable \(x\) and is constant along each fiber direction \(v\).

Fix a finite \(C^\infty\) atlas \(\{(O_i,\varphi_i)\}_{i=1}^N\) on \(K'\) and corresponding local trivializations
\(NK'|_{O_i}\simeq O_i\times\mathbb R^{p}\). In the induced coordinates \((y,w)\in \varphi_i(O_i)\times B^p(0,\rho)\),
the function \(u\circ F\) is represented by
\[
(u\circ F)\circ (\varphi_i^{-1}\times \mathrm{id})(y,w)= h\circ\varphi_i^{-1}(y),
\]
which is independent of \(w\). Consequently:
\begin{itemize}
\item all partial derivatives in the \(w\)-variables vanish;
\item partial derivatives in the \(y\)-variables up to order \(k\) coincide with those of \(h\circ\varphi_i^{-1}\);
\item the Lipschitz seminorm of the \(k\)-th derivatives (in \((y,w)\)) is controlled by the Lipschitz seminorm in \(y\)
since there is no \(w\)-dependence.
\end{itemize}
Therefore, for each chart \(i\),
\[
\|(u\circ F)\circ (\varphi_i^{-1}\times \mathrm{id})\|_{C^{k,1}(\varphi_i(O_i)\times B^p(0,\rho))}
\le
\|h\circ\varphi_i^{-1}\|_{C^{k,1}(\varphi_i(O_i))}.
\]
Since \(u = (u\circ F)\circ F^{-1}\) and \(F^{-1}\) is \(C^\infty\) on a neighborhood of \(\overline{\Omega}\),
the chain rule up to order \(k\) and the Lipschitz control of \(k\)-th derivatives yield the claimed estimate with a constant
\(C_0\) depending on uniform bounds of derivatives of \(F^{-1}\) on \(\overline{\Omega}\) up to order \(k+1\).
This proves \(u\in C^{k,1}(\overline{\Omega})\) and the stated bound.

\medskip
\textbf{Step 3: Apply the \(C^{k,1}\) extension theorem and restrict to the cube.}
By Lemma~\ref{lem:tubular-domain}, \(\Omega\) is a bounded \(C^\infty\) domain, hence a bounded \(C^{k,1}\) domain.
Choose \(\eta>0\) and set
\[
\Omega':=(-\eta,1+\eta)^{2p},
\]
so that \(\overline{\Omega}\subset \Omega'\) and \([0,1]^{2p}\subset \Omega'\).
Applying Lemma~\ref{lemma-gilbarg} (with \(\alpha=1\) and \(n=2p\)) to \(u\in C^{k,1}(\overline\Omega)\),
we obtain \(w\in C^{k,1}(\Omega')\) such that \(w=u\) on \(\Omega\) and
\[
\|w\|_{C^{k,1}(\Omega')}
\le C_1\,\|u\|_{C^{k,1}(\overline\Omega)},
\qquad C_1=C_1(k,1,\Omega,\Omega').
\]
Finally define \(H:=w|_{[0,1]^{2p}}\). Then \(H\in C^{k,1}([0,1]^{2p})\), and since \(K'\subset\Omega\) and \(w=u\) on \(\Omega\)
with \(u=h\) on \(K'\), we have \(H=h\) on \(K'\). Moreover, restriction cannot increase the \(C^{k,1}\) norm, hence
\[
\|H\|_{C^{k,1}([0,1]^{2p})}
\le \|w\|_{C^{k,1}(\Omega')}
\le C_1\,\|u\|_{C^{k,1}(\overline\Omega)}
\le C_1 C_0\,\|h\|_{C^{k,1}(K')}.
\]
Setting \(C_{\mathrm{ext}}:=C_0C_1\) completes the proof.
\end{proof}

\emph{Step 4: Reduce to Euclidean approximation on \([0,1]^{2p}\).}
Lemma~\ref{lem:extend-to-cube} produces a function \(H\in C^{k,1}([0,1]^{2p})\) such that \(H=h\) on \(K'\),
together with the norm control \(\|H\|_{C^{k,1}([0,1]^{2p})}\lesssim \|h\|_{C^{k,1}(K')}\).
At this point the problem is purely Euclidean: approximate \(H\) uniformly on the cube by a ReLU network.
This is exactly the content of Theorem~\ref{thm-yarotsky-instantiated}, after a normalization to the Hölder unit ball.

\begin{lemma}[Yarotsky approximation on the cube after normalization]
\label{lem:yarotsky-apply}
Let \(H\in C^{k,1}([0,1]^{2p})\) and set \(B:=\|H\|_{C^{k,1}([0,1]^{2p})}\).
If \(B>0\), define \(\widetilde H:=H/B\). Then \(\|\widetilde H\|_{C^{k,1}([0,1]^{2p})}\le 1\),
and for every \(0<\varepsilon<1\), Theorem~\ref{thm-yarotsky-instantiated} yields a ReLU
network \(\widetilde g_\varepsilon\) such that
\[
\|\widetilde H-\widetilde g_\varepsilon\|_{\infty}\le \frac{\varepsilon}{B}.
\]
Consequently, \(g_\varepsilon:=B\widetilde g_\varepsilon\) satisfies
\[
\|H-g_\varepsilon\|_{\infty}\le \varepsilon,
\]
with the corresponding size/depth bounds from Theorem~\ref{thm-yarotsky-instantiated}
evaluated at accuracy \(\varepsilon/B\).
\end{lemma}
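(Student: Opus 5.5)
The plan is to reduce everything to Theorem~\ref{thm-yarotsky-instantiated} by rescaling. First dispose of the degenerate case $B=0$: then $H\equiv 0$ only up to constants. Under the norm definition in the appendix, the sum starts at $j=1$, so $B=0$ forces all first derivatives of $H$ to vanish and $H$ to be constant. A constant is realized exactly by a network consisting only of the output bias, so the claim holds trivially with $W_\varepsilon=1$ and minimal depth. Hence assume $B>0$.

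For $B>0$, set $\widetilde H:=H/B$. Since the $C^{k,1}$ norm is positively homogeneous (every seminorm and every sup term scales linearly in $H$), $\|\widetilde H\|_{C^{k,1}([0,1]^{2p})}=1$. Therefore $\widetilde H\in\mathcal H_{r,2p}$ with $r=k+1$ (Lipschitz exponent $\alpha=1$), matching definition~\eqref{eq-def-holder-unit-ball}.

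Next, apply Theorem~\ref{thm-yarotsky-instantiated} with $n=2p$, $r=k+1$ and target accuracy $\varepsilon':=\varepsilon/B$. The one point needing care is that the theorem is stated only for accuracies in $(0,1)$, whereas $\varepsilon/B$ may be $\ge 1$ when $B<\varepsilon$. In that case I would either apply the theorem at accuracy $\min\{\varepsilon/B,1/2\}$, which only strengthens the conclusion, or note that the trivial zero network already achieves error $\|\widetilde H\|_\infty$. The first option is cleaner, and it keeps the stated complexity bounds up to replacing $\varepsilon/B$ by this minimum. This gives a network $\widetilde g_\varepsilon\in\mathcal{NN}^{\sigma}_{2p,L,W}$ with $\|\widetilde H-\widetilde g_\varepsilon\|_\infty\le\varepsilon/B$, where $W\le(\widetilde C B/\varepsilon)^{1/\beta}$ and $L\le C\,W^{2p\beta/r-1}$.

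Finally, define $g_\varepsilon:=B\widetilde g_\varepsilon$. Multiplying the output weights and output bias by $B$ keeps the architecture unchanged, so $g_\varepsilon$ has the same $W$ and $L$. Moreover $\|H-g_\varepsilon\|_\infty=B\|\widetilde H-\widetilde g_\varepsilon\|_\infty\le\varepsilon$. The main obstacle is purely bookkeeping: checking that the norm conventions make $\widetilde H$ lie in the unit ball, and handling the $\varepsilon/B\ge1$ edge case. There is no real analytic difficulty.
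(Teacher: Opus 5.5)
Your proposal is correct and matches the paper's argument, whose proof is the one line ``rescale by $B$, apply Theorem~\ref{thm-yarotsky-instantiated} to $\widetilde H$, and multiply the output layer by $B$.'' Your treatment of the case $\varepsilon/B\ge 1$ fills a point the paper's ``immediate'' glosses over, and applying the theorem at accuracy $\min\{\varepsilon/B,1/2\}$ is the right choice there: the zero network's error $\|\widetilde H\|_\infty$ is not controlled by the appendix's $C^{k,1}$ norm, because that norm's sum starts at $j=1$.
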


\begin{proof}
Immediate from the scaling \(\widetilde H=H/B\) and Theorem~\ref{thm-yarotsky-instantiated}.
\end{proof}

\emph{Step 5: Pull the network back to \(\mathcal M\).}
Lemma~\ref{lem:yarotsky-apply} yields a network \(g_\varepsilon\) approximating \(H\) uniformly on \([0,1]^{2p}\).
Since \(H\) agrees with \(h\) on \(K'\), and \(h\) is the pushforward of \(f\) through \(\varphi_{\star}\),
we recover an approximant on \(\mathcal M\) simply by composition:
\(\widehat g_\varepsilon := g_\varepsilon\circ\varphi_{\star}\).
The final lemma records that the uniform approximation error on the cube immediately implies uniform approximation on \(\mathcal M\).

\begin{lemma}[Pullback of the Euclidean approximant to \(\mathcal M\)]
\label{lem:pullback-approximant}
Let \(\varphi_{\star}:\mathcal M\to K'\subset[0,1]^{2p}\) be as above and let \(H\in C^{k,1}([0,1]^{2p})\)
satisfy \(H=h\) on \(K'\), where \(h=f\circ\varphi^{-1}_{\star}\).
If \(g_\varepsilon\) satisfies \(\|H-g_\varepsilon\|_{\infty}\le\varepsilon\),
then the composed function
\[
\widehat g_\varepsilon := g_\varepsilon\circ \varphi_{\star}
\]
satisfies
\[
\sup_{x\in\mathcal M}|f(x)-\widehat g_\varepsilon(x)|\le \varepsilon.
\]
\end{lemma}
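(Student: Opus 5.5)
The plan is a direct pointwise comparison. Since $\varphi_\star$ maps $\mathcal M$ into $K'$, the uniform bound on the cube applies at every point of the image. Fix an arbitrary $x\in\mathcal M$ and set $u:=\varphi_\star(x)$. By Lemma~\ref{lem:embed-normalize}, $u\in K'\subset(0,1)^{2p}\subset[0,1]^{2p}$, so $u$ lies in the domain on which $\|H-g_\varepsilon\|_\infty\le\varepsilon$ is asserted.

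Next I identify $f(x)$ with $H(u)$. Since $\varphi_\star$ is an embedding, it is a bijection onto $K'$, and therefore $h(u)=f(\varphi_\star^{-1}(u))=f(x)$. Because $H=h$ on $K'$ (as provided by Lemma~\ref{lem:extend-to-cube}), we get $H(u)=f(x)$. Also, by definition, $\widehat g_\varepsilon(x)=g_\varepsilon(u)$.

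Combining these gives
\[
|f(x)-\widehat g_\varepsilon(x)|=|H(u)-g_\varepsilon(u)|\le\sup_{z\in[0,1]^{2p}}|H(z)-g_\varepsilon(z)|\le\varepsilon .
\]
Taking the supremum over $x\in\mathcal M$ then yields the claim.

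There is no genuine obstacle here. The only point requiring care is making sure the sup-norm in the hypothesis is taken over a set containing $K'$, namely $[0,1]^{2p}$, rather than only over $\Omega$. One should also note that nothing beyond injectivity of $\varphi_\star$ (so that $\varphi_\star^{-1}$ is defined on $K'$) and the agreement $H|_{K'}=h$ is used; in particular, no regularity of $g_\varepsilon$ is needed.
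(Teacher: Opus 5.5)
Your proposal is correct and follows essentially the same argument as the paper: evaluate at $u=\varphi_\star(x)\in K'$, use $H(u)=h(u)=f(x)$ and $\widehat g_\varepsilon(x)=g_\varepsilon(u)$, bound by the sup-norm over $[0,1]^{2p}$, then take the supremum over $x\in\mathcal M$. Your remark that only injectivity of $\varphi_\star$ and $H|_{K'}=h$ are used is accurate.
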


\begin{proof}
For \(x\in\mathcal M\), \(\varphi_{\star}(x)\in K'\) and \(H(\varphi_{\star}(x))=h(\varphi_{\star}(x))=f(x)\). Thus
\[
|f(x)-\widehat g_\varepsilon(x)|
=|H(\varphi_{\star}(x))-g_\varepsilon(\varphi_{\star}(x))|
\le \sup_{u\in[0,1]^d}|H(u)-g_\varepsilon(u)|
\le \varepsilon.
\]
Taking the supremum over \(x\in\mathcal M\) yields the claim.
\end{proof}
\emph{Completion of the proof.}
With these ingredients in place, the proof of Theorem~\ref{thm:neural-rcpm-approx-psi}
is obtained by concatenating the lemmas: embedding and normalization (Lemma~\ref{lem:embed-normalize}),
norm-controlled pullback (Lemma~\ref{lem:pullback-holder}), norm-controlled extension to the cube
(Lemmas~\ref{lem:tubular-domain}--\ref{lem:extend-to-cube}), Euclidean approximation on the cube
(Lemma~\ref{lem:yarotsky-apply}), and pullback of the approximant (Lemma~\ref{lem:pullback-approximant}).

\begin{proof}[Proof of Theorem~\ref{thm:neural-rcpm-approx-psi}]
Let \(\psi\in C^{kp,1}(\mathcal M)\) and fix \(\varepsilon\in(0,1)\). Define
\[
M:=\|\psi\|_{C^{kp,1}(\mathcal M)}.
\]
If \(M=0\), then $\|\psi\|_{\infty}\leq \|\psi\|_{C^{kp,1}(\mathcal{M})} =0 $, hence $\psi\equiv 0$ and the claim is trivial. From now on assume $M>0$.

If $\varepsilon\geq M$, then $\|\psi\|_{\infty}\leq M \leq \varepsilon$, so the constant network $\hat{g}_{\varepsilon}\equiv 0$ yields $\|\psi-\hat{g}_{\varepsilon}\circ \varphi_{\star}\|_{\infty} \leq \varepsilon$ for any feature map $\varphi_{\star}$. Thus we may assume
\begin{equation*}
    0<\varepsilon<M, \qquad \text{so that}\qquad 0< \frac{\varepsilon}{M} <1.
\end{equation*}
Define the normalized function
\begin{equation*}
    \tilde{\psi}:= \frac{1}{M}\psi,\qquad \text{so that}\qquad \|\tilde{\psi}\|_{C^{kp,1}(\mathcal{M})}=1.
\end{equation*}
By Lemma~\ref{lem:embed-normalize}, choose a smooth embedding $\varphi^{\star}:\mathcal{M}\to K'\subset [0,1]^{2p}$. Define
\begin{equation*}
    \widetilde{h}:= \tilde{\psi}\circ \varphi_{\star}^{-1} \quad \text{on $K'$}. 
\end{equation*}
By Lemma~\ref{lem:pullback-holder}, we have $\widetilde{h}\in C^{kp,1}(K')$ and 
\begin{equation*}
    \|\widetilde{h}\|_{C^{kp,1}(K')} \leq C_{\mathrm{pb}} \|\tilde{\psi}\|_{C^{kp,1}(\mathcal{M})} = C_{\mathrm{pb}}.
\end{equation*}
By Lemma~\ref{lem:extend-to-cube}, there exists $\widetilde{H}\in C^{kp,1}([0,1]^{2p})$ such that $\widetilde{H} = \widetilde{h}$ on $K'$ and 
\begin{equation*}
    \|\widetilde{H}\|_{C^{kp,1}([0,1]^{2p})} \leq C_{\mathrm{ext}} \|\widetilde{h}\|_{C^{kp,1}(K')} \leq C_{\mathrm{ext}}C_{\mathrm{pb}}.
\end{equation*}
Set 
\begin{equation*}
    \widetilde{B}:= \|\widetilde{H}\|_{C^{kp,1}([0,1]^{2p})}.
\end{equation*}
Since $M>0$, $\tilde{\psi}\not\equiv 0$, hence $\widetilde{h}\not\equiv 0$ on $K'$. Because $\widetilde{H}=\widetilde{h}$ on $K'$, $\widetilde{H}\not\equiv 0$ on $[0,1]^{2p}$, hence  $\|\widetilde{H}\|_{\infty}>0$. Since $\|\widetilde{H}\|_{C^{kp,1}([0,1]^{2p})}\geq \|\widetilde{H}\|_{\infty}$, it follows that
\begin{equation*}
    \widetilde{B}>0.
\end{equation*}
Normalize once more:
\begin{equation*}
    \overline{H} := \frac{1}{\widetilde{B}} \widetilde{H},\quad \text{so that}\quad \|\overline{H} \|_{C^{kp,1}([0,1]^{2p})} =1,
\end{equation*}
Define the accuracy parameter
\begin{equation*}
    \delta:= \frac{\varepsilon/M}{\widetilde{B}}.
\end{equation*}
If $\delta \geq 1$, then the zero network already approximates $\overline{H}$ within error $1$, hence within error $\delta$, so the approximation step is trivial. Thus we may assume $\delta\in(0,1)$. 

Now we apply Theorem~\ref{thm-yarotsky-instantiated} with the following choices:
\begin{equation*}
    n:=2p, \qquad r:= kp + 1, \qquad \beta:= \frac{3r}{2n} = \frac{3(kp+1)}{4p}.
\end{equation*}
This choice satisfies 
\begin{equation*}
    \frac{r}{n}< \frac{kp+1}{2p} < \frac{3(kp+1)}{4p} = \beta \leq \frac{kp+1}{p} = \frac{2r}{n},
\end{equation*}
so Theorem~\ref{thm-yarotsky-instantiated} applies.  Therefore, there exist $W_{\varepsilon}$, $L_{\varepsilon}$ and a network
\begin{equation*}
    \bar{g}_{\varepsilon} \in \mathcal{NN}^{\sigma}_{2p,W_{\varepsilon},L_{\varepsilon}}
\end{equation*}
such that
\begin{equation*}
    \|\overline{H} - \bar{g}_{\varepsilon}\|_{\infty} \leq \delta,
\end{equation*}
and 
\begin{equation*}
    W_{\varepsilon}\leq \left(\frac{\tilde{C}_{r,2p,\beta,\sigma}}{\delta}\right)^{1/\beta}, \qquad L_{\varepsilon} \leq C_{r,2p,\sigma} W^{\beta(2p)/r-1}.
\end{equation*}
Rescale back by defining
\begin{equation*}
    \tilde{g}_{\varepsilon}:= \widetilde{B}\bar{g}_{\varepsilon}.
\end{equation*}
Then
\begin{equation*}
    \|\widetilde{H}-\tilde{g}_{\varepsilon}\|_{\infty}= \widetilde{B} \|\overline{H}-\bar{g}_{\varepsilon}\|_{\infty} \leq  \widetilde{B} \delta = \frac{\varepsilon}{M}.
\end{equation*}
Finally define
\begin{equation*}
    g_{\varepsilon}:= M\tilde{g}_{\varepsilon}.
\end{equation*}
Scaling the output by $M$ (and previously by $\widetilde{B}$) is implemented by scaling the final affine layer, hence it does \emph{not} change width $W_{\varepsilon}$ or depth $L_{\varepsilon}$. Moreover, 
\begin{equation*}
    \|M\widetilde{H} - g_{\varepsilon}\|_{\infty} = M \|\widetilde{H}-\tilde{g}_{\varepsilon}\|_{\infty}\leq M\cdot \frac{\varepsilon}{M} = \varepsilon.
\end{equation*}

Set $H:= M\widetilde{H}$. On $K'$ we have $H= M\widetilde{h} = h$, where
\begin{equation*}
    h:= \psi\circ \varphi^{-1}_{\star},
\end{equation*}
because $\widetilde{h}= (1/M) h$.

Define the pullback approximant on $\mathcal{M}$ by 
\begin{equation*}
    \hat{\psi}_{\varepsilon}:= \hat{g}_{\varepsilon}\circ \varphi_{\star}.
\end{equation*}
For any $x\in\mathcal{M}$, let $u:=\varphi_{\star}(x)\in K'$. Then $\psi(x) = h(u) = H(u)$ and $\hat{\psi}_{\varepsilon}(x) = \hat{g}_{\varepsilon}(u)$, hence
\begin{equation*}
    |\psi(x) - \hat{\psi}_{\varepsilon}(x)| = |H(u) - g_{\varepsilon}(u)|\leq \|H- g_{\varepsilon}\|_{\infty} \leq \varepsilon.
\end{equation*}
Taking the supremum over $x\in\mathcal{M}$ yields
\begin{equation*}
    \|\psi-\hat{\psi}_{\varepsilon}\|_{\infty} \leq \varepsilon.
\end{equation*}
It remains to extract the stated rates. Since $\delta = (\varepsilon/M)/\widetilde{B}$,
\begin{equation*}
    W_{\varepsilon} \leq \left(\frac{\widetilde{C}_{r,2p,\beta,\sigma}}{\delta}\right)^{1/\beta} = \left(\frac{\widetilde{C}_{r,2p,\beta,\sigma}\widetilde{B}M}{\delta}\right)^{1/\beta} .
\end{equation*}
Using $\widetilde{B}\leq C_{\mathrm{ext}}C_{\mathrm{pb}}$, we get
\begin{equation*}
    W_{\varepsilon} \leq C\left(\frac{M}{\varepsilon}\right)^{1/\beta},\qquad C:= \left(\widetilde{C}_{r,2p,\beta,\sigma}C_{\mathrm{ext}}C_{\mathrm{pb}}\right)^{1/\beta}.
\end{equation*}
With $\beta = \frac{3(kp+1)}{4p}$, we have
\begin{equation*}
    \frac{1}{\beta} = \frac{4p}{3(kp+1)},
\end{equation*}
so 
\begin{equation*}
    W_{\varepsilon}\leq C \left(\frac{M}{\varepsilon}\right)^{\frac{4p}{3(kp+1)}}.
\end{equation*}
Next, compute the exponent in the depth bound:
\begin{equation*}
    \beta \frac{2p}{kp+1}-1 = \frac{3(kp+1)}{4p}\frac{2p}{kp+1}-1 = \frac{3}{2}-1 = \frac{1}{2}.
\end{equation*}
Hence
\begin{equation*}
    L_{\varepsilon}\leq C_{r,2p,\sigma} W_{\varepsilon}^{1/2} \leq C' \left(\frac{M}{\varepsilon}\right)^{\frac{2p}{3(kp+1)}}
\end{equation*}
for a constant $C'$ depending only on $(\mathcal{M},\varphi_{\star},k,\sigma)$. 

Finally, since $\psi$ is fixed, $M=\|\psi\|_{C^{kp,1}(\mathcal{M})}$ is a fixed constant; absorbing $M$ into the implicit constant gives exactly the stated polynomial rates:
\begin{equation*}
    W_{\varepsilon}=\mathcal{O}\left(\varepsilon^{-\frac{4p}{3(kp+1)}} \right),\qquad L_{\varepsilon}= \mathcal{O}\left(\varepsilon^{-\frac{2p}{3(kp+1)}}\right).
\end{equation*}
This concludes the proof.
\end{proof}

%% file: proofs/proof-prop-uncursed-potential.tex
\begin{proof}
Fix \(0<\varepsilon<1\). Define the \emph{prepotential}
\[
\psi_{\star} := \phi_\star^{c}.
\]
By assumption, \(\psi_{\star}\in C^{kp,1}(\mathcal M)\). Let
\[
\phi := \psi_{\star}^{c}.
\]
Since \(\phi_\star\) is \(c\)-concave, we have \(\phi=\phi_\star\).

\paragraph{Step 1: Uniform approximation of the prepotential \(\psi_{\star}\).}
Apply Theorem~\ref{thm:neural-rcpm-approx-psi} to the function \(\psi_{\star}\in C^{kp,1}(\mathcal M)\). Then there exist
a feature map \(\varphi_\star:\mathcal M\to\mathbb R^{2p}\) satisfying Assumption~\ref{ass-feature-regularity},
integers \(W_\varepsilon,L_\varepsilon\in\mathbb N\), and a network
\(\hat g_\varepsilon\in \mathcal{NN}^{\sigma}_{2p,W_\varepsilon,L_\varepsilon}\) such that the pullback network
\[
\hat\psi_\varepsilon := \hat g_\varepsilon\circ \varphi_\star
\]
satisfies
\begin{equation}\label{eq:prepot-unif}
\|\psi_{\star}-\hat\psi_\varepsilon\|_\infty < \varepsilon,
\end{equation}
and \(W_\varepsilon,L_\varepsilon\) obey the bounds stated in Theorem~\ref{thm:neural-rcpm-approx-psi}.

\paragraph{Step 2: Transfer the error through the \(c\)-transform.}
Define the \(c\)-transform $\hat\phi_\varepsilon := \hat\psi_\varepsilon^{c}$.
We claim that
\begin{equation}\label{eq:c-lip-claim}
\|\phi-\hat\phi_\varepsilon\|_\infty \le \|\psi_{\star}-\hat\psi_\varepsilon\|_\infty.
\end{equation}
To prove this, set \(\delta := \|\psi_{\star}-\hat\psi_\varepsilon\|_\infty\). Then for every \(y\in\mathcal M\),
\[
\psi_{\star}(y)-\delta \le \hat\psi_\varepsilon(y)\le \psi_{\star}(y)+\delta.
\]
Fix \(x\in\mathcal M\). For every \(y\in\mathcal M\) we therefore have
\[
c(x,y)-\psi_{\star}(y)-\delta
\le
c(x,y)-\hat\psi_\varepsilon(y)
\le
c(x,y)-\psi_{\star}(y)+\delta.
\]
Taking the infimum over \(y\in\mathcal M\) yields
\[
\inf_{y}\bigl(c(x,y)-\psi_{\star}(y)\bigr)-\delta
\le
\inf_{y}\bigl(c(x,y)-\hat\psi_\varepsilon(y)\bigr)
\le
\inf_{y}\bigl(c(x,y)-\psi_{\star}(y)\bigr)+\delta.
\]
By definition of the \(c\)-transform (with the convention \(\eta^c(x)=\inf_{y}(c(x,y)-\eta(y))\)), this is exactly
\[
\phi(x)-\delta \le \hat\phi_\varepsilon(x)\le \phi(x)+\delta.
\]
Hence \(|\phi(x)-\hat\phi_\varepsilon(x)|\le \delta\) for all \(x\in\mathcal M\), and taking the supremum over \(x\)
proves \eqref{eq:c-lip-claim}.

Combining \eqref{eq:c-lip-claim} with \eqref{eq:prepot-unif} gives
\[
\|\phi-\hat\phi_\varepsilon\|_\infty
\le
\|\psi_{\star}-\hat\psi_\varepsilon\|_\infty
<\varepsilon.
\]
Recalling \(\phi=\phi_\star\), we obtain
\[
\|\phi_\star-\hat\phi_\varepsilon\|_\infty<\varepsilon.
\]

\paragraph{Step 3: Complexity bounds.}
The only neural approximation step is Step~1, where \(\hat\psi_\varepsilon=\hat g_\varepsilon\circ\varphi_\star\) is
constructed using Theorem~\ref{thm:neural-rcpm-approx-psi}. Step~2 applies the deterministic operator
\(\eta\mapsto \eta^{c}\) and does not alter the architecture of \(\hat g_\varepsilon\). Therefore
\(\hat g_\varepsilon\) satisfies exactly the same width/depth bounds as in Theorem~\ref{thm:neural-rcpm-approx-psi}.
This completes the proof.
\end{proof}

%% file: proofs/proof-thm-convergence-transport-maps.tex
Throughout this appendix, to streamline the notation and improve readability, we write $\phi$ in place of $\phi_{\star}$ and $\psi$ in place of $\psi_{\star}$. Before our proof, we review a few technical results from \citet{CorderoErausquin2001}. We have the following characterizations about $c$-concave functions on $\mathcal{M}$.

\begin{lemma}{\citep[Lemma 3.3]{CorderoErausquin2001}}\label{lemma:c-concave-properties}
    Let $\phi$ be a $c$-concave function on $\mathcal{M}$ and define $T(x)=\exp_x(-\nabla\phi(x))$.
    \begin{itemize}
        \item The function $\phi$ is Lipschitz and hence differentiable almost everywhere on $\mathcal{M}$;
        \item Fix any $x\in\mathcal{M}$ where $\phi$ is differentiable. Then $y=T(x)$ if and only if $y$ minimizes
        \begin{equation}\label{eq:min-y}
            c(x,y)-\phi(x)-\phi^c(y)
        \end{equation}
        over $y'\in\mathcal{M}$. In the latter case one has $\nabla\phi(x)  = \nabla d^2_y(x)$.
    \end{itemize}
\end{lemma}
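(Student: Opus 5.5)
The plan has two parts: regularity of $\phi$, then the minimizer characterization via a touching argument that reduces everything to first-order information about $z\mapsto \tfrac12 d(z,y)^2$ at $z=x$.

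\textbf{Lipschitz regularity.} Since $\phi$ is $c$-concave, write $\phi=\psi^c=\inf_{y}\bigl(c(\cdot,y)-\psi(y)\bigr)$, where by the discussion after \eqref{eq-cc-identity} we may take $\psi=\phi^c$ continuous. By Lemma~1 of \citet{McCann2001}, each $c(\cdot,y)=\tfrac12 d(\cdot,y)^2$ is Lipschitz on $\mathcal M$ with a constant independent of $y$ (for instance $\mathrm{diam}(\mathcal M)$).
\begin{itemize}
\item An infimum of a uniformly Lipschitz family is Lipschitz, provided it is finite somewhere.
\item Finiteness holds because $\phi$ is not identically $-\infty$.
\item Rademacher's theorem on Riemannian manifolds \citep[Theorem 5.5.7]{burago2001course} then gives differentiability $\mathrm{vol}_{\mathcal M}$-a.e.
\end{itemize}

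\textbf{Setup at a differentiability point.} Fix $x$ with $\phi$ differentiable at $x$. The quantity \eqref{eq:min-y} is continuous in $y$ and nonnegative, since $\phi(x)+\phi^c(y)\le c(x,y)$ by definition of $\phi^c$. Because $\phi=\phi^{cc}$, its infimum over $y$ equals $0$, and by compactness it is attained. So the minimizers are exactly the points $y$ with $\phi(x)+\phi^c(y)=c(x,y)$, that is $\partial^c\phi(x)$.

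\textbf{Forward direction.} Take any such minimizer $y$. For every $z\in\mathcal M$ we have $\phi(z)\le c(z,y)-\phi^c(y)$, with equality at $z=x$. Hence $z\mapsto c(z,y)-\phi(z)$ attains its minimum at $x$.
\begin{itemize}
\item Because $\phi$ is differentiable at $x$, the function $f_y:=\tfrac12 d(\cdot,y)^2$ is subdifferentiable at $x$, with $\nabla\phi(x)$ a subgradient.
\item On the other hand, $f_y$ is semiconcave, so it always admits a supergradient at $x$.
\item A function that is both sub- and superdifferentiable at a point is differentiable there, so $f_y$ is differentiable at $x$ with $\nabla f_y(x)=\nabla\phi(x)$.
\end{itemize}

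\textbf{Excluding the cut locus.} I then need that differentiability of $f_y$ at $x$ forces $x\notin\mathrm{Cut}(y)$; the case $x=y$ is trivial, with gradient $0$ and $T(x)=x$. I expect this to be the main obstacle, and I would split it via Lemma~\ref{lemma:two-case}.
\begin{itemize}
\item \emph{Two distinct minimizing geodesics.} The one-sided derivatives of $f_y$ in the directions of these geodesics produce two distinct supergradients, which contradicts differentiability.
\item \emph{Conjugate cut points.} Here I would first try McCann's argument \citep[Prop.~6]{McCann2001}: at a conjugate point, $f_y$ fails to be subdifferentiable because its second-order behavior along a Jacobi field direction drops below any quadratic touching from below. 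That contradicts the subgradient $\nabla\phi(x)$ found above.
\end{itemize}

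\textbf{Conclusion.} Once $x\notin\mathrm{Cut}(y)\cup\{y\}$, \eqref{eq:grad-half-sqdist} gives $\nabla\phi(x)=\nabla f_y(x)=-\log_x(y)$, so $y=\exp_x(-\nabla\phi(x))=T(x)$. This proves that every minimizer equals $T(x)$ and identifies $\nabla\phi(x)$ with the gradient of the squared distance to $y$.

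\textbf{Converse.} A minimizer $y'$ exists by compactness, and the forward direction forces $y'=T(x)$. Hence $T(x)$ is itself a minimizer, and it is the unique one.
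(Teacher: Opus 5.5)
Your Lipschitz argument, the identification of the minimizers of \eqref{eq:min-y} with $\partial^c\phi(x)$, the touching argument showing that $f_y$ is differentiable at $x$ with $\nabla f_y(x)=\nabla\phi(x)$, and the converse are all sound. They match the standard route behind the cited lemma; the paper itself only cites it.

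\textbf{The gap.} It lies in the step \emph{Excluding the cut locus}. The claim that differentiability of $f_y$ at $x$ forces $x\notin\mathrm{Cut}(y)$ is false, so your conjugate-point bullet cannot be completed.
\begin{itemize}
\item The semiconcave function $f_y=\tfrac12 d(\cdot,y)^2$ is differentiable at $x$ exactly when $x$ and $y$ are joined by a unique minimizing geodesic.
\item Cut loci generally contain such points, which are then conjugate. An example is the endpoints of the cut-locus arc of a generic point on a triaxial ellipsoid.
\item Take such an $x$ and the $c$-concave function $\phi:=c(\cdot,y)$, which is the $c$-transform of the function equal to $0$ at $y$ and $-\infty$ elsewhere. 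Then $\phi$ is differentiable at $x$ and $y\in\partial^c\phi(x)$, yet $x\in\mathrm{Cut}(y)$.
\end{itemize}

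\textbf{Why your conjugate-point argument fails.} Touching $f_y$ from below by $\phi$ only gives the first-order bound $f_y(\exp_x w)\ge f_y(x)+\langle\nabla\phi(x),w\rangle+o(\|w\|)$. This is not a quadratic support from below. So whatever $f_y$ does at second order along a Jacobi-field direction, it contradicts nothing. Ruling out conjugate points needs second-order information on $\phi$ at $x$, namely an Aleksandrov Hessian as in the set $\Omega$ used later in the paper, and this lemma does not assume it. Consequently your final appeal to \eqref{eq:grad-half-sqdist} and to $\log_x(y)$ is unjustified.

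\textbf{The repair.} You can avoid the cut locus altogether.
\begin{itemize}
\item Let $\gamma:[0,1]\to\mathcal M$ be any minimizing geodesic from $x$ to $y$, and let $P_t$ denote parallel transport along $\gamma$. Compare with the curves $t\mapsto\exp_{\gamma(t)}\bigl((1-t)P_t w\bigr)$, which run from $\exp_x w$ to $y$.
\item The first variation of energy gives $f_y(\exp_x w)\le f_y(x)-\langle\dot\gamma(0),w\rangle+O(\|w\|^2)$. So $-\dot\gamma(0)$ is a supergradient of $f_y$ at $x$.
\item Since $f_y$ is differentiable at $x$, every supergradient equals $\nabla f_y(x)=\nabla\phi(x)$. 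Hence $\dot\gamma(0)=-\nabla\phi(x)$ for every minimizing $\gamma$.
\item The minimizing geodesic is therefore unique, and $y=\gamma(1)=\exp_x(\dot\gamma(0))=\exp_x(-\nabla\phi(x))=T(x)$.
\end{itemize}
The identity $\nabla\phi(x)=\nabla f_y(x)$ is then meant as the gradient at a point of differentiability, not through \eqref{eq:grad-half-sqdist}. Your two-geodesics bullet already contains this computation. Use it to conclude directly, rather than as one half of a case split on Lemma~\ref{lemma:two-case}.
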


Fix $U\subseteq \mathcal{M}$ open. A function $f:U\to\mathbb{R}$ is \emph{semi-concave} on $U$ if for any $x_0\in U$, there exists a convex embedded ball $B_r(x_0)$ and a smooth function $V:B_r(x_0)\to\mathbb{R}$ such that $f+V$ is geodesically concave throughout $B_r(x_0)$. For a semi-concave function, its Hessian is defined in the following way.

\begin{definition}{\citep[Definition 3.9]{CorderoErausquin2001}}
    Let $f:U\to\mathbb{R}$ be a semi-concave function. Then $f$ has a Hessian at $x\in U$ if $f$ is differentiable at $x$ and there exists a self-adjoint operator $\mathrm{Hess}_xf:T_x\mathcal{M}\to T_x\mathcal{M}$ such that 
    \begin{equation}\label{eq:AB-hessian}
        \sup_{v\in\partial f(\exp_x(u))}\|\Pi_{x,u}v-\nabla f(x)-\mathrm{Hess} fu\|_x=o(\|u\|)
    \end{equation}
    as $u\to 0$ in $T_x\mathcal{M}$. Here $\Pi_{x,u}:T_{\exp_x(u)}\to T_x\mathcal{M}$ denotes the parallel translation to $x$ along $\gamma(t)=\exp_x(tu)$.
\end{definition}

If $f:U\to\mathbb{R}$ is semi-concave, the Aleksandrov--Bangert theorem claims that $\mathrm{Hess}_xf$ exists almost everywhere on $U$ with respect to the volume measure \citep{bangert1979analytische}. It turns out for compact $\mathcal{M}$, any $c$-concave function on $\mathcal{M}$ is semi-concave, hence admits a Hessian almost everywhere on $\mathcal{M}$ \citep[Proposition 3.14]{CorderoErausquin2001}.

Let $\phi$ be a $c$-concave Kantorovich potential for the quadratic cost $c(x,y)=\tfrac12 d(x,y)^2$ and set
$\psi:=\phi^c$.
At points where $\phi$ (resp.\ $\psi$) is differentiable, define
\[
\overrightarrow{T}(x):=\exp_x(-\nabla\phi(x)),
\qquad
\overleftarrow{T}(y):=\exp_y(-\nabla\psi(y)).
\]

and introduce the sets
\begin{equation*}
\begin{aligned}
\label{eq:def-E-sets}
E_{\phi} &:= \{x\in \mathcal M:\ \mathrm{Hess}_x\phi \text{ exists in the sense of \eqref{eq:AB-hessian}}\},
\\
E_{\psi} &:= \{y\in \mathcal M:\ \mathrm{Hess}_y\psi \text{ exists in the sense of  \eqref{eq:AB-hessian}}\}.
\end{aligned}
\end{equation*}
For $x\in E_\phi$ set $y=\overrightarrow{T}(x)$, and for $y\in E_\psi$ set $x=\overleftarrow{T}(y)$.
Define the strict-positivity sets
\begin{equation}
\label{eq:def-Etilde-sets}
\begin{aligned}
\widetilde E_{\phi}
&:=
\Big\{
x\in E_{\phi}:\ \mathrm{Hess}_x\bigl(\tfrac12 d(\,\cdot\,,\overrightarrow{T}(x))^2-\phi\bigr) > 0
\Big\},\\
\widetilde E_{\psi}
&:=
\Big\{
y\in E_{\psi}:\ \mathrm{Hess}_y\bigl(\tfrac12 d(\overleftarrow{T}(y),\,\cdot\,)^2-\psi\bigr) > 0
\Big\},
\end{aligned}
\end{equation}
and the set of \emph{nice points}
\begin{equation*}
\label{eq:def-Omega}
\Omega
:=
\{x\in \widetilde E_{\phi}:\ \overrightarrow{T}(x)\in \widetilde E_{\psi}\}.
\end{equation*}
By \citep[Claim~4.4]{CorderoErausquin2001}, $\mu(\Omega)=1$.

We now proceed to our proof of Theorem~\ref{thm-convergence-transport-maps}.

\begin{proof}[Proof of Theorem~\ref{thm-convergence-transport-maps}]

Fix $x\in\Omega$ and set
\[
y^\star := \overrightarrow{T}(x).
\]
Let $\varepsilon\in (0,1)$, and let $\psi_{\varepsilon}\in C(\mathcal{M})$ satisfy
\begin{equation*}
    \|\psi_{\varepsilon}-\psi\|_{\infty}\leq \varepsilon.
\end{equation*}
Define the unperturbed and perturbed objectives
\[
F_x(y):=\tfrac12 d(x,y)^2-\psi(y),
\qquad
F_{\varepsilon,x}(y):=\tfrac12 d(x,y)^2-\psi_{\varepsilon}(y),
\qquad y\in\mathcal M,
\]
and choose any minimizer 
\[
y_{\varepsilon}^{\star}\in \argmin_{y\in\mathcal{M}} F_{\varepsilon,x}(y).
\]
By definition 
\begin{equation*}
    y_{\varepsilon}^{\star} = T_{\varepsilon}(x)
\end{equation*}
for some admissible selection $T_{\varepsilon}$.

\textbf{Step 1: $y^\star$ is the unique minimizer of $F_x$.}
Since $x\in\Omega\subset \widetilde E_{\phi}\subset E_{\phi}$, the potential $\phi$ admits a Hessian at $x$ and in
particular is differentiable at $x$.
By Lemma~\ref{lemma:c-concave-properties},
at differentiability points of $\phi$, $y^{\star}=\overrightarrow{T}(x)$ if and only if $y^{\star}$ minimizes \eqref{eq:min-y}.
Since $\phi(x)$ is constant in $y$, minimizing~\ref{lemma:c-concave-properties} is equivalent to minimizing
$y\mapsto c(x,y)-\psi(y)=F_x(y)$.
Moreover, Lemma~\ref{lemma:c-concave-properties} implies any minimizer must coincide with $\overrightarrow{T}(x)$, hence $y^\star$ is the unique minimizer.

\textbf{Step 2: Strict positivity of $\mathrm{Hess} F_x(y^\star)$ and definition of $C(x)$.}
Because $x\in\Omega$, we have $y^\star=\overrightarrow{T}(x)\in\widetilde E_\psi$.
In particular, $y^\star\in E_\psi$, so $\psi$ is differentiable at $y^\star$ and $\overrightarrow{T}(y^\star)$ is well-defined.
Since $y^\star\in\partial^c\phi(x)$ (by Step~1), the $c$-subdifferential symmetry
$y\in\partial^c\phi(x)\iff x\in\partial^c\psi(y)$ from \citep[Sec.~3.1]{CorderoErausquin2001}
yields $x\in\partial^c\psi(y^\star)$; as $\psi$ is differentiable at $y^\star$, Lemma~\ref{lemma:c-concave-properties} implies that $\partial^c\psi(y^\star)=\{\overleftarrow{T}(y^\star)\}$, hence
\[
\overleftarrow{T}(y^\star)=x.
\]
By the definition of $\widetilde E_\psi$ in \eqref{eq:def-Etilde-sets}, this implies that the Aleksandrov--Bangert
Hessian of
\[
y\longmapsto \tfrac12 d\bigl(\overleftarrow{T}(y^\star),y\bigr)^2-\psi(y)=\tfrac12 d(x,y)^2-\psi(y)=F_x(y)
\]
exists at $y^\star$ and is positive definite. 
Therefore there exists $C(x)>0$ such that
\begin{equation*}
\label{eq:hess-pos-appendix}
\langle \mathrm{Hess} F_x(y^\star)u,u\rangle \ge 2C(x) \|u\|^2,
\qquad \forall u\in T_{y^\star}\mathcal M.
\end{equation*}

\textbf{Step 3: Local quadratic growth for $F_x$ near $y^\star$.}
Since $x\in\Omega$ and $y^\star=\overrightarrow{T}(x)\in\widetilde E_{\psi}$, the function $F_x(y)=\tfrac12 d(x,y)^2-\psi(y)$
admits a (Aleksandrov--Bangert) Hessian at $y^\star$ in the sense of \eqref{eq:AB-hessian}.
In particular, $F_x$ is differentiable at $y^\star$ and \eqref{eq:AB-hessian} implies the second-order expansion: as $u\to 0$ in $T_{y^\star}\mathcal M$,
\begin{equation}
\label{eq:taylor-Fx}
F_x(\exp_{y^\star}(u))
=
F_x(y^\star)
+\langle \nabla F_x(y^\star),u\rangle
+\frac12\langle \mathrm{Hess} F_x(y^\star)u,u\rangle
+o(\|u\|^2).
\end{equation}
Because $y^\star$ is a minimizer of $F_x$ (Step~1) and $F_x$ is differentiable at $y^\star$, we have
$\nabla F_x(y^\star)=0$, hence \eqref{eq:taylor-Fx} becomes
\begin{equation}
\label{eq:taylor-Fx-min}
F_x(\exp_{y^\star}(u))-F_x(y^\star)
=
\frac12\langle \mathrm{Hess} F_x(y^\star)u,u\rangle
+o(\|u\|^2).
\end{equation}
Moreover, strict positivity of the Hessian (Step~2) provides $C(x)>0$ such that
\begin{equation}
\label{eq:hess-pos-step3}
\frac12\langle \mathrm{Hess} F_x(y^\star)u,u\rangle \ \ge\ C(x) \|u\|^2
\qquad \forall u\in T_{y^\star}\mathcal M.
\end{equation}
Combining \eqref{eq:taylor-Fx-min}--\eqref{eq:hess-pos-step3} yields
\begin{equation}
\label{eq:pre-qg}
F_x(\exp_{y^\star}(u))-F_x(y^\star)\ \ge\ C(x)\|u\|^2 + o(\|u\|^2).
\end{equation}

By definition of the Landau symbol $o(\|u\|^2)$, we have $o(\|u\|^2)/\|u\|^2\to 0$ as $u\to 0$.
Therefore, taking $\eta:=C(x)/2>0$, there exists $\rho_x>0$ such that whenever $0<\|u\|<\rho_x$,
\begin{equation}
\label{eq:o-control}
\left|\frac{o(\|u\|^2)}{\|u\|^2}\right|\le \frac{C(x)}{2}
\qquad\Longrightarrow\qquad
o(\|u\|^2)\ge -\frac{C(x)}{2}\|u\|^2.
\end{equation}
Insert \eqref{eq:o-control} into \eqref{eq:pre-qg} to obtain, for all $\|u\|<\rho_x$,
\[
F_x(\exp_{y^\star}(u))-F_x(y^\star)
\ \ge\ C(x)\|u\|^2-\frac{C(x)}{2}\|u\|^2
\ =\ \frac{C(x)}{2}\|u\|^2.
\]

Choose $r_x>0$ such that
\begin{equation*}
\label{eq:rx-choice}
0<r_x<\min\{\rho_x,\mathrm{inj}(y^\star)\},
\end{equation*}
where $\mathrm{inj}(y^\star)$ is the injectivity radius at $y^\star$.
Then, for every $y\in B_{r_x}(y^\star)$ there exists a unique $u\in T_{y^\star}\mathcal M$ with $\|u\|<r_x$
such that $y=\exp_{y^\star}(u)$.
Moreover, since $\|u\|<\mathrm{inj}(y^\star)$, the geodesic $\gamma(t):=\exp_{y^\star}(t u)$, $t\in[0,1]$, is the unique
minimizing geodesic from $y^\star$ to $y$; it has constant speed $\|\dot\gamma(t)\|=\|u\|$, hence length
$L(\gamma)=\int_0^1\|\dot\gamma(t)\|\,dt=\|u\|$. Therefore
\begin{equation}
\label{eq:dist-equals-u}
d(y,y^\star)=L(\gamma)=\|u\|.
\end{equation}
Using $y=\exp_{y^\star}(u)$ and \eqref{eq:dist-equals-u} we arrive at the local quadratic growth: for all
$y\in B_{r_x}(y^\star)$,
\begin{equation}
\label{eq:local-qg-appendix}
F_x(y)\ \ge\ F_x(y^\star)+\frac{C(x)}{2}\,d(y,y^\star)^2.
\end{equation}

\textbf{Step 4: Uniform perturbation bound.}
From $\|\psi_{\varepsilon}-\psi\|_{\infty}\leq \varepsilon$ we immediately get, for all $y\in\mathcal M$,
\begin{equation}
\label{eq:two-sided-appendix}
F_x(y)-\varepsilon \ \le\ F_{\varepsilon,x}(y)\ \le\ F_x(y)+\varepsilon.
\end{equation}

\textbf{Step 5: Objective gap at the perturbed minimizer.}
By minimality of $y_{\varepsilon}^{\star}$ for $F_{\varepsilon,x}$, we have $F_{\varepsilon,x}(y^\star_\varepsilon)\le F_{\varepsilon,x}(y^\star)$.
Using \eqref{eq:two-sided-appendix} at $y^\star_\varepsilon$ and $y^\star$ gives
\begin{equation}
\label{eq:gap-appendix}
F_x(y^\star_\varepsilon)-F_x(y^\star)\le 2\varepsilon.
\end{equation}

\textbf{Step 6: Localization of $y^\star_\varepsilon$ near $y^\star$.}
By Step~1, $y^\star$ is the unique minimizer of the continuous function $F_x$ on the compact manifold $\mathcal M$.
Therefore, for the radius $r_x$ fixed in Step~3, the minimum value of $F_{x}- F_{x}(y^{\star})$ on the compact set $\mathcal M\setminus B_{r_x}(y^\star)$ is attained and is strictly positive:
\[
\delta_x
:=
\min_{y\in \mathcal M\setminus B_{r_x}(y^\star)}\bigl(F_x(y)-F_x(y^\star)\bigr)
\ >\ 0.
\]
If $y^\star_\varepsilon\notin B_{r_x}(y^\star)$ then $F_x(y^\star_\varepsilon)-F_x(y^\star)\ge \delta_x$,
contradicting \eqref{eq:gap-appendix} whenever $2\varepsilon<\delta_x$.
Hence, setting $\varepsilon_x:=\delta_x/4$, we obtain we conclude that if $\varepsilon\le \varepsilon_x$, then
\begin{equation*}
\label{eq:localization-appendix}
y^\star_\varepsilon\in B_{r_x}(y^\star).
\end{equation*}

\textbf{Step 7: Conclude the distance bound.}
Assume $\varepsilon\in (0,\min\{1,\varepsilon_x\})$, then $y_{\varepsilon}^{\star}\in B_{r_{x}}(y^{\star})$.
Applying the local quadratic growth \eqref{eq:local-qg-appendix} at $y=y^\star_{\varepsilon}$ and combining with
\eqref{eq:gap-appendix} yields
\[
\frac{C(x)}{2}\,d(y^\star_\varepsilon,y^\star)^2
\le
F_x(y^\star_\varepsilon)-F_x(y^\star)
\le 2\varepsilon,
\]
and therefore
\[
d(y^\star_\varepsilon,y^\star)\le 2\sqrt{\frac{\varepsilon}{C(x)}}.
\]
Notice that $y^\star=\overrightarrow{T}(x)$ and $y^\star_\varepsilon=T_\varepsilon(x)$, which proves our claim after identifying $\overrightarrow{T}$ with $T_{\star}$.
\end{proof}

%% file: proofs/proof-iacobelli-integrability.tex
\begin{proof}
Set $D:=\mathrm{diam}(\mathcal M)<\infty$. Then $d(x,x_0)\le D$ for all $x\in\mathcal M$, hence
\[
\int_{\mathcal M} d(x,x_0)^{r+\delta}\,\mathrm d\mu(x)
\le \int_{\mathcal M} D^{r+\delta}\,\mathrm d\mu(x)
= D^{r+\delta}<\infty,
\]
where we used that $\mu$ is a probability measure.

It remains to control the second term in \eqref{eq-integrability-condition-iacobelli}.
Recall that for $\rho\ge 0$,
\[
A_{x_{0}}(\rho)
:= \sup_{v\in \mathbb{S}^{p-1}_{\rho}}
\ \sup_{\substack{w\in T_{v}\mathbb{S}_{\rho}^{p-1}\\ |w|_{x_{0}} = \rho }}
\bigl\|d_v\exp_{x_{0}}[w] \bigr\|_{\exp_{x_{0}}(v)} .
\]

\medskip
\noindent\textbf{Step 1: Continuity of the integrand $(v,w)\mapsto \|d_v\exp_{x_0}[w]\|_{\exp_{x_0}(v)}$.}
Since $\mathcal M$ is compact, it is complete as a metric space. By Hopf--Rinow, $\mathcal M$ is then
geodesically complete; in particular, for every $v\in T_{x_0}\mathcal M$ the geodesic $\gamma_v$ with
$\gamma_v(0)=x_0$ and $\dot\gamma_v(0)=v$ is defined at least up to time $t=1$, and
\[
\exp_{x_0}(v)=\gamma_v(1)
\]
is well-defined for all $v\in T_{x_0}\mathcal M$ \citep[Cor.~6.22]{Lee2018}.
Moreover, $\exp_{x_0}$ is a smooth map \citep[Prop.~5.19]{Lee2018}. Viewing
$T_{x_0}\mathcal M\simeq\mathbb R^p$ as a smooth manifold, smoothness implies that the differential
\[
d_v\exp_{x_0}:\ T_v(T_{x_0}\mathcal M)\simeq T_{x_0}\mathcal M \ \longrightarrow\ T_{\exp_{x_0}(v)}\mathcal M
\]
depends smoothly on $v$, and therefore the associated evaluation map
\[
T_{x_0}\mathcal M\times T_{x_0}\mathcal M\to T\mathcal M,
\qquad (v,w)\longmapsto d_v\exp_{x_0}[w],
\]
is smooth (hence continuous) \citep[Ex.~3.4]{Lee2012-sc}.

Next, because a Riemannian metric is a smooth field of inner products
\citep[Sec.~2]{Lee2018}, the induced norm on the tangent bundle depends continuously on the base point:
in local coordinates $(U,\varphi)$, writing $\xi=\sum_i \xi^i\partial_i|_x\in T_p\mathcal M$ one has
\[
\|\xi\|_x^2=g_{ij}(x)\,\xi^i\xi^j,
\]
and the coefficients $x\mapsto g_{ij}(x)$ are smooth, hence continuous. It follows that the map
\[
(x,\xi)\ \longmapsto\ \|\xi\|_x
\qquad (x\in\mathcal M,\ \xi\in T_x\mathcal M)
\]
is continuous on $T\mathcal M$.
Combining this with the continuity of $(v,w)\mapsto d(\exp_{x_0})_v[w]$ shows that
\[
F:T_{x_0}\mathcal M\times T_{x_0}\mathcal M\to\mathbb R,
\qquad
F(v,w):=\bigl\|d_v\exp_{x_0}[w]\bigr\|_{\exp_{x_0}(v)},
\]
is continuous.

\medskip
\noindent\textbf{Step 2: A uniform bound for $A_{x_0}(\rho)$ on $\rho\in[0,D]$.}
Work in the Euclidean space $(T_{x_0}\mathcal M,g_{x_0})$. The sphere
$\mathbb S^{p-1}_\rho=\{u:\ \|u\|_{x_0}=\rho\}$ is the level set of the smooth function
$u\mapsto \|u\|_{x_0}^2$, hence it is a smooth hypersurface for $\rho>0$.
Its tangent space at $v\in\mathbb S^{p-1}_\rho$ is
\[
T_v\mathbb S^{p-1}_\rho=\{w\in T_{x_0}\mathcal M:\ \langle v,w\rangle_{x_0}=0\},
\]
because differentiating $t\mapsto \|v+tw\|_{x_0}^2$ at $t=0$ gives
$\frac{d}{dt}|_{t=0}\|v+tw\|_{x_0}^2=2\langle v,w\rangle_{x_0}$ \citep[Chap.~5]{Lee2012-sc}.

Fix $\rho\in[0,D]$ and consider any admissible pair $(v,w)$ in the definition of $A_{x_0}(\rho)$.
By definition, $v\in\mathbb S^{p-1}_\rho$ and $\|w\|_{x_0}=\rho$, hence
\[
\|v\|_{x_0}=\rho\le D,\qquad \|w\|_{x_0}=\rho\le D.
\]
Moreover $w\in T_v\mathbb S^{p-1}_\rho$, which is equivalent to $\langle v,w\rangle_{x_0}=0$.
Therefore every admissible $(v,w)$ belongs to
\[
E:=\Bigl\{(v,w)\in T_{x_0}\mathcal M\times T_{x_0}\mathcal M:\ \|v\|_{x_0}\le D,\ \|w\|_{x_0}\le D,\
\langle v,w\rangle_{x_0}=0\Bigr\}.
\]

We claim that $E$ is compact. Indeed, the sets $\{(v,w):\|v\|_{x_0}\le D\}$ and $\{(v,w):\|w\|_{x_0}\le D\}$ are closed
balls in the finite-dimensional space $T_{x_0}\mathcal M\times T_{x_0}\mathcal M\simeq\mathbb R^{2p}$, hence compact.
The constraint $\langle v,w\rangle_{x_0}=0$ defines a closed subset because
$(v,w)\mapsto \langle v,w\rangle_{x_0}$ is continuous and $\{0\}$ is closed. Thus $E$ is a closed subset of a compact
set, hence compact.

Since $F$ is continuous and $E$ is compact, $F$ attains a maximum on $E$ by the Extreme Value Theorem. Define
\[
M:=\max_{(v,w)\in E} F(v,w)<\infty.
\]
Because the admissible set for $A_{x_0}(\rho)$ is contained in $E$, we obtain for every $\rho\in[0,D]$,
\[
A_{x_0}(\rho)
=\sup_{\text{admissible }(v,w)} F(v,w)
\le \sup_{(v,w)\in E} F(v,w)
= M.
\]

\medskip
\noindent\textbf{Step 3: Integrability of $A_{x_0}(d(x,x_0))^r$.}
For any $x\in\mathcal M$ we have $d(x,x_0)\le D$, hence $d(x,x_0)\in[0,D]$ and therefore
\[
A_{x_0}(d(x,x_0))\le M
\qquad\Longrightarrow\qquad
A_{x_0}(d(x,x_0))^r\le M^r.
\]
Integrating against the probability measure $\mu$ yields
\[
\int_{\mathcal M} A_{x_0}(d(x,x_0))^{r}\,\mathrm d\mu(x)
\le \int_{\mathcal M} M^{r}\,\mathrm d\mu(x)
= M^{r}<\infty.
\]
Combining this with the bound on the first term proves \eqref{eq-integrability-condition-iacobelli}.
\end{proof}

%% file: proofs/proof-lem-V_to_W.tex
\begin{proof}
Let $\alpha:=\mathrm{supp}(\eta)$. By assumption, $\alpha$ is finite and $|\alpha|\le m$. By definition of the $2$-Wasserstein distance,
\begin{equation*}
\label{eq:defW2}
W_2(\rho,\eta)^2
=\inf_{\gamma\in\Pi(\rho,\eta)}\int_{\mathcal M\times\mathcal M} d(y,z)^2\,\mathrm d\gamma(y,z),
\end{equation*}
where $\Pi(\rho,\eta)$ denotes the set of couplings of $\rho$ and $\eta$.

\textbf{Step 1: Disintegrate an arbitrary coupling.}
Fix any $\gamma\in\Pi(\rho,\eta)$. Since $\mathcal M$ is a complete separable metric space, we can apply the
disintegration theorem (see, e.g., \citet[Theorem~1.4]{Figalli2010OptimalPartialTransport}) to $\gamma$ with respect to the
projection $\mathrm{pr}_1:\mathcal M\times\mathcal M\to\mathcal M$.
Because $(\mathrm{pr}_1)_\#\gamma=\rho$, there exists a $\rho$-measurable family of probability measures
$\{\gamma_y\}_{y\in\mathcal M}\subset\mathcal P(\mathcal M)$ such that
\begin{equation}
\label{eq:disintegration}
\gamma(\mathrm dy,\mathrm dz)=\rho(\mathrm dy)\,\gamma_y(\mathrm dz),
\end{equation}
i.e.\ for every bounded Borel function $\varphi:\mathcal M\times\mathcal M\to\mathbb R$,
\begin{equation}
\label{eq:disintegration-test}
\int_{\mathcal M\times\mathcal M}\varphi(y,z)\,\mathrm d\gamma(y,z)
=
\int_{\mathcal M}\left(\int_{\mathcal M}\varphi(y,z)\,\mathrm d\gamma_y(z)\right)\mathrm d\rho(y).
\end{equation}
Since $\mathcal M$ is compact, the function $(y,z)\mapsto d(y,z)^2$ is bounded on $\mathcal M\times\mathcal M$.
Therefore we may take $\varphi(y,z)=d(y,z)^2$ in \eqref{eq:disintegration-test} to obtain
\begin{equation}
\label{eq:cost-disintegrated}
\int_{\mathcal M\times\mathcal M} d(y,z)^2\,\mathrm d\gamma(y,z)
=
\int_{\mathcal M}\left(\int_{\mathcal M} d(y,z)^2\,\mathrm d\gamma_y(z)\right)\mathrm d\rho(y).
\end{equation}

\textbf{Step 2: Use that $\gamma_y$ is supported on $\alpha$.}
Because $\gamma\in\Pi(\rho,\eta)$ has second marginal $\eta$ and $\eta$ is supported on $\alpha$,
we must have
Because $\gamma\in\Pi(\rho,\eta)$, its second marginal is $\eta$, i.e.
\[
(\mathrm{pr}_2)_\#\gamma=\eta,
\]
where $\mathrm{pr}_2:\mathcal M\times\mathcal M\to\mathcal M$ denotes the second projection
$\mathrm{pr}_2(y,z)=z$. Hence, by the definition of pushforward measures, for any Borel set $B\subset\mathcal M$,
\[
\eta(B)=((\mathrm{pr}_2)_\#\gamma)(B)=\gamma(\mathrm{pr}_2^{-1}(B)).
\]
Taking $B=\mathcal M\setminus\alpha$ and using that $\eta$ is supported on $\alpha$ (so $\eta(\mathcal M\setminus\alpha)=0$),
we obtain
\[
\gamma\bigl(\mathrm{pr}_2^{-1}(\mathcal M\setminus\alpha)\bigr)=0.
\]
Finally, $\mathrm{pr}_2^{-1}(\mathcal M\setminus\alpha)=\mathcal M\times(\mathcal M\setminus\alpha)$, so
\[
\gamma(\mathcal M\times(\mathcal M\setminus\alpha))=\eta(\mathcal M\setminus\alpha)=0.
\]
Disintegrating this identity using \eqref{eq:disintegration} gives
\[
0=\gamma(\mathcal M\times(\mathcal M\setminus\alpha))
=\int_{\mathcal M}\gamma_y(\mathcal M\setminus\alpha)\,\mathrm d\rho(y),
\]
hence $\gamma_y(\mathcal M\setminus\alpha)=0$ for $\rho$-a.e.\ $y$. Equivalently,
\[
\mathrm{supp}(\gamma_y)\subseteq \alpha\qquad\text{for $\rho$-a.e.\ }y.
\]

\textbf{Step 3: Lower bound the conditional cost by the nearest-site distance.}
Fix $y$ such that $\mathrm{supp}(\gamma_y)\subseteq \alpha$. Then, since $\gamma_y$ is a probability measure supported
on $\alpha$,
\[
\int_{\mathcal M} d(y,z)^2\,\mathrm d\gamma_y(z)
=\int_{\alpha} d(y,z)^2\,\mathrm d\gamma_y(z)
\ge \min_{a\in\alpha} d(y,a)^2\int_{\alpha}\mathrm d\gamma_y(z)
=\min_{a\in\alpha} d(y,a)^2.
\]
Insert this into \eqref{eq:cost-disintegrated} to obtain
\begin{equation}
\label{eq:cost-lower}
\int_{\mathcal M\times\mathcal M} d(y,z)^2\,\mathrm d\gamma(y,z)
\ge \int_{\mathcal M}\min_{a\in\alpha} d(y,a)^2\,\mathrm d\rho(y).
\end{equation}

\textbf{Step 4: Take the infimum over couplings and compare to $V_{m,2}(\rho)$.}
Since \eqref{eq:cost-lower} holds for every $\gamma\in\Pi(\rho,\eta)$, taking the infimum over $\gamma$ gives
\[
W_2(\rho,\eta)^2
=\inf_{\gamma\in\Pi(\rho,\eta)}\int d(y,z)^2\,\mathrm d\gamma
\ge \int_{\mathcal M}\min_{a\in\alpha} d(y,a)^2\,\mathrm d\rho(y).
\]
Finally, by Definition~\ref{def:VN-FN} of $V_{m,2}$ as the infimum over all sets of size at most $m$,
and since $|\alpha|\le m$, we have
\[
\int_{\mathcal M}\min_{a\in\alpha} d(y,a)^2\,\mathrm d\rho(y)\ \ge\ V_{m,2}(\rho).
\]
Combining the last two displays proves the claim.
\end{proof}

%% file: proofs/proof-lemma-bound-wasserstein-distance-map.tex
\begin{proof}
If $\mathcal M$ is compact then every probability measure has finite second moment, hence
$\nu_t,\nu_s\in\mathcal P_2(\mathcal M)$ and $W_2(\nu_t,\nu_s)$ is well-defined.

Define the measurable map $(t,s):\mathcal M\to\mathcal M\times\mathcal M$ by $x\mapsto (t(x),s(x))$ and set
\[
\pi:=(t,s)_\#\mu.
\]
Let $\mathrm{pr}_1,\mathrm{pr}_2:\mathcal M\times\mathcal M\to\mathcal M$ be the coordinate projections defined in Appendix~\ref{app-review-OT-geometry}.
Since $\mathrm{pr}_1\circ(t,s)=t$ and $\mathrm{pr}_2\circ(t,s)=s$, the composition rule for pushforwards
\citep[§2.3, Example~2.12]{TeschlRA} yields
\[
(\mathrm{pr}_1)_\#\pi
=(\mathrm{pr}_1)_\#(t,s)_\#\mu
=(\mathrm{pr}_1\circ(t,s))_\#\mu
=t_\#\mu=\nu_t,
\qquad
(\mathrm{pr}_2)_\#\pi=\nu_s.
\]
Hence $\pi\in\Pi(\nu_t,\nu_s)$. By the definition of $W_2$,
\[
W_2(\nu_t,\nu_s)^2
=\inf_{\gamma\in\Pi(\nu_t,\nu_s)} \int_{\mathcal M\times\mathcal M} d(y,z)^2\,\mathrm d\gamma(y,z)
\le \int_{\mathcal M\times\mathcal M} d(y,z)^2\,\mathrm d\pi(y,z).
\]
Since $\mathcal M$ is compact, $g(y,z)=d(y,z)^2$ is bounded, hence integrable with respect to any probability measure
on $\mathcal M\times\mathcal M$. Thus we can apply the change-of-variables formula for pushforwards \citep[Theorem~2.16]{TeschlRA},
\[
\int_{\mathcal M\times\mathcal M} d(y,z)^2\,\mathrm d\pi(y,z)
=\int_{\mathcal M} d\bigl(t(x),s(x)\bigr)^2\,\mathrm d\mu(x).
\]
Taking square roots gives \eqref{eq-W2-bound-maps}.
\end{proof}

%% file: proofs/proof_convergence_gradients.tex
\begin{lemma}[Stability of minimizer]
\label{lemma-stability-minimizer}
Let $K$ be a compact metric space and let $f_{k},f:K\to\mathbb{R}$ be continuous with $f_{k}\to f$ uniformly. Suppose $f$ has a unique minimizer $z^{\star}\in K$. If for each $k$, $z_{k}\in \arg\min_{K} f_{k}$, then $z_{k}\to z^{\star}$.    
\end{lemma}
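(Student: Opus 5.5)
I would prove Lemma~\ref{lemma-stability-minimizer} by a compactness argument: extract convergent subsequences and show that every subsequential limit must be $z^\star$. Since $K$ is a compact metric space, it is sequentially compact. The plan is therefore to show that every subsequence of $(z_k)$ has a further subsequence converging to $z^\star$. In a metric space this yields $z_k\to z^\star$: otherwise some $\eta>0$ and a subsequence would satisfy $d(z_{k_j},z^\star)\ge\eta$, and no further subsequence of it could converge to $z^\star$.

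\textbf{Step 1: identify the limit.} Take any subsequence, and by sequential compactness extract a further subsequence $z_{k_j}\to\bar z\in K$. Set $\delta_k:=\|f_k-f\|_\infty$, which tends to $0$ by assumption. Minimality of $z_{k_j}$ for $f_{k_j}$ gives $f_{k_j}(z_{k_j})\le f_{k_j}(z^\star)$. Combining this with the uniform bound on both sides yields
\[
f(z_{k_j}) \le f_{k_j}(z_{k_j})+\delta_{k_j} \le f_{k_j}(z^\star)+\delta_{k_j} \le f(z^\star)+2\delta_{k_j}.
\]
Letting $j\to\infty$ and using continuity of $f$ gives $f(\bar z)\le f(z^\star)=\min_K f$. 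Hence $\bar z$ is a minimizer of $f$, and uniqueness forces $\bar z=z^\star$.

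\textbf{Step 2: conclude.} Since the subsequence in Step 1 was arbitrary, every subsequence of $(z_k)$ has a further subsequence converging to $z^\star$, and so $z_k\to z^\star$ by the argument in the first paragraph.

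There is no serious obstacle. The only delicate point is that $f_{k_j}$ is evaluated at a moving point $z_{k_j}$. Bounding $|f_{k_j}(z_{k_j})-f(z_{k_j})|\le\delta_{k_j}$ through the uniform norm handles this, so neither equicontinuity nor continuity of the $f_k$ is needed beyond guaranteeing that minimizers exist.
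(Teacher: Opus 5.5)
Your proposal is correct and follows essentially the same route as the paper: sequential compactness gives a convergent subsequence, and the comparison $f_{k_j}(z_{k_j})\le f_{k_j}(z^\star)$ together with uniform convergence forces the limit to minimize $f$, so uniqueness identifies it as $z^\star$. The paper argues by contradiction, using a subsequence that stays $\varepsilon_0$ away from $z^\star$; this is just the contrapositive of your ``every subsequence has a further subsequence converging to $z^\star$'' principle.
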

The proof of Lemma~\ref{lemma-stability-minimizer} is postponed to Appendix~\ref{app-proof-lemma-stability-minimizer}. We are now ready to prove Lemma~\ref{thm-convergence-gradients}.

\begin{proof}[Proof of Lemma~\ref{thm-convergence-gradients}]
Let $\phi\in\Psi_c(\mathcal M)$ and set $\psi:=\phi^{c}\in C(\mathcal M,\mathbb R)$, so that
$\phi=\psi^{c}$.

\paragraph{Step 1: Build $\psi_k\in\varphi^*\mathcal F$ approximating $\psi=\phi^{c}$.}
Since $\mathcal F$ is dense in $C(\mathbb R^n,\mathbb R)$ in the ucc topology and $\varphi$ satisfies
Assumption~\ref{ass-feature-regularity}, it follows (e.g.\ \citep[Theorem~3.3]{non_euclidean_uat})
that the pullback class $\varphi^*\mathcal F$ is dense in $C(\mathcal M,\mathbb R)$ for uniform convergence.
Hence there exists a sequence $\psi_k\in\varphi^*\mathcal F$ such that
\[
\|\psi_k-\psi\|_\infty \xrightarrow[k\to\infty]{} 0.
\]
Define $\phi_k:=\psi_k^{c}\in\mathfrak C(\varphi^*\mathcal F)$.

\paragraph{Step 2: Uniform convergence $\phi_k\to\phi$.}
For any $x\in\mathcal M$ and any $y\in\mathcal M$, we have
$\psi(y)-\|\psi_k-\psi\|_\infty \le \psi_k(y)\le \psi(y)+\|\psi_k-\psi\|_\infty$, hence
\[
c(x,y)-\psi(y)-\|\psi_k-\psi\|_\infty
\le c(x,y)-\psi_k(y)\le
c(x,y)-\psi(y)+\|\psi_k-\psi\|_\infty.
\]
Taking $\inf_{y\in\mathcal M}$ yields
\[
\phi(x)-\|\psi_k-\psi\|_\infty
\le \phi_k(x)\le
\phi(x)+\|\psi_k-\psi\|_\infty,
\]
and therefore
\[
\|\phi_k-\phi\|_\infty \le \|\psi_k-\psi\|_\infty \xrightarrow[k\to\infty]{} 0.
\]

\paragraph{Step 3: Differentiability and gradient representation on a full $\mu$-measure set.}
Each $\phi_k$ and $\phi$ is $c$-concave, hence Lipschitz on $\mathcal M$
\citep[Lemma~2]{McCann2001}. By Rademacher's theorem on Riemannian manifolds
\citep[Lemma~4]{McCann2001}, $\phi$ and each $\phi_k$ are differentiable
$\mathrm{vol}_{\mathcal M}$-a.e., hence also $\mu$-a.e.\ since $\mu\ll \mathrm{vol}_{\mathcal M}$. Denote by $\mathrm{Diff}(\phi)\subset\mathcal M$  and $\mathrm{Diff}(\phi_{k})\subset\mathcal M$ the set of points
where $\phi$ and $\phi_{k}$ is differentiable. Thus, $\mu(\mathrm{Diff}(\phi))=1$ and
$\mu(\mathrm{Diff}(\phi_k))=1$ for every $k$.

Moreover, for the quadratic cost \(c(x,y)=\tfrac12 d(x,y)^2\), \citet[Lemma~7]{McCann2001}
shows that at every point \(x\in \mathrm{Diff}(\phi)\), the \(c\)-subdifferential \(\partial^c\phi(x)\) is a singleton.
Equivalently, the minimizer
\[
y^\star(x)\in\arg\min_{y\in\mathcal M}\bigl(c(x,y)-\psi(y)\bigr)
\]
is unique, and the gradient satisfies the first-order condition
\[
\nabla\phi(x)=-\nabla_x c\bigl(x,y^\star(x)\bigr)=-\log_x\bigl(y^\star(x)\bigr).
\]
The same statement holds for each \(\phi_k\): for any  $x\in \mathrm{Diff}(\phi_{k})$, the minimizer
\(y_k(x)\in\arg\min_{y\in\mathcal M}\bigl(c(x,y)-\psi_k(y)\bigr)\) is unique and
\[
\nabla\phi_k(x)=-\nabla_x c\bigl(x,y_k(x)\bigr)=-\log_x\bigl(y_k(x)\bigr).
\]

We define
\[
N_\star
:=\mathrm{Diff}(\phi)\cap\bigcap_{k=1}^\infty \mathrm{Diff}(\phi_k).
\]
Then $N_\star$ is measurable and satisfies $\mu(N_\star)=1$, since it is a countable intersection of measurable
sets of full $\mu$-measure. In particular, for every $x\in N_\star$ all gradients $\nabla\phi(x)$ and
$\nabla\phi_k(x)$ are well-defined, and the unique minimizers $y^\star(x)$ and $y_k(x)$ satisfy
\[
\nabla\phi(x)=-\log_x\bigl(y^\star(x)\bigr),
\qquad
\nabla\phi_k(x)=-\log_x\bigl(y_k(x)\bigr).
\]

Fix $x\in N_\star$ for the remainder of the proof.

\paragraph{Step 5: Stability of minimizers $y_k(x)\to y^{\star}(x)$.}
Define continuous functions on the compact set $\mathcal{M}$ by
\[
h_x(y):=c(x,y)-\psi(y),\qquad h_{k,x}(y):=c(x,y)-\psi_k(y).
\]
Then $y^{\star}(x)=\arg\min h_x$ and $y_k(x)=\arg\min h_{k,x}$ are unique by construction of $N_\star$.
Moreover,
\[
\sup_{y\in\mathcal M}|h_{k,x}(y)-h_x(y)|
=\sup_{y\in\mathcal M}|\psi_k(y)-\psi(y)|
=\|\psi_k-\psi\|_\infty\to0.
\]
Since $h_{k,x}\to h_x$ uniformly on the compact space $\mathcal M$ and $h_x$ has a unique minimizer,
Lemma~\ref{lemma-stability-minimizer} implies
\[
y_k(x)\to y^{\star}(x).
\]

\paragraph{Step 6: Convergence of gradients.}

By Step~3 we have the identity
\[
\nabla\phi(x)=-\nabla_x c\bigl(x,y^\star(x)\bigr)=-\log_x\bigl(y^\star(x)\bigr),
\qquad c(x,y)=\tfrac12 d(x,y)^2.
\]
In particular, the Riemannian logarithm map is well-defined at $y^\star(x)$. 

Therefore,
\(
y^\star(x)\notin \mathrm{Cut}(x)
\). 
Since $\mathcal M\setminus\mathrm{Cut}(x)$ is open and $\log_x$ is smooth on it, there exists an open neighborhood
$V_x\subset \mathcal M\setminus\mathrm{Cut}(x)$ of $y^\star(x)$ on which $\log_x$ is smooth. Equivalently, there
exists $r_x>0$ such that
\[
B_{r_x}\bigl(y^\star(x)\bigr)\subset V_x\subset \mathcal M\setminus\mathrm{Cut}(x).
\]

From Step~5 we already know that $y_k(x)\to y^\star(x)$ in $\mathcal M$. Hence, by the definition of convergence
in a metric space, for the radius $r_x$ above there exists $k_0(x)$ such that for all $k\ge k_0(x)$,
\[
d\bigl(y_k(x),y^\star(x)\bigr) < r_x,
\qquad\text{so that}\qquad
y_k(x)\in B_{r_x}\bigl(y^\star(x)\bigr)\subset V_x.
\]
In particular, for all $k\ge k_0(x)$ the points $y_k(x)$ lie in a region where $\log_x$ is smooth (hence continuous).
Therefore,
\[
\log_x\bigl(y_k(x)\bigr)\longrightarrow \log_x\bigl(y^\star(x)\bigr),
\]
and using the gradient representations,
\[
\nabla\phi_k(x)=-\log_x\bigl(y_k(x)\bigr)\longrightarrow -\log_x\bigl(y^\star(x)\bigr)=\nabla\phi(x).
\]

Since $x\in N_\star$ was arbitrary and $\mu(N_\star)=1$, the convergence holds for $\mu$-a.e.\ $x\in\mathcal M$,
which completes the proof.

\end{proof}

%% file: proofs/proof-lemma-stability-minimizer.tex
\begin{proof}
We prove the desired result by contradiction. Specifically, suppose that $z_{k}\to z^{\star}$ is not true. Then 
\begin{equation}
\label{eq-contrapositive-convergence}
    \exists \varepsilon_{0}>0 \quad \text{s.t.} \quad \forall N\geq \mathbb{N},\ \exists n\geq N \quad \text{s.t.} \quad d(z_{n},z^{\star}) \geq \varepsilon. 
\end{equation}
We first show that we can extract a subsequence out of $(z_{k})$ that always stays at least $\varepsilon_{0}$ away from $z^{\star}$. Then, via compactness we show that such subsequence admits a subsequence converging to a point $\bar{z}$ different from $z^{\star}$.

\textbf{Step 1: Constructing a convergent subsequence.}  Let $N_{1}:=1$. Then, by~\eqref{eq-contrapositive-convergence} there exists $k_{1}\geq N_{1}$ such that
\begin{equation}
    d(z_{k_{1}},z^{\star}) \geq \varepsilon_{0}.
\end{equation}
Let $N_{2}:=k_{1}+1$. Again, there exists $k_{2}\geq N_{2}$ such that
\begin{equation*}
    d(z_{k_{2}},z^{\star})\geq \varepsilon_{0}.
\end{equation*}
We continue inductively: having chosen $k_{j}$, set $N_{j+1}:= k_{j}+1$. Then there exists $k_{j+1}\geq N_{j+1}$ such that
\begin{equation*}
    d(z_{k_{j+1}},z^{\star})\geq \varepsilon_{0}.
\end{equation*}
By construction the indices are strictly increasing $k_{1}< k_{2}<k_{3}<\ldots $ so $(z_{k_{j}})$ is a subsequence of $(z_{k})$. Furthermore, each term in the subsequence stays away from $z^{\star}$ by at least $\varepsilon_{0}$, i.e.
\begin{equation*}
    d(z_{k_{j}},z^{\star})\geq \varepsilon_{0},\quad \text{for all } j.
\end{equation*}

Now $(z_{k_{j}})$ is a sequence in $K$, and $K$ is compact. Therefore, there exists a further subsequence of $(z_{k_{j}})$ (which, with a slight abuse of notation, we still denote by $(z_{k_{j}})$ ) and a point $\bar{z}\in K$ such that
\begin{equation*}
    z_{k_{j}}\to \bar{z}\quad \text{as}\quad j\to \infty.
\end{equation*}
Since every term in the subsequence satisfied $d(z_{k_{j}},z^{\star})\geq \varepsilon_{0}$, then it follows that $\bar{z}\neq z^{\star}$, otherwise the distances would converge to zero.

\textbf{Step 2: Comparing $f(\bar{z})$ and $f(z^{\star})$.} Since uniform convergence implies pointwise convergence, we have that for any fixed point $z\in K$,
\begin{equation*}
    f_{k}(z)\to f(z).
\end{equation*}
Since every subsequence of a convergent sequence is itself convergent and has the same limit as the original,

\begin{equation*}
    f_{k_{j}}(z^{\star}) \to f(z^{\star}).
\end{equation*}
On the other hand, consider the following inequality
\begin{equation*}
    |f_{k_{j}}(z_{k_{j}}) -f(\bar{z})| \leq |f_{k_{j}}(z_{k_{j}}) -f(z_{k_{j}})| + |f(z_{k_{j}}) -f(\bar{z})|.
\end{equation*}
The first term goes to zero by uniform convergence while the second term goes to zero by the continuity of $f$ and $z_{k_{j}}\to \bar{z}$. Thus,
\begin{equation}
\label{eq-convergence-fk-f-zbar}
    f_{k_{j}}(z_{k_{j}}) \to f(\bar{z}).
\end{equation}
Since each $z_{k_{j}}$ minimizes $f_{k_{j}}$, then 
\begin{equation}
    f_{k_{j}}(z_{k_{j}}) \leq f_{k_{j}}(z^{\star}) \quad \text{for all }j.
\end{equation}
Now we take the limit as $j\to \infty$. By~\eqref{eq-convergence-fk-f-zbar}, the left hand side gives
\begin{equation*}
    f_{k_{j}}(z_{k_{j}}) \to f(\bar{z}).
\end{equation*}
By uniform convergence the right hand side gives 
\begin{equation*}
    f_{k_{j}}(z^{\star}) \to f(z^{\star}).
\end{equation*}
Since $f_{k_{j}}(z_{k_{j}}) \leq f_{k_{j}}(z^{\star})$ for all $j$, and  $ f_{k_{j}}(z_{k_{j}}) \to f(\bar{z})$ and $f_{k_{j}}(z^{\star}) \to f(z^{\star})$ then

\begin{equation}
\label{eq-f-bar-less-f-star}
    f(\bar{z})\leq f(z^{\star}).
\end{equation}
\textbf{Step 3: Reaching a contradiction.} We know that $z^{\star}$ is the unique minimizer of $f$ on $K$. Therefore,
\begin{equation}
    f(z^{\star}) \leq f(z) \quad \text{for all } z\in K
\end{equation}
and if $f(z) = f(z^{\star})$ then $z=z^{\star}$. By minimality of $z^{\star}$ we have that 
\begin{equation}
\label{eq-f-star-less-f-bar}
    f(z^{\star}) \leq f(\bar{z}).
\end{equation}
Combining~\eqref{eq-f-bar-less-f-star}  and~\eqref{eq-f-star-less-f-bar} we obtain that 
\begin{equation}
    f(\bar{z}) = f(z^{\star}).
\end{equation}
Therefore, $\bar{z} = z^{\star}$ by uniqueness of the minimizer. Hence we reach a contradiction: the subsequence $(z_{k_{j}})$ is always at least $\varepsilon_{0}$ away from $z^{\star}$, therefore its limit $\bar{z}$ cannot be equal to $z^{\star}$. Thus we conclude that the sequence of minimizers $z_{k}$ must converge to the unique minimizer $z^{\star}$: $z_{k}\to z^{\star}$.
\end{proof}

%% file: proofs/proof-thm-yarotsky-instantiated.tex
Our proof is based on \citet[Theorem~3.3]{yarotsky_cod}, which we state below for convenience.

\begin{theorem}[Theorem~3.3 in~\citet{yarotsky_cod}]\label{thm:yarotsky-original}
  For any $r>0$, any rate $\beta\in (\frac{r}{n},\frac{2r}{n}]$ can be achieved with deep ReLU networks with $L\le c_{r,n}W^{\beta n/r-1}$ layers.  
\end{theorem}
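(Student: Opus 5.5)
The plan is to reproduce the two-scale ``coding'' construction behind the deep--discontinuous phase. Fix $r>0$, $n$, a target rate $\beta\in(\frac{r}{n},\frac{2r}{n}]$ and a budget $W$. We approximate every $f\in\mathcal H_{r,n}$ within $O(W^{-\beta})$ by a ReLU network with $O(W)$ weights and $O(W^{\beta n/r-1})$ layers; the weights are allowed to depend discontinuously on $f$.

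\textbf{Scales.} I would introduce two grids on $[0,1]^n$. The coarse grid has spacing $1/M$ with $M^n\asymp W$ cells. The fine grid has spacing $1/N$ with $N:=\lceil W^{\beta/r}\rceil$, so that a local Taylor approximation of order $\lceil r\rceil-1$ on fine cells has error $O(N^{-r})=O(W^{-\beta})$. Each coarse cell then contains $K:=(N/M)^n\asymp W^{\beta n/r-1}$ fine cells. The constraint $\beta\le 2r/n$ gives $K\lesssim W$, which keeps the depth at most the size. The constraint $\beta>r/n$ gives $K\to\infty$, which is exactly the regime where storing information in bits beats storing it in weights.

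\textbf{Step 1 (coarse localization).} Build a standard ReLU partition of unity subordinate to the coarse grid. To avoid boundary artefacts, use $2^n$ shifted copies of the grid with piecewise-linear ``filter'' functions, as in Yarotsky (2017). This reduces the problem to approximating $f$ on one coarse cell at a time, with $O(1)$ weights per cell. It also provides, for every input $x$, the coarse-cell index and the local coordinates $N x \bmod (N/M)$, each computed by a fixed-size subnetwork.

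\textbf{Step 2 (encoding the fine data).} Inside a coarse cell, order the $K$ fine nodes along a path in which consecutive nodes are neighbours. At each node, quantize the Taylor coefficients $D^J f/J!$ to precision $\asymp N^{-r+|J|}$. The key observation is smoothness: neighbouring quantized coefficients can be predicted from the previous node's Taylor expansion up to $O(1)$ quantization units. Hence each node needs only $O(1)$ bits beyond the cell's initial value. Concatenate these bits into one real number (a binary expansion with $O(K)$ bits) and store it as a single weight per coarse cell and per coefficient type. This uses $O(M^n)=O(W)$ weights in total.

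\textbf{Step 3 (decoding by bit extraction).} Use the bit-extraction network of Bartlett--Harvey--Liaw--Mehrabian type, or Yarotsky's sawtooth-based variant. With constant width and depth $O(K)$, it sequentially peels off the $O(1)$-bit increments and accumulates the Taylor coefficients along the path. It stops at the fine node associated with $x$, which is selected via a ReLU comparison with the local index computed in Step 1. The network then evaluates the local polynomial at $x$ using product/square ReLU gadgets with $O(1)$ size at the required accuracy. Summing over the filtered copies gives error $O(N^{-r})$ with $W'=O(W)$ weights and depth $L=O(K)=c_{r,n}W^{\beta n/r-1}$. Reparametrizing $W'\mapsto W$ yields the rate $W^{-\beta}$.

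\textbf{Main obstacle.} The hardest step is Step~2/3: showing that $O(1)$ bits per fine node suffice and that the accumulated reconstruction error along a path of length $K$ does not blow up. I would first try to control this by re-anchoring at every node, i.e.\ encoding the discrete increment relative to the already reconstructed, quantized value rather than the exact one. A discrete Taylor-remainder bound then shows that the increment stays in a fixed finite set. A secondary difficulty is making the selection of ``the node containing $x$'' exact near fine-cell boundaries; the shifted-grid filtering from Step~1 handles this.
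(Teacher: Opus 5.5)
The paper does not prove this statement. It quotes Theorem~3.3 of \citet{yarotsky_cod} and uses it as a black box in the proof of Theorem~\ref{thm-yarotsky-instantiated}, which only unpacks ``rate $\beta$ is achievable'' into an $\varepsilon$-statement and transfers it to general piecewise-linear $\sigma$. You instead reconstruct the source's argument from first principles. The mechanism you describe is the right one for the deep-discontinuous phase:
\begin{itemize}
\item $O(W)$ $f$-dependent reals selected by a coarse-cell lookup;
\item quantized Taylor data along a neighbour path, with increments in a bounded alphabet;
\item a constant-width bit-extraction decoder of depth $O(K)$, where $K\asymp W^{\beta n/r-1}$.
\end{itemize}
Citing buys brevity and constants already verified in the source. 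Your route makes the result self-contained and shows where the ends of the $\beta$-range enter. The condition $\beta\le 2r/n$ gives $K\lesssim W$, so the decoder fits the weight budget. The condition $\beta>r/n$ makes $K$ grow polynomially, so the decoder dominates the logarithmic overheads in the depth.

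Several steps need repair before this is a proof, though none needs a new idea.

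(i) The filtering must act at the \emph{fine} scale: copies shifted by half a fine cell, with filters of period $1/N$. Shifting the coarse grid does not make the fine index $t^\ast(x)$ an integer near fine-cell boundaries. There the continuous surrogate of the path index passes through indices of unrelated nodes; for a snake order it jumps by up to $2K^{1/n}$ across row boundaries. So the captured Taylor data are meaningless there. Since coarse boundaries are also fine boundaries, a single fine-scale family handles both.

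(ii) Localization should use a constant number of copies of one shared decoder. Its per-cell inputs (the encoded reals and the initial Taylor data) come from a piecewise-linear lookup in the coarse index, costing $O(M^n)$ weights. A literal partition of unity over the $M^n$ cells, with a decoder per cell, would cost $M^nK\asymp W^{\beta n/r}$ weights.

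(iii) The index maps and product gadgets are not of fixed size. The staircases cost $O(M+K^{1/n})=O(W)$ weights. The products for evaluating the local polynomial and applying the filters to accuracy $W^{-\beta}$ cost $O(\log W)$ weights and layers, not $O(1)$. This is harmless only because these costs add to the decoder cost rather than multiply it, and $\log W=o(W^{\beta n/r-1})$.

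(iv) State the re-anchoring as a definition. Let the stored data follow the decoder's own recursion $q_{t+1,J}=(S_tq_t)_J+h_J\,s_{t+1,J}$. Here $S_t$ is the fixed Taylor-shift map for the $t$-th path step, $h_J\asymp N^{|J|-r}$, and $s_{t+1,J}$ is the rounding of $(a_{t+1,J}-(S_tq_t)_J)/h_J$. Then:
\begin{itemize}
\item decoding is linear between digit extractions, with no rounding inside the network;
\item $|q_{t,J}-a_{t,J}|\le h_J/2$ at every node;
\item the Taylor remainder bound gives $|s_{t+1,J}|\le C(r,n)$ uniformly over $\mathcal{H}_{r,n}$.
\end{itemize}
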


\begin{proof}[Proof of Theorem~\ref{thm-yarotsky-instantiated}]
We show how Theorem~\ref{thm-yarotsky-instantiated} follows from Theorem~\ref{thm:yarotsky-original} under the
complexity conventions of Definitions~\ref{def:NN-yarotsky-arch}-\ref{def:NN-yarotsky-size}.

\textbf{Step 1: The quantitative statement from \citet{yarotsky_cod}.}
By Theorem~\ref{thm:yarotsky-original}, for every $r>0$ and every $\beta\in(r/n,2r/n]$ there exist constants
$C_{r,n,\beta}>0$ and $C_{r,n}>0$ such that for every integer $W\ge 1$ there exists a ReLU approximation scheme
(consisting of an architecture $\eta_W$ with at most $W$ weights and a parameter selection map $G_W$) satisfying
\[
\sup_{f\in\mathcal H_{r,n}}
\bigl\|f-\tilde f_{\eta_W,G_W}(f)\bigr\|_{\infty}
\le
C_{r,n,\beta}\,W^{-\beta},
\]
and whose depth (number of layers, i.e.\ our Definition~\ref{def:NN-yarotsky-size}) satisfies
\[
L_W \le C_{r,n}\,W^{\beta n/r-1}.
\]
Fix $f\in\mathcal H_{r,n}$ and define the corresponding network
\[
\hat f_W := \tilde f_{\eta_W,G_W}(f).
\]
Then $\hat f_W\in \mathcal{NN}^{\mathrm{ReLU}}_{n,L_W,W}$ and
\[
\|f-\hat f_W\|_{\infty} \le C_{r,n,\beta}\,W^{-\beta}.
\]

\textbf{Step 2: Choose $W$ as a function of $\varepsilon$.}
For convenience, set
\[
C'_{r,n,\beta}:=\max\{1,C_{r,n,\beta}\},
\qquad
\widetilde C_{r,n,\beta}:=2^\beta C'_{r,n,\beta}.
\]
Fix $\varepsilon\in(0,1)$ and define the threshold
\[
\bar W_\varepsilon := \left(\frac{C'_{r,n,\beta}}{\varepsilon}\right)^{1/\beta}.
\]
Choose any integer $W_\varepsilon\in\mathbb N$ such that $W_\varepsilon \ge \bar W_\varepsilon$
(for instance, $W_\varepsilon=\lceil \bar W_\varepsilon\rceil$).
Since $C_{r,n,\beta}\le C'_{r,n,\beta}$, we obtain
\[
C_{r,n,\beta} W_\varepsilon^{-\beta}
\le
C'_{r,n,\beta} W_\varepsilon^{-\beta}
\le
C'_{r,n,\beta}\bar W_\varepsilon^{-\beta}
=
\varepsilon.
\]
With $\hat f_\varepsilon := \hat f_{W_\varepsilon}$ we therefore have
\[
\|f-\hat f_\varepsilon\|_{\infty} \le \varepsilon.
\]

\textbf{Step 3: Weight bound.}
Take $W_\varepsilon:=\lceil \bar W_\varepsilon\rceil$. Since $\varepsilon\in(0,1)$ and $C'_{r,n,\beta}\ge 1$, we have
$\bar W_\varepsilon\ge 1$, hence $\lceil x\rceil\le 2x$ for all $x\ge 1$. Therefore,
\[
W_\varepsilon \le 2\bar W_\varepsilon
=
2\left(\frac{C'_{r,n,\beta}}{\varepsilon}\right)^{1/\beta}
=
\left(\frac{\widetilde C_{r,n,\beta}}{\varepsilon}\right)^{1/\beta}.
\]

\textbf{Step 4: Depth bound.}
From Step~1 and the choice of $W_\varepsilon$,
\[
L_\varepsilon := L_{W_\varepsilon} \le C_{r,n}\,W_\varepsilon^{\beta n/r-1}.
\]
This is exactly the stated depth estimate in the convention of Definition~\ref{def:NN-yarotsky-size}.

\textbf{Step 5: Replacement of ReLU by a general piecewise-linear activation.}
Let $\sigma:\mathbb R\to\mathbb R$ be any non-affine piecewise-linear activation function.
By \citet[Prop.~1(b)]{Yarotsky2017}, on a bounded domain (here $[0,1]^n$) any ReLU neural network $\hat f_\varepsilon$
can be converted into a neural network with activation function $\sigma$ that computes the \emph{same} function on $[0,1]^n$.
More precisely, there exists a constant $K_\sigma\ge 1$ (depending only on $\sigma$) and a neural network
$\hat f_\varepsilon^{(\sigma)}$ with activation function $\sigma$ such that
\[
\hat f_\varepsilon^{(\sigma)}(x)=\hat f_\varepsilon(x)\quad\text{for all }x\in[0,1]^n,
\]
and whose depth $L^{(\sigma)}_{\varepsilon}$ and number of parameters $W^{(\sigma)}_{\varepsilon}$ satisfies
\[
L^{(\sigma)}_{\varepsilon}=L_{\varepsilon},
\qquad
W_\varepsilon^{(\sigma)}\le K_\sigma\, W_\varepsilon.
\]
In particular,
\[
\|f-\hat f_\varepsilon^{(\sigma)}\|_{\infty}
=
\|f-\hat f_\varepsilon\|_{\infty}
\le \varepsilon.
\]
Thus the same approximation statement holds for neural networks with activation function $\sigma$ after adjusting
the constant $\widetilde C_{r,n,\beta}$ by a factor depending only on $\sigma$ (and without changing the depth order).

\end{proof}